\newlength{\arrow}
\newtheorem{theorem}{Theorem}[section]
\newtheorem{corollary}[theorem]{Corollary}
\newtheorem{definition}[theorem]{Definition}
\newtheorem{proposition}[theorem]{Proposition}
\newcommand{\e}{\varepsilon}
\newcommand{\relu}{\text{ReLU}}
\newcommand{\norm}[1]{\left\vert #1 \right\vert}
\newcommand{\Norm}[1]{\left\Vert #1 \right\Vert}
\newcommand{\R}{\mathbb{R}}
\newcommand{\CC}{\mathcal{C}}
\newcommand{\ip}[1]{\left\langle #1 \right\rangle}
\def\layersep{2.0cm}
\tikzstyle{decision} = [diamond, draw, fill=blue!20,
\tikzstyle{block} = [rectangle, draw, fill=blue!20,
\tikzstyle{line} = [draw, -latex']
\tikzstyle{cloud} = [draw, ellipse,fill=red!20, node distance=3cm,
\newcounter{row}
\newcounter{col}
\newcommand{\RR}{I\!\!R} 
\newlength{\depthofsumsign}
\newlength{\totalheightofsumsign}
\newlength{\heightanddepthofargument}
\newcommand*{\DivideLengths}[2]{%
  \strip@pt\dimexpr\number\numexpr\number\dimexpr#1\relax*65536/\number\dimexpr#2\relax\relax sp\relax
}
\pgfplotsset{compat=newest}
    \pgfplotsset{compat=newest}
    \tikzstyle{decision} = [diamond, draw, fill=blue!20, 
    \tikzstyle{block} = [rectangle, draw, fill=blue!20, 
    \tikzstyle{line} = [draw, -latex']
    \tikzstyle{cloud} = [draw, ellipse,fill=red!20, node distance=3cm,
\tikzset{
  on each segment/.style={
    decorate,
    decoration={
      show path construction,
      moveto code={},
      lineto code={
        \path [#1]
        (\tikzinputsegmentfirst) -- (\tikzinputsegmentlast);
      },
      curveto code={
        \path [#1] (\tikzinputsegmentfirst)
        .. controls
        (\tikzinputsegmentsupporta) and (\tikzinputsegmentsupportb)
        ..
        (\tikzinputsegmentlast);
      },
      closepath code={
        \path [#1]
        (\tikzinputsegmentfirst) -- (\tikzinputsegmentlast);
      },
    },
  },
  mid arrow/.style={postaction={decorate,decoration={
        markings,
        mark=at position .5 with {\arrow[#1]{stealth}}
      }}},
}
\author{Brian Bell} 
\keywords{Machine Learning, Geometry, Monte Carlo} 
\newenvironment{dedication}
  {\clearpage           
   \thispagestyle{empty}
   \vspace*{\stretch{1}}
   \section*{Dedication}
   \itshape             
   \justifying          

  }
  {\par                 
   \vspace{\stretch{3}} 
   \clearpage           
  }
\g@addto@macro\appendix{%
  \addtocontents{toc}{%
    \protect\renewcommand{\protect\cftchappresnum}{\appendixname\space}%
  }%
}
\begin{document}


\pagestyle{plain} 


%

\begin{titlepage}
\begin{singlespacing} 
\begin{center}

\vfill

\MakeUppercase{\ttitle}\\ 
\vspace{0.4in}
by\\ \vspace{0.4in}
{\authorname}\\ 
\vspace{0.6in}
\HRule \\[0.1cm] 
Copyright \textcopyright\space\authorname\space{\the\year}\\ 

\vspace{0.4in}

A Dissertation Submitted to the Faculty of the\\ 
\vspace{0.4in}
\MakeUppercase{\deptname} \\  
\vspace{0.4in}
In Partial Fulfillment of the Requirements \\ \medskip 
For the Degree of \\  
\vspace{0.4in}
\MakeUppercase{\degreename} \\ 
\vspace{0.4in} 
In the Graduate College \\  
\vspace{0.4in}
\MakeUppercase{The \univname} \\ 
\vspace{0.6in}
{\the\year}\\[4cm] 

\vfill
\end{center}
\end{singlespacing}
\end{titlepage}


\setcounter{page}{2} 

\begin{center}
\MakeUppercase{Acknowledgements}\\ \bigskip
\end{center}


%

\begin{dedication}
    For my Supportive Friends and Confidants\ldots \\\bigskip
    Dedicated to my humorous and
    wonderful late mother Carrie Bell\ldots.
    \end{dedication}

%
%
\tableofcontents 
%
\listoffigures 

\listoftables 


%

\addchaptertocentry{\abstractname} 

\begin{center}
\MakeUppercase{Abstract}\\ \bigskip
\end{center}

This work starts with the intention of using mathematics to understand
the intriguing vulnerability observed by ~\citet{szegedy2013} within
artificial neural networks. Along the way, we will develop some novel
tools with applications far outside of just the adversarial domain. We
will do this while developing a rigorous mathematical framework to
examine this problem. Our goal is to build out theory which
can support increasingly sophisticated conjecture about adversarial
attacks with a particular focus on the so called ``Dimpled Manifold
Hypothesis'' by ~\citet{shamir2021dimpled}. Chapter one will cover the history and architecture of neural network
architectures. Chapter two is focused on the background
of adversarial vulnerability. Starting from the seminal paper by
~\citet{szegedy2013} we will develop the theory of adversarial
perturbation and attack.

Chapter three will build a theory of persistence
that is related to Ricci Curvature, which can be used to measure
properties of decision boundaries. We will use this foundation to make
a conjecture relating adversarial attacks. Chapters four and five represent a sudden and wonderful digression
that examines an intriguing related body of theory for spatial
analysis of neural networks as approximations of kernel machines and
becomes a novel theory for representing neural networks with bilinear
maps. These heavily mathematical chapters will set up a framework
and begin exploring applications of what may become a very important
theoretical foundation for analyzing neural network learning with
spatial and geometric information. We will
conclude by setting up our new methods to address the conjecture from
chapter 3 in continuing research.


%
%



\vspace*{0.2\textheight}

\noindent\enquote{\itshape Young man, in mathematics you don't understand things. You just get used to them.}\bigbreak

\hfill John von Neumann

%
%

%
%

%
%
%



\pagestyle{thesis} 


\chapter{Introduction} 
\label{Chapter1} 

The primary aim of this study is to comprehend the perplexing vulnerability identified by \citet{szegedy2013} within artificial neural networks. These models consistently admit inputs that may be geometrically close and often indistinguishable, according to users, from natural data, yet they lead to substantial changes in output. In recent years, the utilization of such models has proliferated across scientific, industrial, and personal applications. Despite their widespread and growing use, the impact of adversarial vulnerability on security, reliability, and safety in machine learning remains poorly understood. To systematically investigate this phenomenon, a robust theory and a rigorous mathematical framework are essential. We intend to construct this framework from the ground up, leveraging high-dimensional geometry tools related to curvature, implicit representations drawn from functional analysis and the theory of Hilbert spaces, optimization, and rigorous experimentation. Our overarching goal is to develop a theoretical foundation that can support increasingly sophisticated conjectures about adversarial attacks, with a particular focus on the "Dimpled Manifold Hypothesis" proposed by \citet{shamir2021dimpled}.

Chapter One will delve into the history and architecture of neural networks. This section will provide a brief overview of the surprising theoretical and experimental results that underpin modern machine learning. Alongside these theoretical foundations, a concise yet rigorous introduction to the practical components of a neural network model, including its training objective and considerations, will be presented. Chapter Two will concentrate on the background of adversarial vulnerability. Commencing with the seminal paper by \citet{szegedy2013}, we will develop the theory of adversarial perturbation and attack. This chapter will explore both the theoretical and practical objectives and optimization problems related to creating adversarial examples. However, we will emphasize the establishment of careful definitions for what constitutes an "adversarial" scenario in the context of modern machine learning.

Chapter Three follows the outline of a paper which is currently under submission
on the topic of measuring geometric properties in order to understand
adversarial examples, and to identify certain classes of attack when
they are performed. This chapter will build a theory of persistence
which is related to Ricci Curvature, which can be used to measure
properties of decision boundaries. These properties include the theory
of persistence and how it changes while interpolating across decision
boundaries, measurement of normal vectors for decision boundaries, and
comparison of various interpolation trajectories between various
combinations of natural classed images, adversarial images, etc... The
conclusion of this chapter is a conjecture relating adversarial
attacks with properties of the decision boundary defined by an
arbitrary model. 

Chapter Four starts
from a recent theory of the neural tangent kernel. By a re-ordering of
a gradient formulation for the steps taken during optimization of a
model for a given loss function, a representation can be obtained as
the inner product of the model gradient for a test point with the sum
over the model gradients for a query point. This inner product
formulation is colloquially termed a ``kernel method'' or a ``kernel
machine'' which has some clearer mathematical properties than
artificial neural networks do. Well-known work has demonstrated 
that this representation is exact for infinite-width neural networks,
but has significant shortcomings for practical finite models. This
chapter outlines a recently accepted paper ~\citep{bell2023} in
which this kernel method is extended by careful integration along the
training steps of a model's optimization in order to produce an exact
representation for finite networks which can be computed in practice,
and whose numerical error can be easily controlled.

Chapter Five combines these theories and a better tool for
visualizing (soft ricci curvature) near decision boundaries in order
to refine the conjecture stated in Chapter Three. The main purpose of
this chapter is to consolidate this conjecture as a foundation for a
large body of continuing work. Based on the foundation presented
within the first parts of this work, we will paint an actionable
geometric picture for how adversarial examples come about and the
properties of both the data and models used within modern
machine-learning applications. This work will be collected into a
forthcoming submission to ICLR (International Conference on Learning Representations). 


\section{Background}

Artificial Neural Networks and other optimization-based general
function approximation models are the core of modern
machine learning ~\citep{prakash2018}. These models have dominated competitions in image processing, optical character recognition, object detection, video classification, natural language processing, and many other fields ~\citep{SCHMIDHUBER201585}. All such modern models are
trained via gradient-based optimization, e.g. Stochastic Gradient Descent (SGD) with
gradients computed via back propagation using theory brought to the
mainstream by ~\citet{goodfellow2013multidigit}. Although the performance of these models is practically
miraculous within the training and testing context for which they are
designed, they have a few intriguing properties. It was discovered in
2013 by ~\citet{szegedy2013} that images can be generated
which apparently trick such models in a classification context in  difficult-to-control ways ~\citep{khoury2018}. The intent of this
research is to investigate these \emph{adversarial examples} in a
mathematical context and use them to study pertinent 
properties of the learning models from which they arise.

\subsection{Artificial Neural Networks (ANNs)}


The history of Neural Networks (NNs) begins very gradually in the field of Theoretical
Neuropsychology with a much-cited paper by McCulloch and Pitts in
which the mechanics of cognition are described in the context of
computation by ~\citet{mcculloch1943logical}. This initial framework for
computational cognition did not include a notion for learning, however
this would be brought in the following decade with in the form of
optimization and many simple NNs (linear regression models applied to
computational cognition). The perceptron, the most granular element of
a neural network, was proposed in another much-cited paper ~\citet{rosenblatt1958perceptron}. A full 7 years
would pass before these building blocks would be assembled into
multilevel (deep) networks which were proposed by 1965 in a paper by ~\citet{ivakhnenko1965cybernetic}. Despite much
theoretical work up to this point, computing resources of the time
were not nearly capable of implementing all but the simplest toy
versions of these models. 

By the 1960s, these neural network models became disassociated from
the cutting-edge of cognitive science, and interest had shifted to
their application in modeling and industrial computation. The hardware
limitations of the time served as a significant barrier to wider
application and the concept of the "neural network" was generally
regarded as a cute solution looking for a problem. Compounding these
limitations was a significant roadblock published by ~\citet{minsky1969perceptrons}: A proof that basic perceptrons could not encode exclusive-or. As a result, interest in developing
neural network theory waned. The next necessary step in the
development of modern neural network models was an advance that would
allow them to be trained efficiently with computing power
available. Learning methods required a gradient, and the technique
necessary for computing gradients of large-scale multi-parameter
models was finally proposed in a 1970 in a Finnish masters thesis by
~\citet{linnainmaa1970representation}. Techniques from control theory
were applied to develop a means of propagating error backward through
models which could be described as directed graphs. The idea was
applied to neural networks by a Harvard student Paul Werbos ~\citep{werbos1974beyond} and refined in later publications. 

The final essential puzzle piece for neural network models was to take
advantage of their layered structure, which would allow
backpropagation computations at a given layer to be done in
parallel. This key insight, indeed the core of much of modern
computing, was a description of parallel and distributed processing in
the context of cognition by 
~\citet{mcclelland1986parallel} with an astonishing 22,453 citations (a
number that grows nearly every day). With these pieces in place, the
world was ready for someone to finally apply neural network models to
a relevant problem. In 1989, Yann LeCun and a group at Bell Labs
managed to do just that. Motivated by a poorly solved practical
problem -- recognition of handwriting on bank checks,
~\citet{lecun1989backpropagation} refined backpropagation into the
form used today, invented Convolutional Neural Networks \ref{cnn}
~\citep{lecun1995convolutional}, and by  1998, he had worked with his
team to implement what rapidly became the industry standard for banks
to recognize hand-written numbers ~\citep{lecun1998gradient}.

It is worth noting a couple of key ingredients that led to LeCun's
success. First, the problem context itself was both simple and
extremely rich in training data. Second, bank checks are protected both by customers' desire to write correct checks, and their
consistent formatting. Finally, the computing resources needed had only just
reached sufficient specifications and indeed, LeCun's team had rare
access to this level of resources. This combination along with the
rapid drive toward automation at the turn of the millennium left
fertile ground for this discovery. This also removed one crucial
threat to this system, adversarial attacks. Banks are both protected
legally from the crime of fraudulent checks, and practically by the
paucity of advanced computing resources at the time. While this
protected nascent machine-learning implementations, time will
gradually bring technological advancements that will unseat these
controlled conditions. 

Through the early 2000s, along with the internet hitting its stride,
neural networks have quietly become ubiquitous while  remaining
relegated to image recognition problems. Research and time was spent on increasing
speed and efficiency. To keep these increasingly complex and structured models able
to scale with the growing data available, Hinton and Bengio
distinguished themselves in these middle steps. 
~\citep{bengio2007greedy, hinton2006reducing, hinton2006fast}
While many observers still treated
them as toy models inferior to traditional modeling,  the
industrial success of these models was beginning to fuel a new wave of
serious theoretical and practical work in the field. A generation of
household names including Collobert, Hinton, Bengio, and Schmidhuber
came to distinguish themselves alongside LeCun
~\citep{coates2011analysis,
vincent2010stacked,
boureau2010learning,
hinton2010practical,
glorot2010understanding,
erhan2010does,
bengio2009learning}.
With this expansion in theory came a boom in the ability to
practically implement more structured and capable neural networks: recurrent networks ~\citep{mikolov2010recurrent},
convolutional neural networks ~\citep{lee2009convolutional}, natural
language processing ~\citep{collobert2011natural}, and
Long-Short-Term-Memory (LSTM) networks originally developed by
Horchreiter (a student of Shmidhuber) in
1997 ~\citep{hochreiter1997long} to actively understand time-series,
including a solution to the problem of vanishing
gradients problem whereby gradients computed by back propagation which
constitutes a large product of small numbers especially for early
layers in deep networks. This problem naturally arises for recurrent
networks and many approaches that address time-series. The solution
Horchreiter provided, the addition of residual connections to past
parameter states. This insight has made the LSTM one of the most cited
neural networks. This in combination with the surprising result that
Rectified Linear Units (ReLUs) could also solve the vanishing gradient
problem allowed for much deeper and more sophisticated neural networks
to be implemented than ever before. 

In step with this growing theoretical interest came expansion of
well-maintained libraries for working with neural networks including
the very early creation of the now famous Torch
~\citep{Collobert2002TorchAM} and its python interface that still
dominates the market-share of machine learning: PyTorch
~\citep{pytorch2019} with which most of the results in this
work have been computed. 

While neural networks had still not breached the mainstream of pop
culture, they had carved out an undeniable niche by 2009. The
field of computer science was ready to test what they could
do. With all of the ingredients in place, 2009-2010 saw competitions
across Machine Learning (ML) tasks go viral. The ImageNet Large Scale Visual Recognition
Challenge (ILSVRC) \url{http:/image-net.org/challenges/LSVRC/}), Segmentation of Neuronal Structures in Electron
Microscopy Stacks, Traffic sign recognition (IJCNN2012) and more. The
era became defined by these competitions which served to not only gain 
visibility for the field and its strongest participants, but also
rapidly push multiple ML applications up to the point of practical
utility. Schimdhuber's group
~\citep{SCHMIDHUBER201585}, and a similar group at Google dominated
many of these competitions. The cutting-edge became represented by
networks like Inception v4 designed by Google for image classification
which contains approximately 43 million parameters
~\citep{szegedy2013}. Early versions of this network took 1-2 million
dollars worth of compute-time to train. Artificial Neural Networks
(ANNs) now appear in nearly every industry from devices which use ANNs
to intelligently adapt their performance, to the sciences which rely
on ANNs to eliminate tedious sorting and identification of data that
previously had to be relegated to humans. Recently natural language
processing has received its own renaissance, led by Chat-Bots based on
the popular transformer architecture proposed by
~\citet{vaswani2017attention}. Neural network based models are here to
stay, but as these tools expand so wildly in application, we must
begin to ask hard questions about their limitations and implications. 

\subsection{Structure}
In this subsection we give a mathematical description of artificial neural networks. 
A \emph{neuron} is a nonlinear operator that takes input in $\R^n$ to $\R$, historically designed to emulate the activation characteristics of an organic neuron. A collection of neurons that are connected via a (usually directed) graph structure are known as an \emph{Artificial Neural Network (ANN)}. 

The fundamental building blocks of most ANNs are artificial neurons which we will refer to as \emph{perceptrons}.

\begin{definition}{A \textbf{perceptron} is  }
\label{perceptron}
a function $P_{\vec w}: \R^n \to \R$ which has \emph{weights} $\vec
w \in \R^n$ corresponding with each element of an input vector $\vec
x\in \R^n$ and a bias $b \in \R$:
\[P_{\vec w}(\vec x) = f\left(\ip{\vec w,\vec x} + b\right)\]
\[P_{\vec w}(\vec x) = f\left(b + \sum_{i = 1}^n w_i x_i\right)\]
where $f: \R \to \R$ is continuous. The function $f$ is called the \textbf{activation function} for $P$. 
\end{definition}

The only nonlinearity in $P_w$ is contained in $f$. If $f$ is chosen
to be linear, then $P$ will be a linear operator. Although this has
the advantage of simplicity, linear operators do not perform well on
nonlinear problems like classification. For this reason, activation
functions are generally chosen to be nonlinear. Historically,
heaviside functions were used for activation, later replaced based on
work by ~\citet{malik1990preattentive} with sigmoids for their smoothness, switching structure, and convenient
compactification of the output from each perceptron.  It was recently
discovered that a simpler nonlinear function, the \emph{Rectified
  Linear Unit (ReLU)} works as well or better in most
neural-network-type applications according to ~\citet{glorot2011deep}
and additionally training algorithms on ReLU activated networks
converge faster according to ~\citet{nair_rectified_nodate}. 

\begin{definition}{The Rectified Linear Unit (ReLU) function is}
\label{relu}

  \[\relu(x) = \begin{cases} 0, & x \leq 0;\\
      x, & x > 0,\end{cases}\]
\end{definition}

~\citet{petersen2018optimal} demonstrated that this single nonlinearity of this activation function
at $x = 0$ is sufficient to guarantee existence of $\epsilon$ approximation of smooth functions by an ANN composed of sufficiently numerous perceptrons connected by ReLU . In addition, ReLU is convex, which enables efficient numerical approximation of smooth functions in shallow networks ~\citep{li2017convergence}.

In general ANNs 
must not be cyclic and, for convenience, are often arranged into
independent layers. An early roadblock for neural networks was a proof
by ~\citet{minsky1969perceptrons} that single layers of perceptrons
could not encode exclusive-or. ~\citet{kak1993training} demonstrated
that depth, the number of layers in a neural network, is a key factor in its ability to approximate complicated functions including exclusive-or. For this reason, modern ANNs are usually composed of many layers (3-100). The most common instance of a neural network model is a fully connected \emph{feed forward (FF)} configuration. In this configuration data enters as an input layer which is fed into each of the nodes in the first layer of neurons. Output of the first layer is fed into each of the nodes in the second layer, and so on until the output of the final layer is fed into an output filter which generates the final result of the neural network.

In this example of a FF network, an input vector in $\R^7$ is mapped to a
an output in $\R^3$ which is fed into a classifier. Each blue circle
represents a perceptron with the ReLU activation function. 


\scalebox{.9}{
\begin{tikzpicture}[shorten >=1pt,->,draw=black!50, node distance=\layersep]
    \tikzstyle{every pin edge}=[<-,shorten <=1pt]
    \tikzstyle{neuron}=[circle,fill=black!25,minimum size=9pt,inner sep=0pt]
    \tikzstyle{input neuron}=[neuron, fill=green!50];
    \tikzstyle{output neuron}=[neuron, fill=red!50];
    \tikzstyle{hidden neuron}=[neuron, fill=blue!50];
    \tikzstyle{annot} = [text width=4em, text centered]

    \foreach \name / \y in {1,...,7}
        \node[input neuron] (I-\name) at (0,-\y) {};
    \foreach \name / \y in {1,...,6}
        \path[yshift=-0.5cm]
            node[hidden neuron] (H-\name) at (\layersep,-\y cm) {};

    \foreach \name / \y in {1,...,4}
        \path[yshift=-1.5cm,xshift=2.0cm]
            node[hidden neuron] (HH-\name) at (\layersep,-\y cm) {};

    \foreach \name / \y in {1,...,3}
        \path[yshift=-2cm,xshift=4.0cm]
            node[output neuron] (O-\name) at (\layersep,-\y cm) {};


    \foreach \source in {1,...,7}
        \foreach \dest in {1,...,6}
            \path (I-\source) edge (H-\dest);

    \foreach \source in {1,...,6}
        \foreach \dest in {1,...,4}
            \path (H-\source) edge (HH-\dest);

    \foreach \source in {1,...,4}
        \foreach \dest in {1,...,3}
            \path (HH-\source) edge (O-\dest);

  \node [rectangle, draw, minimum height=6.2cm, text width=.8cm, text
  centered, left =.8cm of I-4] (mm) {Data};

    \foreach \source in {1,...,7}
        \path [line] (mm.east|-I-\source) -- (I-\source);

    \node[annot,above of=H-1, node distance=2cm] (hl) {Layer 1};
    \node[annot,left of=hl] {Input };
    \node[annot,right of=hl] (h3) {Layer 2} ;
    \node[annot,right of=h3] {Output Layer};
  \node [rectangle, draw, minimum height=5cm, text width=1.6cm, text
  centered, right =6.8cm of I-4] (mc) {Classifier};
    \foreach \source in {1,...,3}
        \path [line] (O-\source) -- (mc.west|-O-\source);

\end{tikzpicture}
}



The output of this ANN is fed into a classifier. To complete this
example, we can define the most common classifier, Softmax:

\begin{definition}{Softmax (or the normalized exponential) is the function given by}
\[s : \R^n \to [0,1]^n\]
\[s_j(\vec x) = \frac{e^{x_j}}{\sum_{k = 1}^n e^{x_k}}\]
\end{definition}

\begin{definition}{We can define a classifier which picks the class corresponding with the largest output element from Softmax: }
\[\text{(Output Classification)  }   c_s(\vec x) = \text{argmax}_{i} s_i(\vec{x})\]
\end{definition}
During training, the output $y \in \R^n$ from a network can thus be
compressed using softmax into $[0,1]^n$ as a surrogate for probability
for each possible class or directly into the classes which we can
represent as the simplex for the vertices of $[0,1]^n$
\citep{Bishop:2006:PRM:1162264}. 

\subsubsection{Convolutional Neural Networks (CNNs)}\label{cnn}

Another common type of neural network which is a component in many modern applications including one in the experiments to follow are Convolutional Neural Networks (CNNs). CNNs are fundamentally composed of
perceptrons, but each layer is not fully connected to the
next. Instead, layers are arranged spatially and overlapping groups of
perceptrons are independently connected to the nodes of the next
layer, usually with a nonlinear filter that computes the maximum of
all of the incoming nodes to a new node. This structure has been shown
(e.g. by ~\citet{lecun1995convolutional}) to be very effective on problems with spatial information. 

\subsection{Training ANNs}

Neural networks consist of a very large number of perceptrons with many parameters. Directly solving the system implied by these parameters and the empirical risk minimization problem defined below would be difficult, so we must use a
modular approach which takes advantage of the simple and regular structure of ANNs.

 A breakthrough came with the application of techniques derived from
 control theory to ANNs in the late 1980s by ~\citet{rumelhart1986learning}, dubbed backpropagation. This technique was refined into its modern form in the thesis and continuing work of ~\citet{lecun1988theoretical}. In this method, error is propagated backward taking advantage of the directed structure of the network to compute a gradient for each parameter defining it. Because modern ANNs are usually separated into discrete layers, gradients can be computed in parallel for all perceptrons at the same depth of the network
\citep{Bishop:2006:PRM:1162264}. Leveraging modern GPUs and parallel computing technologies, these gradients can be computed very quickly. There are a number of important considerations in training. We discuss a few in the following subsections. 

\subsubsection{Selection of the Training Set}

The first step in training an ANN is the selection of a training set. ANNs fundamentally are universal function approximators: Given a set of input data and corresponding output data, they approximate a mapping from one to the other. Performance is dependent on how well the phenomenon we hope to model is represented by the training data. The training data must consist of a set of inputs (e.g., images) and a set of outputs (e.g., labels) which contain sufficient examples to characterize the intended model. In a way, this is how we pose a question to the neural network. One must always ask whether the question we wish to pose is well-expressed by the training data we have available. 

The most important attributes of a training dataset are the number of
samples it contains and its density near where the model will be making predictions. According to conventional wisdom, training a neural network with $K$ parameters
will be very challenging if there are fewer than $K$ training samples
available. The modular structure of ANNs can be combined with regularization of the weights to overcome these limitations ~\citep{liu2015very}. 
In general, we will denote a training set by $(X,Y)$ where $X$ is an indexed set of inputs and $Y$ is a corresponding indexed set of labels. 

\subsubsection{Selecting a Loss Function}

Once we have selected a set of training data (both inputs and outputs), we must decide how we will evaluate the match between the ANNs output and the defined outputs from the training dataset -- we will quantify the deviation of the ANN compared with the given correspondence as a Loss. In general \emph{loss functions} are nonzero functions which compare an output $y$ against a ground-truth $\hat y$. Generally they have the property that an ideal outcome would have a loss of 0. 

One commonly used loss function for classification is known as Cross-Entropy Loss:
\begin{definition}{The Cross-Entropy Loss comparing two possible outputs is}
$L(y,\hat y) = -\sum_i y_i \log \hat y_i$.
\end{definition}
Other commonly used loss functions include $L^1$ loss (also referred to as Mean Absolute Error (MAE)), $L^2$ loss (often referred to as Mean-Squared-Error (MSE)), and Hinge Loss (also known as SVM loss). 

To set up the optimization, the loss for each training example must be aggregated. Generally, ANN training is conducted via Empirical Risk Minimization where Empirical Risk is defined for a given loss function $L$ as follows:
\begin{definition}{Given a loss function $L$, the Empirical Risk over a training dataset $(X,Y)$ of size $N$ is }
\[R_{\text{emp}}(P_{\vec w}(x)) = \dfrac{1}{N} \sum_{(x,y) \in (X,Y)} L(P_{\vec w}(x)), y).\]
\end{definition}
We seek parameters $\vec w$ which will minimize $R_{\text{emp}}(P_{w}(x))$. This will be done with gradient-based optimization. 

\subsubsection{Computation of Gradient via Backpropagation}

Since it is relevant to the optimization being performed, we will briefly discuss the computation of gradients via backpropagation. For this discussion, we will introduce a small subset of a neural network in detail. In general, terms will be indexed as follows:
\[ x^{\text{[layer]}}_{\text{[node in layer], [node in previous layer]}}\]
When the second subscript is omitted, the subscript will only index the node in the current layer to which this element belongs. 

\scalebox{.9}{
\begin{tikzpicture}[shorten >=1pt,->,draw=black!50, node distance=\layersep]

\node[circle, minimum size=19pt, fill=black!25, inner sep=0pt] (n11) at (0,2) {$a^1_1$};
\node[circle, minimum size=19pt, fill=black!25, inner sep=0pt] (n12) at (0,0) {$a^1_2$};
\node[circle, minimum size=19pt, fill=black!25, inner sep=0pt] (n21) at (4,2) {$a^2_1$};
\node[circle, minimum size=19pt, fill=black!25, inner sep=0pt] (n22) at (4,0) {$a^2_2$};
\node[circle, minimum size=19pt, fill=black!25, inner sep=0pt] (n31) at (8,2) {$a^3_1$};
\node[circle, minimum size=19pt, fill=black!25, inner sep=0pt] (n32) at (8,0) {$a^3_2$};

\node (av1) at (0,2.9) {$\Bar{a}^1$};
\node (av2) at (4,2.9) {$\Bar{a}^2$};
\node (av3) at (8,2.9) {$\Bar{a}^3$};

\node (ai1) at (0,3.9) {Index: $i$};
\node (ai2) at (4,3.9) {Index: $\alpha$};
\node (ai3) at (8,3.9) {Index: $\lambda$};

\node (w2) at (2.6,2.9) {$W^2$};
\node (w3) at (6.6,2.9) {$W^3$};

\draw[- triangle 45] (n11)  -- node[rotate=0,shift={(0.3,0.3)}] {$w^2_{1,1}$} (n21);
\draw[- triangle 45] (n11)  -- node[rotate=0,shift={(0.6,0.65)}] {$w^2_{1,2}$} (n22);
\draw[- triangle 45] (n12)  -- node[rotate=0,shift={(0.3,-0.65)}] {$w^2_{2,1}$} (n21);
\draw[- triangle 45] (n12)  -- node[rotate=0,shift={(0.6,-0.3)}] {$w^2_{2,2}$} (n22);

\draw[- triangle 45] (n21)  -- node[rotate=0,shift={(0.3,0.3)}]  {$w^3_{1,1}$} (n31);
\draw[- triangle 45] (n21)  -- node[rotate=0,shift={(0.6,0.65)}] {$w^3_{1,2}$} (n32);
\draw[- triangle 45] (n22)  -- node[rotate=0,shift={(0.3,-0.65)}]  {$w^3_{2,1}$} (n31);
\draw[- triangle 45] (n22)  -- node[rotate=0,shift={(0.6,-0.3)}]  {$w^3_{2,2}$} (n32);
\end{tikzpicture}
}

In this diagram, the $W^l$ are matrices composed of the weights
indexed as above. Given an activation function for layer $n$, $A^n$ and
its element-wise application to a vector $\bar A^n$. As demonstrated
by ~\citet{Krause20}, we can now write the output $\bar a^n$ for any layer of an arbitrary ANN in two ways . Recursively, we can define
\begin{equation}
    a^n_\lambda = A^n(\sum_\alpha w^n_{\alpha, \lambda} a^{n-1}_\alpha)
\end{equation}
We can also write the matrix form of this recursion for every node in the layer:
\begin{equation}
\bar a^n = \bar A^n (W^n(\bar a^{n-1} ) )
\end{equation}
The matrix form makes it easier to write out a closed form for the output of the neural network. 
\begin{equation}
\bar a^n = \bar A^n (W^n(\bar A^{n-1}( W^{n-1} ( \cdots ( \bar A^{2} ( W^{2} \bar a^1) ) \cdots ) ) ) )
\end{equation}

Now, given a loss function $L = \sum_{i} \ell_i(a^n_i)$ where each $\ell_i$ is a loss function on the $i^{\text{th}}$ element of the output, we wish to compute the derivatives $\dfrac{\partial L}{\partial_{w^l_{i,j}}}$ for every $l, i,$ and $j$ which compose the gradient $\nabla L$. Using the diagram above, we can compute this directly for each weight using chain rule:
\begin{align*}
    \dfrac{\partial L}{\partial w^3_{\lambda,\alpha}} &= \dfrac{\partial L}{\partial a^3_{\lambda}} \dfrac{\partial a^3_{\lambda}}{\partial w^3_{\lambda,\alpha}} = \sum_{\lambda=1}^n \ell'_\lambda( a^3_\lambda) (A^3)' (\sum_{\alpha=1}^n w^3_{\alpha, \lambda} a_\alpha^2) a^2_\alpha.    
\end{align*}
Many of the terms of this gradient (e.g. the activations $a^n_i$ and the sums $\sum_{i} w^n_{i,j} a_i$) are computed during forward propagation when using the network to generate output. 
We will store such values during the forward pass and use a backward pass to fill in the rest of the gradient. Furthermore, notice that $\ell'_\lambda$ and $(A^n)'$ are well understood functions 
whose derivatives can be computed analytically almost everywhere. We can see that all of the partials will be of the form 
$\dfrac{\partial L}{\partial w^l_{n, i}} = \delta^l_n a^l_i$ where $\delta^l_n$  will contain terms which are either pre-computed or can be computed analytically. Conveniently, we can define this error signal recursively: 
\[
\delta^l_n = A'^l (a^l_{n}) \sum_{i = 1}^n w^{l+1}_{i, n} \delta^{l+1}_i
\]
In matrix form, we have
\[\bar \delta^l = \bar A'^l(W^l \bar a^l) \odot ((W^{l+1})^T \bar \delta^{l+1})\]
Where $\odot$ signifies element-wise multiplication. 

Then we can compute the gradient with respect to each layer's matrix $W^l$ as an outer product: 
\[\nabla_{W^l} L = \bar \delta^l \bar a^{(l-1)T}.\]
Since this recursion for layer $n$ only requires information from layer $n+1$, this allows us to propagate the error signals that we compute backwards through the network.

\subsubsection{Optimization of Weights}

Given a set of training input data and a method for computing gradients, our ultimate goal is to iteratively run our training-data through the network, updating weights gradually according to the gradients computed by backpropagation. In general, we start with some
default arrangement of the weights and choose a step
size $\eta$ for gradient descent. Then for each weight, in each iteration
of the learning algorithm, we apply a correction so that 
\[w'_{i',j',k'} = w_{i',j',k'}-\eta \frac{\partial E(Y,\hat Y)}{\partial
    w_{i',j',k'}}\]
    In this case, the step size (learning rate) $\eta$ is fixed throughout training.
    Numerical computation of the gradient requires first evaluating the network forward by computing the output for a given input. The value of every node in the network is saved and these values are used to weight the error as it is propagated backward through the network. Once the gradient is computed, the weights are adjusted according to the step defined above. This process is repeated until convergence is attained to within a tolerance. It should be
clear from the number of terms in this calculation that the initial
guess and step size can have significant effect on the eventual trained weights.
Due to lack of a guarantee for general convexity,
~\citet{Bishop:2006:PRM:1162264} observed that poor guesses for such a
large number of parameters can lead to gradients blowing up or
down. Due to non-linearity and the plenitude of local minima in the
loss function, classic gradient descent usually does not perform well
during ANN training. \\  
By far the most common technique for training the weights of neural networks adds noise in the form of random re-orderings of the training data to the general optimization process and is known as stochastic gradient descent. 
\begin{definition}{Stochastic Gradient Descent (SGD)}

Given an ANN $N: \R^n \to C$, an initial set of weights for this network $\vec w_0$ (usually a small random perturbation from 0), a set of training data $X$ with labels $Y$, and a learning rate $\eta$, the algorithm is as follows: 

\begin{algorithm}
\caption*{Batch Stochastic Gradient Descent}\label{sgd}
\begin{algorithmic}[ht]
\State $w = w_0$
\While{$E(\hat Y, P_w(X))$ (cumulative loss) is still improving} \Comment{ (the stopping condition may require that the weight change by less than $\e$ for some number of iterations or could be a fixed number of steps)}
\State Randomly shuffle $(X,Y)$
\State Draw a small batch $(\hat X, \hat Y) \subset (X, Y)$
\State $w \leftarrow w - \eta \left(\sum_{(x,y) \in (\hat X, \hat Y)}  \nabla L(P_w(\hat x), \hat y)\right)$
\EndWhile
\end{algorithmic}
\end{algorithm}
\end{definition}
Stochastic gradient descent achieves a smoothing effect on the gradient optimization by only sampling a subset of the training data for each iteration. Miraculously, this smoothing effect not only often achieves faster convergence, the result also generalizes better than solutions using deterministic gradient methods 
\citep{HardtRS15}. It is for this reason that SGD has been adopted as the de facto standard among ANN training applications.

\chapter{Adversarial Attacks}

\label{Chapter2} 

Deep Neural Networks (DNNs) and their variants are core to the success
of modern machine learning as summarized by ~\citet{prakash2018}. They
have dominated competitions in image processing, optical character
recognition, object detection, video classification, natural language
processing, and many other fields ~\citep{SCHMIDHUBER201585}. Ten years
ago an interesting property of
such networks was observed by ~\citet{szegedy2013}. Their approach was to define a loss function
relating the output of the ANN for a given initial image to a target adversarial 
output plus the $L^2$-norm of the input and use backpropagation to
compute gradients -- not on the weights of the neural network, but on
just the input layer to the network. The solution to this optimization
problem, efficiently approximated by a gradient-based optimizer, would
be a slightly perturbed natural input with a highly perturbed
output. Their experimental results are striking, which we can see in
Figure.~\ref{fig:szegedy}.  More mysteriously, these examples
are often transferable -- an attack generated against one
model may succeed against a totally different model. With the
incredible expansion of the application of universal function
approximators in machine-learning, their reliability has come to have
real-world significance. Self-driving cars, including those
manufactured by Tesla Incorporated, use image classification models to
distinguish stop-signs and speed-limit
signs. ~\citet{DBLP:journals/corr/EvtimovEFKLPRS17} have shown that
these models are not robust!. Other machine-learning (ML) models are
increasingly relied upon by the defense intelligence apparatus
~\citep{hutchins2011intelligence}. Social media and search engines
which are now the backbone of the internet use ML increasingly
to determine what content will receive attention. In order to wisely
use these tools, it is crucial that we carefully understand their
limitations. 

Adversarial examples occur when natural data can be perturbed in small
ways in order to produce a similar input which receives a
significantly different model output. ``Small'' in this context may
refer to small in a particular metric or sometimes is referred to in
the context of human perception. It is important to note that Adversarial
examples are not just a peculiarity, but seem to occur for most, if
not all, ANN  classifiers. For example, ~\citet{inevitable2018} used
isoperimetric inequalities on high dimensional spheres and hypercubes
to conclude that there is a reasonably high probability that each
correctly classified data point has a nearby adversarial
example. ~\citet{ilyas2019adversarial} argued that optimized models use
some subtle features for classification which are neither intuitive to
humans nor robust to perturbation. The argue that ML models can
efficiently extract features from training data, but that they do not
connect these features robustly across scales. The prevalence of these
features is illustrated by ~\citet{madry2018towards} with the simple experiment of adding vast
quantities of adversarially perturbed data during training. Although
this method increases adversarial robustness at a cost to prediction
accuracy ~\citep{tsipras2018robustness}, it does not do so very
significantly, and leaves behind vulnerabilities that can still be
reduced to non-robust features ~\citep{inevitable2018}.  


We will take a geometric approach to analyzing 
robustness, both in terms of the models' understanding of underlying
data geometry and by carefully defining the decision boundary of a
model and studying its properties. There have been many
attempts to identify adversarial examples using properties of the
decision boundary.  ~\citet{Fawzi2018empirical} found that decision
boundaries tend to have highly curved regions, and these regions tend
to favor negative curvature, indicating that regions that define
classes are highly nonconvex. These were found for a variety of ANNs
and classification  tasks. 
A related idea is that adversarial examples often arise within cones, outside of which images are classified in the original class, as observed by ~\citet{roth19aodds}. Many theoretical models of adversarial examples, for instance the dimple model developed by ~\citet{shamir2021}, have high curvature and/or sharp corners as an essential piece of why adversarial examples can exists very close to natural examples.

\begin{figure}[ht]
    \centering
\includegraphics[width=7.3cm]{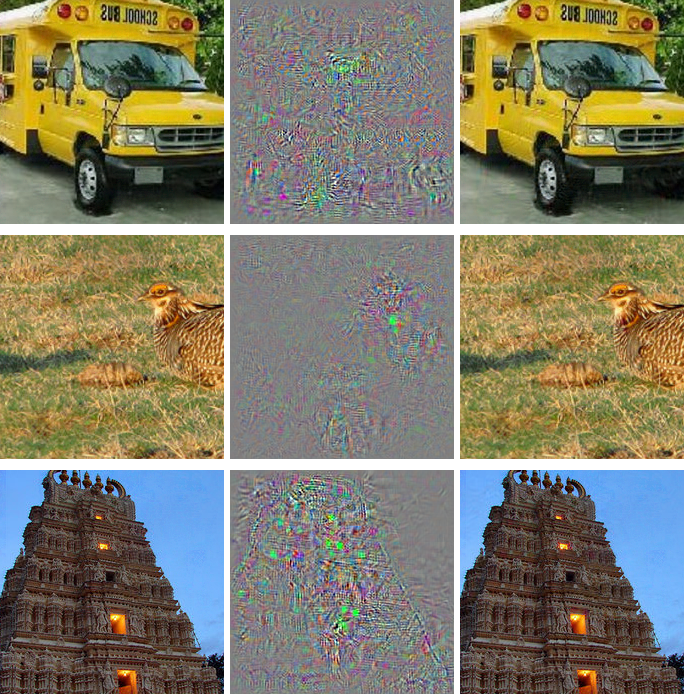}\includegraphics[width=7.3cm]{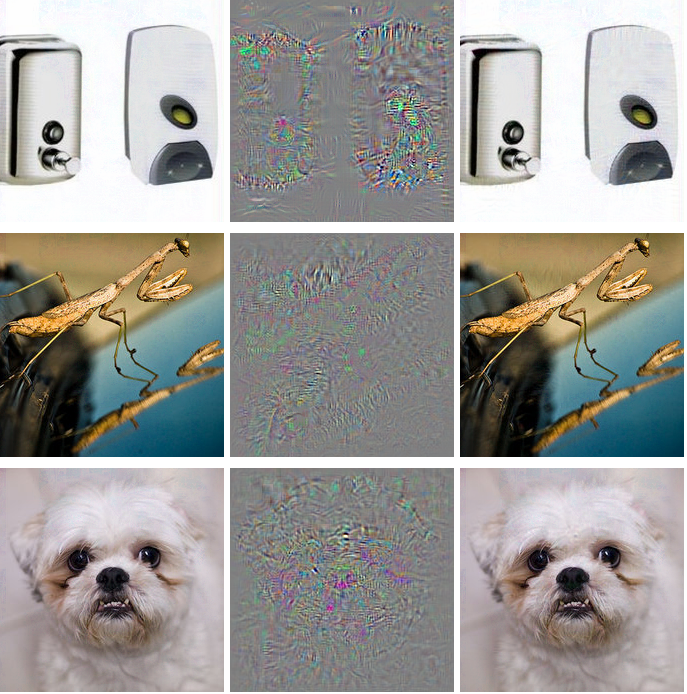}
    \caption{Natural Images are in columns 1 and 4, Adversarial images are in columns 3 and 6, and the difference between them (magnified by a factor of 10) is in columns 2 and 5. All images in columns 3 and 6 are classified by AlexNet as "Ostrich" ~\citep{szegedy2013}.}
    \label{fig:szegedy}
\end{figure}

\section{Common Datasets}

The first step toward understand adversarial attacks understanding
the data on which neural networks are built. We will limit our
investigation mostly to classic image classification problems,
although several of our results will hold more generally. The data set
used above in Figure.~\ref{fig:szegedy} is known as ImageNet -- a
large set of labeled images varying in size originally compiled for
the ImageNet Large Scale Visual Recognition Challenge (ILSVRC
~\citet{ILSVRC15}). This dataset has become a
standard for image classification and feature identification
experiments. In the experiments that follow, ImageNet will be featured
alongside the Modified National Institute of Standards and Technology
dataset (MNIST ~\citet{MNIST}) which is a database of hand written
digits often used to develop image processing and character
recognition systems. This dataset is much lower resolution than
ImageNet and therefore experiments run much more quickly on it and
require less complex input/output.  

\section{Common Attack Techniques}
Adversarial attacks are generally produced by introducing an objective
function. This objective balances achieving a change in predicted
classification with minimizing the perturbation needed to achieve the
desired prediction. The adversarial objective can use \emph{cross-entropy
loss} to compare predictions against a specific target or the negation
of the original model prediction for a given input
~\citep{good1963maximum}. Perturbation size is often measured using a
regularization term in image space (e.g. the $L^2$ norm) which
penalizes the generated adversary for being too far from its starting
point. This loss function is combined with an optimization algorithm
in order to produce an attack technique. 

\subsection{L-BFGS Minimizing Distortion}\label{lbfgs}

The original attack used by ~\citet{szegedy2013} set up a
box-constrained optimization problem whose approximated solution
generates these targeted mis-classifications. We will write this
precisely according to their formulation: \\

Let $f : \R^m \to \{1,...,k\}$ be a classifier and assume $f$ has an
associated continuous loss function denoted by loss$_f : \R^m \times
\{1,...,k\} \to \R^+$ and $l$ a target adversarial class or output. \\
\textbf{ Minimize} $\Norm{r}_2$ subject to:
\begin{enumerate}[1.]
\item $f(x + r) = l$
\item $x + r \in [0,1]^m$
\end{enumerate}

The solution is approximated with L-BFGS (see Appendix \ref{appa}) as
implemented in Pytorch or Keras. This technique yields examples that
are close to their original counterparts in the $L^2$ sense, but are
predicted to be another class by the model with high confidence.  \\



\paragraph{L-BFGS: Mnist}
The following examples are prepared by implementing the above
technique via pytorch ~\citep{pytorch2019} on images from the Mnist
dataset with FC200-200-10, a neural network with 2 hidden layers with
200 nodes each in Figure.~\ref{lbfgsa}:
\begin{figure}[ht]

\includegraphics[trim=200 185 100 200, clip, width=7cm]{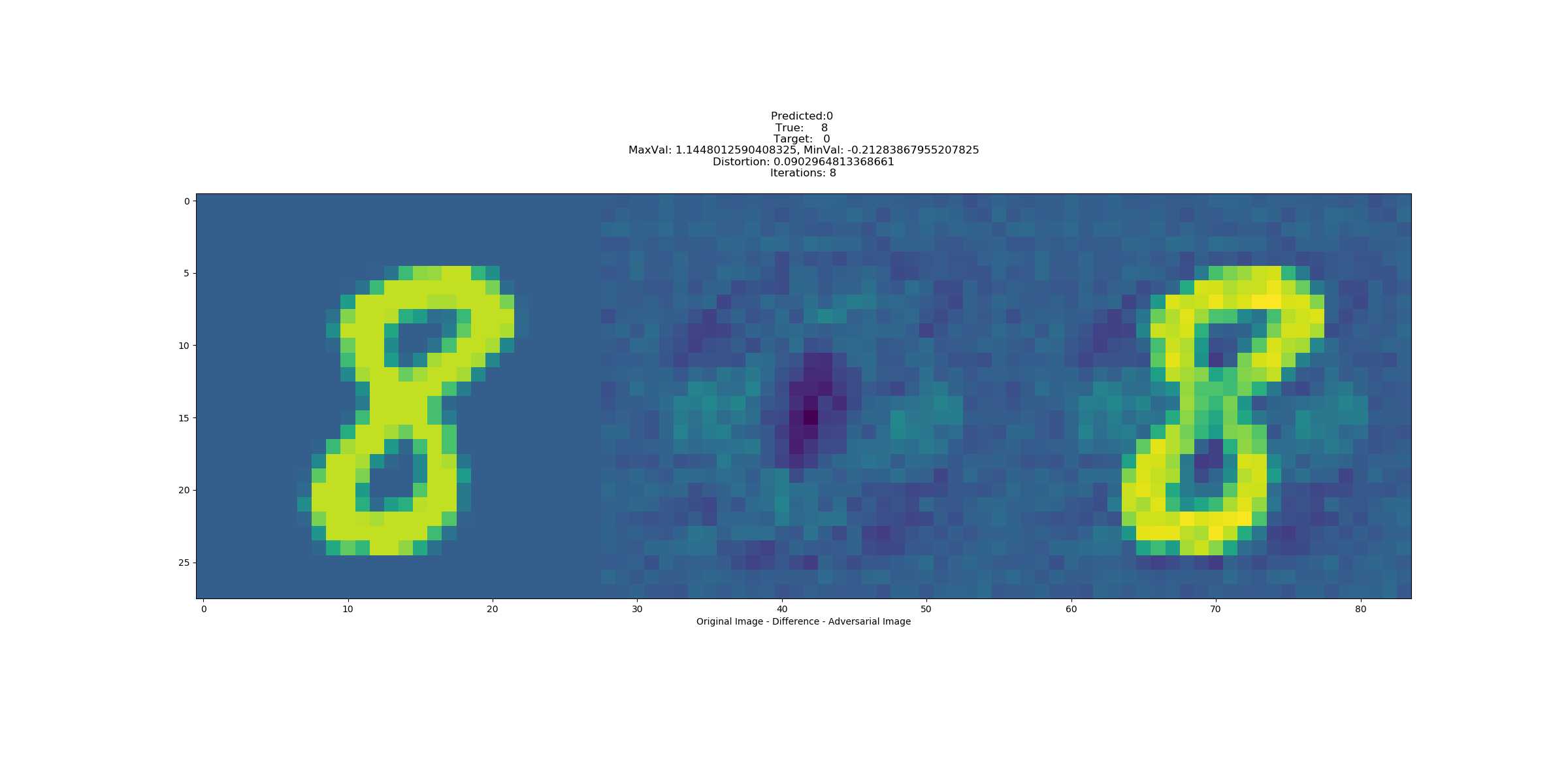}\includegraphics[trim=200 185 100 200, clip,width=7cm]{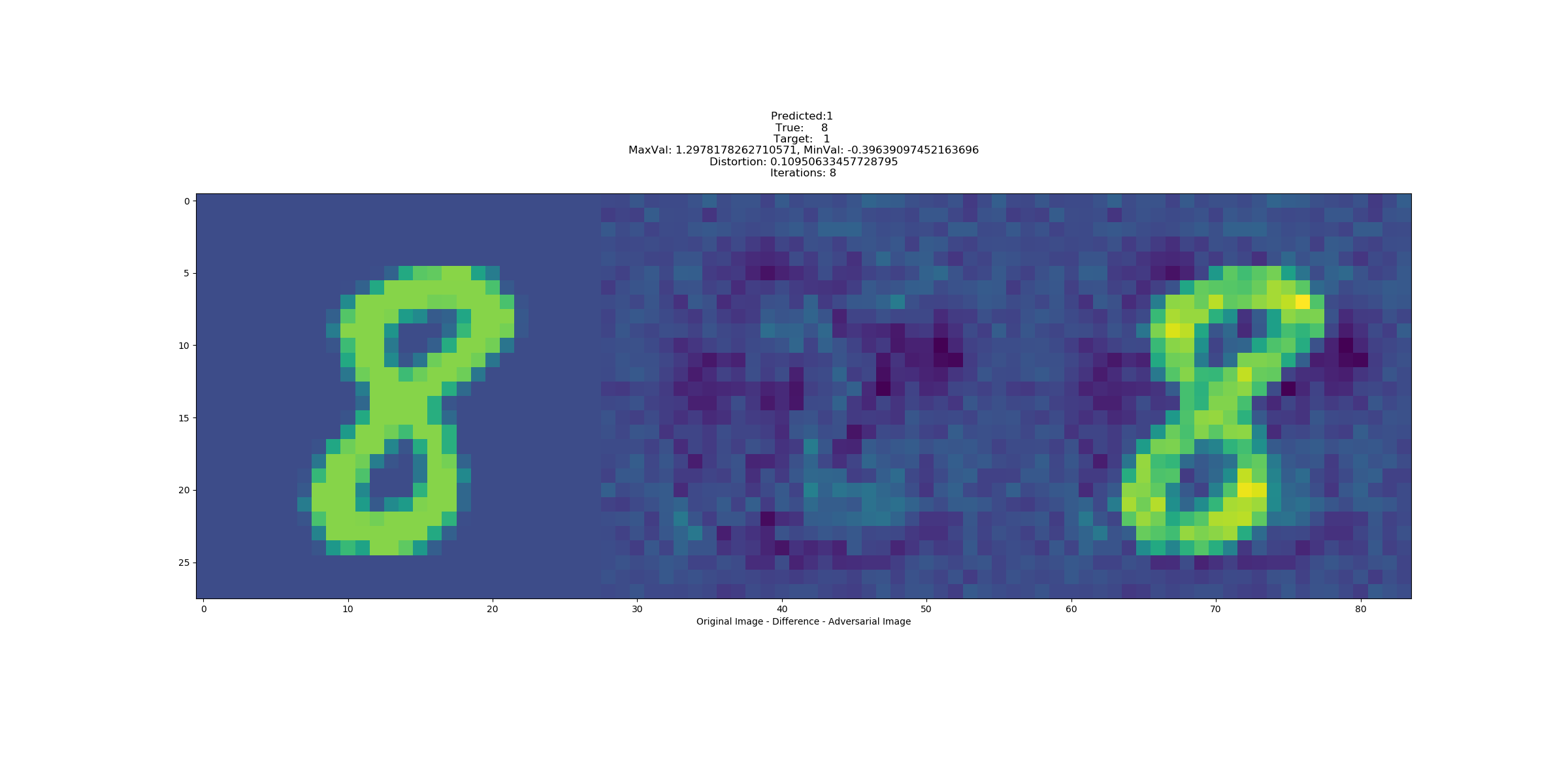}
\includegraphics[trim=200 185 100 200, clip,width=7cm]{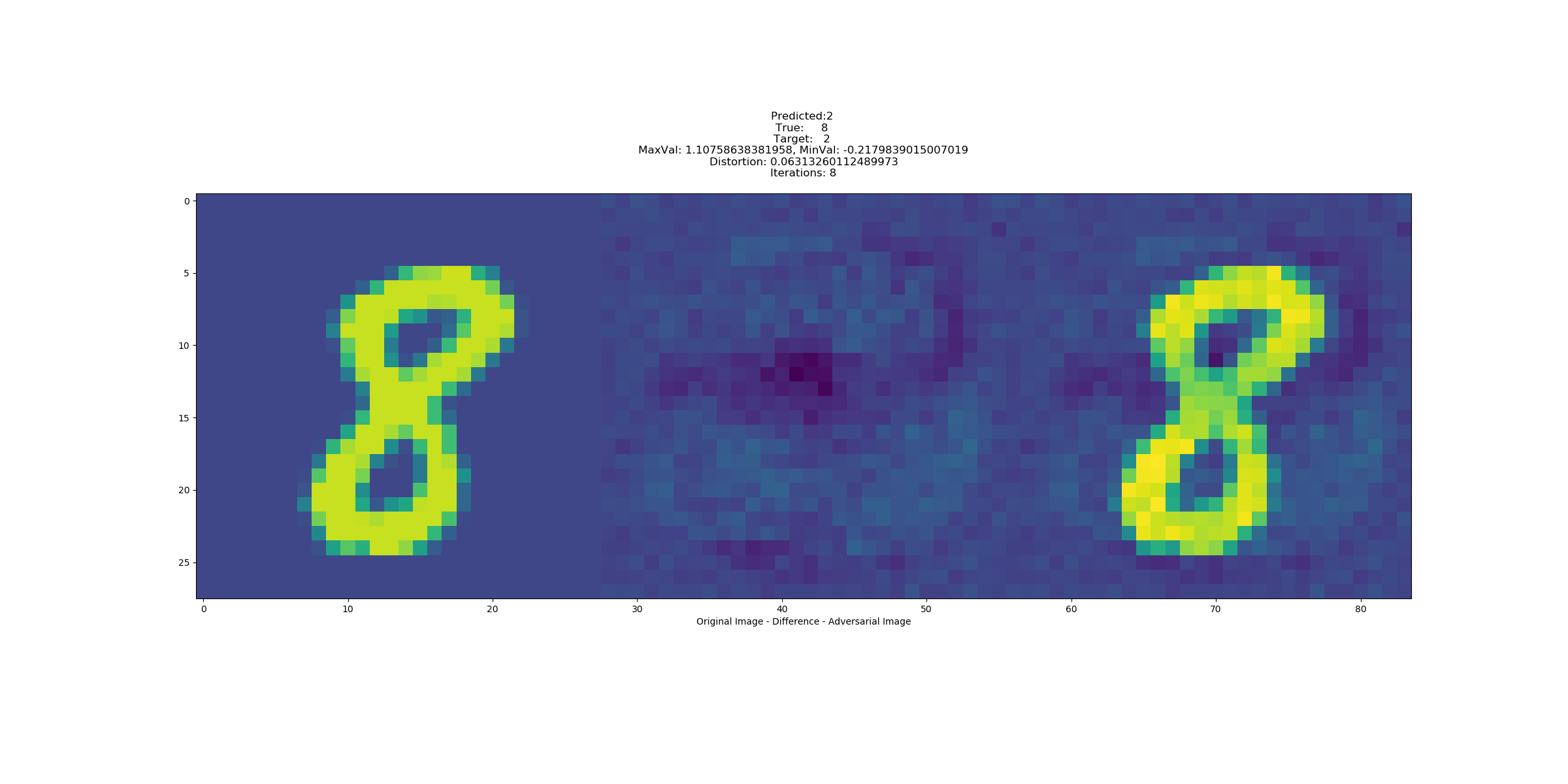}\includegraphics[trim=200 185 100 200, clip,width=7cm]{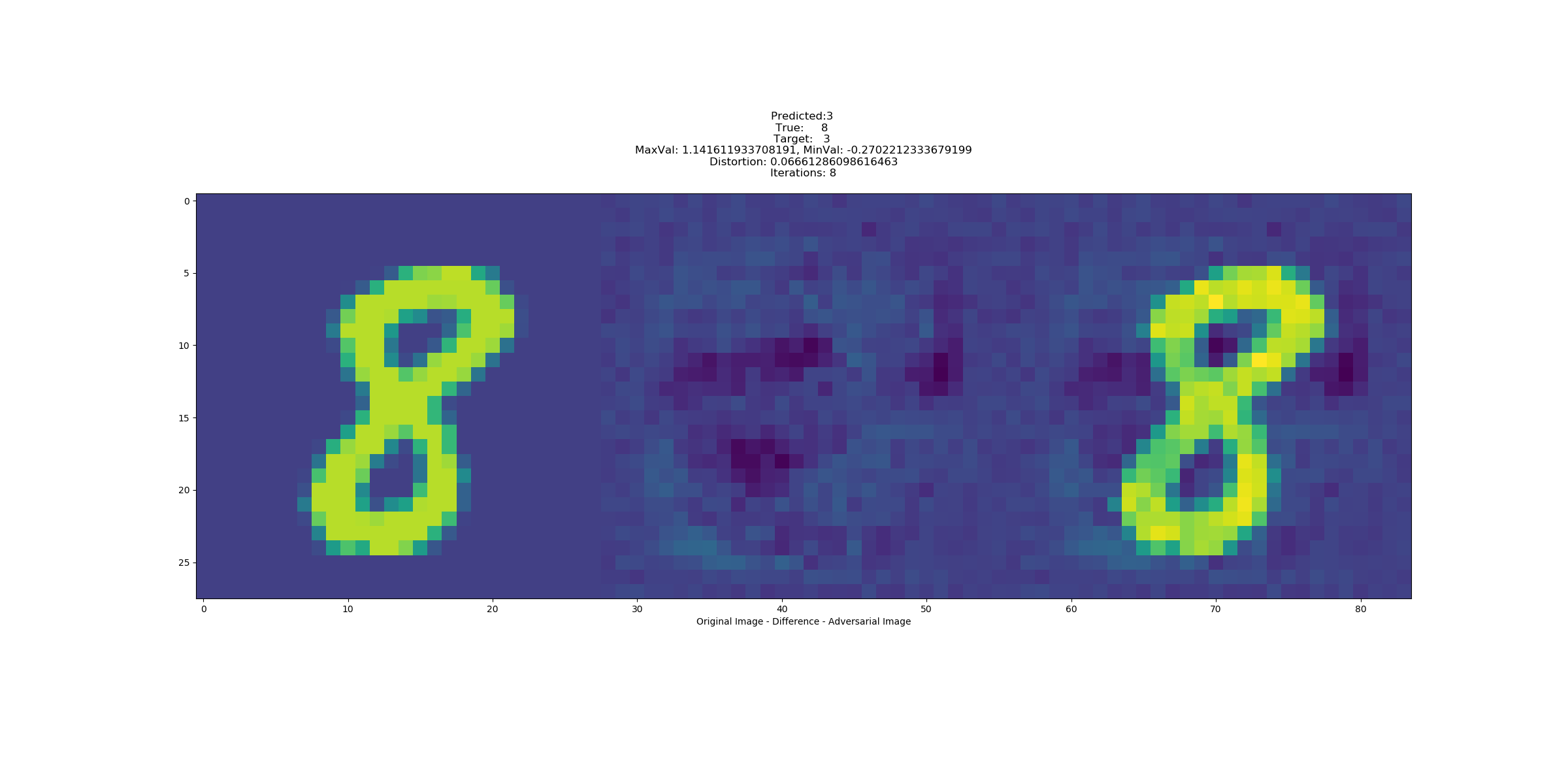}
\includegraphics[trim=200 185 100 200, clip,width=7cm]{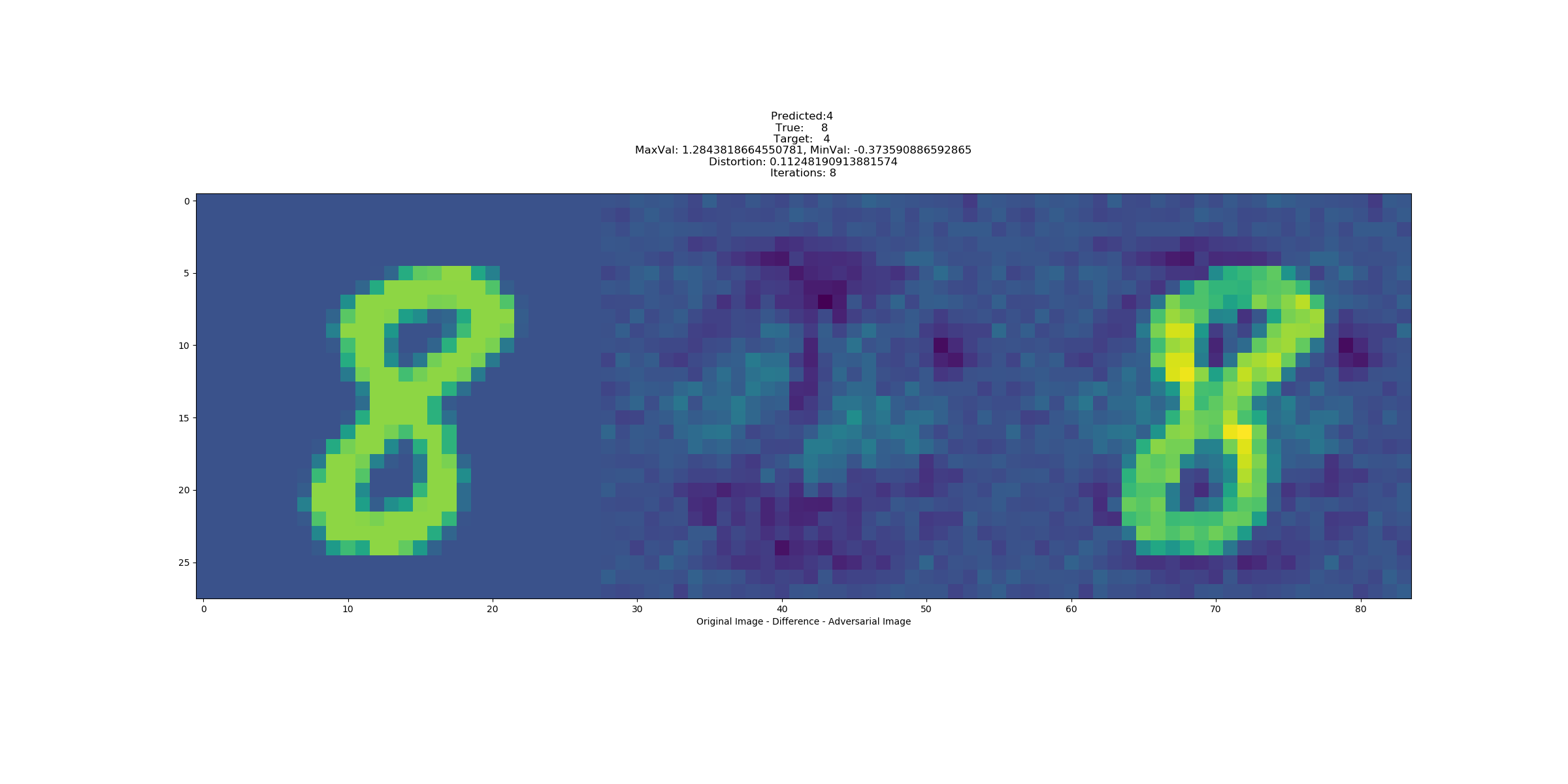}\includegraphics[trim=200 185 100 200, clip,width=7cm]{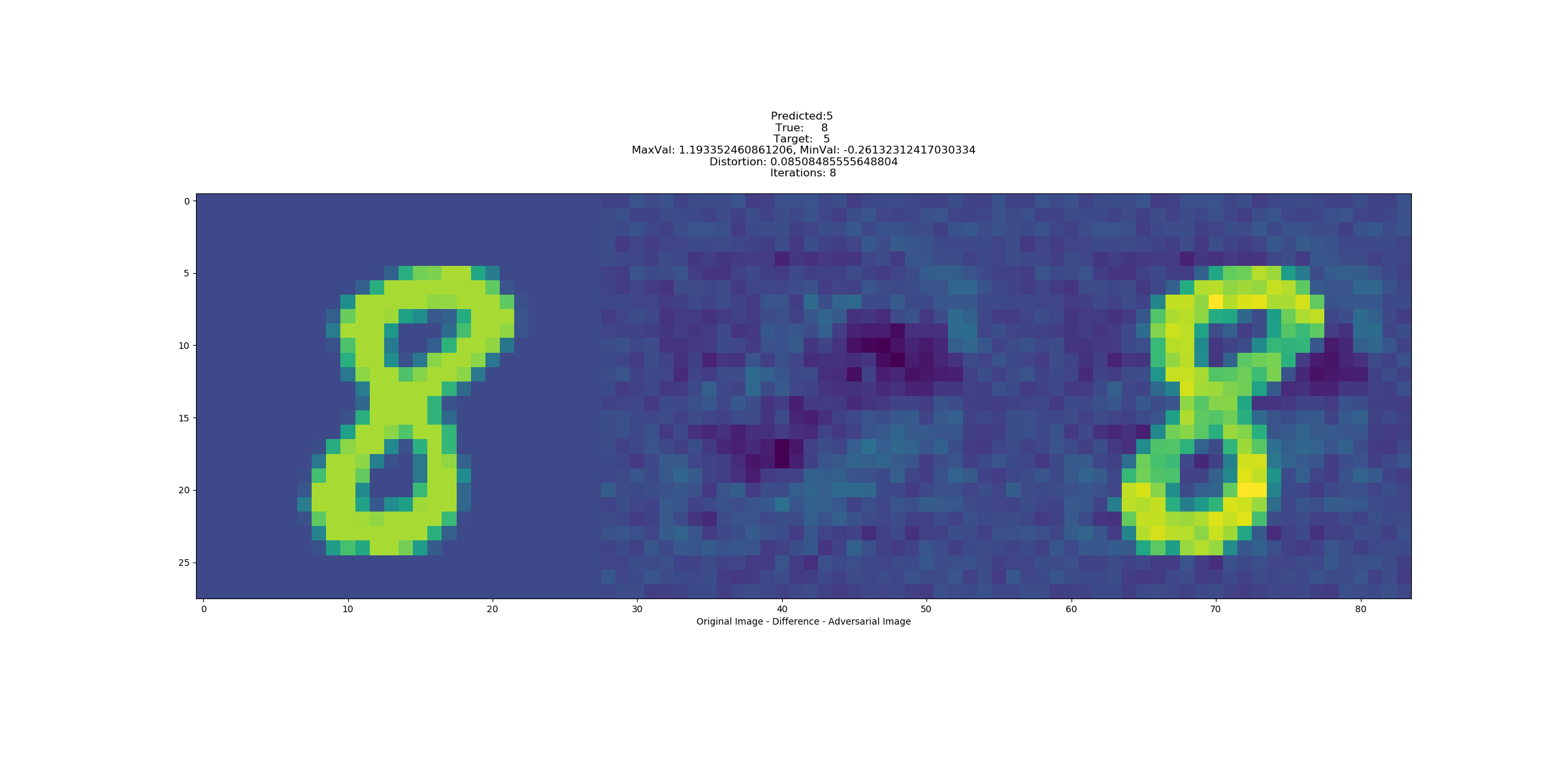}
\caption{Original images on the left, Perturbation is in the middle, Adversarial Image (total of Original with Perturbation) is on the right. Column 1 shows an original 8 being perturbed to adversarial classes 0, 2, and 4. Column 2 shows adversarial classes 1, 3, and 5}
\label{lbfgsa}
\end{figure}
Szegedy et al. define a metric to compare the magnitude of these perturbations:
\begin{definition}{Distortion is the $L^2$ norm of the difference between an original image and a perturbed image, divided by the square root of the number of pixels in the image: }
\[\sqrt{\dfrac{\sum_i  (\hat x_i - x_i)^2}{n}}\]
\end{definition}
Distortion is $L^2$ magnitude normalized by the square-root of the number of dimensions so that values can be compared for modeling problems with differing numbers of pixels. 

900 examples were generated for the network above. We measured an
average distortion of 0.089 with a distribution given in
Figure.~\ref{lbfgsh}. Another histogram is provided for distortions
measured from attacks against the VGG16 (Visual Geometry Group
Network 16) network trained on the ImageNet dataset
in\label{lbfgsi}. This histogram demonstrates that ImageNet networks
are vulnerable to much more subtle adversarial attacks. 

\begin{figure}[ht]
\includegraphics[trim=200 80 100 100, clip, width=16cm]{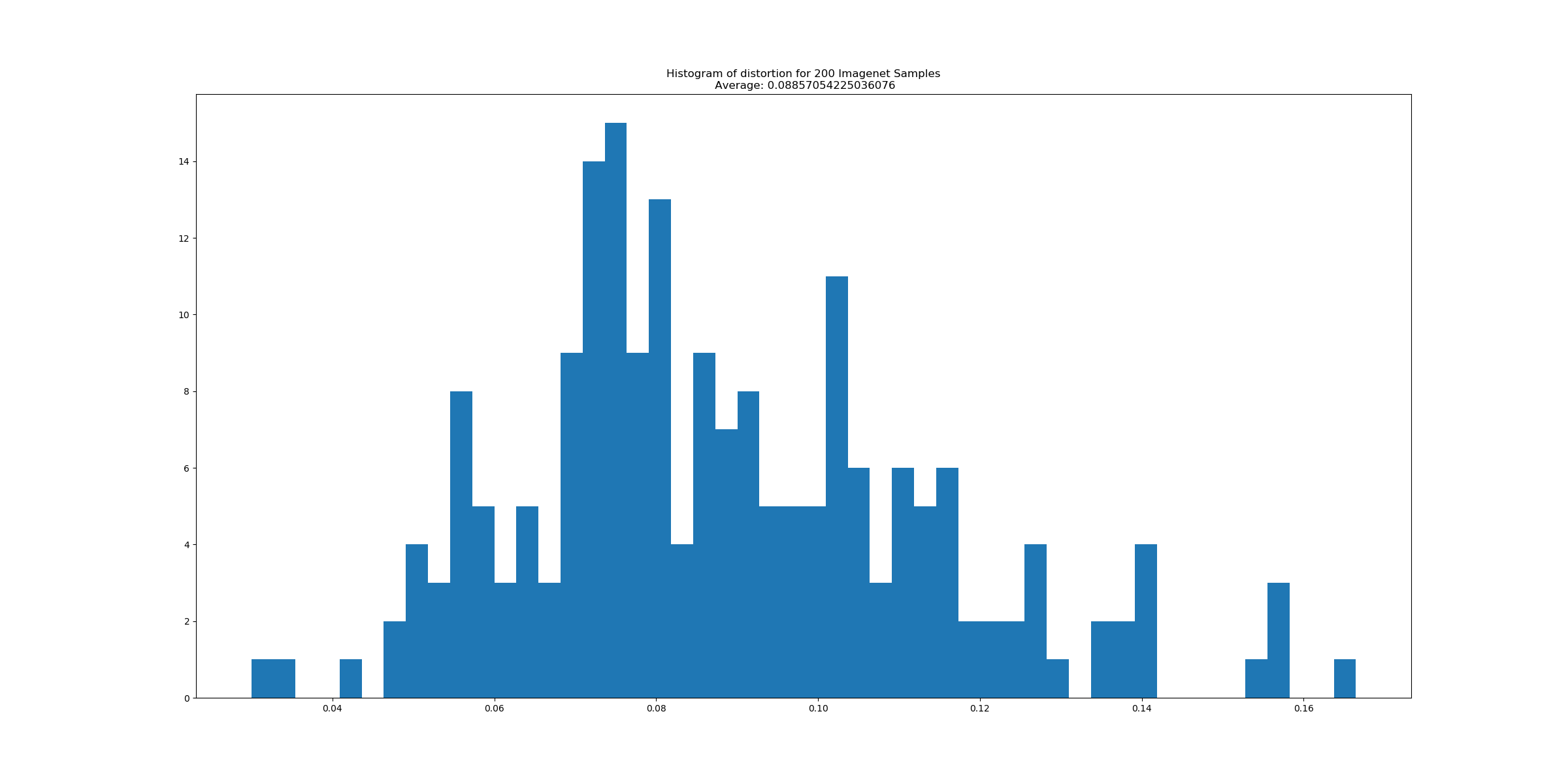}
\caption{A histogram of the distortion measured for each of 900 adversarial examples generated using L-BFGS against the FC-200-200-10 network on Mnist. Mean distortion is 0.089.}
\label{lbfgsh}
\end{figure}

\paragraph{L-BFGS: ImageNet}
\label{lbfgs-s}
We also tried to replicate the results of ~\citet{szegedy2013} on ImageNet. Attacking VGG16, a well known model from the ILSVRC-2014 competition ~\citep{simonyan2014very}, on ImageNet images with the same technique generates the examples in Figure.~\ref{lbfgsis}: 

\begin{figure}[ht]

\includegraphics[trim=200 185 100 200, clip, width=8cm]{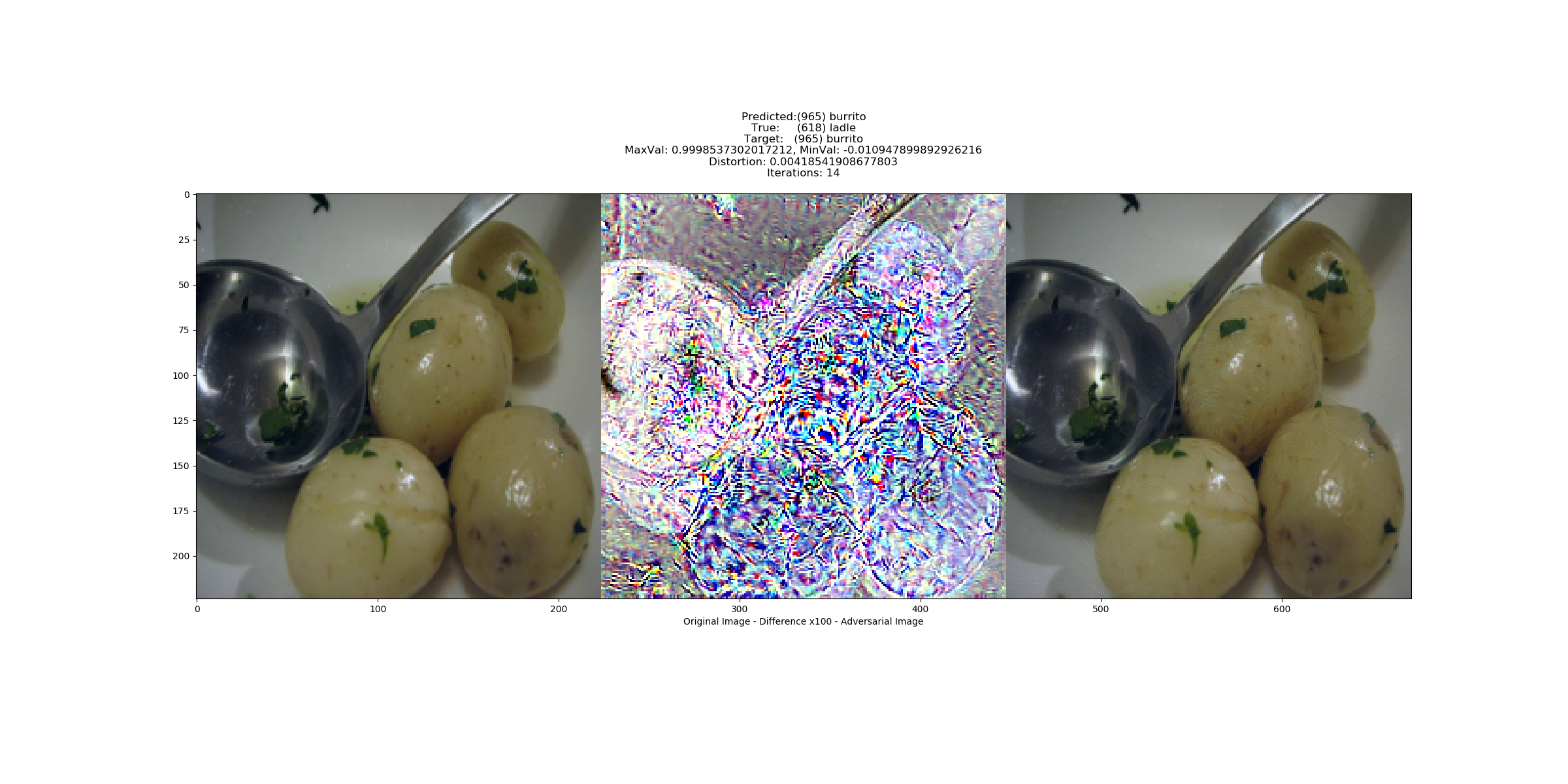}\includegraphics[trim=200 185 100 200, clip, width=8cm]{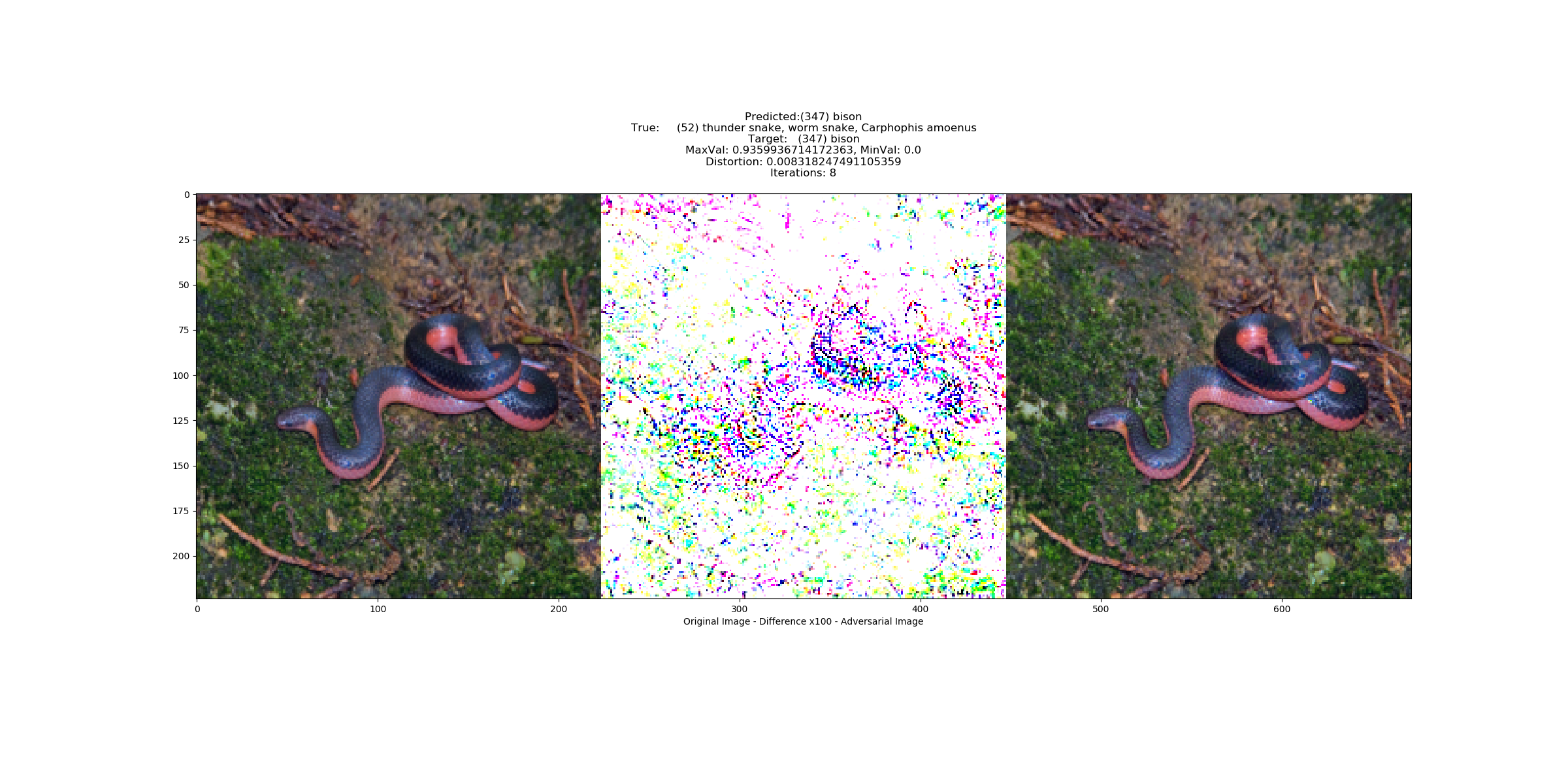}
\includegraphics[trim=200 185 100 200, clip, width=8cm]{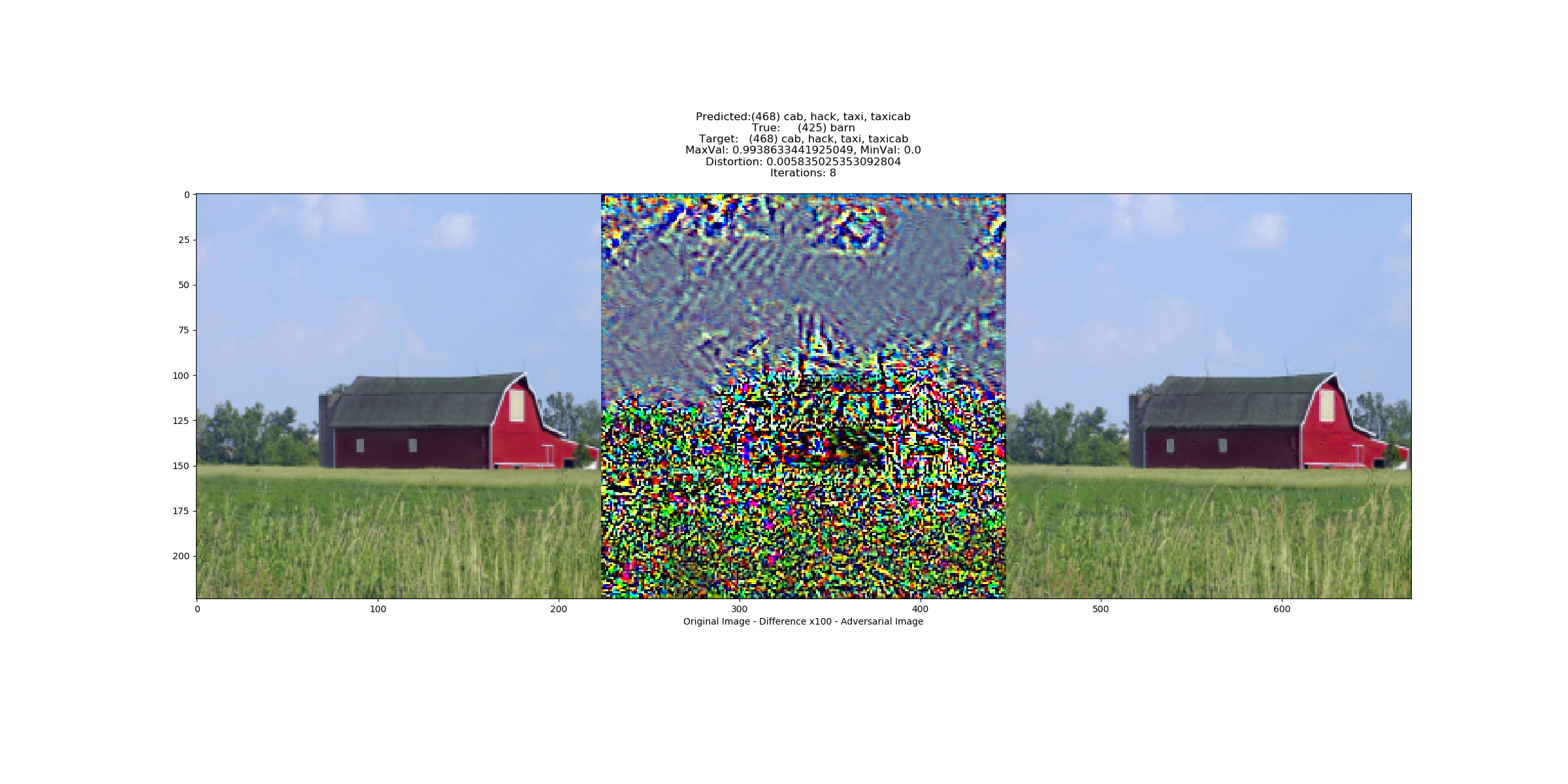}\includegraphics[trim=200 185 100 200, clip, width=8cm]{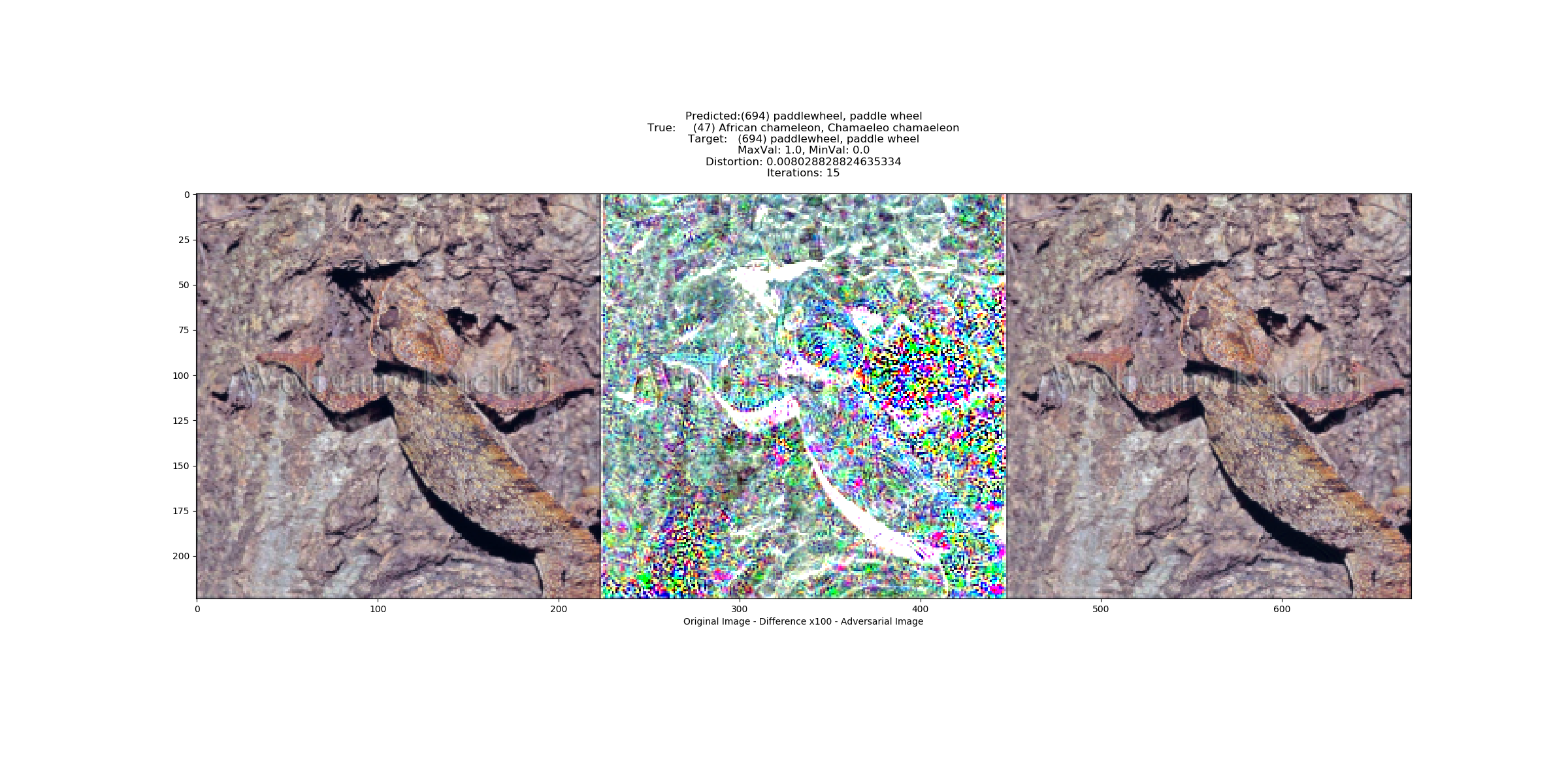}
\caption{Original images on the left, Perturbation (magnified by a factor of 100) by is in the middle, Adversarial Image (total of Original with Perturbation) is on the right. }
\label{lbfgsis}
\end{figure}


\begin{figure}[ht]
\includegraphics[trim=200 80 100 100, clip,width=14cm]{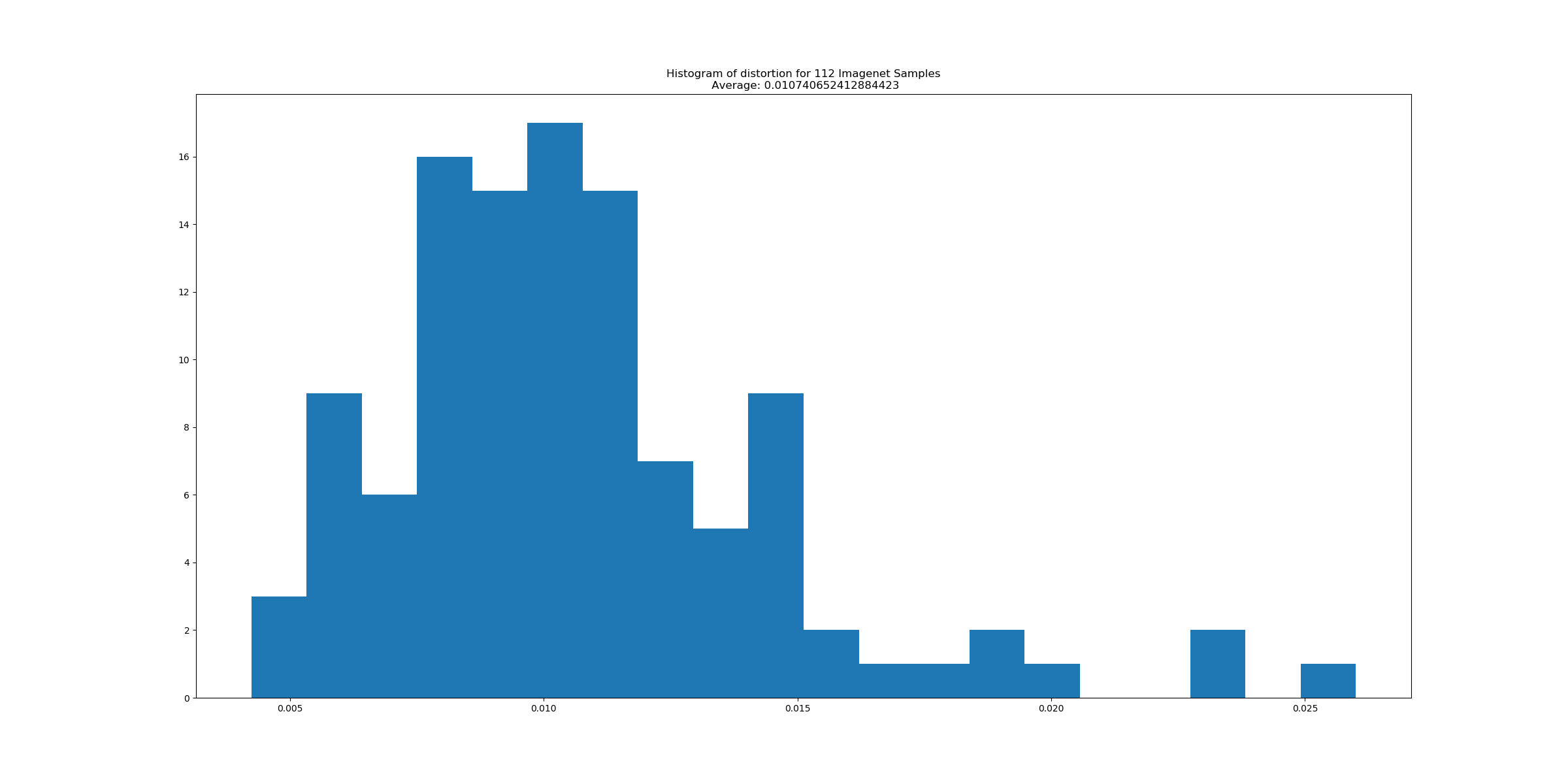}
\caption{A histogram of the distortion measured for each of 112 adversarial examples generated using L-BFGS against the VGG16 network on ImageNet images with mean distortion 0.0107}
\label{lbfgsi}
\end{figure}

\paragraph{Fast Gradient Sign Method (FGSM)} 

As the study of adversarial examples has expanded, it has become known
that often very simple single-step attacks are successful and
sufficiently subtle. ~\citet{goodfellow_explaining_2014} proposed one
such attack which we have also implemented. This is a single step
attack process which uses the sign of the gradient of the loss
function $L$  with respect to the image to find the adversarial
perturbation. For given $\e$, the modified  image $\hat x$ is
computed as 
\begin{equation}
\hat{x} = x + \epsilon \text{sign} (\nabla L (P_w(x),x))
\end{equation}

This method is simpler and much faster to compute than the L-BFGS technique described above, but produces adversarial examples less reliably and with generally larger distortion. Performance was similar but inferior to the Iterative Gradient Sign Method summarized below.  

\paragraph{Iterative Gradient Sign Method (IGSM)}
\label{igsm-s}
In work by ~\citet{kurakin_adversarial_2016}
  an iterative application of FGSM was proposed. After each
  iteration, the image is clipped to a $\e L_\infty$ neighborhood of the original. Let $x'_0 = x$, then after $m$ iterations, the adversarial image obtained is:
\begin{equation}
x_{m+1}' = \text{Clip}_{x,\epsilon} \Bigl\{x_m' + \alpha \times \text{sign}(\nabla \ell (F(x'_m),x'_m))  \Bigr\} 
\label{igsm}
\end{equation}
Where $\text{Clip}_{x,\e}$ takes the minimum of $x+\e$ and $x'_m$ for
elements larger than $x+\e$ and vice versa. 
This method is faster than L-BFGS and more reliable than FGSM but
still produces examples with greater distortion than L-BFGS. An
example is shown in Figure.~\ref{fgsmhip}.  
\begin{figure}[ht]
  \centering
\includegraphics[trim=200 110 1200 102, clip,width=4cm]{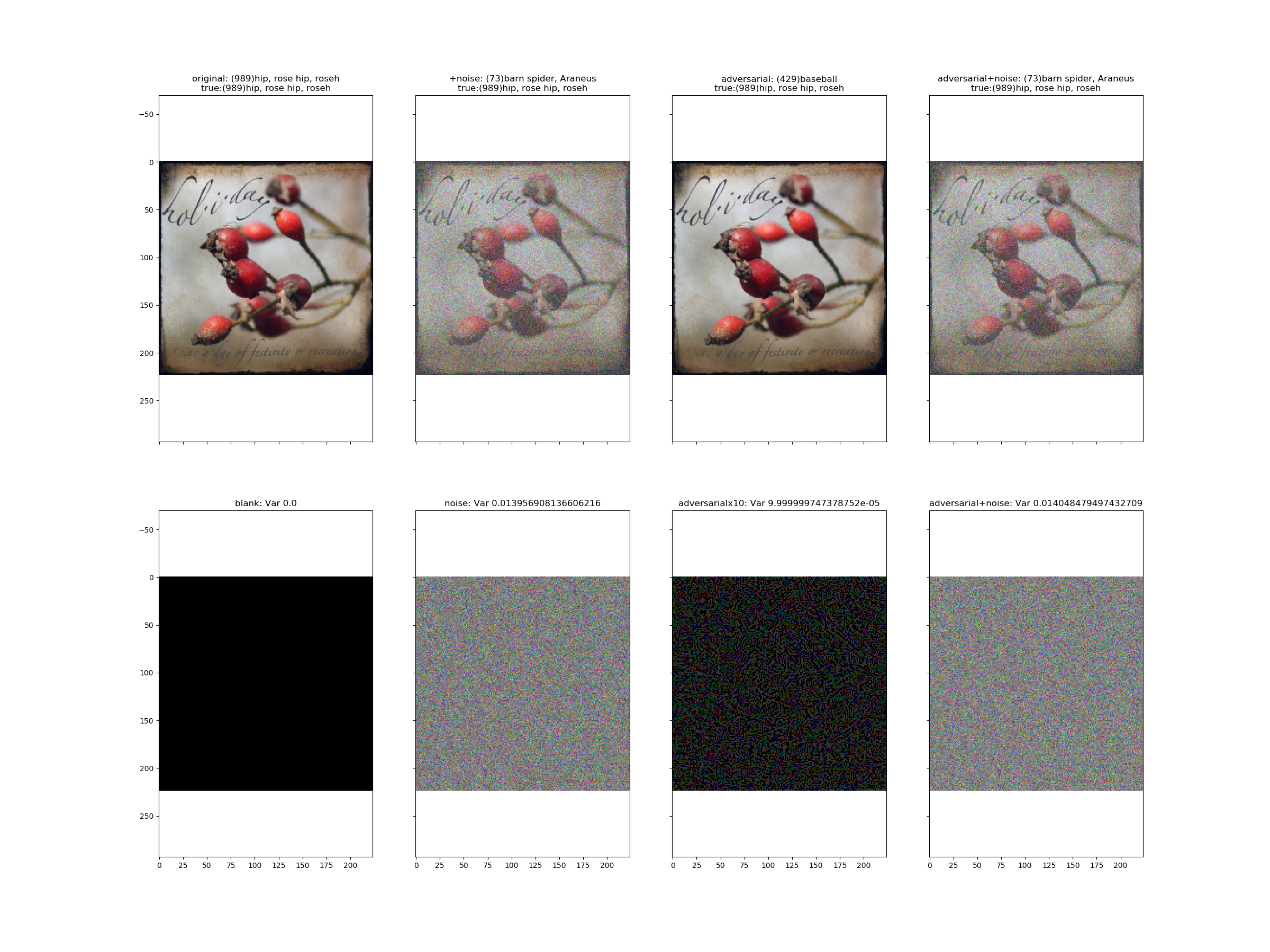}\includegraphics[trim=900 110 500 102, clip,width=4cm]{c1_figures/ILSVRC2012_val_00002900summary_plot.png}
\caption{adversarial example generated against VGG16 (ImageNet) with IGSM. Original Image on the left, adversarial image and added noise (ratio of variance adversarial noise/original image: 0.0000999) on the right. }
\label{fgsmhip}
\end{figure}

\subsection{Other Attacks}
The following attack techniques are also prevalent in the literature 

\paragraph{Jacobian-based Saliency Map Attack (JSMA)} Another attack noted by  ~\citet{papernot_limitations_2015}
  estimates the \emph{saliency map}, a rating for each of the input features (e.g. each pixel) on how influential it is for causing the model to predict a particular class with respect to the model output ~\citep{wiyatno2018saliency}. This attack modifies the pixels that are most salient. This is a targeted attack, and saliency is designed to find the pixel which increases the classifier's output for the target class while tending to decrease the output for other classes.

\paragraph{Deep Fool (DFool)} A technique proposed by ~\citet{moosavi-dezfooli_deepfool:_2015}
  to generate an un-targeted iterative attack. 
This method approximates the classifier as a linear decision boundary and then finds the smallest perturbation needed to cross that boundary.
This attack minimizes $L_2$ norm with respect to  to the original image.

\paragraph{Carlini \& Wagner (C\&W)} In work by ~\citet{carlini_towards_2016}
  an adversarial attack is proposed which updates the loss function
  such that it jointly minimizes $L^p$ and a custom differentiable
  loss function based on un-normalized outputs of the classifier
  (called \textit{logits}). 
Let $Z_k$ denote the logits of a model for a given class $k$, and $\kappa$ a margin parameter. Then C\&W tries to minimize:
\begin{equation}
|| x - \hat{x} ||_p + c* max\left(Z_k(\hat{x}_y) - max\{Z_k(\hat{x}) : k \neq y\},-\kappa\right)
\end{equation}

\subsection{Attack Standards and Toolbox}

Since adversarial robustness has expanded as a field, many papers have
been released pushing various methods for defending against
adversarial attacks. While initially this approach -- producing a
defense that fit a narrow context and releasing it to the community
for evaluation was seen as useful. However, most such approaches would
inevitably face simple rebuttals by small modification of the attack
techniques used. Carlini and their group gained a particular
reputation for brief rebuttals ~\citep{carlini_towards_2016,
  papernot_cleverhans_2016} of such methods. These approaches were
finally codified by ~\citet{tramer2020adaptive} in the form of a set of
guidelines that should be used to attack any proposed defense before
releasing it to the community. This high bar has greatly reduced the
number of low quality defenses which gain attention, but it has also
demonstrated the incredible difficulty of producing successful general
defenses against adversarial attacks. Despite its poor
performance, the strategy of adversarial training proposed by
~\citet{tramer2019adversarial} is one of the few
defenses which have maintained any advantage under the Tramer/Carlini
adaptive framework.

\section{Theory of Adversarial Examples}

Despite the prevalence of studies developing and analyzing adversarial
attacks, the field is characterized by a plethora of definitions for
what it means to be ``adversarial''. We will analyze a few of these in
order to develop our own precise definitions. Indeed, defining an
adversarial example is intimately related with the task of
identification, which leaves a paradox of sorts: If we can precisely
define an adversarial example and that definition allows us to
identify them, then that definition constitutes a perfect defense. In
practice, however, we know this is at least not trivial. 

\subsection{Defining Adversarial Attacks}

~\citet{roth19aodds} proposed a statistical method to identify adversarial examples from natural data. Their main idea was to consider how the last layer in the neural network (the logit layer) would behave on small perturbations of a natural example. 
This is then compared to the behavior of a potential adversarial example. If it differs by a predetermined threshold, the example is flagged as adversarial. Successfully flagging adversarial examples in this way works best when adversarial examples tend to perturb toward the original class from which the adversarial example was perturbed. However, this is not always the case.
It was shown by ~\citet{hosseini2019odds} that it is possible to produce adversarial examples, for instance using a logit mimicry attack, that instead of perturbing an adversarial example toward the true class, actually perturb to some other background class. In fact, we will see in Section \ref{sec:mnist} that the emergence of a background class, which was observed as well by ~\citet{roth19aodds}, is quite common. 

We primarily consider adversarial examples for classifiers.  Let $X$ be a set of possible data and let $L$ be a set of labels. We will consider classifier as a map $\CC: X \to L$. In general $X$ may be much larger than the actual space from which our data are drawn. If the data actually come from a submanifold of $X$, we call this the \emph{data submanifold}. The data submanifold may not be a strict submanifold, and we often do not know the shape or even dimension of it.

Data is drawn from a distribution $\mu$ on $X$ that is usually not known. The overarching goal of classification is to produce a classifier such that $\CC$ is as good as possible on the support of $\mu$. 
We define $X_N \subseteq X$ to be the support of $\mu$ and call it the set of \emph{natural data}. 
Usually our classification problem is the following: given a set of i.i.d. samples $\Sigma \sim \mu$
, where we consider $\Sigma \subseteq X_N$, 
and a classifier $\CC_\Sigma$ on $\Sigma$, find a classifier $\CC$ on $X$ such that $\CC$ lies in some class of ``good functions'' in such a way that it is relatively good at interpolating and/or extrapolating $\CC_\Sigma$. In particular, we hope that $\CC$ is as accurate as possible on the support of $\mu$, which we call the \emph{natural data}. 
The classifier $\CC$ partitions $X$ into classes, each of which is defined as $\CC^{-1}(\ell)$ for some $\ell \in L$. Points on the boundaries of these classes do not have a clear choice of label, and the points in $X$ on the boundaries of the classes make up the \emph{decision boundary} for $\CC$.

To build up to a mathematical framework for adversarial attacks in the
context of geometric analysis, we develop definitions and terms to
refer to adversarial examples without relying on subjective
characteristics like human vision. Let $X$ denote a set of possible
data and $L$ denote a set of labels that distinguish the different
classes. We are now ready to define adversarial examples.

\begin{definition} \label{def:advers}
Let $d$ be a metric on $X$, let $x\in X$ have label $\ell\in L$, and
let $\CC:X\to L$ be a classifier.
We say that $x$ admits an \emph{$(\e,d)$--adversarial example} to $\CC$ if there exists $\hat x \in X$ such that $d(x,\hat x) < \e$ and $\CC(\hat x) \neq \ell$.
\end{definition}

One typically considers Definition \ref{def:advers} in the context of small $\e$. 
Often consideration is made of when such a misclassification is a result of an intentional act by an adversary. 
There are various methods of producing adversarial examples which are discussed later. In some cases, the adversarial label is explicitly targeted:
\begin{definition}
Let $d$ be a divergence on $X$, let $x\in X$ have label $\ell\in L$, and let $\CC:X\to L$ be a classifier.  Let $\varepsilon>0$ and $\ell_t\neq \ell$ be fixed. We say that $x$ admits an \emph{$(\varepsilon,d,\ell_t)$--targeted adversarial example} to $\mathcal{C}$ if there exists $\hat{x}\in X$ such that $d(x,\hat{x})<\varepsilon$ and $\CC(\hat{x})=\ell_t$.
Consider a point $x \in X$ with corresponding class $\ell \in C$ and a classifier $\CC: X \to C$. We say that $x$ admits an \emph{$(\e,d,\ell_t)-$targeted adversarial example}  if there exists a point $\hat x$ such that $d(x,\hat x) < \e$ and $\CC(\hat x) = \ell_t$. 
\end{definition}

These definitions rely on a metric $d$, emphasizing the reliance on the choice of distance to understand notions of closeness. From here on, we will assume that $(X,d)$ is a Euclidean vector space with $d$ being the Euclidean metric. This will allow for the use of standard Gaussian distributions as well.

 The solution to this optimization
problem, efficiently approximated by a gradient-based optimizer, would
be a slightly perturbed natural input with a highly perturbed
output. We have already shown several examples of these techniques
being applied, and one more example can be seen in
Figure.~\ref{fig:my_label} for AlexNet, a cutting-edge model designed
by ~\citet{alexnet} in collaboration with Geoffrey Hinton. \footnote{Note that
Alex Stutskever was both a member of DNNResearch, Hinton's company
which was acquired by Google to become the core of Google Brain and
since 2015 he has been Chief Scientist at OpenAI. He was in the news
recently as a member of the board of OpenAI in November 2023 when Sam
Altman was fired as CEO and then re-hired a week later.} There has
since been significant work describing methods of producing and identifying
adversarial examples. In the next chapter, we will attempt to develop a
framework to understand what properties of an ANN are being exploited
by such methods.

\begin{figure}[ht]
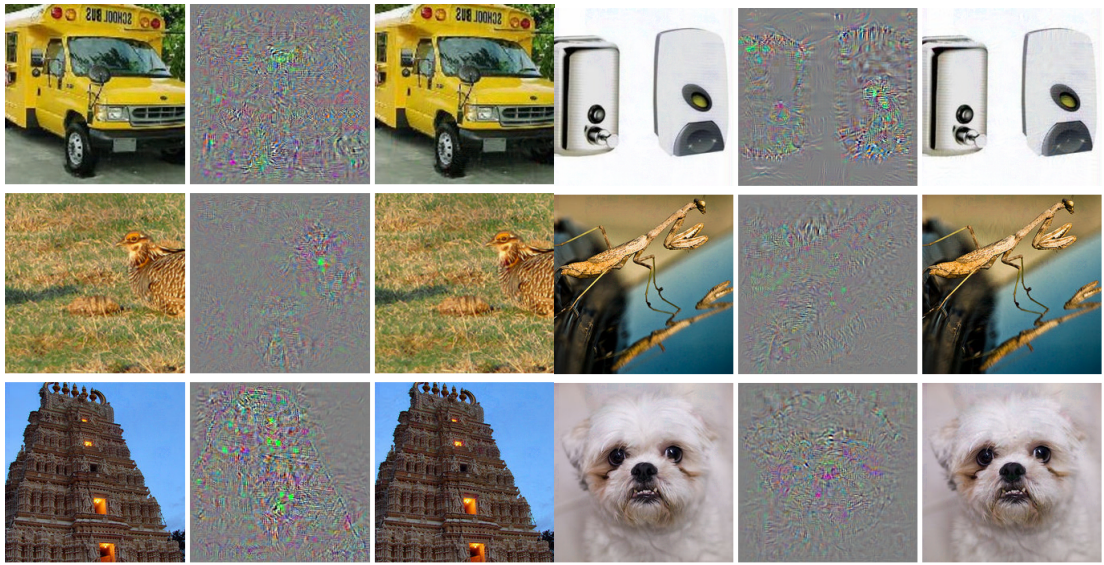

   \centering
\includegraphics[width=7.3cm]{negative1.png}\includegraphics[width=7.3cm]{negative2.png}
   \caption{Natural Images are in columns 1 and 4, Adversarial images are in columns 3 and 6, and the difference between them (magnified by a factor of 10) is in columns 2 and 5. All images in columns 3 and 6 are classified by AlexNet as "Ostrich" ~\citep{szegedy2013}}
   \label{fig:my_label}
\end{figure}


\chapter{Persistent Classification}
\label{Chapter3}

This chapter is devoted to defining geometric properties
that can be used to assess robustness of neural networks and to
define adversarial examples in testable and separable
ways. This synthesizes the concept of Persistence, a distributional
method for analyzing curvature around data, with the optimization
tasks used to generate adversarial examples and the direct measurement
of properties of decision boundaries for neural networks. The high
angles of incidence observed when interpolating across decision
boundaries of neural networks among natural images and the much higher
angles observed when interpolating among adversarial examples indicate
that geometric properties may explain or be related to why adversarial
examples are so easy to find for certain networks. This paper is in
submission to the CODA Journal, and is composed of theoretical and
practical work on Persistence and decision boundary incidence angles
and other properties all conducted by Brian Bell. A second body of
work analyzing the manifold alignment of classification models is
related to this work, attempting to correct some of these geometric
inadequacies by forcing networks to be orthogonal or parallel to approximated manifolds in specific regimes. This second body of work was completed by Michael Geyer.

Whereas ~\citet{roth19aodds} consider adding various types of noise to a given point and ~\citet{hosseini2019odds} consider small Gaussian perturbations of $x$ sampled from $N(x,\varepsilon^2 I)$ for small $\varepsilon$, 
we specifically focus on 
tuning the standard deviation parameter to determine a statistic describing how a given data point is placed within its class. The $\gamma$-persistence then gives a measurement similar to distance to the boundary but that is drawn from sampling instead of distance. The sampling allows for a better description of the local geometry of the class and decision boundary, as we will see in Section \ref{subsec:stab}. Our statistic is based on the fraction of a Gaussian sampling of the neighborhood of a point that receives the same classification; this is different from that of ~\citet{roth19aodds}, which is the expected difference of the output of the logit layer of the original data point and the output of the logit layer for perturbed data points.  Additionally, while their statistics are defined pairwise with reference to pre-chosen original and candidate classes, ours is not.


There are a number of hypotheses underlying the existence of
adversarial examples for classification problems, including the high-dimensionality of the data, high codimension in the ambient space of the data manifolds of interest, and that the structure of machine learning models may encourage classifiers to develop decision boundaries close to data points. 
This article proposes a new framework for studying adversarial examples that does not depend directly on the distance to the decision boundary. 
Similarly to the smoothed classifier literature, we define a (natural or adversarial) data point to be $(\gamma,\sigma)$-stable if the probability of the same classification is at least $\gamma$ for points sampled in a Gaussian neighborhood of the point with a given standard deviation $\sigma$. 
We focus on studying the differences between persistence metrics along interpolants of natural and adversarial points.
We show that adversarial examples have significantly lower persistence than natural examples for large neural networks in the context of the MNIST and ImageNet datasets. 
We connect this lack of persistence with decision boundary geometry by measuring angles of interpolants with respect to decision boundaries.
Finally, we connect this approach with robustness by developing a manifold alignment gradient metric and demonstrating the increase in robustness that can be achieved when training with the addition of this metric. 

\section{Introduction}

Deep Neural Networks (DNNs) and their variants are core to the success of modern machine learning ~\citep{prakash2018}, and have dominated competitions in image processing, optical character recognition, object detection, video classification, natural language processing, and many other fields \citep{SCHMIDHUBER201585}. Yet such classifiers are notoriously susceptible to manipulation via adversarial examples ~\citep{szegedy2013}. Adversarial examples occur when natural data can be subject to subtle perturbation which results in substantial changes in output. Adversarial examples are not just a peculiarity, but seem to occur for most, if not all, DNN classifiers. For example, \citet{inevitable2018} used isoperimetric inequalities on high dimensional spheres and hypercubes to conclude that there is a reasonably high probability that a correctly classified data point has a nearby adversarial example. This has been reiterated using mixed integer linear programs to rigorously check minimum distances necessary to achieve adversarial conditions ~\citep{tjeng2017evaluating}. \citet{ilyas2019adversarial} showed that adversarial examples can arise from features that are good for classification but not robust to perturbation. 

There have been many attempts to identify adversarial examples using
properties of the decision boundary. \citet{Fawzi2018empirical} found
that decision boundaries tend to have highly curved regions, and these
regions tend to favor negative curvature, indicating that regions that
define classes are highly nonconvex. The purpose of this work is to
investigate these geometric properties related to the decision
boundaries. We will do this by proposing a notion of stability that is
more nuanced than simply measuring distance to the decision boundary,
and is also capable of elucidating information about the curvature of
the nearby decision boundary. We develop a statistic extending
prior work on smoothed classifiers by \citet{cohen2019certified}. We denote this metric as Persistence and use it as a measure of how far away from a point one can go via Gaussian sampling and still consistently find points with the same classification. One advantage of this statistic is that it is easily estimated by sending a Monte Carlo sampling about the point through the classifier. In combination with this metric, direct measurement of decision boundary incidence angle with dataset interpolation and manifold alignment can begin to complete the picture for how decision boundary properties are related with neural network robustness. 

 These geometric properties are related to the alignment of gradients
 with human perception \citep{ganz2022perceptually,
   kaur2019perceptually, shah2021input} and with the related
 underlying manifold \citep{kaur2019perceptually,
   ilyas2019adversarial} which may imply robustness. For our purposes,
 Manifold Aligned Gradients (MAG) will refer to the property that the
 gradients of a model with respect to model inputs follow a given data
 manifold $\mathcal{M}$ extending similar relationships from other
 work by \citet{shamir2021dimpled}. 

{\bf Contributions.} We believe these geometric properties are related
to why smoothing methods have been useful in robustness tasks
~\citep{cohen2019certified, lecuyer2019certified, li2019certified}. We propose three approaches in order to connect robustness with geometric properties of the decision boundary learned by DNNs: 
\begin{enumerate}
    \item We propose and implement two metrics based on the success of smoothed classification techniques:  $(\gamma,\sigma)$-stability and $\gamma$-persistence defined with reference to a classifier and a given point (which can be either a natural or adversarial image, for example) and demonstrate their validity for analyzing adversarial examples. 
    \item We interpolate across decision boundaries using our persistence metric to demonstrate an inconsistency at the crossing of a decision boundary when interpolating from natural to adversarial examples.
    \item We demonstrate via direct interpolation across decision boundaries and measurement of angles of interpolating vectors relative to the decision boundary itself that dimensionality is not solely responsible for geometric vulnerability of neural networks to adversarial attack. 
\end{enumerate}

\section{Motivation and Related Work}

Our work is intended to shed light on the existence and prevalence of adversarial examples to DNN classifiers. It is closely related to other attempts to characterize robustness to adversarial perturbations, and here we give a detailed comparison.

{\bf Distance-based robustness.}

A typical approach to robustness of a classifier is to consider
distances from the data manifold to the decision boundary
~\citep{Wang2020Improving, xu2023exploring, he2018decision}.
\citet{khoury2018} define a classifier to be robust if the class of
each point in the data manifold is contained in a sufficiently large
ball that is entirely contained in the same class. The larger the
balls, the more robust the classifier. It is then shown that if
training sets are sufficiently dense in relation to the reach of the
decision axis, the classifier will be robust in the sense that it
classifies nearby points correctly. In practice, we do not know that
the data is so well-positioned, and it is quite possible, especially
in high dimensions, that the reach is extremely small, as evidenced by
results on the prevalence of adversarial examples, e.g.,
\citet{inevitable2018} and in evaluation of ReLU networks with mixed
integer linear programming e.g., ~\citet{tjeng2017evaluating}.

\citet{tsipras2018robustness} investigated robustness in terms of how
small perturbations affect the the average loss of a classifier. They
define standard accuracy of a classifier in terms of how often it
classifies correctly, and robust accuracy in terms of how often an
adversarially perturbed example classifies correctly. It was shown
that sometimes accuracy of a classifier can result in poor robust
accuracy. \citet{gilmer2018adversarial} use the expected distance to
the nearest different class (when drawing a data point from the data
distribution) to capture robustness, and then show that an accurate
classifier can result in a small distance to the nearest different
class in high dimensions when the data is drawn from concentric
spheres. May recent works ~\citep{he2018decision, chen2023aware,
  jin2022roby} have linked robustness with decision boundary dynamics,
both by augmenting training with data near decision boundaries, or
with dynamics related to distances from decision boundaries. We
acknowledge the validity of this work, but will address some of its
primary limitations by carefully studying the orientation
of the decision boundary relative to model data.

A related idea is that adversarial examples often arise within cones,
outside of which images are classified in the original class, as
observed by \citet{roth19aodds}. Many theoretical models of
adversarial examples, for instance the dimple model developed by
\citet{shamir2021}, have high curvature and/or sharp corners as an
essential piece of why adversarial examples can exists very close to
natural examples. 

{\bf Adversarial detection via sampling.}
While adversarial examples often occur, they still may be rare in the
sense that most perturbations do not produce adversarial
examples. \citet{yu2019new} used the observation that adversarial
examples are both rare and close to the decision boundary to detect
adversarial examples. They take a potential data point and look to see
if nearby data points are classified differently than the original
data point after only a few iterations of a gradient descent
algorithm. If this is true, the data point is likely natural and if
not, it is likely adversarial. This method has been generalized with
the developing of smoothed classification methods
~\citep{cohen2019certified, lecuyer2019certified, li2019certified}
which at varying stages of evaluation add noise to the effect of
smoothing output and identifying adversaries due to their higher
sensitifity to perturbation.. These methods suffer from significant
computational complexity ~\citep{kumar2020curse} and have been shown
to have fundamental limitations in their ability to rigorously certify
robustness ~\citep{blum2020random, yang2020randomized}. We will generalize this approach into a metric which will allow us to directly study these limitations in order to better understand how geometric properties have given rise to adversarial vulnerabilities. In general, the results of \citet{yu2019new} indicate that considering samples of nearby points, which approximate the computation of integrals, is likely to be more successful than methods that consider only distance to the decision boundary.

\citet{roth19aodds} proposed a statistical method to identify adversarial examples from natural data. Their main idea was to consider how the last layer in the neural network (the logit layer) would behave on small perturbations of a natural example. 
This is then compared to the behavior of a potential adversarial example. 

It was shown by \citet{hosseini2019odds} that it is possible to produce adversarial examples, for instance using a logit mimicry attack, that instead of perturbing an adversarial example toward the true class, actually perturb to some other background class. In fact, we will see in Section \ref{sec:mnist} that the emergence of a background class, which was observed as well by \citet{roth19aodds}, is quite common. Although many recent approaches have taken advantage of these facts ~\citep{taori2020shifts, lu2022randommasking, Osada_2023_WACV, blau2023classifier} in order to measure and increase robustness, we will leverage these sampling properties to develop a metric directly on decision-boundary dynamics and how they relate to the success of smoothing based robustness. 

{\bf Manifold Aware Robustness}

The sensitivity of convolutional neural networks to imperceptible changes in input has thrown into question the true generalization of these models.
~\citet{jo2017measuring} study the generalization performance of CNNs by transforming natural image statistics.  
Similarly to our MAG approach, they create a new dataset with well-known properties to allow the testing of their hypothesis.
They show that CNNs focus on high level image statistics rather than human perceptible features.
This problem is made worse by the fact that many saliency methods fail basic sanity checks \citep{adebayo2018sanity, kindermans2019reliability}.

Until recently, it was unclear whether robustness and manifold alignment were directly linked, as the only method to achieve manifold alignment was adversarial training.
Along with the discovery that smoothed classifiers are perceptually
aligned, comes the hypothesis that robust models in general share this
property put forward by ~\citet{kaur2019perceptually}.
This discovery raises the question of whether this relationship
between manifold alignment of model gradients and robustness is bidirectional.

~\citet{khoury2018} study the geometry of natural images, and create a
lower bound for the number of data points required to effectively
capture variation in the manifold.
Unfortunately, they demonstrate that this lower bound is so large as to be intractable.
~\citet{shamir2021dimpled} propose using the tangent space of a
generative model as an estimation of this
manifold. ~\citet{magai2022topology} thoroughly review certain
topological properties to demonstrate that neural networks
intrinsically use relatively few dimensions of variation during
training and evaluation. ~\citet{vardi2022gradient} demonstrate that even models that satisfy strong conditions related to max margin classifiers are implicitly non-robust. PCA and manifold metrics have been recently used to identify adversarial examples ~\citep{aparne2022pca, nguyen-minh-luu-2022-textual}. We will extend this work to study the relationship between robustness and manifold alignment directly by baking alignment directly into networks and comparing them with another approach to robustness. 

{\bf Summary.}
 In Sections \ref{sec:meth} and \ref{sec:experiments}, we will investigate stability of both natural data and adversarial examples by considering sampling from Gaussian distributions centered at a data point with varying standard deviations. Using the standard deviation as a parameter, we are able to derive a statistic for each point that captures how entrenched it is in its class in a way that is less restrictive than the robustness described by \citet{khoury2018}, takes into account the rareness of adversarial examples described by \citet{yu2019new}, builds on the idea of sampling described by \citet{roth19aodds} and \citet{hosseini2019odds}, and represent curvatures in a sense related to \citet{Fawzi2018empirical}. Furthermore, we will relate these stability studies to direct measurement of interpolation incident angles with decision boundaries in Subsection~\ref{subsec:db} and ~\ref{subsec:dbe} and the effect of reduction of data onto a known lower dimensional manifold in Subsections ~\ref{subsec:ma} and ~\ref{subsec:mae}.  

\section{Methods} \label{sec:meth} 

In this section we will lay out the theoretical framework for studying stability, persistence, and decision boundary corssing-angles. 

\subsection{Stability and Persistence} \label{subsec:stab}
In this section we define a notion of stability of classification of a point under a given classification model. In the following, $X$ represents the ambient space the data is drawn from (typically $\RR^n$) even if the data lives on a submanifold of $X$, and $L$ is a set of labels (often $\{1,\dots,\ell\}$).  Note that points $x\in X$ can be natural or adversarial points.

\begin{definition}
Let $\CC:X\to L$ be a classifier, $x \in X$, $\gamma\in(0,1)$, and $\sigma>0$. We say $x$ is \emph{$(\gamma,\sigma)$-stable} with respect to $\CC$ if $\mathbb{P}[\CC(x')=\CC(x)] \geq \gamma$ for $x' \sim \rho = N(x, \sigma^2 I)$; i.e. $x'$ is drawn from a Gaussian with variance $\sigma^2$ and mean $x$.
\end{definition}

In the common setting when $X=\RR^n$, we have
\[\mathbb{P}[\CC(x')=\CC(x)] = \int_{\RR^n} \mathbbm{1}_{\CC^{-1}(\CC(x))} (x') d\rho (x') = \rho(\CC^{-1}\CC(x)).\]
Note here that $\CC^{-1}$ denotes preimage. 
One could substitute various probability measures $\rho$ above with mean $x$ and variance $\sigma^2$ to obtain different measures of stability corresponding to different ways of sampling the neighborhood of a point.  Another natural choice would be sampling the uniform measure on balls of changing radius. Based on the concentration of measure for both of these families of measures we do not anticipate significant qualitative differences in these two approaches. We propose Gaussian sampling because it is also a product measure, which makes it easier to sample and simplifies some other calculations below.

For the Gaussian measure, the probability above may be written more concretely as
\begin{equation}\label{EQN:Gaussian}
\frac{1}{\left(\sqrt{2\pi}\sigma\right)^{n}} \int_{\RR^n} \mathbbm{1}_{\CC^{-1}(\CC(x))} (x')e^{-\frac{\norm{x - x'}^2}{2\sigma^2}} dx'.
\end{equation}
In this work, we will conduct experiments in which we estimate this stability for fixed $(\gamma,\sigma)$ pairs via a Monte Carlo sampling, in which case the integral \eqref{EQN:Gaussian} is approximated by taking $N$ i.i.d. samples $x_k \sim \rho$ and computing
\[
    \frac{\norm{x_k : \CC(x_k) = \CC(x)}}{N}.
\]
Note that this quantity converges to the integral \eqref{EQN:Gaussian} as $N\to\infty$ by the Law of Large Numbers.

The ability to adjust the quantity $\gamma$ is important because it is much weaker than a notion of stability that requires a ball that stays away from the decision boundary as by \citet{khoury2018}. By choosing $\gamma$ closer to $1$, we can require the samples to be more within the same class, and by adjusting $\gamma$ to be smaller we can allow more overlap.

We also propose a related statistic, \emph{persistence}, by fixing a particular $\gamma$ and adjusting $\sigma$. For any $x\in X$ not on the decision boundary, for any choice of $0<\gamma<1$ there exists a $\sigma_\gamma$ small enough such that if $\sigma < \sigma_\gamma$ then $x$ is $(\gamma,\sigma)$-stable. We can now take the largest such $\sigma_\gamma$ to define persistence.

\begin{definition}
    Let $\CC:X\to L$ be a classifier, $x \in X$, and $\gamma\in(0,1)$. Let $\sigma_\gamma^*$ be the maximum $\sigma_\gamma$ such that $x$ is $(\gamma, \sigma)$-stable with respect to $\CC$ for all $\sigma<\sigma_\gamma$. We say that $x$ has \emph{$\gamma$-persistence} $\sigma_\gamma^*$.
\end{definition}

The $\gamma$-persistence quantity $\sigma_\gamma^*$ measures the stability of the neighborhood of a given $x$ with respect to the output classification. Small persistence indicates that the classifier is unstable in a small neighborhood of $x$, whereas large persistence indicates stability of the classifier in a small neighborhood of $x$. In the later experiments, we have generally taken $\gamma = 0.7$. This choice is arbitrary and chosen to fit the problems considered here. In our experiments, we did not see significant change in results with small changes in the choice of $\gamma$.

In our experiments, we numerically estimate $\gamma$-persistence via a bisection algorithm that we term the Bracketing Algorithm.  Briefly, the algorithm first chooses search space bounds $\sigma_{\min}$ and $\sigma_{\max}$ such that $x$ is  $(\gamma,\sigma_{\min})$-stable but is not $(\gamma,\sigma_{\max})$-stable with respect to $\CC$, and then proceeds to evaluate stability by bisection until an approximation of $\sigma_\gamma^*$ is obtained.

\subsection{Decision Boundaries} \label{subsec:db}

In order to examine decision boundaries and their properties, we will carefully define the decision boundary in a variety of equivalent formulations. 

\subsubsection{The Argmax Function Arises in Multi-Class Problems}

A central issue when writing classifiers is mapping from continuous outputs or probabilities to discrete sets of classes. Frequently argmax type functions are used to accomplish this mapping. To discuss decision boundaries, we must precisely define argmax and some of its properties. 

In practice, argmax is not strictly a function, but rather a mapping from the set of outputs or activations from another model into the power set of a discrete set of classes:

\begin{equation}
    \text{argmax} : \R^k \to \mathcal{P}(L)
\end{equation}

Defined this way, we cannot necessarily consider $\text{argmax}$ to be a function in general as the singleton outputs of argmax overlap in an undefined way with other sets from the power set. However, if we restrict our domain carefully, we can identify certain properties. 
\begin{figure}[!ht]
\begin{center}
\begin{tikzpicture}
  \draw[->] (-0.5, 0) -- (3.5, 0) node[right] {$x$};
  \draw[->] (0, -0.5) -- (0, 3.5) node[above] {$y$};
  \node at (1, 2) (c1) {Class 1};
  \node at (2, 1) (c2) {Class 2};
  \draw[-, fill, blue, opacity=.3] (0, 3) -- (3, 3) -- (0, 0);
  \draw[-, fill, red, opacity=.3] (3, 0) -- (3, 3) -- (0, 0);
  \draw[scale=1, domain=0:3, smooth, variable=\x, black, line width=0.45mm] plot ({\x}, {\x});
\end{tikzpicture}\begin{tikzpicture}
  \draw[->] (-0.5, 0) -- (3.5, 0) node[right] {$x$};
  \draw[->] (0, -0.5) -- (0, 3.5) node[above] {$y$};
  \node at (1, 2) (c1) {Class 1};
  \node at (2, 1) (c2) {Class 2};
  \draw[-, fill, blue, opacity=.3] (0, 3) -- (3, 3) -- (0, 0);
  \draw[-, fill, red, opacity=.3] (3, 0) -- (3, 3) -- (0, 0);
  \draw[scale=1, domain=0:3, smooth, variable=\x, black, line width=0.45mm] plot ({\x}, {\x});
  \draw[-, orange, line width=0.45mm] (0,3) -- (3, 0);
\end{tikzpicture}

\caption{Decision boundary in $[0,1] \times [0,1]$ (left) and decision boundary restricted to probabilities (right) }
\label{fig:pdb}
\end{center}
\end{figure}
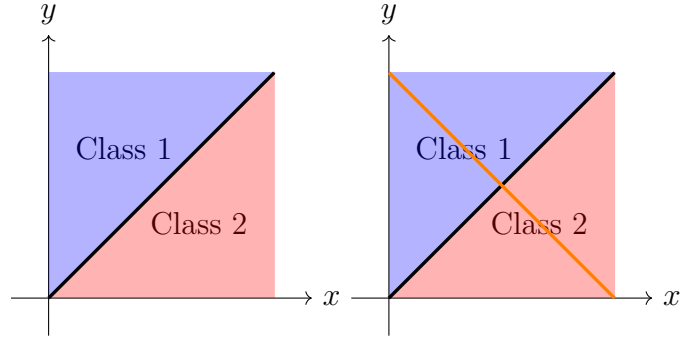
Restricting to only the pre-image of the singletons, it should be
clear that argmax is constant.  Indeed, restricted to the pre-image of
any set in the power-set, argmax is constant and thus
continuous. This induces the discrete topology whereby the pre-image
of an individual singleton is open. Observe that for any point whose
image is a singleton, one element of the domain vector must exceed the 
others by $\varepsilon > 0$. We shall use the $\ell^1$ metric for
distance, and thus if we restrict ourselves to a ball of radius
$\varepsilon$, then all elements inside this ball will have that
element still larger than the rest and thus map to the same singleton
under argmax. We can visualize this in two dimensions. Suppose the output of a function $F$ are \emph{probabilities} which add to one, then all inputs $x$ will map to the orange line on the right side of Figure~\ref{fig:pdb}. We note that the point $(0.5, 0.5)$ is therefore the only point on the decision boundary for probability valued $F$. We may generalize to higher dimensions where all probability valued models $F$ will map into the the plane $x + y + z + \cdots = 1$ in $Y$ and the decision boundary will be partitioned into $K-1$ components, where the $K$-decision boundary is the intersection of this plane with the \emph{centroid} line $x = y = z = \cdots$ and the $2$-decision boundaries become planes intersecting at the same line.

Since the union of infinitely many open sets is open in
$\R^k$, the union of all singleton pre-images is an open
set. Conveniently this also provides proof that the union of all of
the non-singleton sets in $\mathcal{P}(C)$ is a closed set. We will
call this closed set the argmax Decision Boundary. We will list two
equivalent formulations for this boundary.  

\paragraph{Complement Definition}

A point $x$ is in the \emph{decision interior} $D_{\mathcal{C}}'$ for
a classifier $\mathcal{C}: \mathbb{R}^N \to L$ if there exists $\delta
> 0$ such that $\forall \epsilon < \delta$, the number of elements $n(\mathcal{C}(B_\epsilon(x))) = 1$. 

The \emph{decision boundary} of a classifier $\mathcal{C}$ is the closure of the complement of the decision interior $\overline{\{x : x \notin D_{\mathcal{C}}'\}}$. 

\paragraph{Level Set Definition}

For an input space $X$, the decision boundary $D \subset X$ of a probability valued function $f$ is the pre-image of a union of all level sets of outputs $f(X) = {c_1, c_2, ..., c_k}$ defined by a constant $c$ such that for some set of indices $I$, we have $c = c_i$ for every $i$ in $I$ and $c > c_j$ for every $j$ not in $I$. The pre-image of each such set are all $x$ such that $f(x) = A_c$ for some $c$.

\section{Experiments} \label{sec:experiments}

In this section we investigate the stability and persistence behavior of natural and adversarial examples for MNIST \citep{MNIST} and ImageNet \citep{ILSVRC15} using a variety of different classifiers. For each set of image samples generated for a particular dataset, model, and attack protocol, we study $(\gamma,\sigma)$-stability and $\gamma$-persistence of both natural and adversarial images, and also compute persistence along trajectories from natural to adversarial images. In general, we use $\gamma = 0.7$, and note that the observed behavior does not change significantly for small changes in $\gamma$. While most of the adversarial attacks considered here have a clear target class, the measurement of persistence does not require considering a particular candidate class.  Furthermore, we will evaluate decision boundary incidence angles and apply our conclusions to evaluate models trained with manifold aligned gradients. 

\subsection{MNIST Experiments}

Since MNIST is relatively small compared to ImageNet, we trained several classifiers with various architectures and complexities and implemented the adversarial attacks directly. Adversarial examples were generated against each of these models using Iterative Gradient Sign Method (IGSM \citep{kurakin_adversarial_2016}) and Limited-memory Broyden-Fletcher-Goldfarb-Shanno (L-BFGS \citep{liu1989limited}).

\subsubsection{Investigation of $(\gamma, \sigma)$-Stability on MNIST}\label{sec:mnist}

We begin with a fully connected ReLU network with layers of size 784, 100, 20, and 10 and small regularization $\lambda = 10^{-7}$ which is trained on the standard MNIST training set. We then start with a randomly selected MNIST test image $x_1$ from the \texttt{1}'s class and generate adversarial examples $x_0,x_2,\dots,x_9$ using IGSM for each target class other than \texttt{1}. The neighborhoods around each $x_i$ are examined by generating 1000 i.i.d. samples from $N(x_i,\sigma^2I)$ for each of 100 equally spaced standard deviations $\sigma\in(0,1.6)$. Figure \ref{fgsmo} shows the results of the Gaussian perturbations of a natural example $x_1$ of the class labeled \texttt{1} and the results of Gaussian perturbations of the adversarial example $x_0$ targeted at the class labeled \texttt{0}. We provide other examples of $x_2,\ldots,x_9$ in the supplementary materials. Note that the original image is very stable under perturbation, while the adversarial image is not. 

\begin{figure}[!ht]
  \includegraphics[width = .49\textwidth]{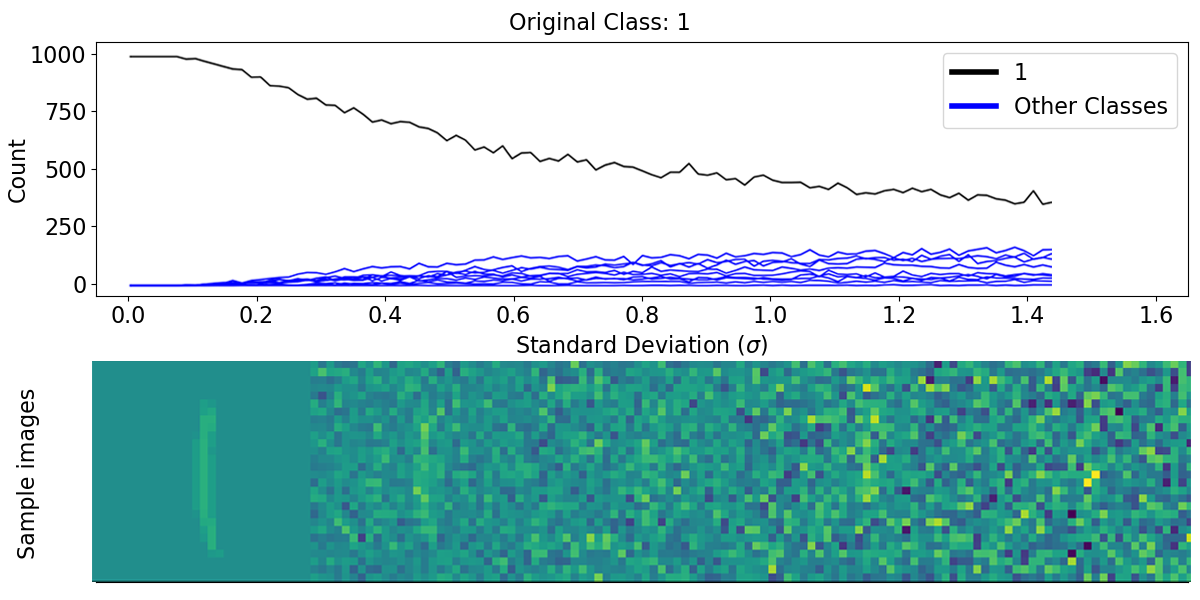}
  \includegraphics[width = .49\textwidth]{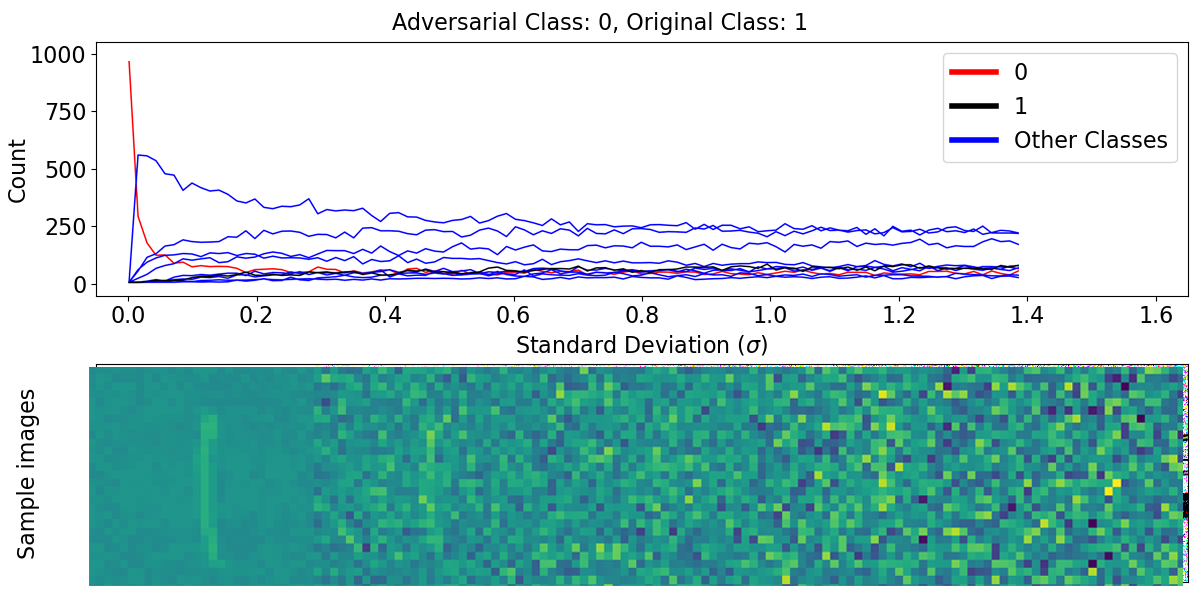}
\caption{Frequency of each class in Gaussian samples with increasing variance around a natural image of class \texttt{1} (left) and around an adversarial attack of that image targeted at \texttt{0} generated using IGSM (right). The adversarial class (\texttt{0}) is shown as a red curve. The natural image class (\texttt{1}) is shown in black. Bottoms show example sample images at different standard deviations for natural (left) and adversarial (right) examples.}\label{fgsmo}
\end{figure}

\subsubsection{Persistence of Adversarial Examples for MNIST}

To study persistence of adversarial examples on MNIST, we take the same network architecture as in the previous subsection and randomly select 200 MNIST images. For each image, we used IGSM to generate 9 adversarial examples (one for each target class) yielding a total of 1800 adversarial examples. In addition, we randomly sampled 1800 natural MNIST images. For each of the 3600 images, we computed $0.7$-persistence; the results are shown in Figure \ref{fig:IGSMpersistenceMNIST}. One sees that $0.7$-persistence of adversarial examples tends to be significantly smaller than that of natural examples for this classifier, indicating that they are generally less stable than natural images. We will see subsequently that this behavior is typical.

\begin{figure}[!ht]
\centering
\includegraphics[trim=200 80 100 100, clip,width=.5\textwidth]{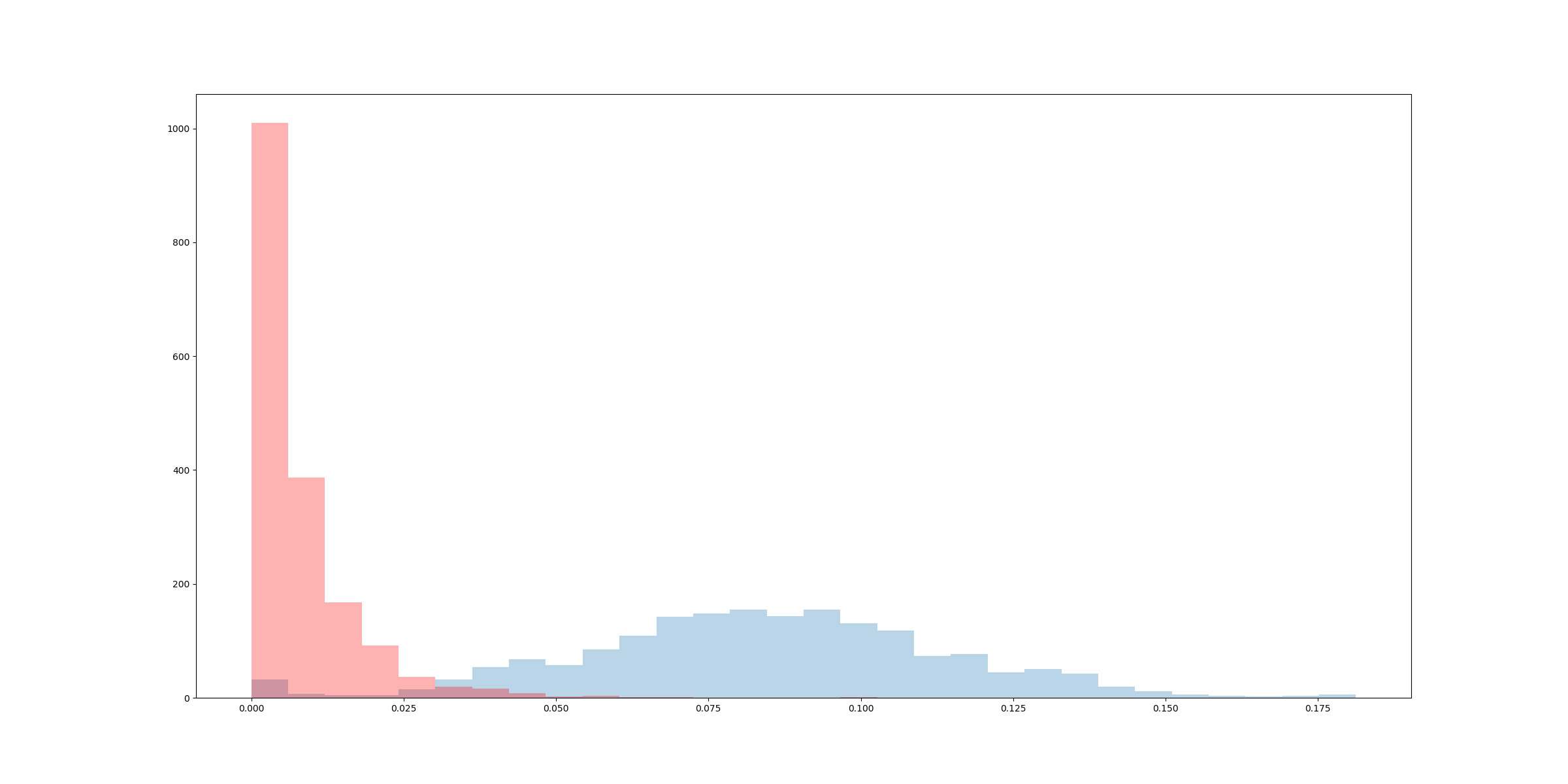}
\caption{Histogram of $0.7$-persistence of IGSM-based adversarial examples (red) and natural examples (blue) on MNIST. 
}

\label{fig:IGSMpersistenceMNIST}
\end{figure}

Next, we investigate the relationship of network complexity and $(\gamma,\sigma)$-stability by revisiting the now classic work of \citet{szegedy2013} on adversarial examples. 

Table \ref{table1} recreates and adds on to part of \cite[Table 1]{szegedy2013} in which networks of differing complexity are trained and attacked using L-BFGS. The table contains new columns showing the average $0.7$-persistence for both natural and adversarial examples for each network, as well as the average distortion for the adversarial examples. The distortion is the $\ell^2$-norm divided by square root of the dimension $n$. The first networks listed are of the form FC10-k, and are fully connected single layer ReLU networks that map each input vector $x \in \RR^{784}$ to an output vector $y \in \RR^{10}$ with a regularization added to the objective function of the form $\lambda\Norm{w}_2/N$, where $\lambda = 10^{-k}$ and $N$ is the number of parameters in the weight vector $w$ defining the network. The higher values of $\lambda$ indicate more regularization.  

FC100-100-10 and FC200-200-10 are fully connected networks with 2 hidden layers (with 100 and 200 nodes, respectively) with regularization added for each layer of perceptrons with the $\lambda$ for each layer equal to $10^{-5}, 10^{-5}$, and  $10^{-6}$. Training for these networks was conducted with a fixed number of epochs (typically 21). For the bottom half of Table \ref{table1}, we also considered networks with four convolutional layers plus a max-pooling layer connected by ReLU to a fully connected hidden layer with increasing numbers of channels denoted as as ``C-Ch,'' where C reflects that this is a CNN and Ch denotes the number of channels. A more detailed description of these networks can be found in Appendix \ref{appendix:CNNs}.

\begin{table}[ht]
\centering
\caption{Recreation of \citet[Table 1]{szegedy2013} for the MNIST dataset.  For each network, we show Testing Accuracy (in \%), Average Distortion ($\|x'-x\|_2/\sqrt{n}$ for adversarial example $x'$ with starting image $x$) of adversarial examples, and new columns show average $0.7$-persistence values for natural (Nat) and adversarial (Adv) images. 300 natural and 300 adversarial examples generated with L-BFGS were used for each aggregation.}
\label{table1}
\begin{tabular}{lllll}
\toprule
Network & Test Acc & Avg Dist & Persist (Nat) & Persist (Adv) \\
\midrule
FC10-4 & 92.09 & 0.123 & 0.93 & 1.68\\
FC10-2 & 90.77 & 0.178 & 1.37 & 4.25\\
FC10-0 & 86.89 & 0.278 & 1.92 & 12.22\\
FC100-100-10 & 97.31 & 0.086 & 0.65 & 0.56 \\
FC200-200-10 & 97.61 & 0.087 & 0.73 & 0.56 \\
\midrule
C-2 & 95.94 & 0.09 & 3.33 & 0.027 \\
C-4 & 97.36 & 0.12 & 0.35 & 0.027 \\
C-8 & 98.50 & 0.11 & 0.43  & 0.0517 \\
C-16 & 98.90 & 0.11 & 0.53 & 0.0994 \\
C-32 & 98.96 & 0.11 & 0.78 & 0.0836 \\
C-64 & 99.00 & 0.10 & 0.81 & 0.0865 \\
C-128 & 99.17 & 0.11 & 0.77 & 0.0883 \\
C-256 & 99.09 & 0.11  & 0.83 & 0.0900 \\
C-512 & 99.22 & 0.10 & 0.793 & 0.0929 \\

\bottomrule
\end{tabular}
\end{table}

The main observation from Table \ref{table1} is that for higher complexity networks,
adversarial examples tend to have smaller persistence than natural examples. Histograms reflecting these observations can be found in the supplemental material. 
Another notable takeaway is that for models with fewer effective parameters, the attack distortion necessary to generate a successful attack is so great that the resulting image is often more stable than a natural image under that model, as seen particularly in the FC10 networks. Once there are sufficiently many parameters available in the neural network, we found that both the average distortion of the adversarial examples and the average $0.7$-persistence of the adversarial examples tended to be smaller. This observation is consistent with the idea that networks with more parameters are more likely to exhibit decision boundaries with more curvature.

\subsection{Results on ImageNet}

For ImageNet \citep{Imagenet-old}, we used pre-trained ImageNet classification models, including alexnet \citep{alexnet} and vgg16 \citep{simonyan2014very}.

We then generated attacks based on the ILSVRC 2015 \citep{ILSVRC15}
validation images for each of these networks using a variety of modern
attack protocols, including Fast Gradient Sign Method (FGSM
\citep{goodfellow_explaining_2014}), Momentum Iterative FGSM (MIFGSM
\citep{dongMIFGSM}), Basic Iterative Method (BIM
\citep{kurakin_adversarial_2016}), Projected Gradient Descent (PGD
\citep{madry_towards_2017}), Randomized FGSM (R+FGSM
\citep{tramer2018ensemble}), and Carlini-Wagner (CW
~\citet{carlini_towards_2016}). These were all generated using the
TorchAttacks by \citet{kim2021torchattacks} toolset.

\subsubsection{Investigation of $(\gamma, \sigma)$-stability on ImageNet}

In this section, we show the results of Gaussian neighborhood sampling in ImageNet. Figures \ref{fig:imagenet_adv} and \ref{fig:persistent_interpimage} arise from vgg16 and adversarial examples created with BIM; results for other networks and attack strategies are similar, with additional figures in the supplementary material. Figure \ref{fig:imagenet_adv} (left) begins with an image $x$ with label \texttt{goldfinch}. For each equally spaced $\sigma\in(0,2)$, 100 i.i.d. samples were drawn from the Gaussian distribution $N(x,\sigma^2I)$, and the counts of the vgg16 classification for each label are shown. In Figure \ref{fig:imagenet_adv} (right), we see the same plot, but for an adversarial example targeted at the class \texttt{indigo\_bunting}, which is another type of bird, using the BIM attack protocol. 

The key observation in Figure \ref{fig:imagenet_adv} is that the frequency of the class of the adversarial example (\texttt{indigo\_bunting}, shown in red) falls off much quicker than the class for the natural example (\texttt{goldfinch}, shown in black). In this particular example, the original class appears again after the adversarial class becomes less prevalent, but only for a short period of $\sigma$, after which other classes begin to dominate. In some examples the original class does not dominate at all after the decline of the adversarial class. The adversarial class almost never dominates for a long period of $\sigma$. 

\begin{figure}[ht]
\centering
\includegraphics[width = .49\textwidth]{./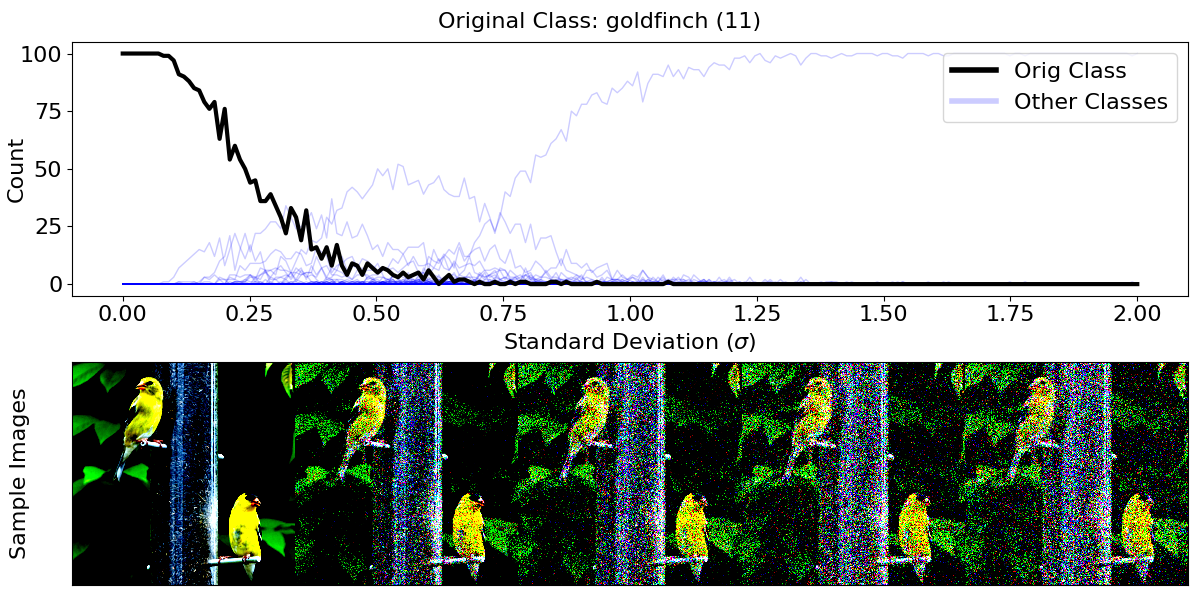}
\includegraphics[width = .49\textwidth]{./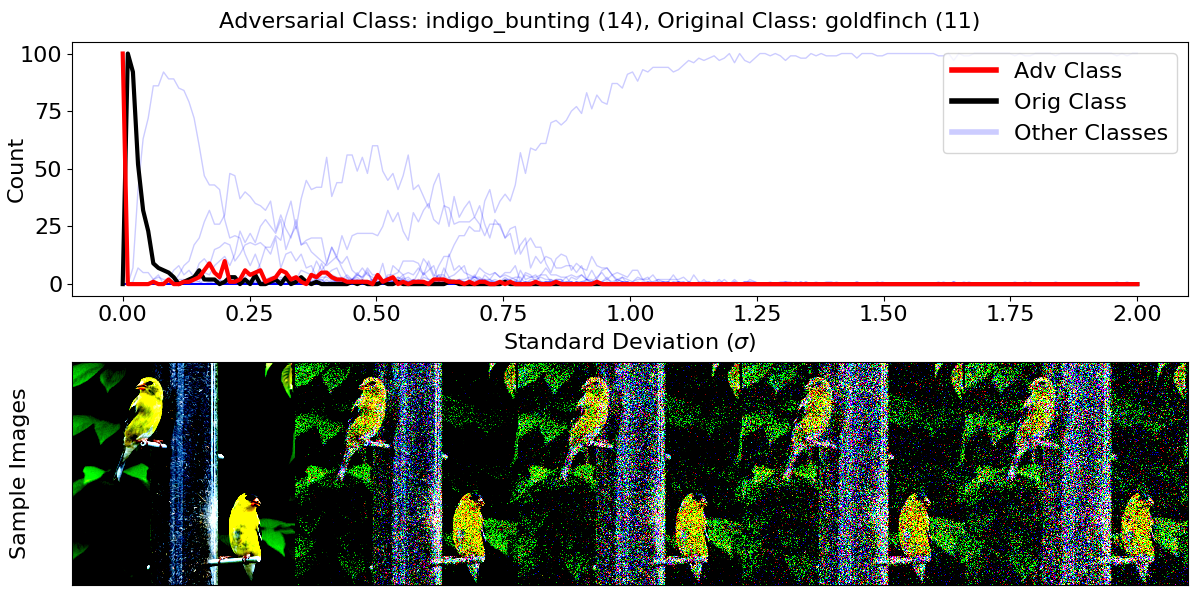}

\caption{Frequency of each class in Gaussian samples with increasing variance around a \texttt{goldfinch} image (left) and an adversarial example of that image targeted at the \texttt{indigo\_bunting} class and calculated using the BIM attack (right). Bottoms show example sample images at different standard deviations for natural (left) and adversarial (right) examples.}
\label{fig:imagenet_adv}
\end{figure}

\subsubsection{Persistence of adversarial examples on ImageNet}

Figure \ref{fig:persistent_interpimage} shows a plot of the $0.7$-persistence along the straight-line path between a natural example and adversarial example as parametrized between $0$ and $1$. It can be seen that the dropoff of persistence occurs precisely around the decision boundary. This indicates some sort of curvature favoring the class of the natural example, since otherwise the persistence would be roughly the same as the decision boundary is crossed.

\begin{figure}[ht]
\centering
\includegraphics[width = \textwidth]
{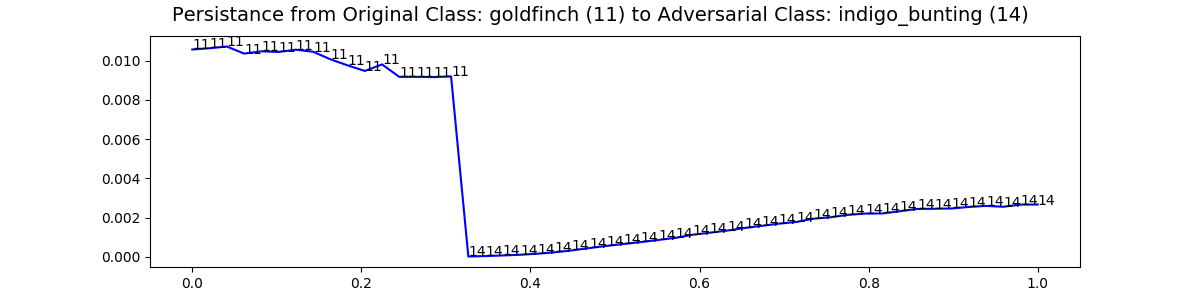}
\caption{The $0.7$-persistence of images along the straight line path from an image in class \texttt{goldfinch} (11) to an adversarial image generated with BIM in the class \texttt{indigo\_bunting} (14) on a vgg16 classifier. The classification of each image on the straight line is listed as a number so that it is possible to see the transition from one class to another. The vertical axis is $0.7$-persistence and the horizontal axis is progress towards the adversarial image.}\label{fig:persistent_interpimage}
\end{figure}

An aggregation of persistence for many randomly selected images from the \texttt{goldfinch} class in the validation set for Imagenet are presented in Table \ref{TAB:PersistenceAlexVGG}. 
\begin{table}[!ht]
\centering

\begin{tabular}{llll}
\toprule
Network/Method & Avg Dist & Persist (Nat) & Persist (Adv) \\
\midrule
alexnet (total) & 0.0194 & 0.0155 & 0.0049 \\ 
\:\: BIM        & 0.0188 & 0.0162 & 0.0050 \\ 
\:\: MIFGSM     & 0.0240 & 0.0159 & 0.0053 \\ 
\:\: PGD        & 0.0188 & 0.0162 & 0.0050 \\ 
\midrule
vgg16   (total) & 0.0154 & 0.0146 & 0.0011 \\ 
\:\: BIM        & 0.0181 & 0.0145 & 0.0012 \\ 
\:\: MIFGSM     & 0.0238 & 0.0149 & 0.0018 \\ 
\:\: PGD        & 0.0181 & 0.0145 & 0.0012 \\ 
\bottomrule
\end{tabular}
\caption{The $0.7$-persistence values for natural (Nat) and
  adversarial (Adv) images along with average distortion for
  adversarial images of alexnet and vgg16 for attacks generated with
  BIM, MIFGSM, and PGD on images from class \texttt{goldfinch}
  targeted toward other classes from the ILSVRC 2015 classification
  labels.} \label{TAB:PersistenceAlexVGG}
\end{table}
For each image of a \texttt{goldfinch} and for each network of alexnet and vgg16, attacks were prepared to a variety of 28 randomly selected targets using a BIM, MIFGSM, PGD, FGSM, R+FGSM, and CW attack strategies. The successful attacks were aggregated and their $0.7$-persistences were computed using the Bracketing Algorithm along with the $0.7$-persistences of the original images from which each attack was generated. Each attack strategy had a slightly different mixture of which source image and attack target combinations resulted in successful attacks. The overall rates for each are listed, as well as particular results on the most successful attack strategies in our experiments, BIM, MIFGSM, and PGD. The results indicate that adversarial images generated for these networks (alexnet and vgg16) using these attacks were less persistent, and hence less stable, than natural images for the same models. 

\subsection{Decision Boundary Interpolation and Angle Measurement} \label{subsec:dbe}

\begin{figure}[!ht]
\centering\includegraphics[width=0.50\linewidth, trim=1.5cm 1.5cm 2cm 2cm, clip]{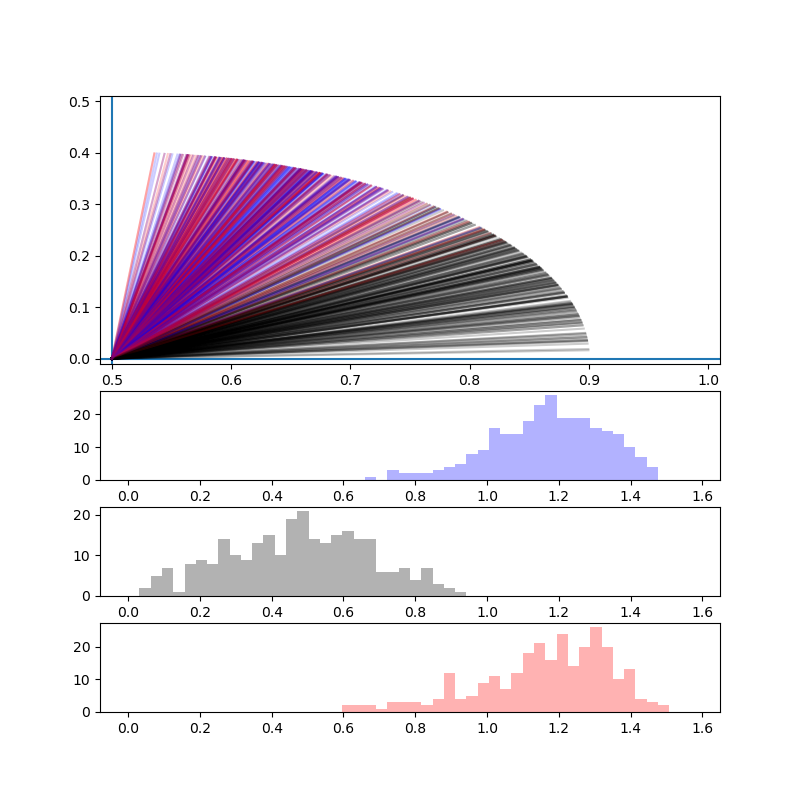}\includegraphics[width=0.50\linewidth, trim=1.5cm 1.5cm 2cm 2cm, clip]{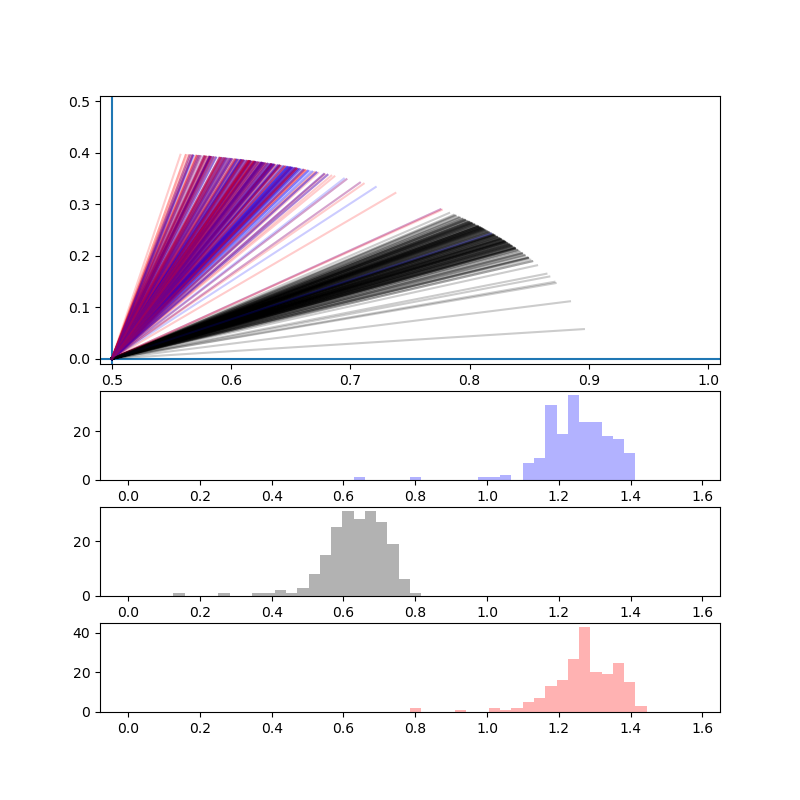}

\caption{Decision boundary incident angles between test to test
  interpolation and a computed normal vector to the decision boundary
  images (left) and between test and adversarial images
  (right).}
\label{fig:dba}
\end{figure}

In order to understand this sudden drop in persistence across the decision boundary observed in Figure ~\ref{fig:persistent_interpimage}, we will investigate incident angle of the interpolation with the decision boundary. In order to measure these angles, we must first interpolate along the decision boundary between two points. We will do this for pairs of test and test and pairs of test and adversary. In both cases, we will use a bracketing algorithm along the interpolation from candidate points to identify a point within machine-precision of the decision boundary $x_b$. 

Next, we will take 5000 samples from a Gaussian centered at this point with small standard deviation $\sigma = 10^{-6}$. Next, for each sample, we will perform an adversarial attack in order to produce a corresponding point on the opposite side of the decision boundary. Now for this new pair (sample and attacked sample), we will repeat the interpolation bracketing procedure in order to obtain the projection of this sample onto the decision boundary along the attack trajectory. Next, we will use singular value decomposition (SVD) on the differences between the projected samples and our decision boundary point $x_b$  to compute singular values and vectors from these projected samples. We will use the right singular vector corresponding with the smallest singular value as an approximation of a normal vector to the decision boundary at $x_b$. This point is difficult to compute due to degeneracy of SVD for small singular values, however in our tests, this value could be computed to a precision of 0.003. We will see that this level of precision exceeds that needed for the angles computed with respect to this normal vector sufficiently. 

We will examine decision boundary incident angles in Figure.~\ref{fig:dba} where angles (plotted Top) are referenced to decision boundary so $\pi/2$ radians (right limit of plots) corresponds with perfect orthogonality to decision boundary. Lines and histograms measure angles of training gradients (Top) linear interpolant (Middle) and adversarial gradients (Bottom). $x$ and $y$ axes are the axes of the unit-circle so angles can be compared. All angles are plotted in the upper-right quadrant for brevity. The lower plots are all histograms with their $y$ axes noting counts and their x-axes showing angles all projected to the range from 0 to $\pi/2$.

From the plots in Figure~\ref{fig:dba} we notice that neither training gradients
nor adversarial gradients are orthogonal to the decision
boundary. From a theory perspective, this is possible because this
problem has more than 2 classes, so that the decision boundary
includes $(0.34, 0.34, 0.32)$ and $(0.4, 0.4, 0.2)$. That is to say
that the level set definition of the decision boundary has degrees of
freedom that do not require orthogonality of gradients. More
interestingly, both natural and adversarial linear interpolants tend
to cross at acute angles with respect to the decision boundary, with
adversarial attacks tending to be closer to orthogonal. This suggests
that obliqueness of the decision boundary with respect to test points
may be related to adversarial vulnerability. We will leverage this understanding with manifold alignment to see if constraining gradients to a lower dimensional manifold, and thus increasing orthogonality of gradients will increase robustness. 

\subsection{Manifold Alignment on MNIST via PCA} \label{subsec:mae}

In order to provide an empirical measure of alignment, we first require a well defined image manifold.
The task of discovering the true structure of \textit{k}-dimensional manifolds in $\mathds{R}^d$ given a set of points sampled on the manifold has been studied previously \citep{khoury2018}.
Many algorithms produce solutions which are provably accurate under data density constraints.
Unfortunately, these algorithms have difficulty extending to domains with large $d$ due to the curse of dimensionality.
Our solution to this fundamental problem is to sidestep it entirely by redefining our dataset.
We begin by projecting our data onto a well known low dimensional manifold, which we can then measure with certainty.
\begin{figure}[ht]
\begin{center}

    \includegraphics[width=0.45\linewidth]{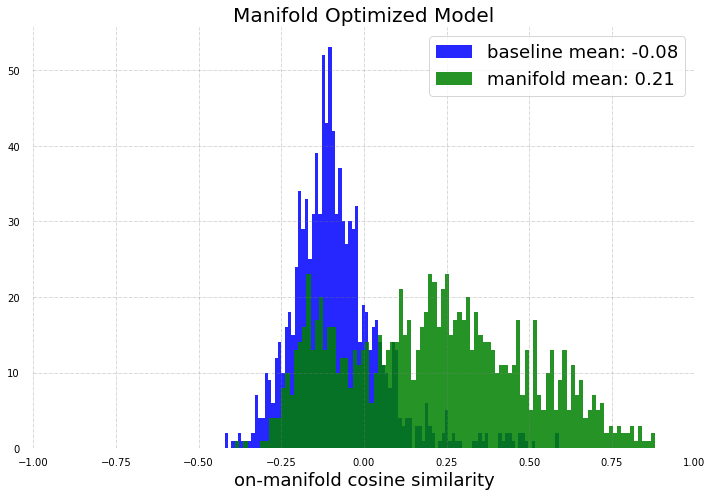}\includegraphics[width=0.45\linewidth]{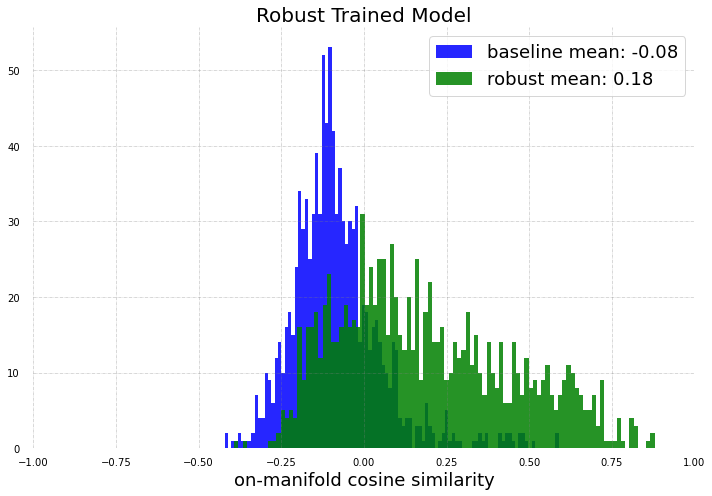}
\end{center}
    \caption{Comparison of manifold aligned components between baseline network, robust trained models, and models with manifold aligned gradients. Large values indicate higher alignment with the manifold. $Y$-axes for both plots are histogram counts.}
    \label{fig:hist_cosine}
\end{figure}

\begin{figure}[ht]
    \centering
    \includegraphics[width=0.45\linewidth]{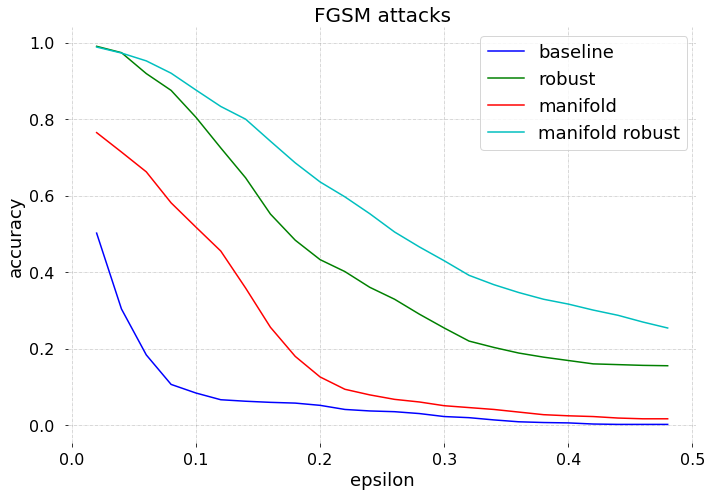}\includegraphics[width=0.45\linewidth]{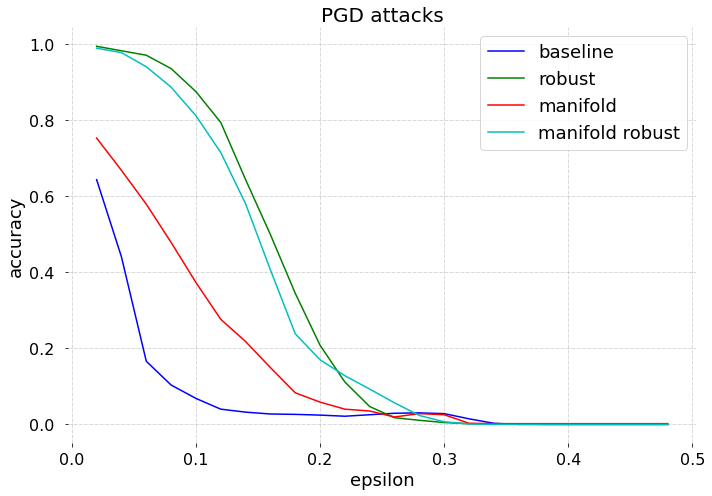}
    \caption{Comparison of adversarial robustness for PMNIST models under various training conditions.}
    \label{fig:model_robustness}
\end{figure}

We first fit a PCA model on all training data, using $k$ components
for each class to form a component matrix $W$, where $k << d$.
Given the original dataset $X$, we create a new dataset $X_{\mathcal{M}} := \{x \times \textbf{W}^T \times \textbf{W} : x \in X \}$.
We will refer to this set of component vectors as $\textbf{W}$.
Because the rank of the linear transformation matrix, $k$, is defined lower than the dimension of the input space, $d$, this creates a dataset which lies on a linear subspace of $\mathds{R}^d$.
This subspace is defined by the span of $X \times \textbf{W}^T$ and any vector in $\mathds{R}^d$ can be projected onto it.
Any data point drawn from $\{z \times \textbf{W}^T : z \in \mathds{R}^k \}$ is considered a valid datapoint.
This gives us a continuous linear subspace which can be used as a data manifold.

Given that it our goal to study the simplest possible case, we chose MNIST as the dataset to be projected and selected $k = 28$ components.
We refer to this new dataset as Projected MNIST (PMNIST).
The true rank of PMNIST is lower than that of the original MNIST data, meaning there was information lost in this projection.
The remaining information we found is sufficient to achieve 92\% accuracy using a feed forward ANN with one hidden layer, otherwise known as a Multilayer Perceptron (MLP), and the resulting images retain their semantic properties as in Figure \ref{fig:perception}. This figure shows attacks performed using PGD using the $l_\infty$ norm. Visual evidence of manifold alignment is often subjective and difficult to quantify. This example is provided as a baseline to substantiate our claim that our empirical measurements of alignment are valid.

\begin{figure*}[ht]
    \centering
    \includegraphics[width=0.25\linewidth]{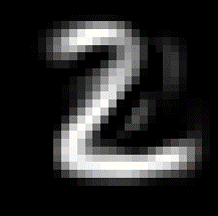}
    \includegraphics[width=0.25\linewidth]{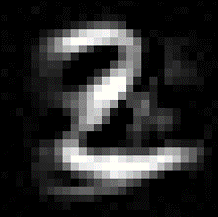}
    \includegraphics[width=0.25\linewidth]{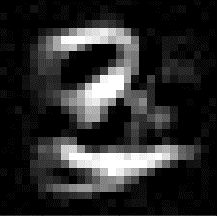}
    \caption{Visual example of manifold aligned gradient model transforming 2 into 3. Original PMNIST image on left, center image is center point between original and attacked, on right is the attacked image. }
    \label{fig:perception}
\end{figure*}

\subsection{Manifold Aligned Gradients} \label{subsec:ma}

Component vectors extracted from the original dataset are used to project gradient examples onto our pre-defined image manifold.

Given a gradient example $\nabla_x = \frac{\partial f_\theta(x, y)}{\partial x}$ where $f_\theta$ represents a neural network parameterized by weights $\theta$, $\nabla_x$ is transformed using the coefficient vectors \textbf{W}.

\begin{equation}
    \rho_x = \nabla_x \times \textbf{W}^T \times \textbf{W}    
\end{equation}
The projection of the original vector onto this new transformed vector
will be referred to as $P_{\mathcal{M}}$.
The ratio of norms of this projection gives a metric of manifold alignment:
\begin{equation}
    \frac{|| \nabla_x || }{||P_{\mathcal{M}}(\nabla_x )||}.
  \label{equ:ratio}
\end{equation}
This gives us a way of measuring the ratio between components of the gradient which are linearly dependent and independent from the manifold.
Additionally, both cosine similarity and the vector rejection were also tested but the norm ratio we found to be the most stable in training.
We use this measure as both a metric and a loss, allowing us to optimize the following objective:
\begin{equation}
  \mathds{E}_{(x,y) \sim \mathcal{D}} \left[ L(\theta, x,y)  + \alpha \frac{|| \nabla_x || }{||P_{\mathcal{M}}(\nabla_x )||} \right]
  \label{equ:loss}
\end{equation}

Where $L(\theta, x, y)$ represents our classification loss term and $\alpha$ is a hyper parameter determining the weight of the manifold alignment loss term. In the case that the right-hand term disappears, we say we have a manifold aligned gradient (MAG). 

\subsection{Manifold Alignment Robustness Results}

All models were two layer MLPs with 1568 nodes in each hidden layer.
The hidden layer size was chosen as twice the input size.
This arrangement was chosen to maintain the simplest possible case.

Two types of attacks were leveraged in this study: fast gradient sign
method (FGSM) \citep{goodfellow_explaining_2014} which performs a
single step based on the sign of an adversarial gradient for each
input; and projected gradient descent (PGD) which performs gradient
descent on input data using adversarial gradients in order to produce
adversarial attacks \citep{madry_towards_2017}. For FGSM, $\varepsilon$ is a step size applied to the sign of the gradient so that for an input $x$, its attack $x'$ is calculated as:
\begin{align}
  x' = x + \epsilon \cdot \text{sign}(\nabla_x f(x))
\end{align}
which is a vector with $L^2$ norm of $\sqrt(n)$ where $n$ is the number of input dimensions. The distortion ($\|x'-x\|_2/\sqrt{n}$ for adversarial example $x'$ with starting image $x$) of an FGSM perturbation is exactly equal to the parameter $\epsilon$ used to determine the step. For PGD, steps are computed iteratively using gradient descent. If their distortion is larger than $\epsilon$, they are shortened so that they have distortion of exactly $\epsilon$.
A total of four models were trained and evaluated on these attacks: Baseline (no adversarial training and no MAG loss term), Robust (adversarial examples added during training, but no MAG loss term), MAG (MAG loss term but no adversarial examples in training), and MAG Robust (both MAG loss term and adversarial examples in training).
All models, including the baseline, were trained on PMNIST (a fixed
permutation is applied to the training and test images of the MNIST dataset).
``Robust" in our case refers to models trained with new adversarial
examples labeled for their class \emph{before} perturbation during
each epoch consistent with ~\citet{tramer2019adversarial}. All adversarial training was conducted using adversarial examples with a maximum distortion of $\epsilon = 0.1$.

Figure \ref{fig:hist_cosine} shows the cosine similarity of the
gradient and its projection onto the reduced space $W$ on the testing set of PMNIST for both the MAG model and Robust model. Higher values (closer to 1) indicate the model is more aligned with the manifold. Both Robust and MAG models here show greater manifold alignment than the Baseline. This demonstrates that adversarial training leads to manifold aligned gradients.

Figure \ref{fig:model_robustness} shows the adversarial robustness of each model. Attacks are prepared using a range of a distortion parameter epsilon. For FGSM, the sign of the gradient is multiplied by each epsilon. For PGD, epsilon is determined by a weight on the $l_2$ norm term of the adversarial loss function. Many variations of the $l_2$ weight are performed, and then they are aggregated and the distance of each perturbation is plotted as epsilon. For both FGSM and PGD, we see a slight increase in robustness from using manifold aligned gradients. Adversarial training still improves performance significantly more than manifold aligned gradients. Another observation to note is that when both the manifold alignment and adversarial objective were optimized, increased robustness against FGSM attacks was observed. All robust models were trained using the $l_\infty$ norm at epsilon = 0.1. The fact that this performance increase is not shared by PGD training may indicate a relationship between these methods. We hypothesize that a linear representation of the image manifold is sufficient to defend against linear attacks such as FGSM, but cannot defend against a non-linear adversary.

\section{Conclusion}

In order to better understand the observed tendency for points near natural data to be classified similarly and points near
adversarial examples to be classified differently, we defined a notion of $(\gamma,\sigma)$-stability which is easily estimated by Monte Carlo sampling. For any data point $x$, we then define the $\gamma$-persistence to to be the smallest $\sigma_\gamma$ such that the probability of similarly classified data is at least $\gamma$ when sampling from Gaussian distributions with mean $x$ and standard deviation less than $\sigma_\gamma$. The persistence value can be quickly estimated by a Bracketing Algorithm. These two measures were considered with regard to both the MNIST and ImageNet datasets and with respect to a variety of classifiers and adversarial attacks. We found that adversarial examples were much less stable than natural examples in that the $0.7$-persistence for natural data was usually significantly larger than the $0.7$-persistence for adversarial examples. We also saw that the dropoff of the persistence tends to happen precisely near the decision boundary. Each of these observations is strong evidence toward the hypothesis that adversarial examples arise inside cones or high curvature regions in the adversarial class, whereas natural images lie outside such regions.

We also found that often the most likely class for perturbations of an adversarial examples is a class other than the class of the original natural example used to generate the adversarial example; instead, some other background class is favored. In addition, we found that some adversarial examples may be more stable than others, and a more detailed probing using the concept of $(\gamma,\sigma)$-stability and the $\gamma$-persistence statistic may be able to help with a more nuanced understanding of the geometry and curvature of the decision boundary. Although not pursued here, the observations and statistics used in this paper could potentially be used to develop methods to detect adversarial examples as in \citep{crecchi2019,frosst2018,hosseini2019odds,Lee2018ASU,qin2020,roth19aodds} and others. As with other methods of detection, this may be susceptible to adaptive attacks as discussed by ~\citet{tramer2020adaptive}.

For the future, we have made several observations: We found that some adversarial examples may be more stable than others. More detailed probing using the concept of $(\gamma,\sigma)$-stability and the $\gamma$-persistence along linear interpolation between natural images and between natural and adversarial images reveals sharp drops in persistence. Sharp drops in persistence correspond with oblique angles of incidence between linear interpolation vectors and the decision boundary learned by neural networks. Combining these observations, we can form a conjecture: Adversarial examples appear to exist near regions surrounded by negatively curved structures bounded by decision surfaces with relatively small angles relative to linear interpolation among training and testing data. This conjecture compares with the dimpled manifold hypothesis ~\citep{shamir2021dimpled}, however our techniques provide geometric information that allows us to gain a more detailed analysis of this region than in that work. In addition, our analysis of manifold alignment of gradients reinforces the notion that the obliqueness we observe may be a property which can be isolated and trained out of neural networks to some extent. Future work should focus on refining this conjecture with further tools to complete the spatial and mathematical picture surrounding adversarial examples.

\chapter{An Exact Kernel Equivalence for Finite Classification Models} 
\label{Chapter4} 

\tikzset{>=latex} 

\usetikzlibrary{fadings}
\usetikzlibrary{arrows.meta}
\tikzset{arl/.style={line width=4pt, {-Latex[left]}, #1}}
\tikzset{arr/.style={line width=4pt, {-Latex[right]}, #1}}

\colorlet{veccol}{green!70!black}
\colorlet{vcol}{green!70!black}
\colorlet{xcol}{blue!85!black}
\colorlet{projcol}{xcol!60}
\colorlet{unitcol}{xcol!60!black!85}
\colorlet{myblue}{blue!70!black}
\colorlet{myred}{red!90!black}
\colorlet{ntk}{red!90!white}
\colorlet{dpk}{blue!70!white}
\colorlet{epkt}{green!40!black}
\colorlet{diffpk}{orange}
\colorlet{epk}{black!70!gray}
\colorlet{mypurple}{blue!50!red!80!black!80}
\tikzstyle{vector}=[->,line width=0.65mm, xcol]
\usetikzlibrary{intersections, pgfplots.fillbetween}



\theoremstyle{remark}

\newcounter{remcounter}
\newcommand*{\remlabel}[1]{\refstepcounter{remcounter}\theremcounter\label{#1}}
\newcommand*{\remref}[1]{\ref{#1}}

In the process of trying to understand adversarial examples in a
geometric sense, a question arises: How can we directly extract
geometric properties from machine learning models. In this line of
thought, kernel methods are particularly appealing because they
implicitly define a spatial transform which is used to make
predictions. The kernel, a symmetric positive-definite bilinear map,
at the core of all kernel methods can be written as an inner-product
in an appropriate Hilbert space for all problems. This is a spatial
metric! Furthermore, kernel methods make predictions by comparing a
test point (using the kernel) with all known training points:

\begin{align}
  P(x) = b + \sum_i K(x, x_i)
\end{align}

Any prediction can be decomposed into the \emph{spatial}
contribution from each training point. The value of this property of kernel machines inspired a careful review of the Neural-Tangent-Kernel
and related literature eventually leading to a formulation posed by ~\citet{domingos2020every}. 

The resulting paper was accepted to the
archival proceedings track of the Topology, Algebra, and Geometry
Workshop at the International Conference on Machine Learning (ICML)
2023. In this work, we propose the first exact path kernel
representation for general gradient trained classifiers. The primary
derivation and proof was written by Brian Bell and the supporting
implementation and work were mostly conducted by Michael Geyer. The
central focus of the paper is on the derivation and demonstration that
this method works in practice. The interest that gave rise for this
approach comes from the fact that kernel methods and more specifically
bilinear map based models decompose their predictions into a
contribution from each of their training data. The Neural Tangent
Kernel (NTK) is an interesting tool, but predicated on too many
approximation assumptions. Exact formulation allows a much more
solid foundation for analyzing neural networks and suggests the
possibility that predictions can be decomposed using this
framework. The implementation and application of this method to real
machine learning models and tasks demonstrates that it is not only a
theoretical framework; it is practical! As stated above, this paper
was accepted for archival publication at the Topology Algebra and
Geometry (TAG) workshop at ICML 2023 in Honolulu, Hawaii.

\section{Introduction}

This study investigates the relationship between kernel methods and finite parametric models. To date, interpreting the predictions of complex models, like neural networks, has proven to be challenging. Prior work has shown that the inference-time predictions of a neural network can be exactly written as a sum of independent predictions computed with respect to each training point. We formally show that classification models trained with cross-entropy loss can be exactly formulated as a kernel machine. It is our hope that these new theoretical results will open new research directions in the interpretation of neural network behavior.

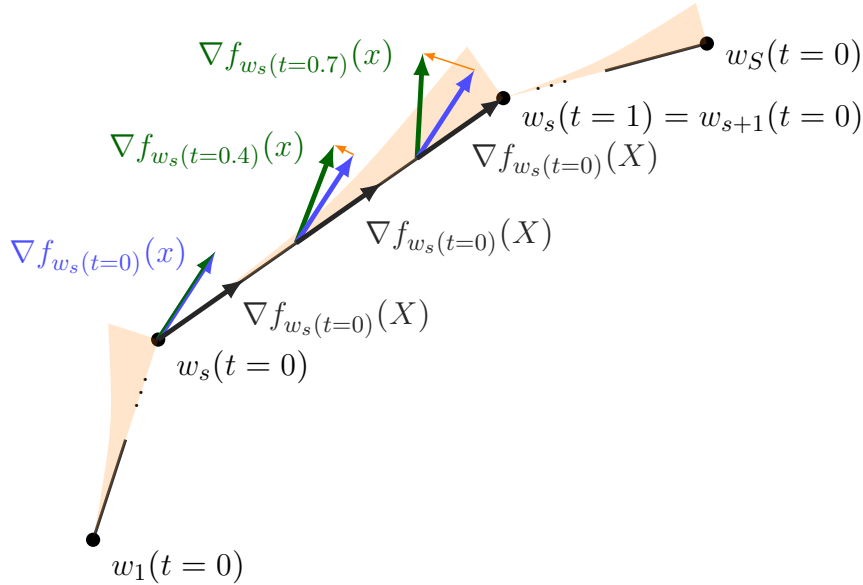
\begin{figure}[!ht]
\centering
\begin{tikzpicture}[scale=1.4]
\def\bang{72}
\def\ang{35}
\def\lang{15}
\def\mang{57}
\def\sang{69}
\def\kang{90}
\def\tang{87}
\def\dang{34}
\def\xs{1.0}
\def\xa{4}
\def\xe{1.0}
\def\xd{0.75}
\def\xn{0.5}
\def\xi{0.4}
\coordinate  (s1) at (0,0);
\coordinate  (si1) at ($ (s1) + (\bang:1) $);
\coordinate  (s2) at ($ (si1) + (\bang:\xs) $);
\coordinate  (s3) at ($ (s2) + (\ang:\xa) $);
\coordinate  (si3) at ($(s3) + (\lang:\xs) $);
\coordinate  (s4) at ($(si3) + (\lang:\xs) $);

\path (si1) -- (s2) node [midway, sloped] (elip) {\ldots};
\path (s3) -- (si3) node [midway, sloped] (elip) {\ldots};
\node[fill=black,circle,inner sep=1.9] (s01) at (s1) {};
\node [below right=0mm and 1mm, rotate=\kang-90] (ss1) at (s01) {$w_1(t=0)$};
\node[fill=black,circle,inner sep=1.9] (s02) at (s2) {};
\node [below right=0mm and 1mm, rotate=\kang-90] (ss2) at (s02) {$w_s(t=0)$};
\node[fill=black,circle,inner sep=1.9] (s03) at (s3) {};
\node [below right=-1mm and 1mm, rotate=\kang-90] (ss3) at (s3) {$w_s(t=1)=w_{s+1}(t=0)$}; 
\node[fill=black,circle,inner sep=1.9] (s04) at (s4) {};
\node [below right=-2mm and 1mm, rotate=\kang-90] (ss4) at (s04) {$w_S(t=0)$};
\draw[black, line width=0.4mm, draw opacity=0.3] (s2) -- (s3);
\draw[black, line width=0.4mm, draw opacity=0.3] (s1) -- (si1);
\draw[black, line width=0.4mm, draw opacity=0.3] (si3) -- (s4);

\coordinate (p1) at (s2);
\coordinate (p1a) at ($ (p1)+(\ang:\xe) $);
\coordinate (p1b) at ($ (p1)+(\mang:\xe) $);
\coordinate (p2) at ($ (p1a)+(\ang:0.6) $);
\coordinate (p2a) at ($ (p2)+(\ang:\xe) $);
\coordinate (p2b) at ($ (p2)+(\sang:\xe) $); 
\coordinate (p2c) at ($ (p2)+(\mang:\xe) $); 
\coordinate (p3) at ($ (p2a)+(\ang:\xi) $);
\coordinate (p3a) at ($ (p3)+(\ang:\xe) $);
\coordinate (p3b) at ($ (p3)+(\tang:\xe) $);
\coordinate (p3c) at ($ (p3)+(\mang:\xe) $);
\coordinate (n0) at ($ (s1)+(\bang:\xn) $);
\coordinate (n1) at ($ (p1)+(\bang:\xn) $);
\coordinate (n2) at ($ (p2)+(\bang:\xn) $);
\coordinate (n3) at ($ (p3)+(\bang:\xn) $);
\coordinate (d0) at ($ (s1)+(\dang:\xd) $);
\coordinate (d1) at ($ (p1)+(\dang:\xd) $);
\coordinate (d2) at ($ (p2)+(\dang:\xd) $);
\coordinate (d3) at ($ (p3)+(\dang:\xd) $);

\draw[name path=pb, black, line width=0.4mm, draw opacity=0.7] (s2) -- (s3);
\draw[black, line width=0.4mm, draw opacity=0.7] (s1) -- (si1);
\path[name path=fb, black, line width=0.4mm, draw opacity=0.7] (s1) -- (s2);
\draw[name path=gb, black, line width=0.4mm, draw opacity=0.7] (si3) -- (s4);  
  \draw [name path=pa, thick , ->, opacity=0.0]
  (p1) .. controls ($ (s2) + (37:2.1) $)  .. ($ (s3) + (125:0.6) $);
  \draw [name path=fa, thick , ->, opacity=0.0]
  (s1) .. controls ($ (s1) + (80:1.1) $)  .. ($ (s2) + (162:0.50) $);
  \draw [name path=ga, thick , ->, opacity=0.0]
  (s3) .. controls ($ (s3) + (20:1) $)  .. ($ (s4) + (105:0.4) $);
\tikzfillbetween[of=pa and pb]{orange, opacity=0.2};
\tikzfillbetween[of=fa and fb]{orange, opacity=0.2}; 
\tikzfillbetween[of=ga and gb]{orange, opacity=0.2};
    
\draw[vector, ->, dpk] (p1) -- (p1b) node[scale=1,above left=-5mm and 2mm, draw opacity=0.5] {$\nabla f_{w_s(t=0)}(x)$};
\begin{scope}
\clip (p1) -- (p1b) -- ($(p1b) + (90+\mang:0.1) $) -- ($ (p1) + (90+\mang:0.1) $);

\draw[vector, ->, epkt] (p1) -- (p1b) node[scale=1,above left=-5mm and 2mm, draw opacity=0.5] {$\nabla f_{w_s(t=0)}(x)$};

\end{scope}
\draw[vector, ->, epk] (p1) -- (p1a) node[scale=1,below right=1mm and -2mm, draw opacity=0.5, rotate=\kang-90] {$\nabla f_{w_s(t=0)}(X)$};
\draw[vector, ->, epkt] (p2) -- (p2b) node[scale=1,above left=-5mm and 2mm, draw opacity=0.5] {$\nabla f_{w_s(t=0.4)}(x)$};
\draw[vector, ->, dpk] (p2) -- (p2c) node[scale=1,above left=-5mm and 2mm, draw opacity=0.5] {};
\draw[vector, ->, epk] (p2) -- (p2a) node[scale=1,below right=3mm and -4mm, draw opacity=0.5, rotate=\kang-90] {$\nabla f_{w_s(t=0)}(X)$};
\draw[vector, ->, epkt] (p3) -- (p3b) node[scale=1,above left=-5mm and 2mm, draw opacity=0.5] {$\nabla f_{w_s(t=0.7)}(x)$};
\draw[vector, ->, dpk] (p3) -- (p3c) node[scale=1,above left=-5mm and 2mm, draw opacity=0.5] {};
\draw[vector, ->, epk] (p3) -- (p3a) node[scale=1,below right=4mm and -6mm, draw opacity=0.5, rotate=\kang-90] {$\nabla f_{w_s(t=0)}(X)$};
\draw[->,line width=0.2mm, xcol, diffpk] (p2c) -- (p2b) node[scale=1,below right=1mm and -2mm, draw opacity=0.5] {};
\draw[->,line width=0.2mm, xcol, diffpk] (p3c) -- (p3b) node[scale=1,below right=1mm and -2mm, draw opacity=0.5] {};

    

\end{tikzpicture}
\caption{Comparison of test gradients used by Discrete Path Kernel (DPK) from prior work (Blue) and the Exact Path Kernel (EPK) proposed in this work (green) versus total training vectors (black) used for both kernel formulations along a discrete training path with $S$ steps. Orange shading indicates cosine error of DPK test gradients versus EPK test gradients shown in practice in Fig.~\ref{fig:error}. }
\label{fig:vecs}
\end{figure}

\begin{figure}[!ht]
        \centering
        \includegraphics[width=1.01\linewidth]{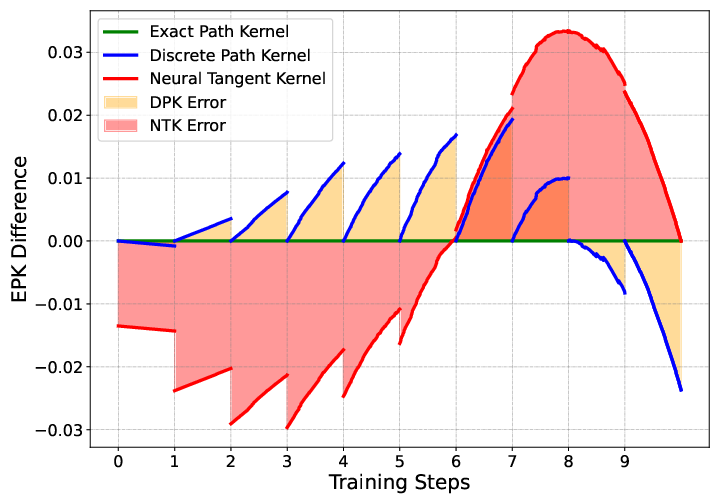}

        \caption{Measurement of gradient alignment on test points across the training path. The EPK is used as a frame of reference. The y-axis is exactly the difference between the EPK and other representations. For example $EPK-DPK = \langle \phi_{s,t}(X), \phi_{s,t}(x) - \phi_{s,0}(x) \rangle$ (See Definition 3.4). Shaded regions indicate total accumulated error. Note: this is measuring an angle of error in weight space; therefore, equivalent positive and negative error will not result in zero error.}
        \label{fig:error}
\end{figure}

There has recently been a surge of interest in the connection between neural networks and kernel methods ~\citep{bietti2019bias, du2019graphntk, tancik2020fourierfeatures, abdar2021uq, geifman2020similarity, chen2020generalized, alemohammad2021recurrent}. Much of this work has been motivated by the neural tangent kernel (NTK), which describes the training dynamics of neural networks in the infinite limit of network width ~\citep{jacot2018neural}. The NTK has the form
\begin{align}
  f_{w}(x) = \sum_{i=1}^N L'(f_w(x_i), y_i) \langle \nabla_w f_w(x), \nabla_w f_w(x_i) \rangle
\end{align}
where $L'(a, b) = \dfrac{\partial L(a, b)}{\partial a}$. 
We argue that many intriguing behaviors arise in the \emph{finite} parameter regime which should, for example, satisfy the universal law of robustness proposed by ~\citet{DBLP:conf/nips/BubeckS21}. 
All prior works, to the best of our knowledge, appeal to discrete approximations of the kernel corresponding to a neural network. 
Specifically, prior approaches are derived under the assumption that training step size is small enough to guarantee close approximation of a gradient flow
~\citep{ghojogh2021, shawe2004kernel, zhao2005extracting}.

In this work, we show that the simplifying assumptions used in prior works (i.e. infinite network width and infinitesimal gradient descent steps) are not necessary. Our \textbf{Exact Path Kernel (EPK)} provides the first exact method to study the behavior of finite-sized neural networks used for classification.
Previous results are limited in application ~\citep{incudini2022quantum} due to dependence of the kernel on test data unless strong conditions are imposed on the training process as by ~\citet{chen2021equivalence}. We show, however, that the training step sizes used in practice do not closely follow this gradient flow, introducing significant error into all prior approaches (Figure~\ref{fig:error}).

Our experimental results build on prior studies attempting to evaluate empirical properties of the kernels corresponding to finite neural networks ~\citep{DBLP:conf/iclr/LeeBNSPS18, chen2021equivalence}. While the properties of infinite neural networks are fairly well understood ~\citep{neal1996priors}, we find that the kernels learned by finite neural networks have non-intuitive properties that may explain the failures of modern neural networks on important tasks such as robust classification and calibration on out-of-distribution data.

This paper makes the following significant theoretical and experimental contributions:
\begin{enumerate}
    \item We prove that finite-sized neural networks trained with finite-sized gradient descent steps and cross-entropy loss can be exactly represented as kernel machines using the EPK. Our derivation incorporates a previously-proposed path kernel, but extends this method to account for practical training procedures ~\citep{domingos2020every, chen2021equivalence}.
  
    \item We demonstrate that it is computationally tractable to estimate the kernel underlying a neural network classifier, including for small convolutional computer vision models.
    \item We compute Gram matrices using the EPK and use them to illuminate prior theory of neural networks and their understanding of uncertainty. 
    \item We employ Gaussian processes to compute the covariance of a neural network's logits and show that this reiterates previously observed shortcomings of neural network generalization.
\end{enumerate}

\section{Related Work}

The main focus for this work is the connection between neural networks and kernel machines in the finite case. While there is a large body of work on the infinite-parameter version of this problem, primarily the neural tangent kernel (NTK) of ~\citet{jacot2018neural}, less work has been focused on finitely parameterized models.   
In the finite setting, some works \citep{domingos2020every, chen2021equivalence} have established an equivalence between artificial neural networks and support vector machines i.e. kernel machines for models which follow continuous gradient flows. One intriguing aspect is a connection between these kernel representations and neural networks as maximum margin classifiers ~\citep{chen2021equivalence, chizat2020maxmargin}. ~\citet{shah2021input} demonstrate that this maximum margin classifier exists in Wasserstien space; however, they also show that model gradients may not contain the required information to represent this. However the discrete approximations proposed in these works for models which are trained with finitely many discrete steps have highly unstable approximation error. 

We will refer to the previous finite approximations of ~\citet{domingos2020} and ~\citet{chen2021equivalence} as the Discrete Path Kernel (DPK). The highly unstable approximation error that results from reliance of the DPK on problems trained with discrete first order numerical optimization e.g. forward Euler, raises concerns regarding the applicability of the continuous path kernel to practical scenarios ~\citep{incudini2022quantum}.
Moreover, the formulation of the sample weights and bias term in the DPK depends on its test points. ~\citet{chen2021equivalence} propose that this can be addressed, in part, by imposing restrictions on the loss function used for training, but do not entirely disentangle the kernel formulation from sample importance weights on training points.

We address the limitations of \citet{domingos2020} and \citet{chen2021equivalence} in Subsection 
~\ref{subsec:disc}. By default, their approach produces a system which can be viewed as an ensemble of kernel machines, but without a single aggregated kernel which can be analyzed directly. ~\citet{chen2021equivalence} propose that the resulting sum over kernel machines can be formulated as a kernel machine so long as the sign of the gradient of the loss stays constant through training; however, we show that this is not necessarily a sufficient restriction. Instead, their formulation leads to one of several non-symmetric functions which can serve as a surrogate to replicate a given models behavior, but without retaining properties of a kernel.

\section{Theoretical Results}

Our goal is to show an equivalence between any given finite parametric model trained with gradient descent $f_w(x)$  (e.g. neural networks) and a kernel based prediction that we construct. We define this equivalence in terms of the output of the parametric model $f_w(x)$ and our kernel method in the sense that they form identical maps from input to output. In the specific case of neural network classification models, we consider the mapping $f_w(x)$ to include all layers of the neural network up to and including the log-softmax activation function. Formally:
\begin{definition}
A {kernel} is a function of two variables which is symmetric and positive semi-definite. 
\end{definition}

\begin{definition}
Given a Hilbert space $X$, a test point $x \in X$, and a training set $X_T = \{x_1,x_2,...,x_n\} \subset X$ indexed by $I$, a \emph{Kernel Machine} is a model characterized by 
\begin{align}
    \text{K}(x) = b + \sum_{i=1}^n a_i k(x,x_i)
\end{align}
where the $a_i \in \mathbb{R}$ do not depend on $x$, $b \in \mathbb{R}$ is a constant, and $k$ is a kernel ~\citep{rasmussen2006gaussian}.
\end{definition}

By Mercer's Theorem ~\citep{ghojogh2021} a kernel can be produced by composing an inner product on a Hilbert space with a mapping $\phi$ from the space of data into the chosen Hilbert space.
We use this property to construct a kernel machine of the following form.
\begin{align}
    \text{K}(x) = b + \sum_{i\in I} a_i \langle \phi(x), \phi(x_i) \rangle
\end{align}
Where $\phi$ is a function mapping input data into the weight space via gradients. Our $\phi$ will additionally differentiate between test and training points to resolve a discontinuity that arises under discrete training. 

\subsection{Exact Path Kernels}

 We first derive a kernel which is an exact representation of the change in model output over one training step, and then compose our final representation by summing along the finitely many steps.
Models trained by gradient descent can be characterized by a discrete set of intermediate states in the space of their parameters.
These discrete states are often considered to be an estimation of the gradient flow, however in practical settings where $\epsilon > 0$ these discrete states differ from the true gradient flow.
Our primary theoretical contribution is an algorithm which accounts for this difference by observing the true path the model followed during training.
Here we consider the training dynamics of practical gradient descent steps by integrating a discrete path for weights whose states differ from the gradient flow induced by the training set.

\textbf{Gradient Along Training Path vs Gradient Field:}
In order to compute the EPK, gradients on training data must serve two purposes. 
First, they are the reference points for comparison (via inner product) with test points. 
Second, they determine the path of the model in weight space. 
In practice, the path followed during gradient descent does not match the gradient field exactly. 
Instead, the gradient used to move the state of the model forward during training is only computed for finitely many discrete weight states of the model.
In order to produce a path kernel, we must \textit{continuously} compare the model's gradient at test points with \textit{fixed} training gradients along each discrete training step $s$ whose weights we we interpolate linearly by $w_s(t) = w_s - t(w_s - w_{s+1})$. We will do this by integrating across the gradient field induced by test points, but holding each training gradient fixed along the entire discrete step taken. This creates an asymmetry, where test gradients are being measured continuously but the training gradients are being measured discretely (see Figure~\ref{fig:vecs}).

To account for this asymmetry in representation, we will redefine our data using an indicator to separate training points from all other points in the input space.
\begin{definition}
\label{fpm}
Let $X$ be two copies of a Hilbert space $H$ with indices $0$ and $1$ so that $X = H \times \{0,1\}$. We will write $x \in H \times \{0,1\}$ so that $x = (x_H, x_I)$ (For brevity, we will omit writing $_H$ and assume each of the following functions defined on $H$ will use $x_H$ and $x_I$ will be a hidden indicator).
Let $ f_{w}$ be a differentiable scalar valued function on $H$ parameterized by $w \in \mathbb{R}^d$. Let $X_T = \{(x_i, 1)\}_{i=1}^M$ be a finite subset of $X$ of size $M$ with corresponding observations $Y_T = \{y_{x_i}\}_{i=1}^M$ with initial parameters $w_0$ so that there is a constant $b \in \mathbb{R}$ such that for all $x$, $ f_{w_0}(x) = b$. Let $L$ be a differentiable loss function of two values which maps $(f(x), y_x)$ into the positive real numbers. Starting with $f_{w_0}$, let $\{w_s\}$ be the sequence of points attained by $N$ forward Euler steps of fixed size $\varepsilon$ so that $w_{s+1} = w_{s} - \varepsilon \nabla L(f(X_T), Y_T)$. Let $w_s(t) = w_s + t(w_{s+1}-w_s)$. Let $x \in H \times \{0\}$ be arbitrary and within the domain of $f_w$ for every $w$. Then $f_{w_s(t)}$ is a \emph{finite parametric gradient model (FPGM)}. 
\end{definition}

\begin{definition}
\label{epk}

Let $f_{w_s(t)}$ be an FPGM with all corresponding assumptions. Then, for a given training step $s$, the \emph{exact path kernel} (EPK) can be written  
\begin{equation}
 K_{\text{EPK}}(x, x', s) = \int_0^1\langle \phi_{s,t}(x), \phi_{s,t}(x')\rangle dt
 \label{eq2}
\end{equation}
where
\begin{align}
\phi_{s, t}(x) &=  \nabla_w f_{w_s(t,x)} (x)\\
w_s(t) &= w_s - t(w_s - w_{s+1})\\
w_s(t,x) &= \begin{cases} w_s(0), & \text{if } x_I = 1\\ w_s(t), & \text{if } x_I = 0 \end{cases}
\end{align}
\textbf{Note:} $\phi$ is deciding whether to select a continuously or
discrete gradient based on whether the data is from the training or
testing copy of the Hilbert space $H$. This is due to the inherent
asymmetry that is apparent from the derivation of this kernel (see
Appendix section ~\ref{proof:eker}). This choice avoids potential discontinuity in the kernel output when a test set happens to contain training points. 
\end{definition}
\begin{restatable}{lemma}{ker}
The exact path kernel (EPK) is a kernel.
\end{restatable}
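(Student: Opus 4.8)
The plan is to reduce the claim, for each fixed training step $s$, to two standard facts: the pullback of an inner product on a fixed Hilbert space along any map is a symmetric positive semi-definite kernel, and a suitably integrable average of such kernels is again one. So first I would fix $s$ and, for each $t\in[0,1]$, observe that the rule $x\mapsto \phi_{s,t}(x)=\nabla_w f_{w_s(t,x)}(x)$ is — by the case split in Definition~\ref{epk} — a well-defined single-valued function $\phi_{s,t}\colon X\to\mathbb{R}^d$ on the indicator-augmented domain $X=H\times\{0,1\}$. As in the construction preceding the definition (composition of the Euclidean inner product with a feature map), $k_t(x,x'):=\langle \phi_{s,t}(x),\phi_{s,t}(x')\rangle$ is then symmetric, and positive semi-definite because for any finite collection $x_1,\dots,x_n\in X$ and scalars $c_1,\dots,c_n\in\mathbb{R}$,
\[
\sum_{i,j=1}^n c_i c_j\, k_t(x_i,x_j)=\Norm{\sum_{i=1}^n c_i\,\phi_{s,t}(x_i)}^2\ge 0.
\]

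Next I would pass from $k_t$ to $K_{\text{EPK}}(\cdot,\cdot,s)=\int_0^1 k_t\,dt$. Symmetry is immediate from symmetry of each $k_t$. For positive semi-definiteness, since the collection of points is finite I may interchange the finite sum with the integral:
\[
\sum_{i,j=1}^n c_i c_j\, K_{\text{EPK}}(x_i,x_j,s)=\int_0^1\sum_{i,j=1}^n c_i c_j\, k_t(x_i,x_j)\,dt=\int_0^1\Norm{\sum_{i=1}^n c_i\,\phi_{s,t}(x_i)}^2 dt\ge 0.
\]
To make the integral meaningful I would note that $f_w$ is assumed differentiable in $w$ with continuous gradient and that $t\mapsto w_s(t)=w_s-t(w_s-w_{s+1})$ is affine, so for each fixed $x$ the map $t\mapsto\phi_{s,t}(x)$ is continuous on the compact interval $[0,1]$; hence the integrand is a finite sum of products of continuous functions and is Riemann integrable, which also shows $K_{\text{EPK}}(x,x',s)$ is finite for all $x,x'$.

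The only point needing genuine attention — and the closest thing here to an obstacle — is the bookkeeping around the augmented domain $X=H\times\{0,1\}$: one must confirm that the training/test asymmetry of Definition~\ref{epk}, namely that $\phi$ uses the frozen weights $w_s(0)$ on the training copy ($x_I=1$) but the interpolated weights $w_s(t)$ on the test copy ($x_I=0$), does not break symmetry of the kernel. It does not, because both arguments of $K_{\text{EPK}}$ are fed through the \emph{same} function $\phi_{s,t}$, so the Gram-matrix computation above applies verbatim regardless of which copy each point lives in; the asymmetry is entirely internal to the construction of the feature map and never surfaces at the level of the kernel. Once this is observed, the lemma follows.
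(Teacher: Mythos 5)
Your proposal is correct and follows essentially the same route as the paper: both arguments realize $K_{\text{EPK}}(\cdot,\cdot,s)$ as the integral over $t$ of the pullback of the Euclidean inner product along the feature map $\phi_{s,t}$, obtain symmetry from using the same map on both slots, and obtain positive semi-definiteness by exchanging the finite Gram sum with the integral to get $\int_0^1 \bigl\Vert \sum_i c_i \phi_{s,t}(x_i)\bigr\Vert^2\,dt \ge 0$. Your added remarks on integrability and on why the train/test asymmetry of $\phi$ does not break symmetry are welcome refinements but do not change the argument.
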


\begin{proof}
We must show that the associated kernel matrix $K_{\text{EPK}} \in \mathbb{R}^{n\times n}$ defined for an arbitrary subset of data $\{x_i\}_{i=1}^M \subset X$ as $K_{\text{EPK},i,j} = \int_0^1\langle \phi_{s,t}(x_i), \phi_{s,t}(x_j)\rangle dt$ is both symmetric and positive semi-definite.

Since the inner product on a Hilbert space $\langle \cdot, \cdot \rangle$ is symmetric and since the same mapping $\varphi$ is used on the left and right, $K_{\text{EPK}}$ is \textbf{symmetric}. 

To see that $K_{\text{EPK}}$ is \textbf{Positive Semi-Definite}, let $\alpha = (\alpha_1, \alpha_2, \dots, \alpha_n)^\top \in \mathbb{R}^n$ be any vector. We need to show that $\alpha^\top K_{\text{EPK}} \alpha \geq 0$. We have

\begin{align}
\alpha^\top K_{\text{EPK}} \alpha &= \sum_{i=1}^n \sum_{j=1}^n \alpha_i \alpha_j \int_0^1 \langle \phi_{s,t}(x_i), \phi_{s,t}(x_j)\rangle dt \\
&=   \int_0^1 \left\langle \sum_{i=1}^n \alpha_i \phi_{s,t}(x_i), \sum_{j=1}^n \alpha_j \phi_{s,t}(x_j)\right\rangle dt \\
&=   \int_0^1 \left\lvert \sum_{i=1}^n \alpha_i^2 \phi_{s,t}(x_i)^2\right\rvert dt 
\geq 0.
\end{align}
\end{proof}

\begin{restatable}[Exact Kernel Ensemble Representation]{theorem}{eker}
\label{thm:eker}
A model $f_{w_N}$ trained using discrete steps matching the conditions of the exact path kernel has the following exact representation as an ensemble of $N$ kernel machines:
\begin{equation}
f_{w_N} = \text{KE}(x) :=  \sum_{s = 1}^N \sum_{i = 1}^{M} a_{i,s} K_{\text{EPK}}(x, x_i, s) + b
\label{ensemble}
\end{equation}
where
\begin{align}
a_{i, s} &= -\varepsilon  L'(f_{w_s(0)}(x_i),  y_i) \\
b &= f_{w_0}(x)
\end{align}
Where $L'(a, b) = \dfrac{\partial L(a, b)}{\partial a}$
\end{restatable}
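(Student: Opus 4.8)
The plan is to prove the representation one training step at a time and then telescope over the $N$ steps. Fix a step $s$. Along that step the weights move on the straight line $w_s(t) = w_s + t(w_{s+1}-w_s)$, $t\in[0,1]$, so the fundamental theorem of calculus together with the chain rule gives
\begin{equation}
f_{w_{s+1}}(x) - f_{w_s}(x) = \int_0^1 \frac{d}{dt} f_{w_s(t)}(x)\,dt = \int_0^1 \big\langle \nabla_w f_{w_s(t)}(x),\, w_{s+1}-w_s\big\rangle\,dt ,
\end{equation}
which is legitimate because the FPGM hypotheses make $f_w$ differentiable in $w$ along the path. First I would substitute the forward-Euler update rule $w_{s+1}-w_s = -\varepsilon\,\nabla_w L(f_{w_s}(X_T),Y_T)$ and expand the gradient of the total training loss as a sum over training points, obtaining $w_{s+1}-w_s = \sum_{i=1}^M a_{i,s}\,\nabla_w f_{w_s}(x_i)$ with $a_{i,s} = -\varepsilon\, L'(f_{w_s(0)}(x_i),y_i)$. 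This is exactly where the coefficients in the theorem come from, and it is the point at which the separability of the empirical risk into per-point terms is used.

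Next, by linearity of the inner product I would pull the finite sum and the constants $a_{i,s}$ out of the integral, leaving $\sum_{i=1}^M a_{i,s}\int_0^1\langle \nabla_w f_{w_s(t)}(x),\,\nabla_w f_{w_s}(x_i)\rangle\,dt$. The key step is to recognize this integral as $K_{\text{EPK}}(x,x_i,s)$ from the EPK definition: because $x$ is a test point ($x_I=0$) we have $\phi_{s,t}(x) = \nabla_w f_{w_s(t)}(x)$, while $x_i$ is a training point ($x_I=1$) so $\phi_{s,t}(x_i) = \nabla_w f_{w_s(0)}(x_i) = \nabla_w f_{w_s}(x_i)$, and the two factors in the integrand match exactly. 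Hence $f_{w_{s+1}}(x)-f_{w_s}(x) = \sum_{i=1}^M a_{i,s}\,K_{\text{EPK}}(x,x_i,s)$ for every test point $x$.

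Finally I would sum this identity over the $N$ training steps. The left side telescopes to $f_{w_N}(x)-f_{w_0}(x)$, and the FPGM hypothesis $f_{w_0}\equiv b$ supplies the constant term, giving $f_{w_N}(x) = \sum_{s}\sum_{i=1}^M a_{i,s}\,K_{\text{EPK}}(x,x_i,s)+b$. For each fixed $s$ the inner sum over $i$ is a kernel machine in the sense defined above, since $K_{\text{EPK}}(\cdot,\cdot,s)$ is a kernel by the preceding Lemma; therefore the whole expression is an ensemble of $N$ kernel machines, which is the assertion of the theorem.

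The main obstacle — essentially the only delicate point — is the asymmetry between how test and training gradients enter the derivation: the Euler step freezes the training gradients at the start of the step ($t=0$), whereas differentiating $f_{w_s(t)}(x)$ in $t$ necessarily evaluates the test-point gradient at the moving weight $w_s(t)$. The indicator-based definition of $\phi_{s,t}$ is precisely engineered to encode this, so the crux of the write-up is to verify carefully that the substitution above respects that convention and produces exactly $K_{\text{EPK}}(x,x_i,s)$, and to state the implicit regularity assumptions (differentiability of $f_w$ and $L$ for the chain rule and FTC, separability of the training objective for the per-point decomposition). No interchange-of-limit issues arise, since the sum over training points is finite and the $t$-integral is over a fixed compact interval.
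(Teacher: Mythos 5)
Your proposal is correct and follows essentially the same route as the paper's proof: fundamental theorem of calculus along the linear interpolation $w_s(t)$, chain rule, substitution of the forward-Euler update with the per-training-point expansion of the loss gradient to extract $a_{i,s}$, identification of the remaining integral with $K_{\text{EPK}}(x,x_i,s)$ via the indicator convention on $\phi_{s,t}$, and a telescoping sum over steps anchored at $b=f_{w_0}(x)$. The only cosmetic difference is that you apply the FTC before expanding the inner product, whereas the paper expands the chain rule termwise first and integrates afterward; the computations are identical.
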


\begin{proof}
\label{proof:eker}
Let $f_{w}$ be a differentiable function parameterized by parameters $w$ which is trained via $N$ forward Euler steps of fixed step size $\varepsilon$ on a training dataset $X$ with labels $ Y$, with initial parameters $w_0$ so that there is a constant $b$ such that for every $x$, $f_{w_0}(x) = b$, and weights at each step ${w_s : 0 \leq s \leq N}$. Let $x \in X$ be arbitrary and within the domain of $f_w$ for every $w$. For the final trained state of this model $f_{w_N}$, let $y = f_{w_N}(x)$. 

For one step of training, we consider $y_s  = f_{w_s(0)}(x)$ and $y_{s+1} = f_{w_{s+1}}(x)$. We wish to account for the change $y_{s+1} - y_s$ in terms of a gradient flow, so we must compute $\dfrac{\partial y}{dt}$ for a continuously varying parameter $t$. Since $f$ is trained using forward Euler with a step size of $\varepsilon > 0$, this derivative is determined by a step of fixed size of the weights $w_s$ to $w_{s+1}$. We parameterize this step in terms of the weights:

\begin{align}
    \dfrac{d w_s(t)}{dt} &= (w_{s+1} - w_s)\\   
    \int_0^T \dfrac{d w_s(t)}{dt} dt &= \int_0^T (w_{s+1} - w_s)dt
\end{align}
Since $f$ is being trained using forward Euler, across the entire training set $X$ we can write:
\begin{align}
    \dfrac{d w_s(t)}{dt} &= -\varepsilon \nabla_w L(f_{w_s(0)}(X), y_i) = -\varepsilon  \sum_{i=1}^M  \dfrac{\partial L(f_{w_s(0)}(x_i),  y_i)}{\partial w} \label{eq10}
\end{align}
For the loss function $L(a, b)$ we will define its partial derivative with respect to the first variable $a$ as $L'(a, b) = \dfrac{\partial L(a, b)}{\partial b}$. Applying chain rule with this notation and the above substitution, we can write
\begin{align}
    \dfrac{d \hat y}{dt} = \dfrac{d f_{w_s(t)}(x)}{dt} &= \sum_{j = 1}^{d} \dfrac{\partial f}{\partial w_j} \dfrac{d w_j}{dt}\\
&= \sum_{j = 1}^{d} \dfrac{\partial f_{w_s(t)}(x)}{\partial w_j} \left(-\varepsilon \dfrac{\partial L(f_{w_s(0)}(X_T),  Y_T)}{\partial w_j}\right)\\
&= \sum_{j = 1}^{d} \dfrac{\partial f_{w_s(t)}(x)}{\partial w_j} \left(-\varepsilon \sum_{i = 1}^{M}L'(f_{w_s(0)}(x_i),  y_i) \dfrac{\partial  f_{w_s(0)}(x_i)}{\partial w_j}\right)\\
&= -\varepsilon \sum_{i = 1}^{M} L'(f_{w_s(0)}(x_i),  y_i) \sum_{j = 1}^{d} \dfrac{d f_{w_s(t)}(x)}{\partial w_j}  \dfrac{d f_{w_s(0)}(x_i)}{\partial w_j}\\
&= -\varepsilon \sum_{i = 1}^{M} L'(f_{w_s(0)}(x_i),  y_i) \nabla_w f_{w_s(t)}(x) \cdot \nabla_w f_{w_s(0)}(x_i)\label{eq11}\\
\end{align}
Using the fundamental theorem of calculus, we can compute the change in the model's output over step $s$
\begin{align}
    y_{s+1} - y_s &= \int_0^1 -\varepsilon \sum_{i = 1}^{M} L'(f_{w_s(0)}(x_i),  y_i)  \nabla_w f_{w_s(t)}(x) \cdot \nabla_w f_{w_s(0)}(x_i)dt\\
 &=  -\varepsilon \sum_{i = 1}^{M} L'(f_{w_s(0)}(x_i),  y_i) \left(\int_0^1\nabla_w f_{w_s(t)}(x)dt\right) \cdot \nabla_w f_{w_s(0)}(x_i)\\
\end{align}
For all $N$ training steps, we have
\begin{align*}
y_N &= b + \sum_{s=1}^N y_{s+1} - y_s\\
y_N &= b + \sum_{s = 1}^N -\varepsilon \sum_{i = 1}^{M} L'(f_{w_s(0)}(x_i),  y_i)
      \left(\int_0^1\nabla_w f_{w_s(t)}(x)dt\right) \cdot \nabla_w
      f_{w_s(0)}(x_i)\\   \label{eqint}
&= b + \sum_{i = 1}^{M}\sum_{s = 1}^N -\varepsilon  L'(f_{w_s(0)}(x_i),  y_i)  \int_0^1\left\langle \nabla_w f_{w_s(t,x)}(x), \nabla_w f_{w_s(t,x_i)}(x_i) \right\rangle dt\\ 
&= b + \sum_{i = 1}^{M}\sum_{s = 1}^N a_{i, s}  \int_0^1 \left\langle \phi_{s,t}(x), \phi_{s,t}(x_i)\right\rangle dt
\end{align*}
Since an integral of a symmetric positive semi-definite function is still symmetric and positive-definite, each step is thus represented by a kernel machine. 

\end{proof}

Having established this representation, we can introduce $P_S(t)$, the
training path which is composed by placing each of the $S$ training
steps end-to-end. We can rewrite ~\ref{eqint} by combining $\sum_{s =
  1}^S$ and $\int_0^1 dt$ into a single integral $\int_{P_S}$:

\begin{align}
y_N &= b +  -\varepsilon \sum_{i = 1}^{M} \int_{P_S} 
      L'(f_{w_s(0)}(x_i),  y_i)
      \left(\nabla_w f_{w_s(t)}(x)\right) \cdot \nabla_w
      f_{w_s(0)}(x_i)\\   \label{eqint}
\end{align}

Rewriting this way, we can re-evaluate another assumption made during
our statement of this theorem, the bias term $b$. We have forced $b$
to be a constant in order to give our representation a chance of
reducing to the form of a kernel machine in a later theorem
~\ref{thm:ekr}. Let us relax this and replace $b$ with
$f_{w_0(0)}(x)$. We can see that this new representation no longer
requires assumptions about $f_{w_0(0)}(x)$ which gives us a much more
general representation which includes most ANNs in practice, 
\begin{align}
f_{w_F(0)}(x) = f_{w_0(0)}(x) +  -\varepsilon \sum_{i=1}^N \sum_s^S \int_{P_S} L'(f_{w_s(0)}(x_i),  y_i)
      \langle\nabla_w f_{w_s(t)}(x), \nabla_w
      f_{w_s(0)}(x_i)\rangle dt \label{eqint}.
\end{align}


\textbf{Remark ~\remlabel{rem:init}} Note that in this formulation, $b$ depends on the test point $x$.
In order to ensure information is not being leaked from the kernel into this bias term the model $f$ must have constant output for all input. 
When relaxing this property, to allow for models that have a non-constant starting output, but still requiring $b$ to remain constant, we note that this representation ceases to be exact for all $x$.
The resulting approximate representation has logit error bounded by its initial bias which can be chosen as $b = \text{mean}(f_{w_0(0)}(X_T))$.
Starting bias can be minimized by starting with small parameter values which will be out-weighed by contributions from training.
In practice, we sidestep this issue by initializing all weights in the final layer to $0$, resulting in $b=\text{log}(\text{softmax}(0))$, thus removing $b$'s dependence on $x$.

\textbf{Remark ~\remlabel{rem:exact}} 
The exactness of this proof hinges on the \emph{separate} measurement of how the model's parameters change.
The gradients on training data, which are fixed from one step to the next, measure how the parameters are changing.
This is opposed to the gradients on test data, which are \textit{not} fixed and vary with time.
These measure a continuous gradient field for a given point.
We are using interpolation as a way to measure the difference between the step-wise linear training path and the continuous loss gradient field.

\begin{restatable}[Exact Kernel Machine Reduction]{theorem}{ekr}
\label{thm:ekr}
Let $\nabla L(f(w_{s}(x), y)$ be constant  across steps $s$, $(a_{i,s}) = (a_{i,0})$. Let the kernel across all $N$ steps be defined as $K_{\text{NEPK}}(x,x') = \sum_{s = 1}^N a_{i,0} K_{\text{EPK}}(x, x', s).$ Then the exact kernel ensemble representation for $f_{w_N}$ can be reduced exactly to the kernel machine representation:
\begin{equation}
f_{w_N}(x) = \text{KM}(x) := b + \sum_{i = 1}^{M} a_{i,0} K_{\text{NEPK}}(x,x')
\label{exact}
\end{equation}
\end{restatable}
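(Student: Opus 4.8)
The plan is to start from the Exact Kernel Ensemble Representation already proved in Theorem~\ref{thm:eker}, which gives, with no extra assumptions, the exact identity $f_{w_N}(x) = b + \sum_{s=1}^N \sum_{i=1}^M a_{i,s}\, K_{\text{EPK}}(x,x_i,s)$ with $a_{i,s} = -\varepsilon\, L'(f_{w_s(0)}(x_i), y_i)$ and $b = f_{w_0}(x)$ constant by the FPGM hypotheses. The hypothesis of the present theorem is exactly that the loss gradient, hence each coefficient $a_{i,s}$, is independent of the step $s$, i.e.\ $a_{i,s} = a_{i,0}$. First I would substitute this into the ensemble identity and interchange the two finite sums, pulling $a_{i,0}$ out of the sum over training steps:
\[
f_{w_N}(x) = b + \sum_{i=1}^M a_{i,0} \Big(\sum_{s=1}^N K_{\text{EPK}}(x,x_i,s)\Big).
\]

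Next I would verify that the bracketed object $\widetilde K(x,x') := \sum_{s=1}^N K_{\text{EPK}}(x,x',s)$ --- which, up to absorbing the constants $a_{i,0}$, is $K_{\text{NEPK}}$ --- is again a kernel, so that the expression has the form demanded by the definition of a Kernel Machine. This is just closure of kernels under finite sums: by the lemma just established, each $K_{\text{EPK}}(\cdot,\cdot,s)$ is symmetric and positive semi-definite; a sum of symmetric Gram matrices is symmetric; and $\alpha^\top \widetilde K \alpha = \sum_{s=1}^N \alpha^\top K_{\text{EPK},s}\,\alpha \ge 0$ for every $\alpha$. Since the $a_{i,0}$ depend only on the training data and $b$ is constant, $f_{w_N}(x) = b + \sum_{i=1}^M a_{i,0}\, \widetilde K(x,x_i)$ is literally of kernel-machine form, which is the claim after relabelling $\widetilde K$ as $K_{\text{NEPK}}$.

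The computation is short, so the care goes entirely into the hypotheses, and that is where I expect the only real obstacle. It is essential that the assumption be the full statement $a_{i,s} = a_{i,0}$ and not merely that the \emph{sign} of the loss gradient is constant across steps (the restriction proposed in prior work): without a genuinely common factor one cannot pull anything out of the $s$-sum, and the collapsed function one is left with need not be symmetric, so it would fail to be a kernel. I would also flag that exactness depends on $b = f_{w_0}(x)$ being truly constant in $x$ --- this is part of the standing FPGM assumptions, arranged in practice by zero-initializing the final layer as in the earlier remark --- and that if $b$ is only forced constant while $f_{w_0}$ is not, the identity degrades to the approximate representation discussed there. Finally, I would check that summing the per-step kernels does not quietly reintroduce the train/test asymmetry that the indicator construction of $\phi_{s,t}$ in Definition~\ref{epk} was designed to remove; since that indicator convention is applied identically at every step, $K_{\text{NEPK}}$ inherits symmetry from each summand, so this is the last point to confirm rather than a genuine difficulty.
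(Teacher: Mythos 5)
Your proposal is correct and follows essentially the same route as the paper, which justifies the reduction in its ``Ensemble Reduction'' discussion by requiring the step-sum to remain symmetric and positive semi-definite and noting that constant loss gradients (e.g.\ categorical cross-entropy with $\{0,1\}$ labels) suffice; your substitution of $a_{i,s}=a_{i,0}$, interchange of the finite sums, and appeal to closure of kernels under finite sums is exactly that argument made explicit. Your decision to keep $a_{i,0}$ as a kernel-machine weight outside the summed kernel $\widetilde K=\sum_s K_{\text{EPK}}(\cdot,\cdot,s)$ is the right reading of the theorem (the paper's displayed definition of $K_{\text{NEPK}}$ carries a dangling index $i$ that would otherwise double-count the coefficient or reintroduce the very asymmetry the paper criticizes in prior work), and your caveats about sign-constancy being insufficient and about $b$ needing to be constant in $x$ match the paper's own remarks.
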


\textbf{Remark} These look a lot like sheaves ~\citep{huybrechts2010geometry} in the RKBS of functions integrated along discrete optimization paths in fact reduce to a sheaf in the reproducing kernel Hilbert space (RKHS) \citep{shilton_gradient_2023}. 

By combining theorems ~\ref{thm:eker} and ~\ref{thm:ekr}, we can construct an exact kernel machine representation for any arbitrary parameterized model trained by gradient descent which satisfies the additional property of having constant loss across training steps (e.g. any ANN using catagorical cross-entropy loss (CCE) for classification). This representation will produce exactly identical output to the model across  the model's entire domain. This establishes exact kernel-neural equivalence for classification ANNs. Furthermore, Theorem ~\ref{thm:eker} establishes an exact kernel ensemble representation without limitation to models using loss functions with constant derivatives across steps. It remains an open problem to determine other conditions under which this ensemble may be reduced to a single kernel representation.  

\subsection{Discussion}

The map $\phi_{s,t}(x)$ depends on both $s$ and $t$, which is non-standard but valid, however an important consequence of this formulation is that the output of this representation is not guaranteed to be continuous. This discontinuity is exactly measuring the error between the model along the exact path compared with the gradient flow for each step. 

We can write another function $k'$ which is continuous but not symmetric, yet still produces an exact representation:
\begin{align}
k'(x, x') = \langle \nabla_w f_{w_s(t)}(x), \nabla_w f_{w_s(0)}(x')\rangle.
\end{align}
The resulting function is a valid kernel if and only if for every $s$ and every $x$, 
\begin{align}
\label{eq:cond}
    \int_0^1 \nabla_w f_{w_s(t)}(x)dt = \nabla_w f_{w_s(0)}(x).
\end{align}

We note that since $f$ is being trained using forward Euler, we can write:
\begin{align}
    \dfrac{d w_s(t)}{dt} &= -\varepsilon \nabla_w L(f_{w_s(0)}(x_i), y_i). \label{dstep}
\end{align}
In other words, our parameterization of this step depends on the step size $\varepsilon$ and as $\varepsilon \to 0$, we have 
\begin{align}
    \int_0^1 \nabla_w f_{w_{s}(t)}(x)dt \approx \nabla_w f_{w_s(0)}(x).
\end{align}
In particular, given a model $f$ that admits a Lipshitz constant $K$ this approximation has error bounded by $\varepsilon K$ and a proof of this convergence is direct. 
This demonstrates that the asymmetry of this function is exactly measuring the disagreement between the discrete steps taken during training with the gradient field. 
This function is one of several subjects for further study, particularly in the context of Gaussian processes whereby the asymmetric Gram matrix corresponding with this function can stand in for a covariance matrix. It may be that the not-symmetric analogue of the covariance in this case has physical meaning relative to uncertainty.

\subsection{Independence from Optimization Scheme}
We can see that by changing equation ~\ref{dstep} we can produce an exact representation for any first order discrete optimization scheme that can be written in terms of model gradients aggregated across subsets of training data. This could include backward Euler, leapfrog, and any variation of adaptive step sizes. This includes stochastic gradient descent, and other forms of subsampling (for which the training sums need only be taken over each sample). One caveat is adversarial training, whereby the $a_i$ are now sampling a measure over the continuum of adversarial images. We can write this exactly, however computation will require approximation across the measure. Modification of this kernel for higher order optimization schemes remains an open problem.


\begin{figure*}[!ht]
    \centering
        \includegraphics[width=0.3\textwidth]{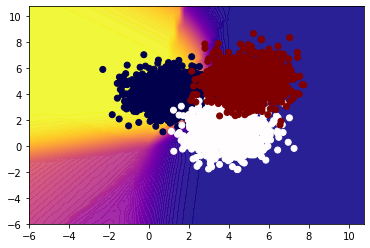}\includegraphics[width=0.3\textwidth]{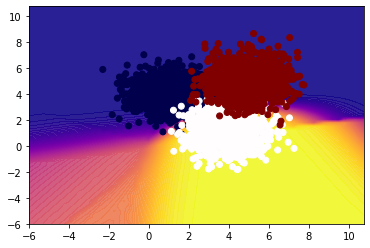}\includegraphics[width=0.3\textwidth]{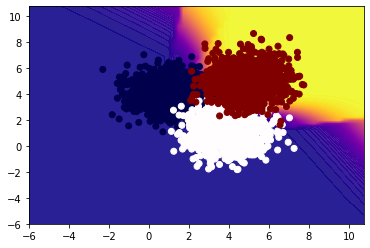}
    \caption{Updated predictions with kernel $a_i$ updated via gradient descent with training data overlaid for classes 1 (left), 2 (middle), and 3 (right). The high prediction confidence in regions far from training points demonstrates that the learned kernel is non-stationary. Axes are the $x$ and $y$ dimensions of a 100 dimensional problem with 3 gaussians whosse means are on this $x-y$ plane.}
    \label{fig:svm}
\end{figure*}
\subsection{Ensemble Reduction}
In order to reduce the ensemble representation of Equation~\eqref{ensemble} to the kernel representation of Equation~\eqref{exact}, we require that the sum over steps still retain the properties of the kernel (symmetry and positive semi-definiteness). In particular we require that for every subset of the training data ${x_i}$ and arbitrary ${\alpha_i}$ and ${\alpha_j}$, we have
\begin{align}
    \sum_{i=1}^n\sum_{j=1}^n \sum_{l=1}^M \sum_{s=1}^N \alpha_i \alpha_j a_{l, s}\int_{0}^1 K_{\text{EPK}}(x_i,x_j) dt \geq 0.
\end{align}
A sufficient condition for this reduction is that the gradient of the loss function does not change throughout training. This is the case for categorical cross-entropy where labels are in $\{0,1\}$. In fact, in this specific context the gradient of the loss function does not depend on $f(x)$, and are fully determined by the ground truth label, making the gradient of the cross-entropy loss a constant value throughout training.
Showing the positive-definiteness of more general loss functions (e.g. mean squared error) will likely require additional regularity conditions on the training path, and is left as future work.

\subsection{Relation to Prior Work}
\label{subsec:disc}
Constant sign loss functions have been previously studied by ~\citet{chen2021equivalence}, however the kernel that they derive for a finite-width case is of the form
\begin{align}
    K(x,x_i) =  \int_0^T | L'(f_t(x_i), y_i)| \langle \nabla_w f_t(x), \nabla_w f_t(x_i) \rangle dt
\end{align}
where $L'(a, b)$ is again $\dfrac{\partial L(a, b)}{\partial a}$. 
The summation across these terms satisfies the positive semi-definite requirement of a kernel. However the weight $|\nabla L(f_t(x_i), y_i)|$ depends on $x_i$ which is one of the two inputs. This makes the resulting function $K(x,x_i)$ asymmetric and therefore not a kernel.

\subsection{Uniqueness}
Uniqueness of this kernel is not guaranteed. 
The mapping from paths in gradient space to kernels is in fact a function, meaning that each finite continuous path has a unique exact kernel representation of the form described above. 
However, this function is not necessarily onto the set of all possible kernels. 
This is evident from the existence of kernels for which representation by a finite parametric function is impossible.
Nor is this function necessarily one-to-one since there is a continuous manifold of equivalent parameter configurations for neural networks.
For a given training path, we can pick another path of equivalent configurations whose gradients will be separated by some constant $\delta > 0$.
The resulting kernel evaluation along this alternate path will be exactly equivalent to the first, despite being a unique path. 
We also note that the linear path $l_2$ interpolation is not the only valid path between two discrete points in weight space.
An equally valid approach is following the changes in model weights along a path defined by Manhattan Distance, and will produce a kernel machine with equivalent outputs.
It remains an open problem to compute paths from two different starting points which both satisfy the constant bias condition from Definition~\eqref{epk} which both converge to the same final parameter configuration and define different kernels.

\section{Experimental Results}
    Our first experiments test the kernel formulation on a dataset 
    which can be visualized in 2d. These experiments serve as a sanity check
    and provide an interpretable representation of what the kernel is learning.
    \begin{figure}[!ht]
        \centering

        \includegraphics[width=0.40\textwidth]{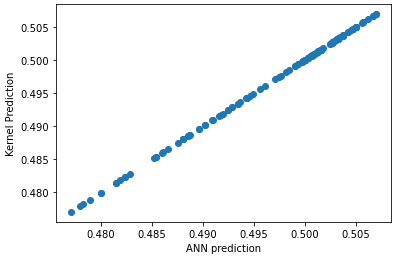}
        \caption{Class 1 EPK Kernel Prediction (Y) versus neural network prediction (X) for 100 test points, demonstrating extremely close agreement.}
        \label{fig:toymatch}
    \end{figure}

\subsection{Evaluating The Kernel} \label{subsec:evaluate}
A small test data set within 100 dimensions is created by generating 1000 random samples with means $(1,4,0,...)$, $(4,1,0,...)$ and $(5,5,0,...)$ and standard deviation $1.0$. These points are labeled according to the mean of the Gaussian used to generate them, providing 1000 points each from 3 classes. A fully connected ReLU network with 1 hidden layer is trained using categorical cross-entropy (CCE) and gradient descent with gradients aggregated across the entire training set for each step. We then compute the EPK for this network, approximating the integral from Equation~\ref{eq2} with 100 steps which replicates the output from the ReLU network within machine precision. The EPK (Kernel) outputs are compared with neural network predictions in Fig.~\ref{fig:toymatch} for class 1. Having established this kernel, and its corresponding kernel machine, one natural extension is to allow the kernel weights $a_i = L'(f_{w_s(t)}(x_i), y_i)$ to be replaced with kernel weights found through optimization. We perform this updating of the kernel weights using a SVM and present its predictions for each of three classes in Fig.~\ref{fig:svm}.

\subsection{Kernel Analysis}
Having established the efficacy of this kernel for model
representation, the next step is to analyze this kernel to understand
how it may inform us about the properties of the corresponding
model. In practice, it becomes immediately apparent that this kernel
lacks typical properties preferred when humans select
kernels. Fig.~\ref{fig:svm} shows that the weights of this kernel are
non-stationary on our small 2d example -- the model has very low uncertainty for predictions far away from training data. Next, we use this kernel to estimate uncertainty. Consistent with many other research works on Gaussian processes (GP) for classification (e.g. ~\citet{rasmussen2006gaussian}) we use a GP to regress to logits. Starting with sample predictions drawn using our kernel as a covariance matrix, we use Monte-Carlo to estimate posteriors with respect to probabilities (post-soft-max) for each prediction across a grid spanning the training points of our toy problem. The result is shown on the right-hand column of Fig.~\ref{fig:cov}. We can see that the kernel predictions are more confident (lower standard deviation) and stronger (higher kernel values) the farther they get from the training data in most directions. 

    \begin{figure}[h]
        \centering
        \includegraphics[width=0.24\textwidth]{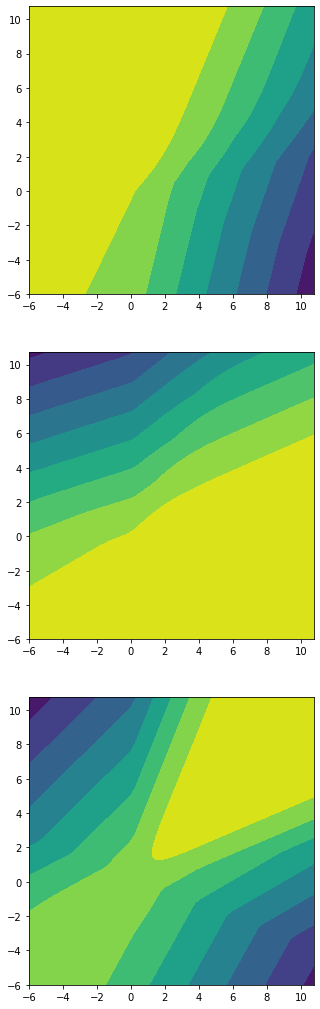}\includegraphics[width=0.24\textwidth]{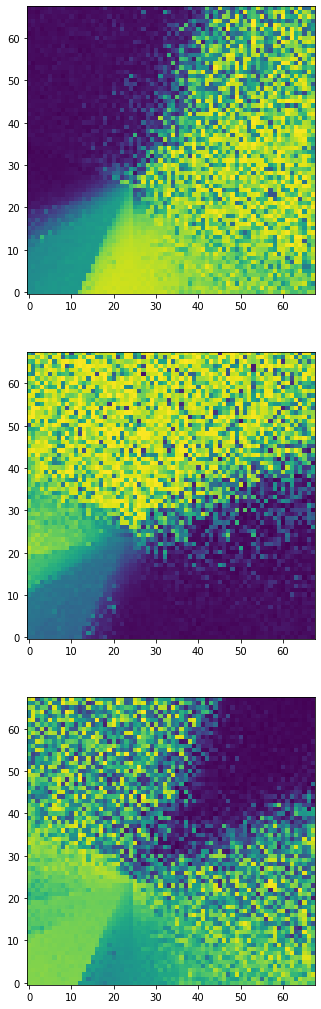}
        \caption{(left) Kernel predicted probabilities measured on a grid around the training set for our 2D problem. Bright yellow means high kernel value. (right) Monte-Carlo estimated standard deviation using the Gram matrices $G(X_{\text{grid}})$ with each element $G_{i, j} = K(x_i, x_j)$ generated using our kernel for the same grid as the kernel values as a prior. Yellow means high standard deviation, blue means low standard deviation.}
        \label{fig:cov}
    \end{figure}

In order to further understand how these strange kernel properties come about, we exercise another advantage of a kernel by analyzing the points that are contributing to the kernel value for a variety of test points. 
In Fig.~\ref{fig:points} we examine the kernel values for each of the training points during evaluation of three points chosen as the mean of the generating distribution for each class. 
The most striking property of these kernel point values is the fact that they are not proportional to the euclidean distance from the test point.
This appears to indicate a set of basis vectors relative to each test point learned by the model based on the training data which are used to spatially transform the data in preparation for classification. This may relate to the correspondence between neural networks and maximum margin classifiers discussed in related work ~\citep{chizat2020maxmargin, shah2021input}. 
Another more subtle property is that some individual data points, mostly close to decision boundaries, are slightly over-weighted compared to the other points in their class. 
This latter property points to the fact that during the latter period of training, once the network has already achieved high accuracy, only the few points which continue to receive incorrect predictions, i.e. caught on the wrong side of a decision boundary, will continue contributing to the training gradient and therefore to the kernel value.

    \begin{figure*}[ht]
        \centering
        \includegraphics[width=0.32\textwidth]{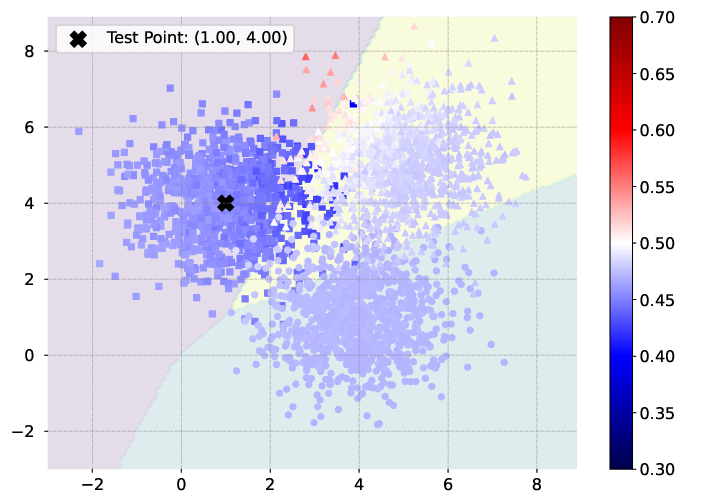}
        \includegraphics[width=0.32\textwidth]{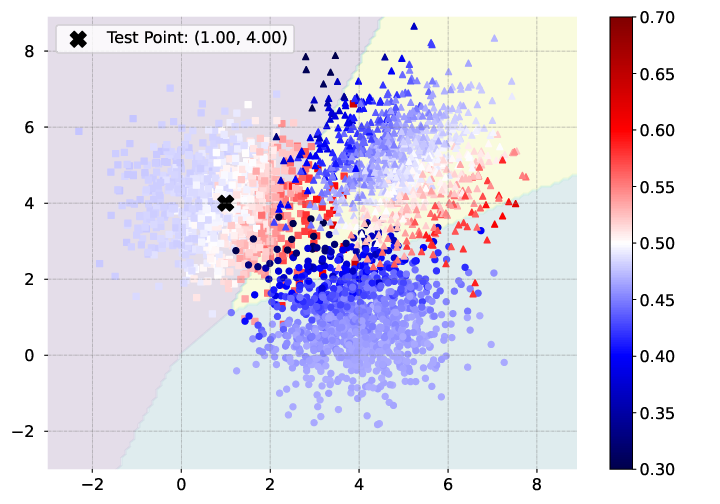}
        \includegraphics[width=0.32\textwidth]{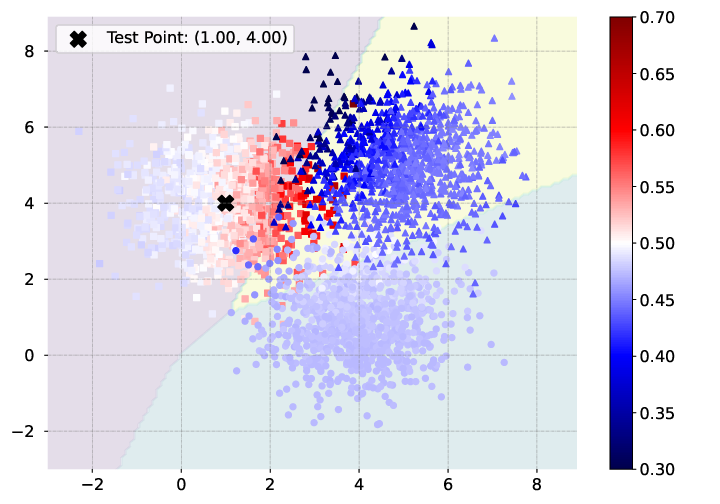}\\
        
        \includegraphics[width=0.32\textwidth]{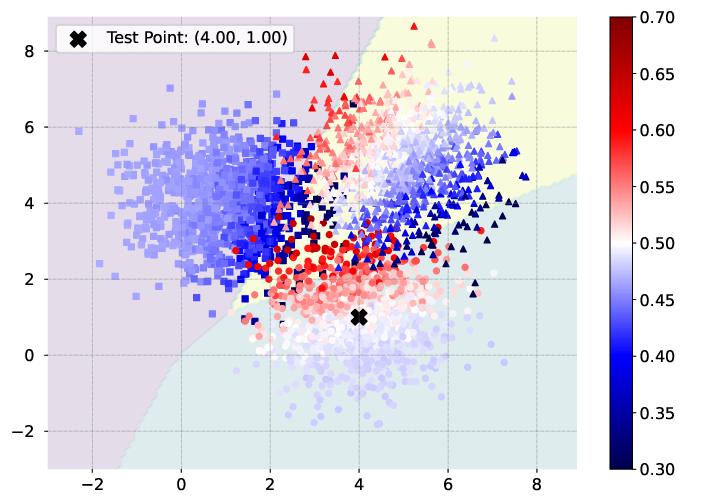}
        \includegraphics[width=0.32\textwidth]{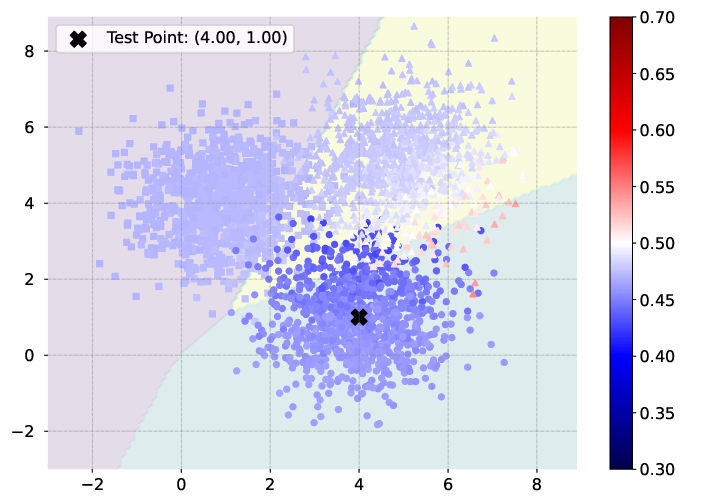}
        \includegraphics[width=0.32\textwidth]{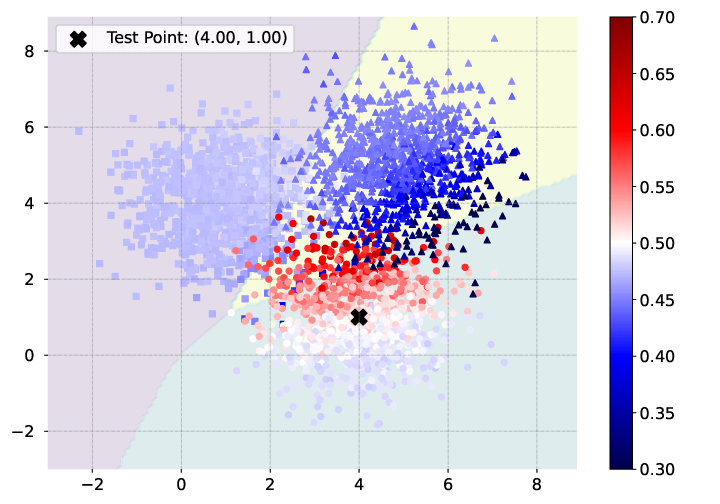}\\

        \includegraphics[width=0.32\textwidth]{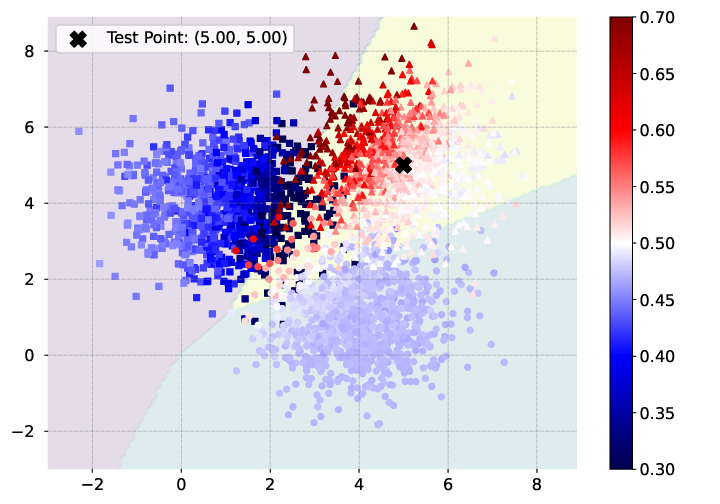}
        \includegraphics[width=0.32\textwidth]{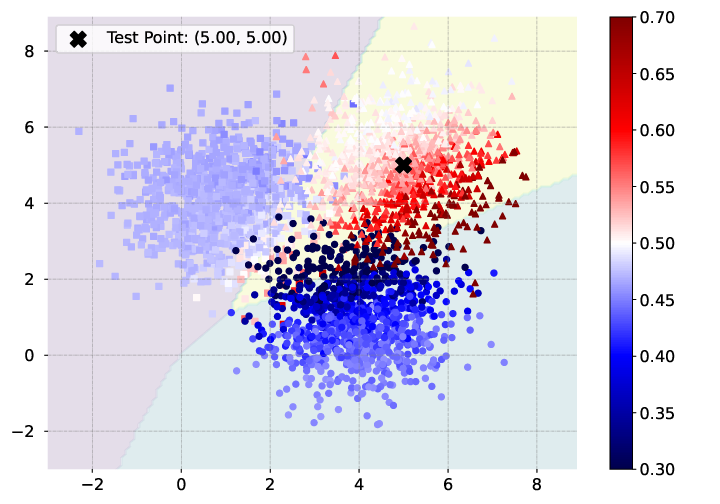}
        \includegraphics[width=0.32\textwidth]{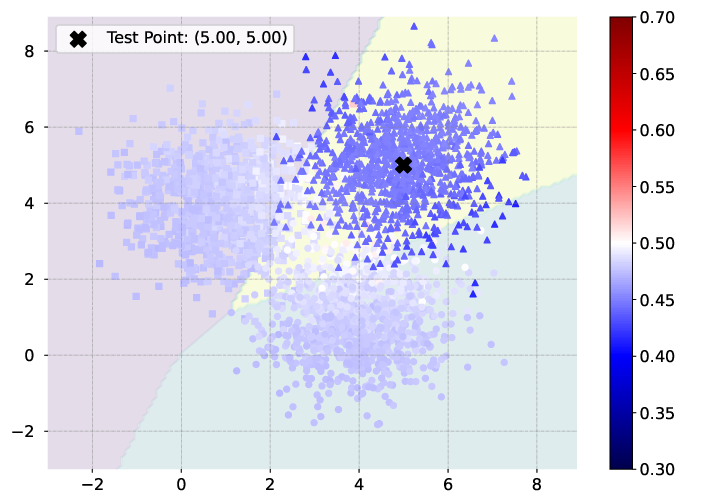}
        \caption{Plots showing kernel values for each training point relative to a test point. Because our kernel is replicating the output of a network, there are three kernel values per sample on a three class problem. This plot shows kernel values for all three classes across three different test points selected as the mean of the generating distribution. Figures on the diagonal show kernel values of the predicted class. Background shading is the neural network decision boundary.}
        \label{fig:points}
    \end{figure*}



\subsection{Extending To Image Data}
    We perform experiments on MNIST to demonstrate the applicability to image data. 
    This kernel representation was generated for convolutional ReLU Network with the categorical cross-entropy loss function, using Pytorch ~\citep{pytorch2019}. 
    The model was trained using forward Euler (gradient descent) using gradients generated as a sum over all training data for each step. 
    The state of the model was saved for every training step. In order to compute the per-training-point gradients needed for the kernel representation, the per-input Jacobians are computed at execution time in the representation by loading the model for each training step $i$, computing the jacobians for each training input to compute $\nabla_w f_{w_s(0)}(x_i)$, and then repeating this procedure for 200 $t$ values between 0 and 1 in order to approximate $\int_0^1 f_{w_s(t)}(x)$. For MNIST, the resulting prediction is very sensitive to the accuracy of this integral approximation, as shown in Fig.~\ref{fig:mnist}. The top plot shows approximation of the above integral with only one step, which corresponds to the DPK from previous work ~\citep{chen2021equivalence, domingos2020, incudini2022quantum} and as we can see, careful approximation of this integral is necessary to achieve an accurate match between the model and kernel. 

\begin{figure}[!h]
    \centering
    \includegraphics[width=0.55\linewidth]{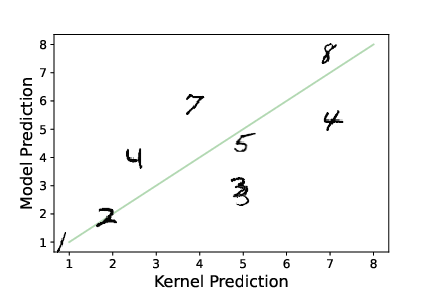}

    \vspace{-9mm}
    
    \includegraphics[width=0.55\linewidth]{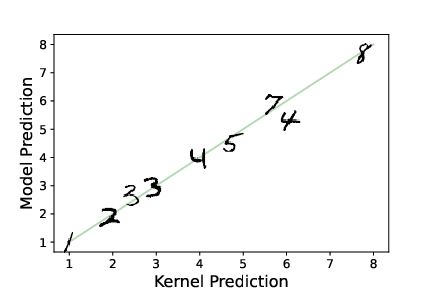}
    
    \vspace{-9mm}
    
    \includegraphics[width=0.55\linewidth]{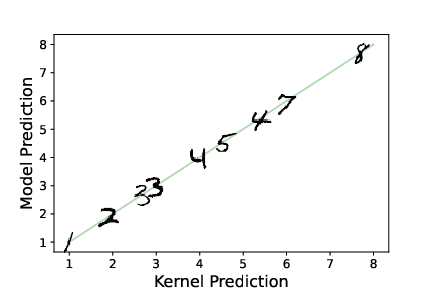}
    \caption{Experiment demonstrating the relationship between model predictions and kernel predictions for varying precision of the integrated path kernel. The top figure shows the integral estimated using only a single step. This is equivalent to the discrete path kernel (DPK) of previous work ~\citep{domingos2020every, chen2021equivalence}. The middle figure shows the kernel evaluated using 10 integral steps. The final figure shows the path kernel evaluated using 200 integral steps.}
    \label{fig:mnist}
\end{figure}
\section{Conclusion and Outlook} 
The implications of a practical and finite kernel representation for the study of neural networks are profound and yet importantly limited by the networks that they are built from. For most gradient trained models, there is a disconnect between the input space (e.g. images) and the parameter space of a network. Parameters are intrinsically difficult to interpret and much work has been spent building approximate mappings that convert model understanding back into the input space in order to interpret features, sample importance, and other details ~\citep{simonyan2013deep, lundberg2017unified, Selvaraju_2019}. The EPK is composed of a direct mapping from the input space into parameter space. This mapping allows for a much deeper understanding of gradient trained models because the internal state of the method has an exact representation mapped from the input space. As we have shown in Fig.~\ref{fig:points}, kernel values derived from gradient methods tell an odd story. We have observed a kernel that picks inputs near decision boundaries to emphasize and derives a spatial transform whose basis vectors depend neither uniformly nor continuously on training points. Although kernel values are linked to sample importance, we have shown that most contributions to the kernel's prediction for a given point are measuring an overall change in the network's internal representation. This supports the notion that most of what a network is doing is fitting a spatial transform based on a wide aggregation of data, and only doing a trivial calculation to the data once this spatial transform has been determined ~\citep{chizat2020maxmargin}. 
As stated in previous work by ~\citet{domingos2020}, this representation has strong implications about the structure of gradient trained models and how they can understand the problems that they solve. Since the kernel weights in this representation are fixed derivatives with respect to the loss function $L$, $a_{i, s} = -\varepsilon  L'(f_{w_s(0)}(x_i),  y_i)$, nearly all of the information used by the network is represented by the kernel mapping function and inner product. Inner products are not just measures of distance, they also measure angle. Figure \ref{fig:grad} shows that for a typical training example, the $L_2$ norm of the weights changes monotonically by only 20-30\% during training. This means that the "learning" of a gradient trained model is dominated by change in angle, which is predicted for kernel methods in high dimensions ~\citep{hardle2004nonparametric}. Also, model training gradients get more aligned with a the vector difference between the start and end point of training as the model trains. Also, that vector is very stable past a certain distance away from the initial training weights.

\begin{figure}[h]
\centering
\includegraphics[width=.85\textwidth]{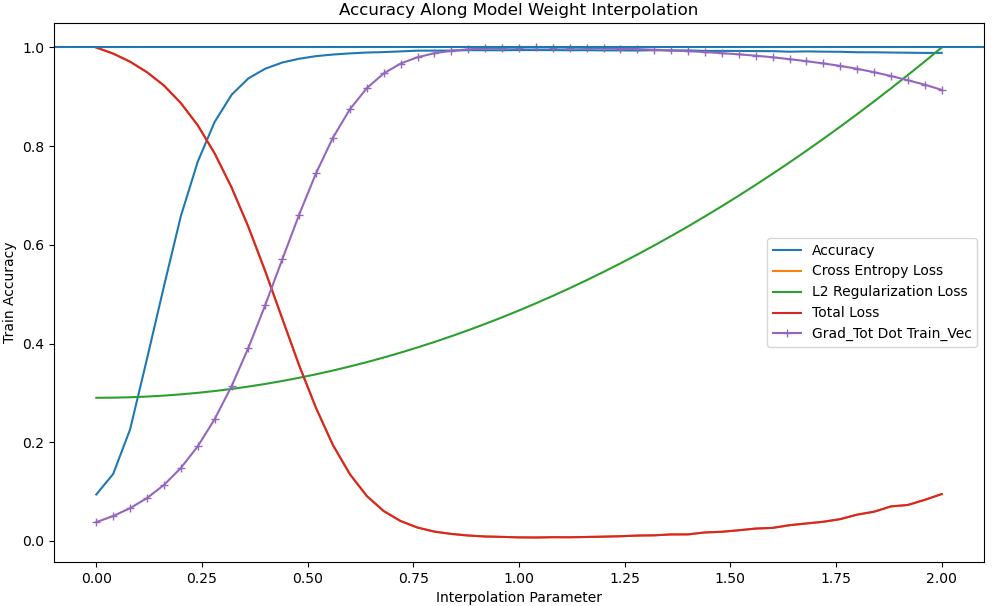}
\caption{This plot shows a linear interpolation $w(t) = w_0 + t(w_{1} - w_0)$ of model parameters $w$ for a convolutional neural network $f_w$ from their starting random state $w_0$ to their ending trained state $w_1$. The hatched purple line shows the dot product of the sum of the  gradient over the training data $X$, $\langle \nabla_w f_{w(t)}(X), (w_1 - w_0)/|w_1 - w_0| \rangle$. The other lines indicate accuracy (blue), L2 Regularization (green increasing), total loss (red decreasing), and cross-entropy loss (orange, covered exactly by total loss showing that cross-entropy dominates the total loss for this regularization weight). }
\label{fig:grad}
\end{figure}


For kernel methods, our result also represents a new direction. Despite their firm mathematical foundations, kernel methods have lost ground since the early 2000s because the features implicitly learned by deep neural networks yield better accuracy than any known hand-crafted kernels for complex high-dimensional problems ~\citep{NIPS2005_663772ea}. 
We are hopeful about the scalability of learned kernels based on recent results in scaling kernel methods ~\citep{snelson2005sparse}. 
Exact kernel equivalence could allow the use of neural networks to implicitly construct a kernel. 
This could allow kernel based classifiers to approach the performance of neural networks on complex data. 
Kernels built in this way may be used with Gaussian processes to allow meaningful direct uncertainty measurement. 
This would allow for much more significant analysis for out-of-distribution samples, including adversarial attacks ~\citep{szegedy2013intriguing, ilyas2019adversarial}. 
There is significant work to be done in improving the properties of the kernels learned by neural networks for these tools to be used in practice.
We are confident that this direct connection between practical neural networks and kernels is a strong first step towards achieving this goal.
\chapter{Exact Path Kernels Naturally Decompose Model Predictions}
\label{Chapter4a}

We can think about the previous paper establishing the exact path
kernel as an attempt to determine the necessary conditions for a
neural network to be expressed exactly by a kernel machine. That paper
also focused on using this kernel representation for uncertainty
quantification as an application. Although this is interesting from a
theory perspective, the applications are very limited and are a few
layers of abstraction away from providing any practical benefit to
modern machine-learning techniques. The purpose of the following paper
is to expand this theory and develop it towards applications that can
be directly useful to the field. In the process of generalizing and
applying the above method as a decomposition, it became obvious that
one modern machine-learning technique was implicitly relying on this
decomposition rather heavily: Out-of-Distribution (OOD)
Detection. The problem of identifying OOD data is orthogonal to the
adversarial problem. In some sense adversarial problems are difficult
because there do not exist practical metrics which can determine that
an adversarial example is not part of the natural data distribution
for a particular task. For OOD examples, data are generated by
distributions which are in some sense obviously distinct from the distribution
of training data for an ML model. It can still be time-consuming or
difficult using statistical techniques to identify this data, so there
is a natural desire to use trained ML models to determine whether data
could be samples from the distributions they were trained on or
not. In many of the applications that follow

This Chapter includes a paper recently submitted to ICLR 2023. The
contents include a generalization of the representation from
~\ref{Chapter4} and two applications of this representation: First to
Out-Of-Distribution (OOD) detection, and the second to measuring
signal manifold dimension. Distinct from the data manifold that exists in input space as the object along which all data are embedded, the signal manifold is the surface along with the data are embedded \emph{according to the model}. OOD detection and signal manifold dimension estimation begin to showcase the advantages of path kernel ensemble representations of neural
networks. This was joint work primarily performed by Brian Bell,
Michael Geyer, where most of the theoretical work and mathematics was
derived and written by Brian Bell and the experimental work and
numerical results were produced by Michael Geyer. This particular
paper includes an extensive literature review (performed by Brian Bell)
analyzing some methods that have recently come to occupy the
cutting-edge of OOD detection algorithms. In the context of this
dissertation, this paper includes two important contributions, one is
a cleaner general definition of the representation from the previous
paper. The other is the decomposition of predictions by taking
gradients of this representation with respect to various spaces. This
second contribution allows the application of this theory to a class
of recent work and demonstrates the ability of this theoretical
foundation to inform practical applications at the cutting edge. 




\section{Introduction}

Out-of-distribution (OOD) detection for machine learning models is a new, quickly growing field important to both reliability and robustness~\citep{hendrycks2019, biggio2014, hendrycks2017, desilva2023, yang2021, filos2020}.
Recent results have empirically shown that parameter gradients are highly informative for OOD detection~\citep{behpour2023, djurisic2023, huang2021}.
To our knowledge, this paper is the first to present theoretical justifications which explain the surprising effectiveness of parameter gradients for OOD detection.

In this paper, we unite empirical insights in cutting edge OOD with recent theoretical development in the representation of finite neural network models with tangent kernels~\citep{bell2023,chen2021equivalence,domingos2020}. 
Both of these bodies of work share approaches for decomposing model predictions in terms of parameter gradients. 
However, the Exact Path Kernel (EPK)~\citep{bell2023} provides not only rigorous theoretical foundation for the use of this method for OOD, but also naturally defines other decompositions which deepen and expand our understanding of model predictions. The application of this theory is directly connected to recent state of the art OOD detection methods.

In addition, this paper provides a connection between tangent kernel methods and dimension estimation. 
At the core of this technique is the ability to extract individual training point sensitivities on test predictions and use these to map the subspace on which parameter gradients can vary, the \emph{parameter tangent space}.
This paper demonstrates a generalization (the gEPK) of the EPK from
\citet{bell2023}. Given a training point $x_{\text{train}}$, the final parameter state $\theta$ of a model $f$, and a test point $x_{\text{test}}$, this
method can exactly measure the \emph{input gradient} $\nabla_{x_\text{train}}f(x_\text{test}; \theta_\text{trained})$.
It is shown that this quantity provides all necessary information for measuring the dimension of the \textit{signal manifold} \citep{srinivas2023} around a given test point.

\begin{figure}[t]
    \centering
    \includegraphics[width=0.9\textwidth]{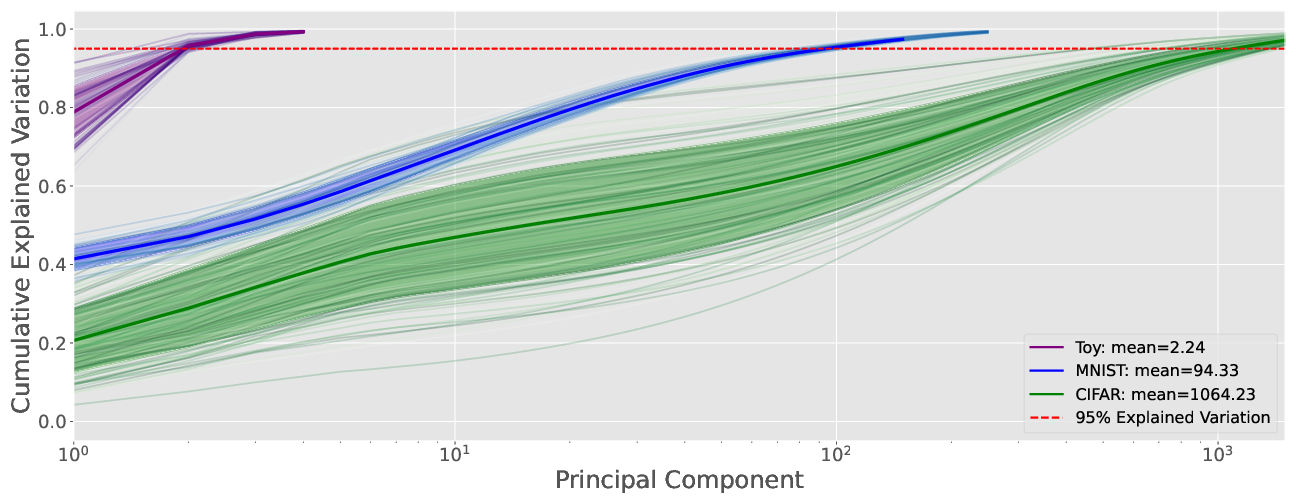}
    \caption{The gEPK naturally provides a measure of input dimension. This plot shows the cumulative sum of the explained variation of training point sensitivities $\nabla_{x_\text{train}}f(x_\text{test}; \theta_\text{trained})$. Different datasets are color coded to show differences in signal dimension.  }
    \label{fig:cdf}
\end{figure}
 

In short, this work leverages the gEPK to:
\begin{itemize}
    \item Generalize and explain the success of recent successful methods in OOD.
    \item Showcase OOD using natural gEPK-based decomposition of model predictions in terms of parameter gradients.
    \item Measure exact input variations and signal manifold dimension around arbitrary test points.

\end{itemize}
The primary contributions of this paper are theoretical in nature: establishing useful decompositions based on the exact representation theorem in Section~\ref{sec:gepk} and writing several leading OOD detection methods in terms of this representation. The preliminary experimental results also support practical tasks of out-of-distribution (OOD) detection and estimating signal manifold dimension. 

\label{sec:input}

\section{Related Work}
While there has been a significant amount of recent work studying the Neural Tangent Kernel (NTK)~\citep{jacot2018neural}, there is still relatively little work exploring its exact counterpart, the path kernels~\citep{bell2023, chen2021equivalence, domingos2020}. While these other works are focused on the precise equivalence between artificial neural networks and SVMs or Kernel machines, this equivalence requires significant restrictions placed on the loss function and model used for a task. This paper seeks to take advantage of this exact representation style without imposing such strict requirements. To the best of our knowledge, this is the first work exploring this loosened equivalence. 

There are several schools of thought about whether OOD data can be learned~\citep{huang2021scaling, mohseni2020, he2015, pillai2013classification, fumera2002}, which part of a model should be interrogated in order to identify OOD examples~\citep{liu2020, lin2021}, whether it is a purely statistical question~\citep{lee2018}, or whether it can simply be solved with more data~\citep{chen2021atom, de_silva_value_2023}. 
The best-performing recent approaches have all used relatively simple modifications of model activation or model gradients \citep{djurisic2023extremely, xu2023vra, sun2022, sun2021}. 
The first methods we explore relate to the use of model gradients to construct statistics which separate in-distribution (ID) examples from OOD examples. 
This is fundamentally a geometric approach which should be comparable
with the method proposed by \citet{sun2022deep} and other related work
by~\citet{gillette2022data}. 
The first prominent method of this type was proposed by~\citet{liang2018}. 
ODIN is still a notable method in this space, and has been followed by many more gradient-based approaches~\citep{behpour2023, huang2021gradients} and has caused some confusion about why these methods work so well~\citep{igoe2022}.

Much recent work has been devoted to measurement of dimension for the subspace in which the input data distribution live for machine-learning tasks.
We will partition this work into works trying to understand this intrinsic data dimension in model agnostic ways~\citep{gillette2022data, yousefzadeh2021deep, kaufman_data_2023, gilmer2018, gong2019, glielmo2022, facco2018, Levina_Bickel_2004} and works trying to understand or extract model's understanding of this subspace~\citep{dominguez-olmedo_data_2023, Ansuini_Laio_Macke_Zoccolan_2019, talwalker2008, Costa_Hero_2004b, giryes2014, Zheng_He_Qiu_Wipf_2022}. 
This paper proposes a new method which bears more similarity to the latter. 
We believe that this approach is more relevant for studying ANNs since they discover their own metric spaces. 
Understanding signal manifolds is both useful in practice for more efficient low rank models~\citep{yang2020, swaminathan2020}, and also for uncertainty quantification and robustness~\citep{Costa_Hero_2004a, wang2021, khoury2018, srinivas2023, song2018pixeldefend, snoek2019}. 

\section{Theoretical Justification : Exact Path Kernel Decomposition}
\label{sec:gepk}
The theoretical foundation of this starts with a modified general form of an recent exact path kernel representation result from~\citet{bell2023}.
We will reuse the structure of the Exact Path Kernel (EPK) without relying on the reduction to a single kernel across training steps.
In order to increase generality, we will not assume the inner products may be reduced across steps, resulting in a representation which is no longer strictly a kernel.
This representation, however, will allow exact and careful decomposition of model predictions according to both input gradients and parameter gradients without the strict requirements of the EPK.
The function, $\varphi_{s,t}(x)$, in the EPK sum defines a linear subspace, the properties of which we will study in detail.
The primary difference between the representation we propose and the original EPK is the EPK maintained symmetry at the cost of continuity, on the other hand the gEPK does not introduce a discontinuity.

\begin{restatable}[Generalized Exact Path Kernel (gEPK)]{theorem}{ekr}
\label{thm:ekr}
Suppose $f(\cdot; \theta): \mathbb{R}^d \rightarrow \mathbb{R}^k$ is a differentiable parametric scalar valued model with parameters $\theta_s \in \mathbb{R}^M$ and $L$ is a loss function. Furthermore, suppose that $f$ has been trained by a series $\{s\}_{s=0}^S$ of discrete steps composed from a sum of loss gradients for the training set $ \sum_{i=1}^N \varepsilon \nabla_\theta L(f(x_i; \theta), y_i)$ on $N$ training data $X_T$ starting from $\theta_0$, with learning rate $\varepsilon$; as is the case with traditional gradient descent. Let $t \in [0, 1]$ be an interpolation variable which parameterizes the line connecting any $\theta_s$ to $\theta_{s+1}$ so that $\theta_s(t) = \theta_s + t(\theta_{s+1} - \theta_s)$. Then for an arbitrary test point $x$, the trained model prediction $f(x; \theta_S)$ can be written:
\begin{equation}
f(x; \theta_S) = f(x; \theta_0) + \sum_{i=1}^N \sum_{s=0}^S \varepsilon \left(\int_0^1 \varphi_{s,t}(x) dt\right) L'(f(x_i; \theta_s), y_i) \left(\varphi_{s, 0}(x_i)\right)
\label{exact}
\end{equation}
\begin{align}
    L'(a, b) &= \dfrac{\partial L(a, b)}{\partial a}\\
    \varphi_{s,t}(x) &\equiv \nabla_\theta f(x; \theta_s(t)), \\
    \theta_s(t) &\equiv \theta_s(0) + t(\theta_{s+1}(0)-\theta_s(0)), \text{ and}\\
    \hat y_{\theta_s(0)} &\equiv f(x; \theta_s(0)).
\end{align}
\end{restatable}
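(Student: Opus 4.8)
The plan is to follow the derivation of Theorem~\ref{thm:eker} (the Exact Kernel Ensemble Representation) almost verbatim, but to halt before the symmetrization step that forced the EPK to introduce the train/test indicator. Since the claim is an identity between the two sides, and since the argument is literally the same in each of the $k$ output coordinates, I would first fix one output coordinate and treat $f$ as scalar-valued, losing nothing. I would then fix a single training step $s$ and track the increment $f(x;\theta_{s+1}) - f(x;\theta_s)$ along the line segment $\theta_s(t) = \theta_s + t(\theta_{s+1}-\theta_s)$, $t\in[0,1]$; this is legitimate because $f$ is differentiable in its parameters and the segment lies in parameter space.

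Next I would compute $\frac{d}{dt} f(x;\theta_s(t))$ by the chain rule, obtaining $\langle \nabla_\theta f(x;\theta_s(t)),\, \theta_{s+1}-\theta_s\rangle = \langle \varphi_{s,t}(x),\, \theta_{s+1}-\theta_s\rangle$, and then substitute the gradient-descent update. Since each step has the stated form $\theta_{s+1} = \theta_s - \varepsilon\sum_{i=1}^N \nabla_\theta L(f(x_i;\theta_s),y_i)$, one further application of the chain rule gives $\nabla_\theta L(f(x_i;\theta_s),y_i) = L'(f(x_i;\theta_s),y_i)\,\nabla_\theta f(x_i;\theta_s) = L'(f(x_i;\theta_s),y_i)\,\varphi_{s,0}(x_i)$, where the subscript $0$ records that the training gradients are frozen at the start of the step: they determine the path and are \emph{not} re-evaluated as $t$ varies. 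Combining, $\frac{d}{dt} f(x;\theta_s(t)) = -\varepsilon \sum_{i=1}^N L'(f(x_i;\theta_s),y_i)\,\langle \varphi_{s,t}(x),\varphi_{s,0}(x_i)\rangle$.

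Then I would integrate this over $t\in[0,1]$ by the fundamental theorem of calculus to get $f(x;\theta_{s+1}) - f(x;\theta_s) = -\varepsilon\sum_{i=1}^N L'(f(x_i;\theta_s),y_i)\,\langle \int_0^1 \varphi_{s,t}(x)\,dt,\ \varphi_{s,0}(x_i)\rangle$, pulling the integral through the finite sum over $i$ and through the inner product (justified because $t\mapsto\varphi_{s,t}(x)$ is continuous on a compact interval and the sum is finite). Finally I would telescope over the steps, $f(x;\theta_S) = f(x;\theta_0) + \sum_{s=0}^{S}( f(x;\theta_{s+1})-f(x;\theta_s) )$, and restore the $k$ output coordinates, which reproduces Equation~\eqref{exact} (with the sign fixed by the gradient-descent update, exactly as in the weights $a_{i,s} = -\varepsilon L'$ of Theorem~\ref{thm:eker}).

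I expect the only real subtlety, rather than a genuine obstacle, to be careful bookkeeping: keeping the two roles of the parameter gradients distinct (the continuously varying test factor $\int_0^1\varphi_{s,t}(x)\,dt$ versus the step-frozen training factor $\varphi_{s,0}(x_i)$), handling the vector-valued output so the decomposition is understood coordinatewise, and making explicit that the interchange of $\int_0^1$ with the finite sum and the inner product is valid. It is worth remarking that, unlike in the EPK, neither symmetry nor positive-semidefiniteness is being asserted here — the right-hand side is a sum of inner products, not a single kernel — so the indicator device used in Definition~\ref{epk} to patch the discontinuity is unnecessary, and the gEPK representation stays continuous in $x$.
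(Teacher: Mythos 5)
Your proposal matches the paper's own proof essentially step for step: parameterize each discrete step by $\theta_s(t)=\theta_s+t(\theta_{s+1}-\theta_s)$, apply the chain rule with the forward-Euler substitution $\tfrac{d\theta_s(t)}{dt}=-\varepsilon\sum_i L'(f(x_i;\theta_s),y_i)\,\nabla_\theta f(x_i;\theta_s)$, integrate over $t$ by the fundamental theorem of calculus, and telescope over steps. Your added remarks on the coordinatewise treatment, the validity of interchanging the integral with the finite sum, and the fact that no symmetrization or indicator device is needed here are all consistent with (and slightly more explicit than) the paper's argument.
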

\vspace{-0.25cm}
\begin{proof}
Guided by the proof for Theorem 6 from~\citet{bell2023}, let $\theta$ and $f(\cdot; \theta)$ satisfy the conditions of Theorem~\ref{thm:ekr}, and $x$ be an arbitrary test point. We will measure the change in prediction during one training step from $\hat y_s = f(x; \theta_s)$ to $\hat y_{s+1} = f(x; \theta_{s+1})$ according to its differential along the interpolation from $\theta_s$ to $\theta_{s+1}$. Since we are training using gradient descent, we can write $\theta_{s+1} \equiv \theta_s + \dfrac{d \theta_s(t)}{dt} $. We derive a linear interpolate connecting these states using $t \in [0, 1]$:
\begin{align}
    \dfrac{d \theta_s(t)}{dt} &= \theta_{s+1} - \theta_s\\   
    \int \dfrac{d \theta_s(t)}{dt} dt &= \int (\theta_{s+1} - \theta_s)dt\\
    \theta_s(t) &= \theta_s + t(\theta_{s+1} - \theta_s)
\end{align}
One of the core insights of this definition is the distinction between \textit{training steps} (defined by $s$) and the \textit{path between training steps} (defined by $t$).
By separating these two terms allows a \textit{continuous} integration of the \textit{discrete} behavior of practical neural networks.
Since $f$ is being trained using a sum of gradients weighted by learning rate $\varepsilon$, we can write:
\begin{align}
    \dfrac{d \theta_s(t)}{dt} &= -\varepsilon  \nabla_\theta  L(f(X_T; \theta_s(0)), y_i) 
\end{align}
Applying chain rule and the above substitution, we can write the change in the prediction as 
\begin{align}
    \dfrac{d \hat y}{dt} = \dfrac{d f(x; \theta_s(t))}{dt} &= \sum_{j = 1}^{M} \dfrac{\partial f}{d \theta^j} \dfrac{\partial \theta^j}{dt} = \sum_{j = 1}^{M} \dfrac{d f(x; \theta_s(t))}{\partial \theta^j} \left(-\varepsilon  \dfrac{\partial L(f(X_T; \theta_s(0)),  Y_T)}{\partial \theta^j}\right)\\
&= \sum_{j = 1}^{M} \dfrac{\partial f(x; \theta_s(t))}{\partial \theta^j} \left(- \sum_{i = 1}^{N}\varepsilon L'(f(x_i; \theta_s(0)), y_i) \dfrac{\partial  f(x_i; \theta_s(0))}{\partial \theta^j}\right)\\
&= - \varepsilon \sum_{i = 1}^{N}  \nabla_\theta f(x; \theta_s(t)) \cdot L'(f(x_i; \theta_s(0)), y_i)  \nabla_\theta f(x_i; \theta_s(0))
\end{align}
Using the fundamental theorem of calculus, we can compute the change in the model's output over step $s$ by integrating across $t$.
\begin{align}
    y_{s+1} - y_s &= \int_0^1 - \varepsilon \sum_{i = 1}^{N}   \nabla_\theta f(x; \theta_s(t)) \cdot L'(f(x_i; \theta_s(0)), y_i) \nabla_\theta f(x_i; \theta_s(0))dt\\
 &=  - \sum_{i = 1}^{N} \varepsilon\left(\int_0^1\nabla_\theta f(x; \theta_s(t))dt\right) \cdot L'(f(x_i; \theta_s(0)), y_i)   \nabla_\theta f(x_i; \theta_s(0))
\end{align}
For all $N$ training steps, we have
\begin{align}
y_N &= f(x; \theta_0) + \sum_{s=0}^N y_{s+1} - y_s \\
&= f(x; \theta_0) - \sum_{s=0}^N \sum_{i = 1}^{N} \varepsilon \left(\int_0^1\nabla_\theta f(x; \theta_s(t))dt\right)  \cdot L'(f(x_i; \theta_s(0)), y_i)   \nabla_\theta f(x_i; \theta_s(0)) \label{eq:prediction}
\end{align}
\end{proof}
    
\textbf{Remark 1:} While this theorem is not our main contribution, we provide it along with its brief proof to provide a thorough and useful theoretical foundation for the main results which follow. \\
\textbf{Remark 2:} Many of the remarks from~\citet{bell2023} remain including that this representation holds true for any contiguous subset of a gradient based model, e.g. when applied to only the middle layers of an ANN or only to the final layer. 
This is since each contiguous subset of an ANN can be treated as an ANN in its own right with the activations of the preceding layer as its inputs and its activations as its outputs. 
In this case, the training data consisting of previous layer activations may vary as the model evolves.
One difference in this representation is that we do not introduce a discontinuity into the input space.
This sacrifices symmetry, which disqualifies the resulting formula as a kernel, but retains many of the useful properties needed for OOD and dimension estimation.\\
\textbf{Remark 3:} \eqref{eq:prediction} allows decomposition of predictions into an initial (random) prediction $f(x; \theta_0)$ and a \emph{learned adjustment} which separates the contribution of every training step $s$ and training datum $i$ to the prediction. 

\section{OOD is enabled by Parameter Gradients}
\label{sec:ood}
\begin{figure}[h]
\begin{center}
\includegraphics[width=0.45\textwidth]{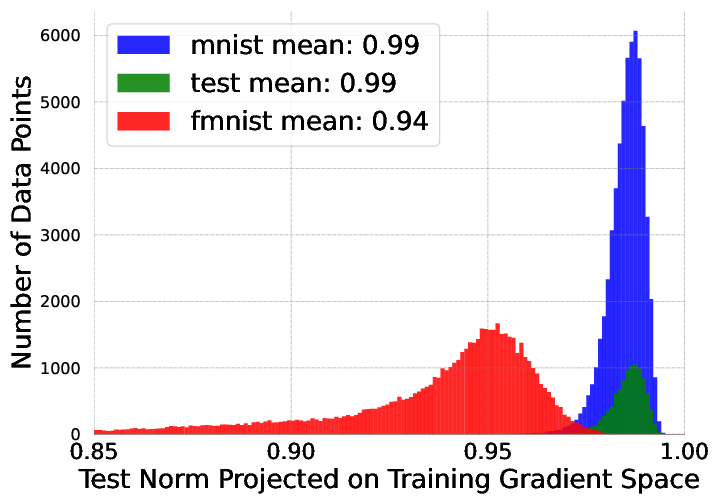}
\includegraphics[width=0.45\textwidth]{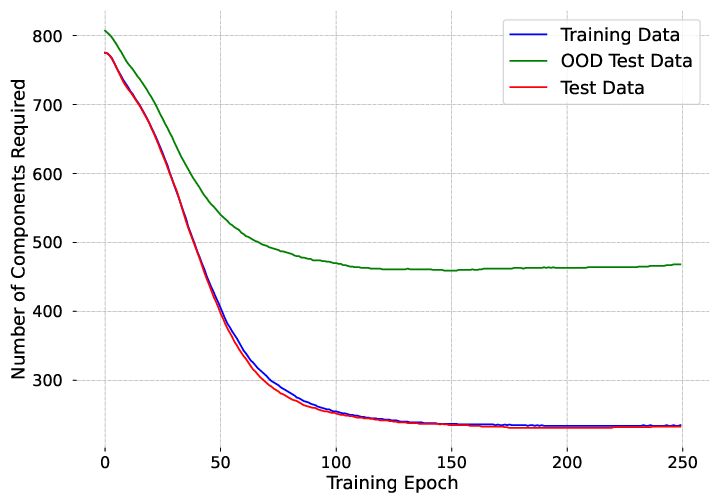}
\end{center}
\caption{OOD detection using difference in training vs. test gradients. Right plot shows the number of components required to explain 95\% variation in weight space across training for a toy problem (three Gaussian distributions embedded in 100 dimensions). Left histogram shows norms of vectors projected onto the gradient weight space defined by the gEPK on MNIST and FMNIST.}
\label{fig:ood}
\end{figure}
One natural application of the gEPK is the separation of predictions into vectors corresponding with the test gradient $\varphi_{s,t}(x)$ for a given test point $x$ and each training vector  weighted by its loss gradient $L'(\hat y_i, y_i) \varphi_{s, 0}(x_i)$. While the test vector depends on the choice of test point $x$, the subspace of training gradient vectors is fixed. By the linear nature of this inner product, it is clear that no variation in test data which is orthogonal to the training vector space can be reflected in a model's prediction. We can state this as a theorem:
\begin{restatable}[Prediction Spanning Vectors]{theorem}{ekr}
\label{thm:oodspan}
\begin{align}
    B = \left\{\varphi_{s, 0}(x_i); i \in \{1,...,N\}, s \in \{1,...,S\}\right\}
\end{align}
spans the subspace of test parameter gradients with non-zero learned adjustments. 
\end{restatable}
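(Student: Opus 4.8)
The plan is to read off the result directly from the bilinearity of the inner product in the generalized exact path kernel representation of Theorem~\ref{thm:ekr}. Write the \emph{learned adjustment} of an arbitrary test point $x$ as $\Delta(x) := f(x;\theta_S) - f(x;\theta_0)$. Starting from the representation in \eqref{exact} and grouping, for each fixed step $s$, the $N$ training contributions into a single parameter-space vector
\[
g_s := \sum_{i=1}^{N} \varepsilon\, L'(f(x_i;\theta_s), y_i)\, \varphi_{s,0}(x_i),
\]
the representation collapses to
\[
\Delta(x) = \sum_{s=0}^{S} \Big\langle \textstyle\int_0^1 \varphi_{s,t}(x)\,dt,\; g_s \Big\rangle .
\]
By construction each $g_s$ is a linear combination of the generators $\varphi_{s,0}(x_i)$, so $g_s \in \operatorname{span}(B)$ for every $s$. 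This is the only structural fact about the representation that the argument needs.

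The first step I would carry out is the containment direction: any component of a test parameter gradient orthogonal to $\operatorname{span}(B)$ is invisible to $\Delta$. Fix $x$ and decompose each path-averaged test gradient $\Phi_s(x) := \int_0^1 \varphi_{s,t}(x)\,dt$ orthogonally as $\Phi_s(x) = \Phi_s^{\parallel}(x) + \Phi_s^{\perp}(x)$ with $\Phi_s^{\parallel}(x) \in \operatorname{span}(B)$ and $\Phi_s^{\perp}(x) \perp \operatorname{span}(B)$. Since $g_s \in \operatorname{span}(B)$, we get $\langle \Phi_s^{\perp}(x), g_s \rangle = 0$, hence $\Delta(x) = \sum_{s} \langle \Phi_s^{\parallel}(x), g_s \rangle$ depends only on the projections of the test gradients onto $\operatorname{span}(B)$. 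Consequently any variation of the test input that moves its parameter gradients only inside $\operatorname{span}(B)^{\perp}$ leaves every learned adjustment unchanged, so every direction along which a test parameter gradient can produce a nonzero learned adjustment already lies in $\operatorname{span}(B)$.

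For the reverse direction I would note that each generator $\varphi_{s,0}(x_i)$ is itself a realizable test parameter gradient — namely $\varphi_{s,t}(x)$ evaluated at $x = x_i$, step $s$, $t=0$ — and that the union over $i$ and $s$ in the definition of $B$ already exhausts these generators across all steps, so no proper subspace of $\operatorname{span}(B)$ can contain all the test gradients that feed into $\Delta$. The main obstacle is making this minimality precise: one must exclude degeneracies in which some $\varphi_{s,0}(x_i)$ lies entirely in the span of the others, or in which all loss-gradient weights $L'(f(x_i;\theta_s),y_i)$ vanish (which would force $\Delta\equiv 0$ and collapse the statement). I would resolve this by taking the theorem to concern a nontrivially trained model (so at least one $g_s \ne 0$) and reading ``the subspace of test parameter gradients with non-zero learned adjustments'' as the orthogonal complement, within the space of realizable test gradients, of the subspace on which $\Delta$ is identically insensitive; under that reading both inclusions follow from the two displays above, and the only genuinely delicate bookkeeping is identifying that insensitive subspace with $\operatorname{span}(B)^{\perp}$ rather than something strictly larger.
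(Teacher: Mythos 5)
Your proposal is correct and its core is the same as the paper's: both arguments read the claim off the bilinearity of the inner product in the gEPK representation, observing that the training-side vectors $g_s$ lie in $\operatorname{span}(B)$ so that any component of the test gradient orthogonal to $\operatorname{span}(B)$ contributes nothing to the learned adjustment. Two differences are worth noting. First, your version is actually more careful than the paper's on the containment direction: the paper's proof begins ``suppose for every $s$ and $t$, $\varphi_{s,t}(x)\notin B$'' and concludes that all inner products $\langle \varphi_{s,t}(x),\varphi_{s,0}(x_i)\rangle$ vanish, which is a non sequitur as literally written (non-membership in a finite set does not imply orthogonality to its span); your orthogonal decomposition $\Phi_s(x)=\Phi_s^{\parallel}(x)+\Phi_s^{\perp}(x)$ states the hypothesis and conclusion correctly. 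Second, you attempt the reverse inclusion (that $\operatorname{span}(B)$ is not strictly larger than the subspace the learned adjustment is sensitive to), flagging the degeneracies --- linearly dependent generators, vanishing loss weights --- that would have to be excluded. The paper does not address this direction at all; its proof establishes only that gradients orthogonal to $\operatorname{span}(B)$ yield zero learned adjustment, and the word ``spans'' in the theorem statement is used loosely. Your reading of the statement as identifying the insensitive subspace with $\operatorname{span}(B)^{\perp}$ is the honest way to make the claim precise, and your caveat that minimality requires additional nondegeneracy assumptions is a genuine gap in the paper's own statement rather than in your argument.
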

\begin{proof}
Suppose for every $s$ and $t$, $\varphi_{s, t}(x) \notin B$. Then for every $i$, $s$, and $t$, $\langle \varphi_{s,t}(x), \varphi_{s, 0}(x_i)\rangle = 0$. Rewriting ~\eqref{eq:prediction} we have:
\begin{align}
   y_N &= f(x; \theta_0) - \sum_{s=0}^N \sum_{i = 1}^{N} \varepsilon \int_0^1 L'(f(x_i; \theta_s(0)), y_i) \langle \varphi_{s, t}(x), \varphi_{s,0}(x_i) \rangle dt.
\end{align}
Note that for this derivation $f$ and $L$ are scalar valued, allowing commutativity of multiplication that would require more care in the multi-class case. 
We can immediately see that every term in the learned adjustment summation will be equal to zero. 
\end{proof}
We will demonstrate that most cutting-edge OOD methods implicitly analyze the spectra of parts of this subspace in order to discriminate in practice. 

\subsection{Expressing Prior OOD Methods with the gEPK}

 We will now establish that most gradient based methods for OOD and some methods which do not explicitly rely on gradients can be written as projections onto subsets of this span. 

\textbf{GradNorm}
The first well-known method to apply gradient information for OOD is  ODIN: Out-of-DIstribution detector for Neural Networks  \citet{liang2018}. This method, inspired by adversarial attacks, perturbs inputs by applying perturbations calculated from input gradients. The method then relies on the difference in these perturbations for in-distribution versus out-of-distribution examples to separate these in practice. This method directly inspired~\citet{huang2021} to create GradNorm. This method which occupied the cutting edge in 2021 computes the gradient of Kullback–Leibler divergence with respect to model parameters so that we have a matrix of size number of classes by number of parameters.:
\begin{align}
    \dfrac{1}{C} \sum_i^C \dfrac{\partial L_{CE}(f(x; \theta), i)}{\partial\hat y}\nabla_\theta f(x; \theta)
\end{align}
This looks like the left side of the inner product from the gEPK, however the scaling factor, $\dfrac{\partial L_{CE}(f(x; \theta), i)}{\partial \hat y}$, does not match. In fact, this approach is averaging across the parameter gradients of this test point with respect to each of its class outputs, which we can see is only a related subset of the full basis used by the model for predictions. This explains improvements made in later methods that are using a more full basis. Another similar method, ExGrad~\citep{igoe2022}, has been proposed which experiments with different similar decompositions and raises some questions about what is special about gradients in OOD -- we hope our result sheds some light on these questions. Another comparable method proposed by~\citet{sun2022deep} may also be equivalent through the connection we establish below in Section~\ref{sec:input} between this decomposition and input gradients which may relate with mapping data manifolds in the Voronoi/Delaunay~\citep{gillette2022data} sense. 

\textbf{ReAct, DICE, ASH, and VRA}
Along with other recent work~\citep{sun2021, sun2022, xu2023vra}, some of the cutting edge for OOD as of early 2023 involves activation truncation techniques like that neatly described by~\citet{djurisic2023extremely}. 
Given a model, $f(x; \theta) = f^{\text{classify}}(\cdot; \theta_{\text{classify}}) \circ f^{\text{represent}}(\cdot; \theta_{\text{represent}}) \circ f^{\text{extract}}(\cdot; \theta_{\text{extract}})$, and an input, $x$, a prediction, $f(x; \theta)$, is computed forward through the network. 
This yields a vector of activations (intermediate model layer outputs), $A(x; \theta_{\text{represent}})$, in the representation layer of the network. 
This representation is then pruned down to the $p^{\text{th}}$
percentile by setting any activations below that percentile to
zero. ~\citet{djurisic2023extremely} mention that their algorithm,
Activation SHaping (ASH), does not depend on statistics from the
training data. However by chain rule, high activations will correspond with high parameter gradients. 
Meaning this truncation is picking a representation for which
\begin{align}
  \left\langle \nabla_\theta f(x; \theta_{\text{represent}}),
  \dfrac{\partial L(\hat y(x_i), y_i)}{\partial \hat y} \nabla_\theta f(x_i;
  \theta_{\text{represent}}) \right\rangle
\end{align}
is high for many training points, $x_i$. We note that the
general kernel representation defined in Section~\ref{sec:gepk} can be
computed for any subset of a composition. Truncation is effectively a
projection onto the gradients on parameters with
the highest variation in the representation layers of the network. 
This may explain some part of the performance advantage of these methods. 

\begin{figure}[t]
    \centering
    \includegraphics[width=0.45\textwidth]{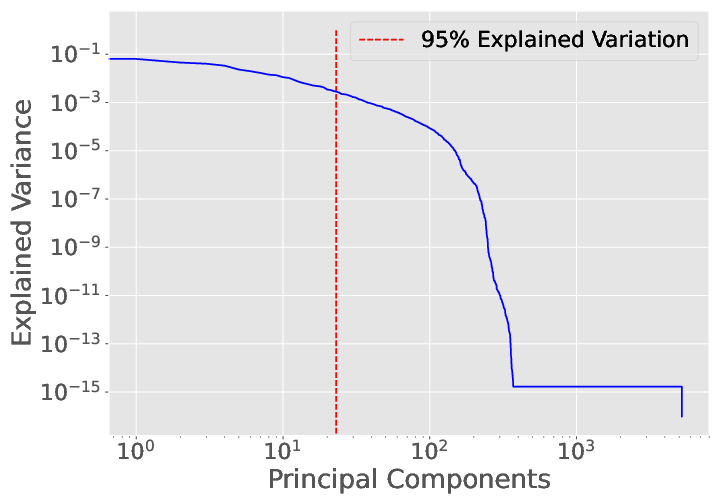}
    \includegraphics[width=0.45\textwidth]{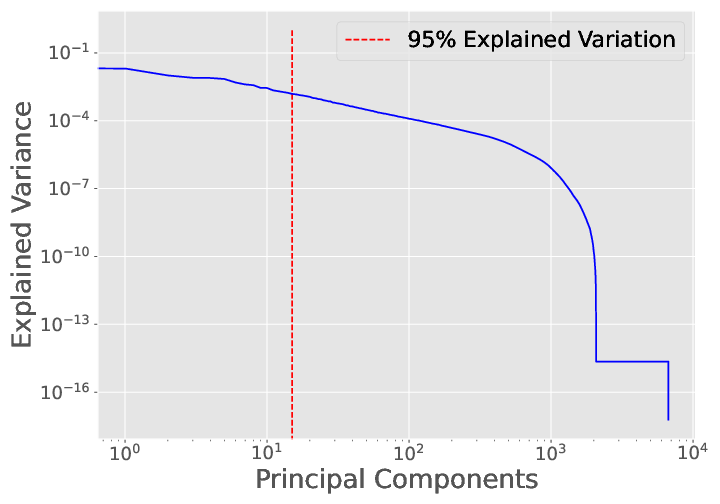}
    \caption{Explained Variance Ratio of parameter gradients. Left: MNIST, Right: CIFAR. 95\% of variation can be explained with a relatively low number of components in both cases.}
    \label{fig:rank}
\end{figure}

\textbf{GradOrth}~\citet{behpour2023} explicitly create a reference basis from parameter gradients on training data for comparison. They do this for only the last layer of a network with mean squared error (MSE) loss, allowing a nicely abbreviated expression for the gradient:
\begin{align}
    \nabla_\theta L(f(x; \theta), y) &= (\theta x - y)x^T = \Omega x^T.
\end{align}
Treating $\Omega$ as an error vector, they prove that all variation of the output must be within the span of the $x^T$ over the training set. They then pick a small subset of the training data and record its activations $R^L_{ID} = [x_1, x_2, ..., x_n]$ over which they compute the SVD, $U^L_{ID} \Sigma^L_{ID} (V^L_{ID})^T = R^L_{ID}$. This representation is then truncated to $k$ principal components according to a threshold $\epsilon_{\text{th}}$ such that 
\begin{align}
\left\| U^L_{ID, k} \Sigma^L_{ID, k} (V^L_{ID})^T\right\|^2_F &\geq \epsilon_\text{th} \|R^L_ID\|^2_F.
\end{align}
This basis $S^L = (U^L_{ID, k})_k$ is now treated as the reference
space onto which test points' final layer gradients can be
projected. They define a numerical score to distinguish OOD examples
as follows: 
\begin{align}
    O(x) = (\nabla_{\theta} L(f(x; \theta_L), y))S^L(S^L)^T
\end{align}
We note that this formulation requires a label $y$ for each of the data being tested for inclusion in the data distribution. 
Despite this drawback, the performance presented by \citet{behpour2023} is impressive.

\subsection{gEPK for OOD}
Theorem \ref{thm:oodspan} provides a more general spanning result immediately. In fact, as we have illustrated in Figure ~\ref{fig:ood}, we can pick a much reduced basis \emph{based only on the final training step} which will span most of the variation in models' learned adjustments.
Theorem ~\ref{thm:oodspan} and the definition of SVD provide the following:
\begin{corollary}
    Let $A$ be a matrix stacking the elements of $B$ as rows. Then let $U \Sigma V^T = A$ as in SVD. Then $\text{Span}(B) = \text{Span}(\text{Rows}(V))$. 
\end{corollary}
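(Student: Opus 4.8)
The statement is, once unwound, just the classical linear-algebra fact that the row space of a matrix equals the span of its right singular vectors, so the plan is to pin down the SVD convention carefully and then give a two-inclusion argument. First I would set up notation: list the (at most $NS$) vectors of $B$ as the rows of a matrix $A \in \mathbb{R}^{(NS)\times M}$, so that by construction $\mathrm{Span}(B) = \mathrm{row}(A) \subseteq \mathbb{R}^M$; put $r = \mathrm{rank}(A)$ and take the \emph{thin} SVD truncated to the nonzero singular values, $A = U\Sigma V^{\top}$ with $U \in \mathbb{R}^{(NS)\times r}$ column-orthonormal, $\Sigma \in \mathbb{R}^{r\times r}$ diagonal with strictly positive entries, and $V \in \mathbb{R}^{M\times r}$ column-orthonormal. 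I would flag explicitly that ``Rows of $V$'' here must be read as the $r$ right singular vectors (the rows of $V^{\top}$, equivalently the columns of $V$) coming from this truncated factorization: for the full $M\times M$ orthogonal $V$ the rows span all of $\mathbb{R}^M$ and the claim would be vacuous, so this piece of bookkeeping is the one point that genuinely needs care.

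For the forward inclusion I would note that row $k$ of $A$ equals $(U\Sigma)_{k,\cdot}\,V^{\top}$, i.e.\ a linear combination of the right singular vectors with coefficients given by the $k$-th row of $U\Sigma$; hence $\mathrm{row}(A) \subseteq \mathrm{Span}(\{\text{right singular vectors}\})$. For the reverse inclusion I would use column-orthonormality of $U$ (so $U^{\top}U = I_r$) and invertibility of $\Sigma$ to write $V^{\top} = \Sigma^{-1} U^{\top} A$, whence every right singular vector is a linear combination of the rows of $A$, giving the opposite containment. Equality of the two subspaces follows, and as a consistency check both have dimension exactly $r = \mathrm{rank}(A)$.

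Finally I would tie this back to Theorem~\ref{thm:oodspan}: since $B$ already spans the subspace of test-parameter-gradient directions carrying a nonzero learned adjustment, the corollary hands us an explicit \emph{orthonormal} basis $\{V_{\cdot,1},\dots,V_{\cdot,r}\}$ for precisely that subspace, which is what the downstream OOD projection/score requires. There is no real obstacle here --- the argument is elementary --- so the only thing to get right is reconciling the paper's SVD convention (in particular whether $V$ is already meant to denote the $r\times M$ factor others write as $V^{\top}$) and restricting to the nonzero singular values, so that the statement holds verbatim.
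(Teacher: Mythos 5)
Your proof is correct and takes the only sensible route here, which is also the paper's: the paper offers no argument at all beyond asserting that the corollary follows from Theorem~\ref{thm:oodspan} and ``the definition of SVD,'' and your two-inclusion argument (rows of $A$ as combinations of right singular vectors via $U\Sigma$, and $V^{\top}=\Sigma^{-1}U^{\top}A$ for the converse) is exactly the standard justification. Your observation that ``$\mathrm{Rows}(V)$'' must be read as the right singular vectors of the \emph{truncated} factorization --- since for the full orthogonal $V$ the span would be all of $\mathbb{R}^M$ and the statement false in general --- is a genuine clarification of a point the paper leaves implicit.
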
 
In the case that  the number of training data exceed the number of parameters of a model, the same result holds true for a basis computed only for gradients with respect to the final parameter states $\theta_S$. We will use a truncation, $V'$ of this final training gradient basis which we examine in Fig.~\ref{fig:rank}. This truncation still explains most variation in all layers due to the convergence of training gradients to a smaller subspace as shown in Fig.~\ref{fig:ood}. In future it may be possible to argue statistical expectations about the performance of a sketching approach to producing an equally performant basis without expensive SVD. 




We can see that most, if not all, of the above OOD methods can be
represented by some set of averaging or truncation assumptions on the
basis $V$. These should be mostly caught by the truncated basis
$V$. We test the usefulness of $V'$ to perform OOD detection by
projection onto its span using a sum over the class outputs weighted
by the loss gradients $L'(f(x_i; \theta_S), y_i)$ in
Fig.~\ref{fig:ood}. As the purpose of this paper is not to develop state of the art OOD detection methods, a comparison with recent benchmarks is not provided. Instead, a proof of concept that the gEPK can perform OOD detection is given. 
We note that this scalling has only been extracted
from the final training step, however this assumption is supported by
the convergence of this scaling over training. The gEPK helps explain the high performance of gradient based methods due to the implicit inclusion of the training parameter space in model predictions. This serves to illuminate the otherwise confusing discrepancy raised by~\citet{igoe2022}. 

\begin{figure}[t]
\begin{center}
\begin{tikzpicture}
\node [anchor=north, scale=1.0] (note) at (3.48,9.2) {Testing Point};
\node [anchor=west, rotate=90, scale=1.0] (note) at (0.20,1.8) {Training Point};
\node [anchor=north, scale=0.85] (water) at (2.4,8.7) { Model A};
\node [anchor=north, scale=0.85] (water) at (4.45,8.7) { Model B};
\node [anchor=south] (silly) at (0, -0.5) {};
\node [anchor=south] (silly) at (6, -0.5) {};
\begin{scope}[xshift=0.5cm]
    \node[anchor=south west,inner sep=0] (image) at (0,0) {\includegraphics[width=0.30\textwidth]{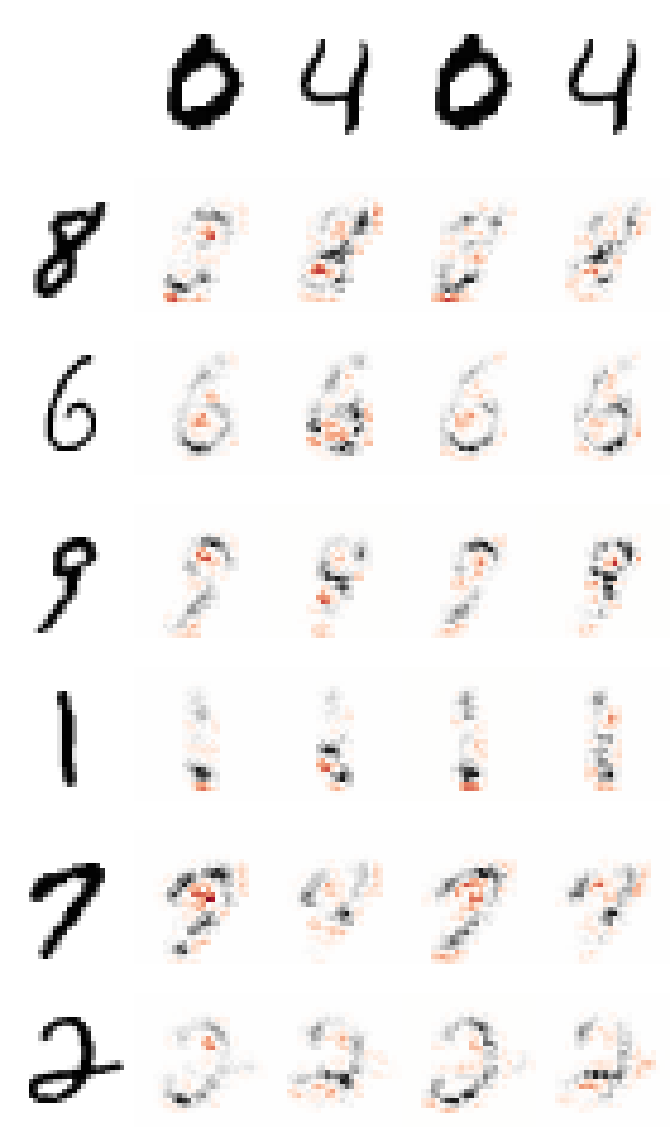}};
    \begin{scope}[x={(image.south east)},y={(image.north west)}]
    \end{scope}
\end{scope}
\end{tikzpicture}
\begin{subfigure}{0.46\textwidth}
\includegraphics[width=\textwidth]{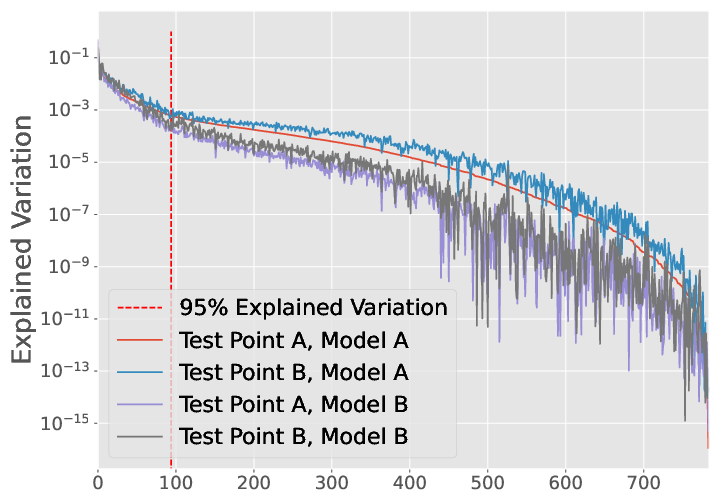}
\includegraphics[width=\textwidth]{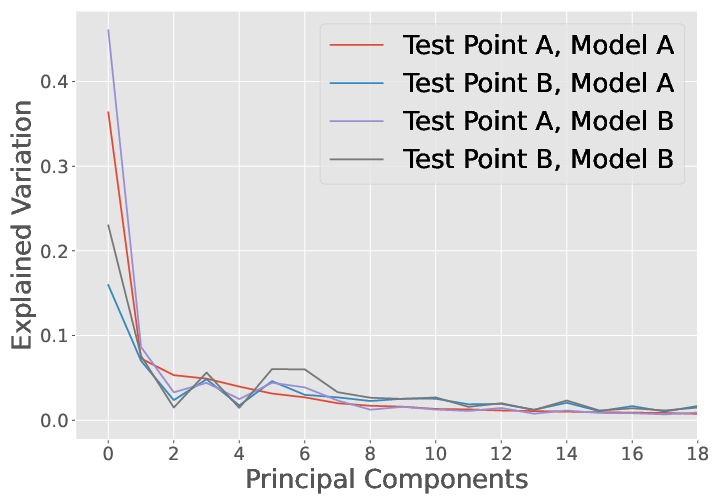}
\end{subfigure}
\end{center}
\caption{Left: Visualization of training point input gradients on test points compared between two models. Positive contribution (black) and negative contribution (red) of each training datum to the prediction for each test point. Elements in the grid are $\dfrac{\partial f(x; \theta_{\text{trained}})}{\partial x_{\text{i}}}$ for a fixed test point $x$ and several distinct training points $x_i$. Right: explained variation of other principal components by a selected principal component across 2 models and 2 test points. Top is log scale of the full spectrum, bottom shows the first 10 components.}
\label{fig:trans}
\end{figure}

In addition, we can see that comparison of test versus training loss gradients is unnecessary, which allows testing on data without ground truth labels (an issue with many recent gradient based OOD techniques). 
For most applications, the SVD of the parameter gradients over all of
the training steps and batches can be pre-computed and compared with
test points as needed. We can see from this body of work, many simplifying assumptions can be made which will preserve the essential bases needed for performance, but still drastically reduce computational cost. It is not necessarily sufficient to pick a basis that spans a target subspace and then truncate based on its variations. The variations must be accurately measured with correct scaling in terms of their contribution to the learned adjustments of a model. 

\section{Signal Manifold Dimension Estimated with Training Input Gradients}

In order to understand the subspace on which a model is sensitive to variation, we may take gradients decomposed into each of the training data. Take, for example, a model, $f(x; \theta)$, which satisfies the necessary conditions for expression as:
\begin{align}
    f(x; \theta_\text{trained}) &= f(x; \theta_0(0)) + \sum_i \sum_s \int_0^1 \varphi_{s,t}(x) \cdot (L'(f(x_i, \theta_s(0)), y_i) \varphi_{s, 0}(x_i)) dt\\
    \varphi_{s,t}(x) &= \nabla_\theta f(x; \theta_s(t))
\end{align}
And $\theta_s(t)$ are the parameters of $f$ for training step $s$ and time $t$ so that $\sum_s \int_0^1 \theta_s(t) dt$ integrates the entire training path taken by the model during training. Given a test point $x$, we can evaluate its subspace by taking, for each $x_j$:
\begin{align}
    \dfrac{\partial f(x; \theta_\text{trained})}{\partial x_j} &= \dfrac{\partial f(x; \theta_0(0))}{\partial x_j} + \sum_i \sum_s \int_0^1 \dfrac{\partial\left(\varphi_{s,t}(x) L'(f(x_i; \theta_s(0)), y_i) \varphi_{s, 0}(x_i)\right)}{\partial x_j} dt\\
                                                               &= \sum_i \sum_s \int_0^1 \varphi_{s,t}(x)dt (\dfrac{\partial L'(f(x, \theta_s(0)), y_i)}{\partial x_j} \varphi_{s, 0}(x_i) \\
                                                               &+ L'(f(x_i; \theta_s(0)), y_i) \dfrac{d\varphi_{s, 0}(x_i)}{dx_j}) 
\end{align}
We can see that these gradients will be zero except when $i = j$, thus we may summarize these gradients as a matrix, $G$, with 
\begin{align}
    G_j = \sum_s \int_0^1 \varphi_{s,t}(x)dt \left(\dfrac{\partial L'(f(x, \theta_s(0)), y_i)}{\partial x_j} \varphi_{s, 0}(x_i) + L'(f(x_i; \theta_s(0)), y_i) \dfrac{d\varphi_{s, 0}(x_i)}{dx_j}\right)
    \label{eq:input_decomp}
\end{align}
While written in this form, it appears we must keep second-order derivatives, however we note that the inner product with $\phi_{s,t}(x)$ eliminates a dimension, so that clever implementation still only requires storage of vectors since the dot-products can be iteratively computed for each training datum requiring storage of only a scalar quantity (vector in the multi-class case). This means the overall data required is a vector (a low rank matrix in the multi-class case). 

The rank of $G$ represents the dimension of the subspace on which the model perceives a test point, $x$, to live, and we can get more detailed information about the variation explained by the span of this matrix by taking its SVD. We can exactly measure the variation explained by each orthogonal component of the $\text{span}(G)$ with respect to the given test point $x$. $G(x)$ can be defined as a map from $x$ to the subspace perceived by the model around $x$. Any local variations in the input space which do not lie on the subspace spanned by $G(x)$ can not be perceived by the model, and will have no effect on the models output.

On MNIST, $G(x)$ creates a matrix which is of size $60000 \times 784 \times 10$ (training points $\times$ input dimension $\times$ class count).
This matrix represents the exact measure of each training points contribution towards a given test prediction.
In order to simplify computation, we reduce this term to $60000 \times 784$ by summing across the class dimension.
This reduction is justified by the same theory as the pseudo-NTK presented by~\citet{pmlr-v202-mohamadi23a}.
Of note is that in practice this matrix is full rank on the input space as seen in Figure \ref{fig:trans}. Note that this decomposition selects similar, but not identical, modes of variation across test points and even across different models. Components in SVD plots are sorted using Test Point A on Model A. By taking these individual gradient contributions for a test point and computing the SVD across the training, the significant modes of variation in the input space can be measured (sigma squared). 
This is despite MNIST having significantly less degrees of variation than its total input size (many pixels in input space are always 0).
Figure \ref{fig:cdf} demonstrates decomposing the input space in this way and provides a view of the signal dimension around individual test points. For a toy problem (3 Gaussian distributions embedded in 100 dimensional space) the model only observes between 2 and 3 unique variations which contribute to 95\% of the information required for prediction. Meanwhile the dimension of the signal manifold observed by the model around MNIST and CIFAR test points is approximately 94 (12\% of the data dimension 784) and 1064 (34\% of the data dimension 3096) respectively. It is likely that different training techniques will provide significantly different signal manifolds and consequently different numbers of components.
\begin{figure}[t]
    \centering
    \includegraphics[width=.45\textwidth]{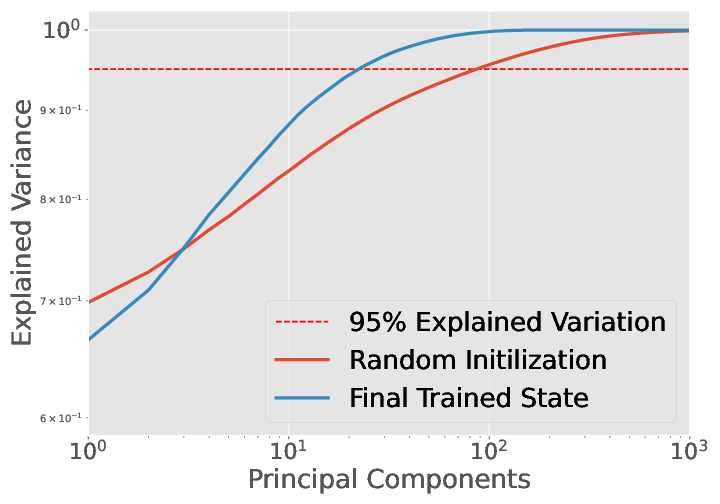}
    \includegraphics[width=.45\textwidth]{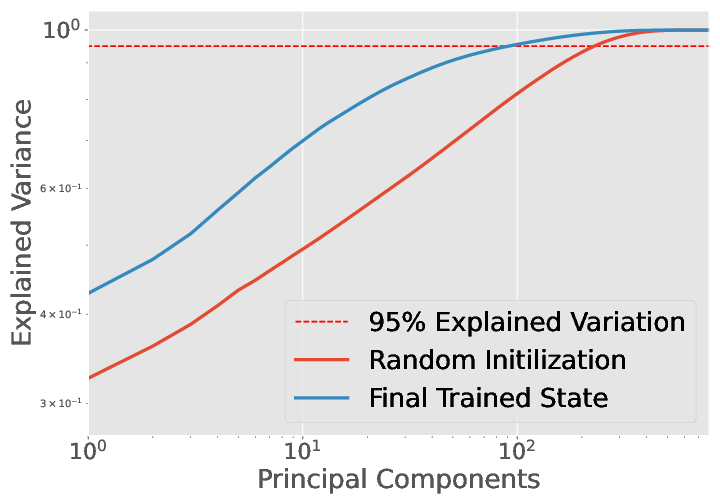}
    \caption{Differences between observing gradients in input space vs. weight space. Left: cumulative sum of explained variation parameter space. Right: cumulative sum of explained variation input space. Red solid line indicates a model at random initialization while the blue solid line represents the fully trained state.}
    \label{fig:compare}
\end{figure}
We can also ask for a decomposition in terms of training input gradients for predictions made on training points. This can be written in the following way:
\begin{align}
    \dfrac{df(x_j; \theta_\text{trained})}{dx_j} &= \dfrac{df(x_j; \theta_0(0))}{dx_j} + \sum_i \sum_s \int_0^1 \dfrac{\partial\left(\varphi_{s,t}(x_j) L'(f(x_i; \theta_s(0)), y_i) \varphi_{s, 0}(x_i)\right)}{\partial x_j} dt
\end{align}
The left hand side is computable without path-decomposition and so can be computed for each training datum to create a gradient matrix, $H_{\theta_\text{trained}}$. Another term, $\dfrac{df(x_j; \theta_0(0))}{dx_j}$ is also easily computable, yielding another matrix $H_{\theta_0}$. By comparing the rank and span of $H_{\theta_\text{trained}}$ and $H_{\theta_0}$ we can understand to what extent the model's spatial representation of the data is due to the initial parameter selection and how much is due to the training path. Also, $H_{\theta_\text{trained}}$ provides sample of gradients across all training data, which in some sense must be spanned by the model's implicit subspace basis. Despite missing the granular subspace information, the rank of this gradient matrix and its explained variation computed using SVD should be related to the model's implicit subspace rank. 
It should be noted that while there is a direct relationship between a models variations in input space and weight space, Figure \ref{fig:compare} shows that this mapping changes greatly from the beginning to end of training. From random initialization, the number of principal components required to achieve 95\% explained variation decreases in both cases shown. Note that at random initialization, the weight space gradients already have only a few directions accounting for significant variation. Disentangling the data dimension using weight space gradients is less effective than doing so in input space \citep{shamir2021dimpled}. Overall this spectrum starts out wide (high dimensional) for $\theta_0$ and much more focused (low dimensional) for $\theta_T$.

One interesting property of using input gradients for training data decomposed according to ~\eqref{eq:input_decomp} is the ability to compare input gradients across models with different initial parameters and even different architectures.
Figure \ref{fig:trans} demonstrates that two models with different random initializations which have been trained on the same dataset have a signal manifold which shares many components.
This is a known result that has been explored in deep learning through properties of adversarial transferability~\citet{szegedy2013intriguing}.
This demonstrates that the gEPK is capable of measuring the degree to which two models rely on the same features directly.
This discovery may lead to the construction of models which are provably robust against transfer attacks.


\section{Conclusion}

This paper presented decompositions based on a general exact path kernel representation for neural networks with a natural decomposition that connects existing out-of-distribution detection methods to a theoretical framework. 
This same representation reveals additional connections to dimension estimation and adversarial transferability. 
These connections are demonstrated with experimental results on computer vision datasets. 
The key insights provided by this decomposition are that model predictions implicitly depend on the parameter tangent space on its training data and that this dependence enables decomposition relative to a single test point by either parameter gradients, or training input gradients. 
This allows users to connect how neural networks learn at training time with how each training point influences the final decisions of a network.
We have demonstrated that the techniques used in practice for OOD are using a subset of the theoretical basis we propose.
Taking into account the entire training path will allow more rigorous methods for OOD detection.
There are many possible directions to continuing work in this area. 
These include understanding of how models depend on implicit prior distributions following (e.g.~\citet{nagler2023}), supporting more robust statistical learning under distribution shifts (e.g. \citet{Simchowitz2023}), and supporting more robust learning.
\chapter{Conclusions and Outlook}

\label{Chapter5} 

Throughout this work, I have sought to understand the intrinsic
structure of machine-learning models and how this gives
rise to adversarial attacks. I began by studying the construction of
neural networks in Chapter~\ref{Chapter1} and learning in practice by generating adversarial
attacks in Chapter~\ref{Chapter2}. This led naturally to a more careful study of decision
boundaries and the development of the Persistence metric in Chapter~\ref{Chapter3}. Uncanny drops in persistence while crossing
decision boundaries toward adversarial attacks indicated that attacks
may exist in highly curved regions. The field conspicuously lacks
tools for evaluating curvature, although some progress is being
made. In the process of searching related work for methods that would allow more
direct analysis I discovered deficiencies in nascent literature on
path kernels starting with the work of ~\citet{domingos2020}. The
initial work to correct these deficiencies and produce an exact path
kernel are presented in Chapter ~\ref{Chapter4}. This new exact path kernel
representation for neural networks has a primary advantage which is to
decompose model predictions into contributions from each training
point. We can use this representation to decompose predictions into
parameter gradient contributions from each training datum, thus the exact path kernel implicitly maps the
training data into a given tangent space. This mapping can be done for
parameter gradients, input space gradients, and several other
spaces related to both of these. Two such decompositions are applied
in Chapter~\ref{Chapter4a} to demonstrate that many cutting edge OOD
detection algorithms implicitly use this decomposition in terms of
parameter gradients, and that training input gradients can be
used to measure signal manifold dimension.

We can summarize the first 3 Chapters as posing some fundamental
geometric questions related to machine-learning robustness. 
Chapters 4 and 5 as propose a new framework for analyzing geometric
properties and demonstrating that this framework can be applied very
generally. As usual in mathematics, trying to solve a specific problem
about geometric causes for adversarial examples has led to a very
general result in a totally different field. Although Chapters 4 and 5
represent significant progress in understanding ANNs mathematically,
they do not directly address the core question that arose within the
first three Chapters: Is relative dimension and curvature a primary
cause of adversarial vulnerability. 

Based on this work, we can now make some supportable conjectures:

\begin{enumerate}
  \item Modern neural network architectures learn decision boundaries
    that are askew from many interpolations among training and testing
    data.
  \item Non-orthogonal decision boundary crossings indicate that
    sharp corners in the decision space learned by ANN models lead
    to adversarial examples.
  \item Bilinear map based representations allow decomposition of
    predictions based on training inputs. These show that models
    implicitly use a lower dimensional implicit representation of data
    to make predictions.
\end{enumerate}

Combining these notions, we have a clear path forward:
    Decompose adversarial attacks using a bilinear map
    representation and then compare these signatures of variation with
    typical modes of variation within training data.
Likely this will show that adversarial attacks are arising
    from sharp corners in decision space, some of which can be
    mitigated by making models more aware of the properties of the
    decision space they have learned, some of which cannot be
    mitigated because training data are insufficient to fully
    constrain decision surfaces. 
From here, there are several important directions for
future research. First is the application of these methods directly to
the adversarial robustness questions from Chapters 1-3. Second is the
complete generalization of this new framework including conditions
under which such representations live in Banach spaces or Hilbert
spaces and what order of accuracy can be maintained using truncated
singular value decomposition. Third is the application of this theory more
broadly to connect with Wasserstein metric spaceswhich is a suspected implicit space learned by machine-learning models. 

\section{Applications to Adversarial Robustness}

\begin{figure}[ht!]
    \centering
    \includegraphics[width=0.42\textwidth]{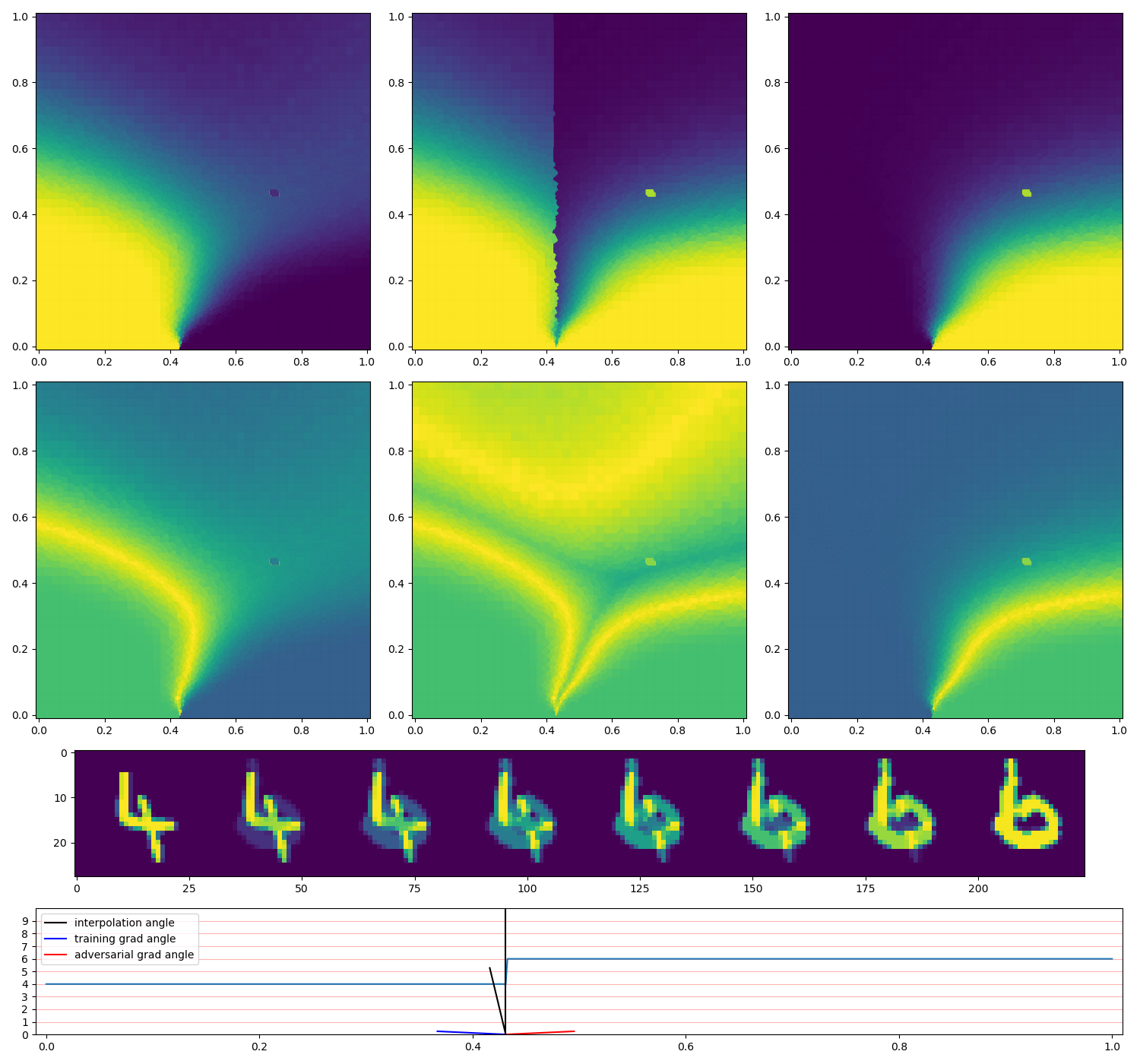}\includegraphics[width=0.42\textwidth]{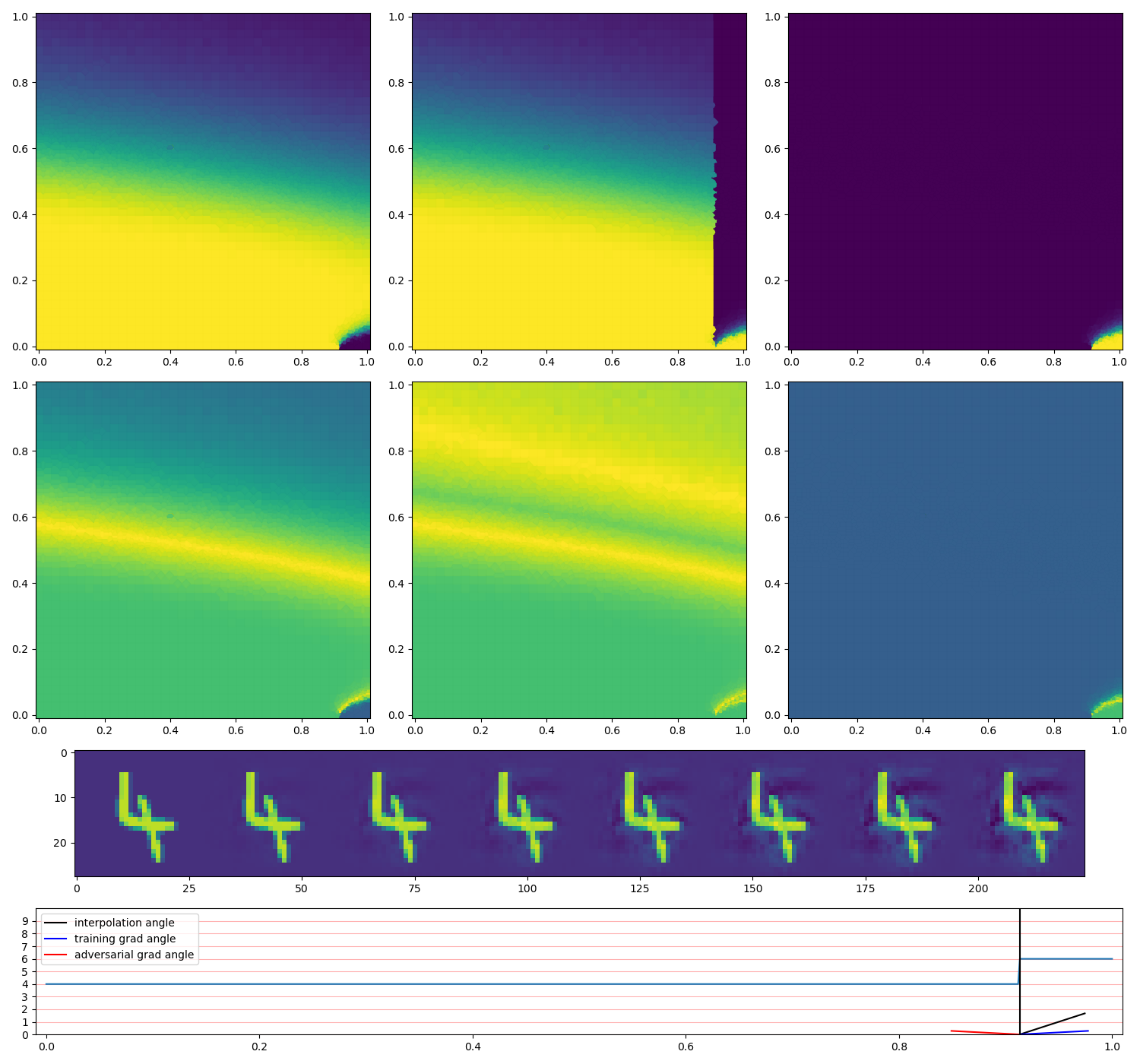}
    \caption{Heatmaps (top) and decision boundary angle plots bottom) showing stability fractions for an interpolation from a natural 4 to a natural 6 (left) and from a natural 4 to an adversarial 6 (right). }
    \label{fig:dbs}
\end{figure}

We can study persistence as a level-set in a sampling versus interpolation plot. We will design this approach using a Delaunay triangulation to iteratively sample this domain in order to visualize a decision boundary between natural images and between an adversarial image and a natural image. We will design heatmaps to show the fraction of uniform samples from a sphere which return the same class (E.g. an image along the interpolation for, say $t = 0.5$ is classified by the model as class 4. Then we will plot a column in the heatmap showing fraction of samples from Gaussians with progressively higher standard deviations which receive class 4.) This visualization is shown in Fig.~\ref{fig:dbs} where the middle  heatmaps show the 0.7 level set of these fractions, middle images show sample images along the linear interpolation across the decision boundary, and bottom plot shows incident angles on the decision boundary of the interpolant (black), the training gradient (blue) and the testing gradient (red). We can see that sharp geometric curvature and shallow angles are observed when interpolating across decision boundaries between natural and especially adversarial images. One line of future research will follow the direct geometric approach by constructing test objects (e.g. wedges which are intersections of angled hyper-planes forming arbitrarily sharp solids with arbitrarily many dimensions fewer than its ambient space) which replicate the structure observed in the practical networks. An example of this is shown in ~\ref{fig:dbs}.

      The second approach is to take advantage of the framework from Chapters ~\ref{Chapter4} and ~\ref{Chapter4a} in order to decompose the training gradients at points on the decision boundary to understand neural networks' learned degrees of freedom at these locations. The goal of this line of research is to understand these geometric constraints and eventually pose both updated training objectives and also better definitions for the identification of adversarial examples in practice. This line of research may have implications beyond robustness, to include uncertainty quantification, ability of models to generalize to data outside their training distribution, 
out-of-distribution detection, and other useful metrics to create
trustworthy AI. In particular, we can decompose adversarial gradients into an
initial gradient plus a component from each training input,
\begin{align}
  f(x, \theta_F) &= f(x, \theta_0) + \sum_i \sum_s \int_0^1 \langle
  \nabla_\theta f(x, \theta_s(t)), f(x_i, \theta_s(t))\rangle dt \\
  \dfrac{\partial f(x, \theta_F)}{\partial x} &= \dfrac{\partial f(x, \theta_0)}{\partial x} + \sum_i \sum_s \int_0^1 \langle
  \nabla_\theta \dfrac{\partial f(x, \theta_s(t))}{\partial x}, f(x_i,
                                 \theta_s(t))\rangle dt.
\end{align}
From this decomposition, we can also replace the individual training
data with an orthogonal basis by choosing vectors as in
~\citet{halko2011finding}, solving SVD for each step:
    Take a model, $f(x, \theta)$, which satisfies the necessary conditions for expression as:
 \begin{align}
     f(x, \theta_\text{tr}) &= f(x, \theta_0(0)) +  \sum_s
\int_0^1                                  \left\langle 
                                   \varphi_{s,t}(x) , \sum_i
                                   \left(L'(x_i, y_i) \varphi_{s,
                                   0}(x_i)\right) \right \rangle dt\\
     \varphi_{s,t}(x) &= \nabla_\theta f(x, \theta_s(t))
 \end{align}

Then we will let the RHS be rows of a matrix $L \cdot A$ where $X$ is a vector with elements $\varphi_{s,t}(x)$, $L =
$diag($L'(x_i, y_i))$ (size is $[N, N]$), and each row $A_i
= \varphi_{s,0}(x_i)$ so that $A$ has size $[N, P]$. Then we can compute $U\Sigma V^T =
A$ ($U$ of size $[N, P]$, $\sigma$ of size $[N, N]$, and $V^T$ of size $[P, P]$. Let's first try projecting each $A_i$ onto the row space of $V$ so that 
\begin{align}
    A_i &= \sum_k A_i V_k^T V_k \\
    \sum_i \left(L'(x_i, y_i) \varphi_{s, 0}(x_i)\right) = \sum_i L_i A_i &= \sum_i \sum_k L_i A_i V_k^T V_k\\
    \sum_i L_i A_i &= \sum_k \left(\sum_i L_i A_i V_k^T\right) V_k\\
    \sum_i L_i A_i &= \sum_k \left((\vec 1^T L A V^T)_k\right) V_k\\
\end{align}
where the subscripts all indicate rows. 
If we truncate  $V$ down to $K$ dimensions, then we have $B = \vec 1^T L A (V^K)^T$ with the matrices having sizes $[1,N], [N,N], [N,P], [P,K]$ (in order) and we can rewrite the truncated representation:

\begin{align}
    \sum_i L_i A_i &= \sum_k \left((\vec 1^T L A (V^K)^T)_k\right) V^K_k\\
    &= \sum_k B_k V^K_k.\\
\end{align}

Now, we can rewrite our original expression:
\begin{align}
     f(x, \theta_\text{tr}) &= f(x, \theta_0(0)) +  \sum_s
\int_0^1                                  \left\langle 
                                   \varphi_{s,t}(x) ,  \sum_k B_k V^K_k \right \rangle dt.
 \end{align}
More generally, we may define Fourier features along paths connecting all of the finitely many training steps together using methods like those from ~\citet{tancik2020fourierfeatures}. By combining these Fourier features spectral components across all training steps, it may be possible to perform a single spectral decomposition which provides a basis for the entire discrete path determined by a model during training. 

Regardless of how sophisticated we make this decomposition, all of
these methods have the advantage of maintaining an exact decomposition of predictions relative to contributions from each training input. This mapping implicitly
defines natural dimension reduction -- by truncating the chosen
basis. This provides a theoretical foundation
from which recent work in manifold study and data augmentation
~\citep{kaufman_data_2023, liu_linear_2023, sipka_differentiable_2023,
  cha_orthogonality-enforced_2023, marbut_reliable_2023,
  gao_out--domain_2023, oh_provable_2023, chen2023aware}.

\section{Generalization in the sense of Reproducing Kernel Banach Spaces}
Given that the representation from Chapter~\ref{Chapter4} and
~\ref{Chapter4a} have a small asymmetry, a general description of
these representations cannot fit within Hilbert Space. There is a
convenient approach building on the theory of Reproducing Kernel
Banach Spaces (RKBS) which is summarized nicely by
~\citet{zhang2009reproducing}. It is by careful construction of a
semi-inner product that I believe our representation can be written in
this way. This allows access to tools built for Banach spaces for
analysis of both accuracy, risk minimization, and other useful
results. A similar line of work is already being pursued by
~\citet{shilton_gradient_2023} with which our work can likely be
connected. Although this is less likely to produce practical
payoffs immediately, I believe that this approach will greatly enhance
the theoretical foundation upon which analysis of neural network
performance and limitations are based. 

\section{Connecting Distributional Learning with Neural Networks}

The neural representations and decompositions proposed in this work
provide images of the tangent space according to the
a given model for each point in the data space. The decomposed gradients for each input datum must be aligned with the natural geodesics which interpolate data. It has been shown increasingly in recent work
(e.g. by  ~\citet{lu2020universal}, ~\citet{yang2022capacity}, \citet{altekruger_neural_2023}) shows that Neural Networks
learn distributions in a sense that approximates the Wasserstein
metric to some order. Also, work by ~\citet{chizat2020maxmargin} that neural network
classifiers are approximately max-margin classifiers in some
implicit space. We cannot necessarily compute this space exactly,
however, by examining the parameter gradient decompositions exposed by this kernel representation, we can pose questions about this metric space in the
dual sense. Connecting these concepts into an
understanding of how Neural Networks embed an approximation of the
Wasserstein metric into euclidean space is likely
euclidean will have significant impact on the machine-learning
community at large.

 
%
%

\renewcommand\thechapter{}

\appendix 

 
\addtocontents{toc}{\protect\renewcommand{\protect\cftchappresnum}{\appendixname\space}}
\addtocontents{toc}{\protect\renewcommand{\protect\cftchappresnum}{\appendixname\space}}
\chapter{Attacks}
\section{L-BFGS}\label{appa}
Limited-memory Broyden-Fletcher-Goldfarb-Shanno (L-BFGS)
is a quasi-newton gradient based optimization algorithm which stores a history of gradients and positions from each previous optimization step \cite{liu1989limited}. The algorithm as implemented to optimize a function $f$ with gradient at step $k$ of $g_k$ is as follows

\begin{algorithm}[H]{L-BFGS}
\begin{algorithmic}
\State Choose $x_0, m, 0 < \beta' < 1/2, \beta' < \beta < 1$, and a symmetric positive definite starting matrix $H_0$. 
\For{$k = 0$ to $k = $ (the number of iterations so far)}
\State $d_k = -H_kg_k$,
\State $x_{k+1} = x_k + \alpha_kd_k$,
Where $\alpha_k$ satisfies 
\begin{align*}
    f(x_k + \alpha_k d_k) &\leq f(x_k) + \beta'\alpha_kg_k^Td_k,\\
    g(x_k + \alpha_k d_k)^Td_k &\geq \beta g_{k}^T d_k.\\
\end{align*}
\Comment{Trying steplength $\alpha_k = 1$ first.}
\State Let $\hat m = \min(k, m - 1).$ 
\For{$i$ from 0 to $\hat m + 1$}
\Comment{Update $H_0$ $\hat m+1$ times using pairs $\{y_j,s_j\}^k_{j = k - \hat m},$}
\begin{align*}
    H_{k+1} &= (V_k^T \cdot V_{k-\hat m}^T)H_0(V_{k - \hat m}\cdots V_k)\\
    &+\rho_{k-\hat m}(V_k^T \cdots V_{k-\hat m+1}^T)s_{k - \hat m} s_{k - \hat m}^T(V_{k-\hat m+1} \cdots V_k)\\
    &+\rho_{k-\hat m + 1}(V_k^T \cdots V_{k-\hat m+2}^T)s_{k - \hat m + 1} s_{k - \hat m + 1}^T(V_{k-\hat m+2} \cdots V_k)\\
    & \vdots\\
    &+\rho_ks_ks_k^T\\
\end{align*}
\EndFor
\EndFor
\end{algorithmic}
\end{algorithm}

\newpage

\chapter{Persistence Tools}

\section{Bracketing Algorithm}\label{bracketing}
This algorithm was implemented in Python for the experiments presented. 

\begin{algorithm}
\begin{algorithmic}
\Function{bracketing}{image, ANN, n, tol, n\_real, c\_i}
 \Comment{ start with same magnitude noise as image}
 \State u\_tol, l\_tol = 1.01, 0.99
 \State a\_var = Variance(image)/4 \Comment{Running Variance}
 \State l\_var, u\_var  = 0, a\_var*2\Comment{ Upper and Lower
   Variance of search space}
 \Comment{ Adversarial image plus noise counts}
 \State a\_counts = zeros(n)
 \State n\_sz = image.shape[0]
 \State mean = Zeros(n\_sz)
 \State I = Identity(n\_sz)
 \State count = 0
\Comment{grab the classification of the image under the network}
\State y\_a = argmax(ANN.forward(image))
\State samp = N(0, u\_var*I, n\_real)
\State image\_as = argmax(ANN.forward(image + samp))
\Comment{Expand search window}
\While{Sum(image\_as == y\_a) $>$ n\_real*tol/2}
\State u\_var = u\_var*2
\State samp = N(0, u\_var*I, n\_real)
\State image\_as = argmax(ANN.forward(image + samp))
\EndWhile
  \Comment{ perform the bracketing }
\For{i in range(0,n)}
\State count+=1
\Comment{compute sample and its torch tensor}
\State samp = N(0, a\_var*I, n\_real)

\State image\_as = argmax(ANN.forward(image + samp))

\State a\_counts[i] = Sum(image\_as == y\_a)

\Comment{floor and ceiling surround number}
\If{((a\_counts[i] $\leq$ Ceil(n\_real*(tol*u\_tol))) \& (a\_counts[i] $>$ Floor(n\_real*(tol*l\_tol))))}

        \Return{a\_var}
    \ElsIf{ (a\_counts[i] $<$ n\_real*tol)} \Comment{we're too high}
        \State u\_var = a\_var
        \State a\_var = (a\_var + l\_var)/2
    \ElsIf{ (a\_counts[i] $\geq$ n\_real*tol)} \Comment{we're too low}
        \State l\_var = a\_var
        \State a\_var = (u\_var + a\_var)/2
        \EndIf
\EndFor

   \Return{a\_var}
\EndFunction
\end{algorithmic}
\end{algorithm}

\begin{algorithm} [h!]
\begin{algorithmic}
\Function{bracketing}{image, classifier ($\CC$), numSamples, $\gamma$, maxSteps, precision}

\State $[\sigma_{\min},\sigma_{\max}] = $\textproc{rangefinder}(image, $\CC$, numSamples, $\gamma$)
\State count $=1$
\While{count$<$maxSteps}
\State $\sigma = \frac{\sigma_{\min}+\sigma_{\max}}{2}$
\State $\gamma_{\textnormal{new}} =$ \textproc{compute\_persistence}($\sigma$, image, numSamples, $\CC$)
\If{$|\gamma_{\textnormal{new}}-\gamma|<$precision}
\State \textbf{return} $\sigma$ 
\ElsIf{$\gamma_{\textnormal{new}}>\gamma$}
\State $\sigma_{\min} = \sigma$
\Else
\State $\sigma_{\max} = \sigma$
\EndIf
\State count = count + 1
\EndWhile

\Return $\sigma$
\EndFunction

\\
\Function{rangefinder}{image, $\CC$, numSamples, $\gamma$}
\State $\sigma_{\min}=.5$,\;\; $\sigma_{\max}=1.5$
\State $\gamma_1 =$ \textproc{compute\_persistence}($\sigma_{\min}$, image, numSamples, $\CC$)
\State $\gamma_2 =$ \textproc{compute\_persistence}($\sigma_{\max}$, image, numSamples, $\CC$)
\While{$\gamma_1<\gamma$ \textbf{or} $\gamma_2>\gamma$}
\If{$\gamma_1<\gamma$}
\State $\sigma_{\min} = .5\sigma_{\min}$
\State $\gamma_1 =$ \textproc{compute\_persistence}($\sigma_{\min}$, image, numSamples, $\CC$)
\EndIf
\If{$\gamma_2>\gamma$}
\State $\sigma_{\max} = 2\sigma_{\max}$
\State $\gamma_2 =$ \textproc{compute\_persistence}($\sigma_{\max}$, image, numSamples, $\CC$)
\EndIf
\EndWhile

\Return{$[\sigma_{\min}, \sigma_{\max}]$}
\EndFunction

\\
\Function{compute\_persistence}{$\sigma$, image, numSamples, $\CC$}
\State sample = $N(\textnormal{image},\sigma^2I,$numSamples)
\State $\gamma_{\textnormal{est}} = \frac{|\{\CC(\textnormal{sample})=\CC(\textnormal{image})\}|}{\textnormal{numSamples}}$

\Return{$\gamma_{\textnormal{est}}$}
\EndFunction
\end{algorithmic}
\caption{Bracketing algorithm for computing $\gamma$-persistence}\label{bracketing}
\end{algorithm}

\section{Bracketing Algorithm}\label{sec:bracketing}
The Bracketing Algorithm is a way to determine persistance of an image with respect to a given classifier, typically a DNN. The algorithm was implemented in Python for the experiments presented. The \textproc{rangefinder} function is not strictly necessary, in that one could directly specify values of $\sigma_{\min}$ and $\sigma_{\max}$, but we include it here so that the code could be automated by a user if so desired.

\section{Convolutional Neural Networks Used} \label{appendix:CNNs}
In Table \ref{table1} we reported results on varying complexity convolutional neural networks. These networks consist of a composition of convolutional layers followed by a maxpool and fully connected layers. 
The details of the network layers are described in Table \ref{tab:CNN} where Ch is the number of channels in the convolutional components. 

\begin{table}[pt]
\centering
\caption{Structure of the CNNs C-Ch used in Table \ref{table1}}
\label{tab:CNN}
\begin{tabular}{llllll}
\toprule
     Layer & Type & Channels & Kernel & Stride & Output Shape \\
\midrule
     0 & Image & 1 & NA & NA & $(1, 28, 28)$ \\
     1 & Conv & Ch & $(5,5)$& $(1,1)$& $(\textnormal{Ch}, 24, 24)$\\
     2 & Conv & Ch & $(5,5)$& $(1,1)$& $(\textnormal{Ch}, 20, 20)$\\
     3 & Conv & Ch & $(5,5)$& $(1,1)$& $(\textnormal{Ch}, 16, 16)$\\
     4 & Conv & Ch & $(5,5)$& $(1,1)$& $(\textnormal{Ch}, 12, 12)$\\
     5 & Max Pool & Ch & $(2, 2)$ & $(2, 2)$& $(\textnormal{Ch}, 6, 6)$ \\
     7 & FC & $(\textnormal{Ch}\cdot 6 \cdot 6, 256)$ & NA & NA & 256 \\
     8 & FC & $(256, 10)$ & NA & NA & 10 \\
     \bottomrule
\end{tabular}
\end{table}
 

\section{Additional Figures}
In this section we provide additional figures to demonstrate some of the experiments from the paper.


\subsection{Additional Figures from MNIST}
In Figure \ref{fig:mnistadv} we begin with an image of a \texttt{1} and generate adversarial examples to the networks described in Section \ref{sec:mnist} via IGSM targeted at each class \texttt{2} through \texttt{9}; plotted are the counts of output classifications by the DNN from samples from Gaussian distributions with increasing standard deviation; this complements Figure \ref{fgsmo} in the main text. Note that the prevalence of the adversarial class falls off quickly in all cases, though the rate is different for different choices of target class.
\begin{figure}[!htb]
    \centering
    \includegraphics[width=.49\textwidth]{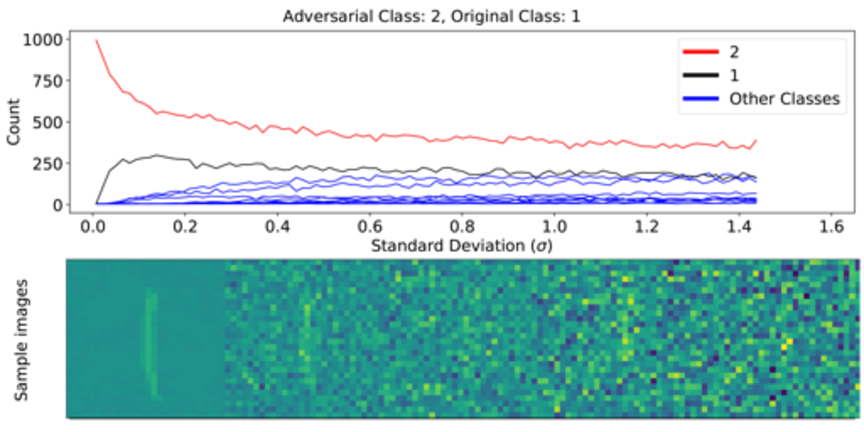}
    \includegraphics[width=.49\textwidth]{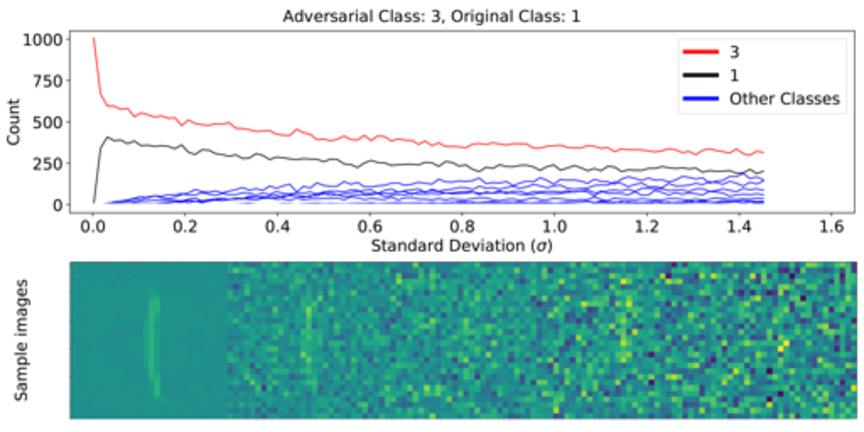}
    \includegraphics[width=.49\textwidth]{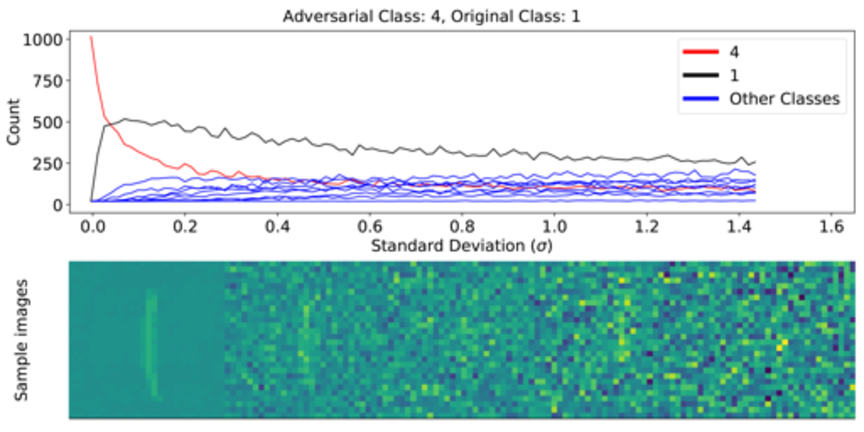}
    \includegraphics[width=.49\textwidth]{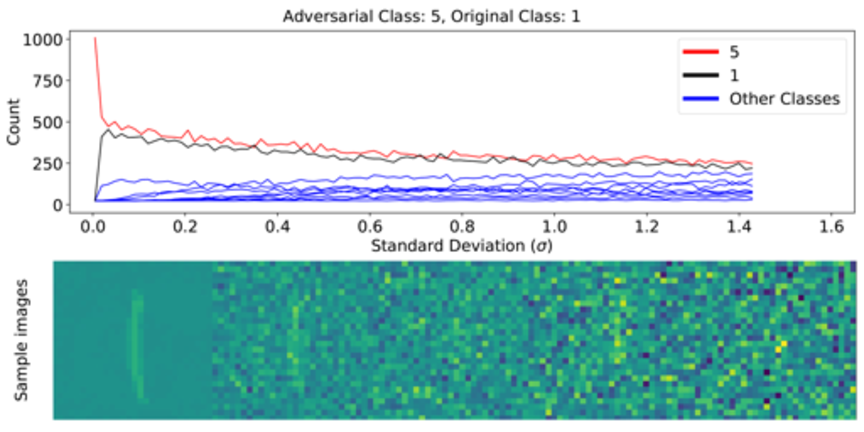}
    \includegraphics[width=.49\textwidth]{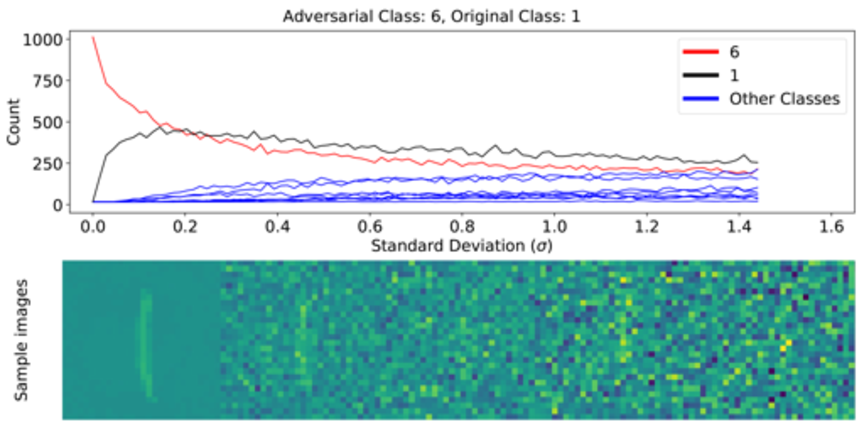}
    \includegraphics[width=.49\textwidth]{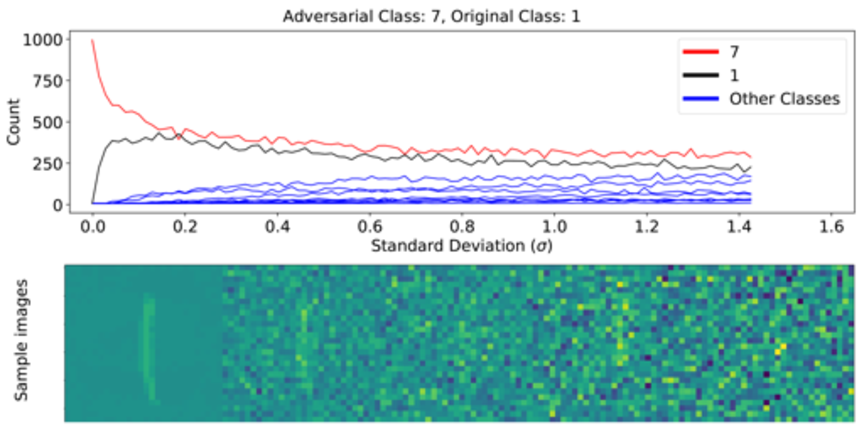}
    \includegraphics[width=.49\textwidth]{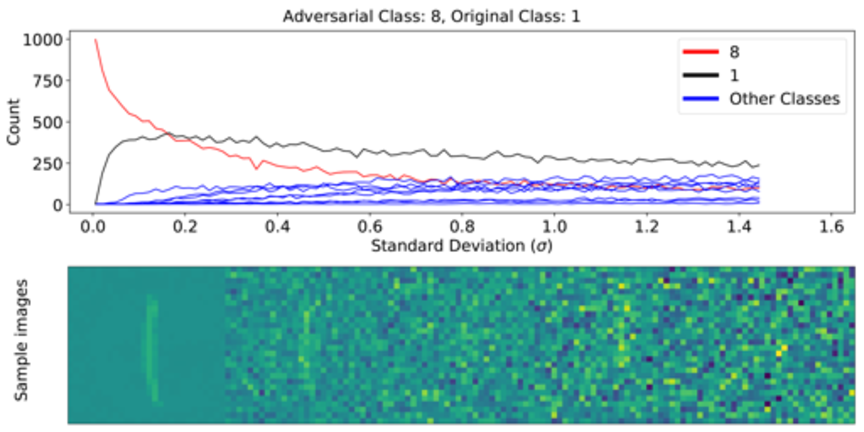}
    \includegraphics[width=.49\textwidth]{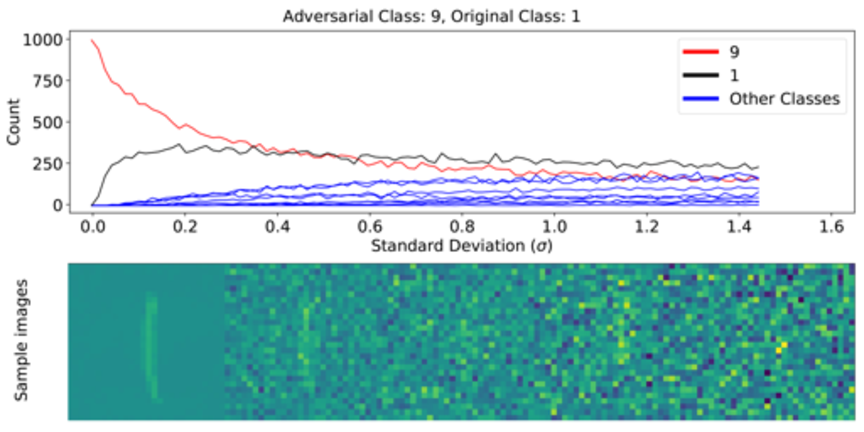}
    \caption{Frequency of each class in Gaussian samples with increasing standard deviations around adversarial attacks of an image of a \texttt{1} targeted at classes \texttt{2} through \texttt{9} on a DNN classifier generated using IGSM. The adversarial class is shown as a red curve. The natural image class (\texttt{1}) is shown in black. Bottoms show example sample images at different standard deviations.}
    \label{fig:mnistadv}
\end{figure}

We also show histograms corresponding to those in Figure \ref{fig:IGSMpersistenceMNIST} and the networks from Table \ref{table1}.  As before, for each image, we used IGSM to generate 9 adversarial examples (one for each target class) yielding a total of 1800 adversarial examples. In addition, we randomly sampled 1800 natural MNIST images. For each of the 3600 images, we computed $0.7$-persistence. In Figure \ref{fig:FC10}, we see histograms of these persistences for the small fully connected networks with increasing levels of regularization. In each case, the test accuracy is relatively low and distortion relatively high. It should be noted that these high-distortion attacks against models with few effective parameters were inherently very stable -- resulting in most of the ``adversarial'' images in these sets having higher persistence than natural images. This suggests a lack of the sharp conical regions which appear to characterize adversarial examples generated against more complicated models.  In Figure \ref{fig:FC100200} we see the larger fully connected networks from Table \ref{table1} and in Figure \ref{fig:CNNs} we see some of the convolutional neural networks from Table \ref{table1}. 

\begin{figure}[!htb]
\centering
\includegraphics[trim=200 80 100 100, clip,width=.32\textwidth]{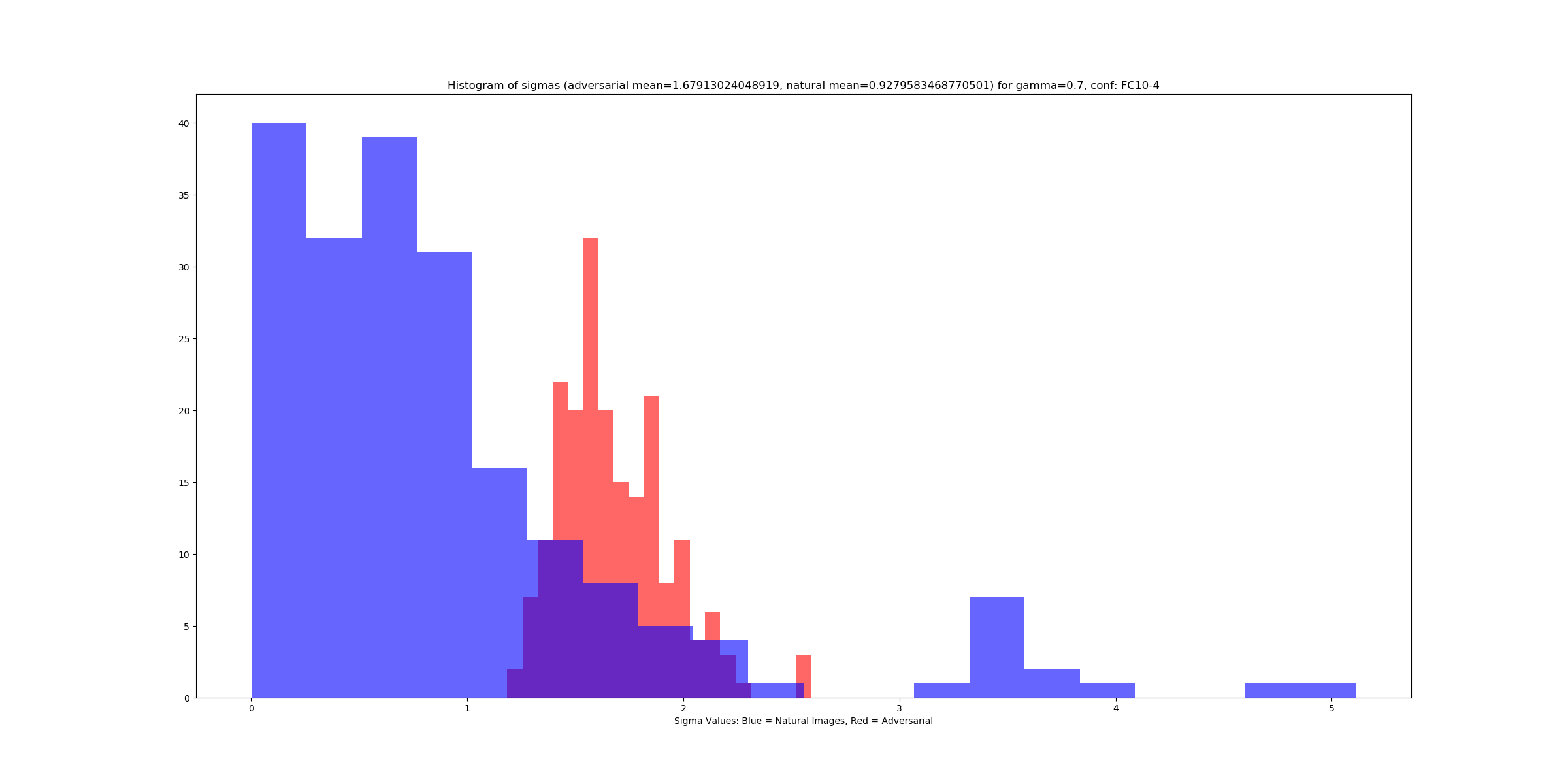}
\includegraphics[trim=200 80 100 100, clip,width=.32\textwidth]{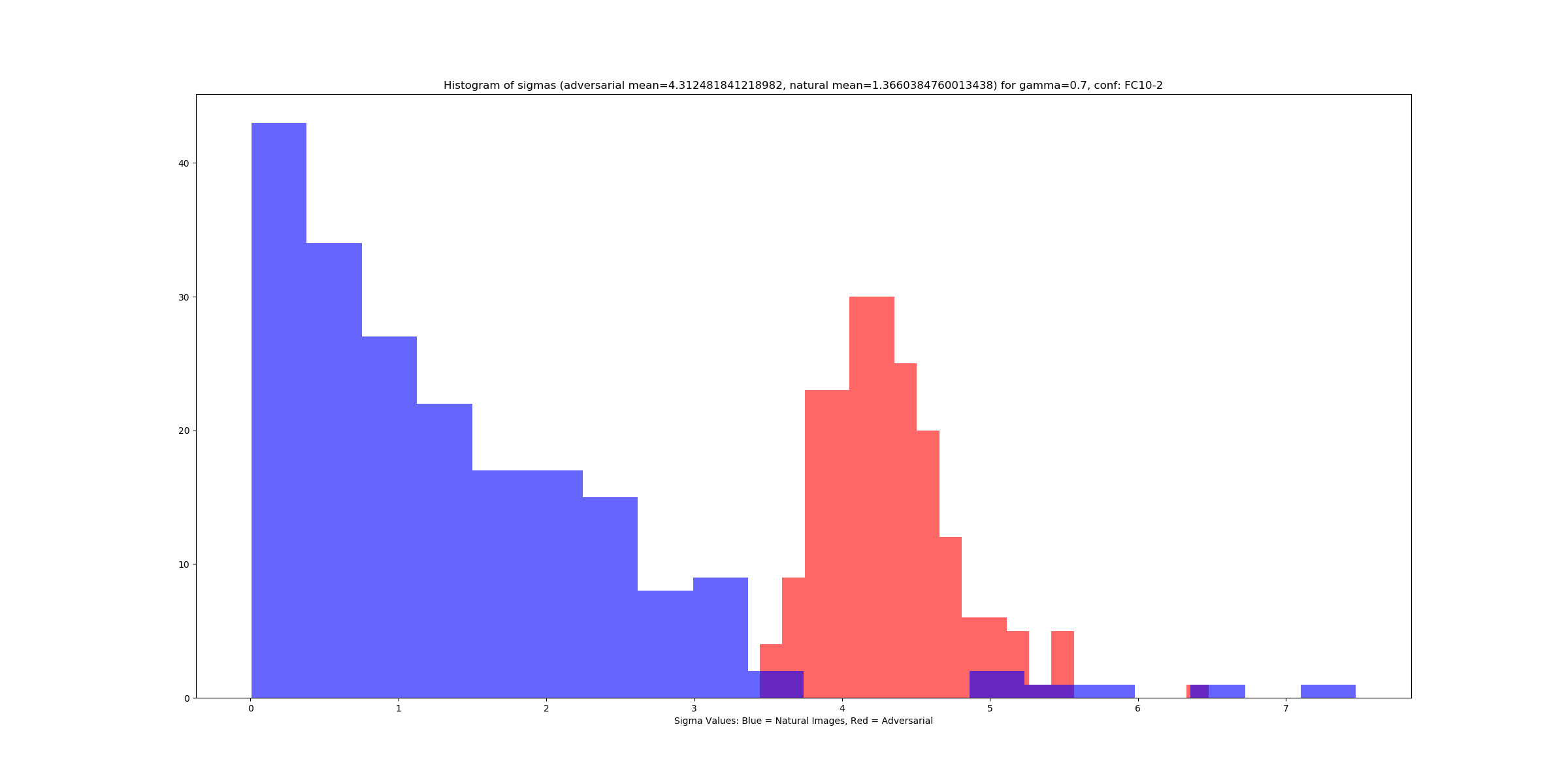}
\includegraphics[trim=200 80 100 100, clip,width=.32\textwidth]{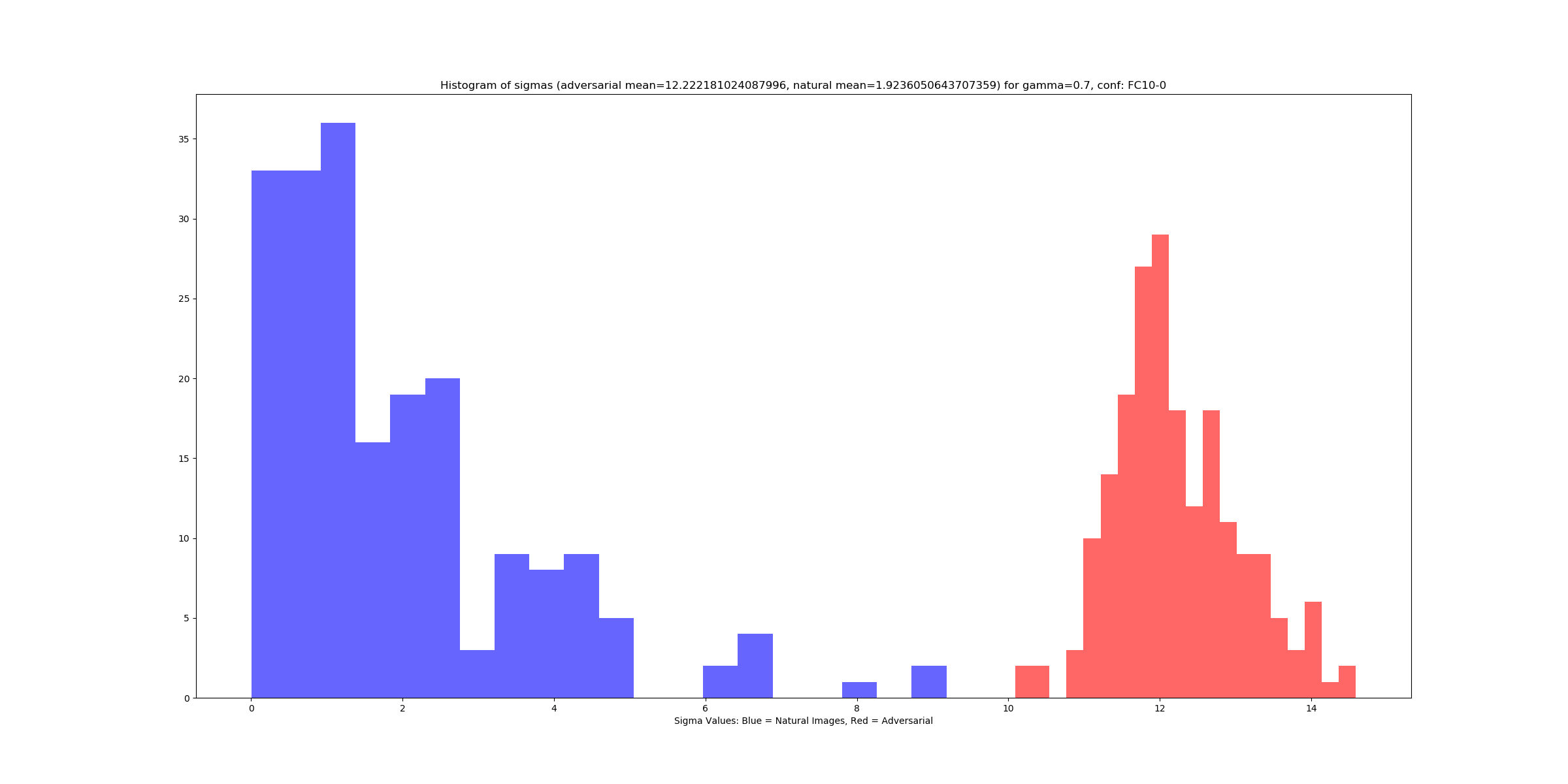}
\caption{Histograms of $0.7$-persistence for FC10-4 (smallest regularization, left), FC10-2 (middle), and FC10-0 (most regularization, right) from Table \ref{table1}. Natural images are in blue, and adversarial images are in red. Note that these are plotted on different scales -- higher regularization forces any "adversaries" to be very stable.\vspace{2em} }
\label{fig:FC10}
\end{figure}

\begin{figure}[!htb]
\centering
\includegraphics[trim=200 80 100 100, clip,width=.49\textwidth]{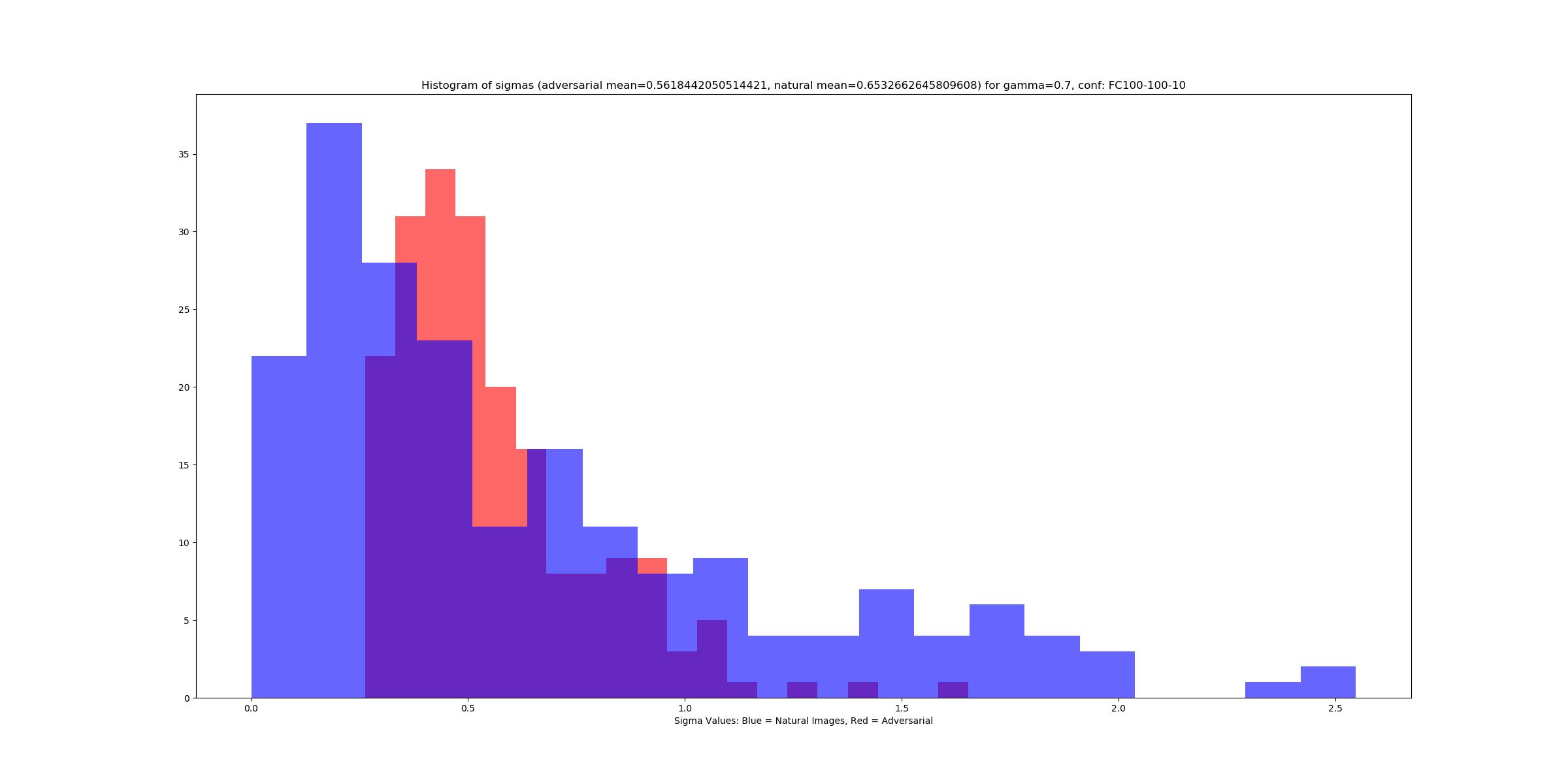}
\includegraphics[trim=200 80 100 100, clip,width=.49\textwidth]{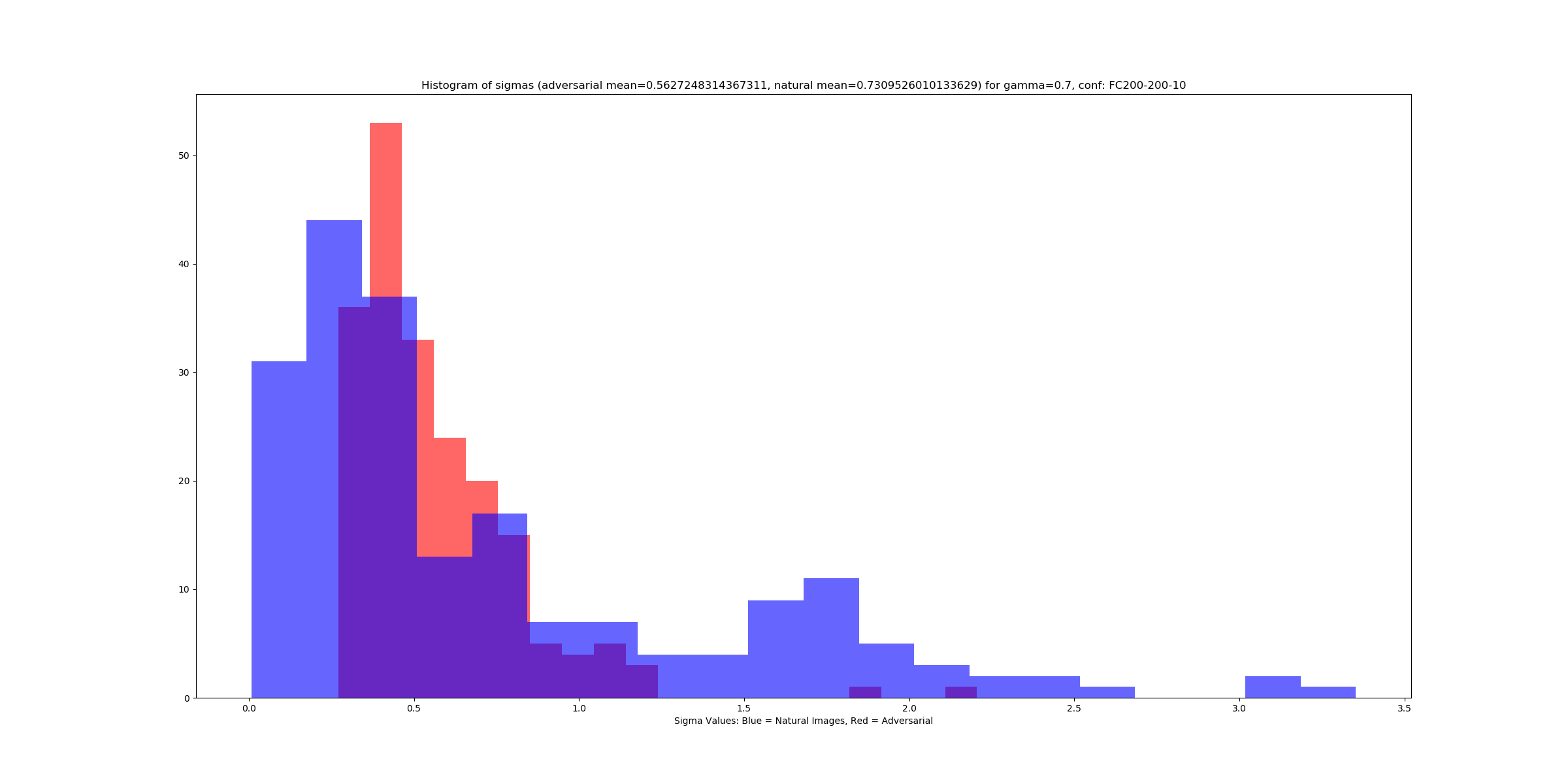}
\caption{Histograms of $0.7$-persistence for FC100-100-10 (left) and FC200-200-10 (right) from Table \ref{table1}. Natural images are in blue, and adversarial images are in red.}
\label{fig:FC100200}
\end{figure}

\begin{figure}[!htb]
\includegraphics[width=.49\textwidth]{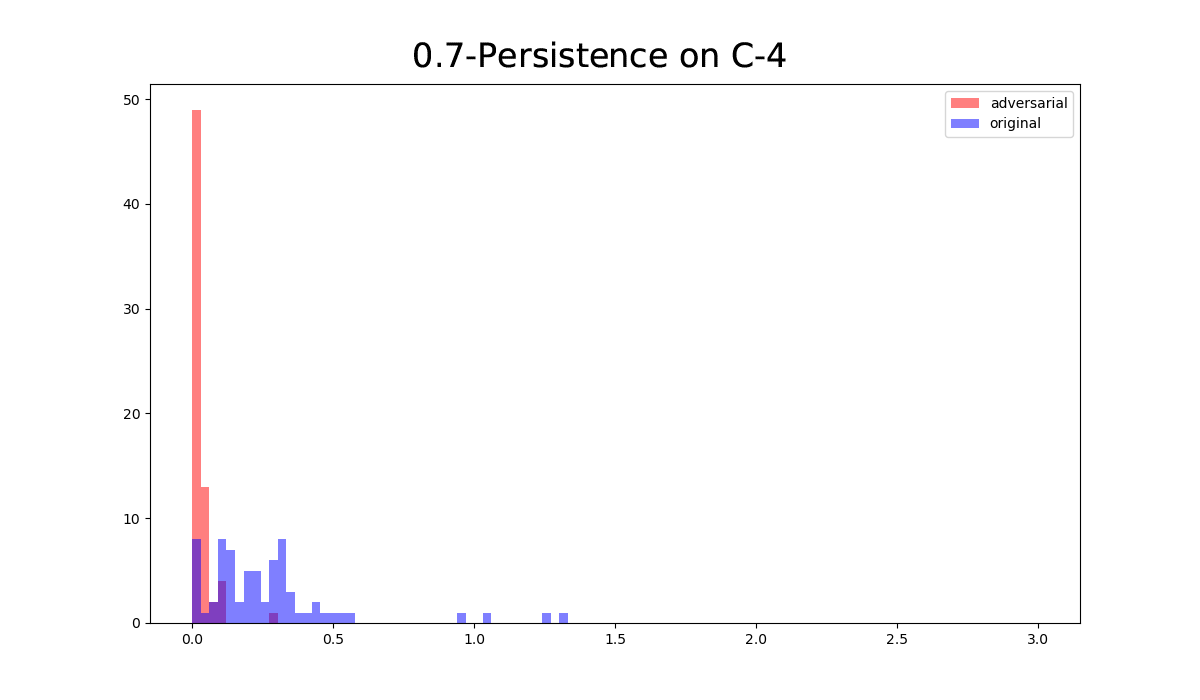}
\includegraphics[width=.49\textwidth]{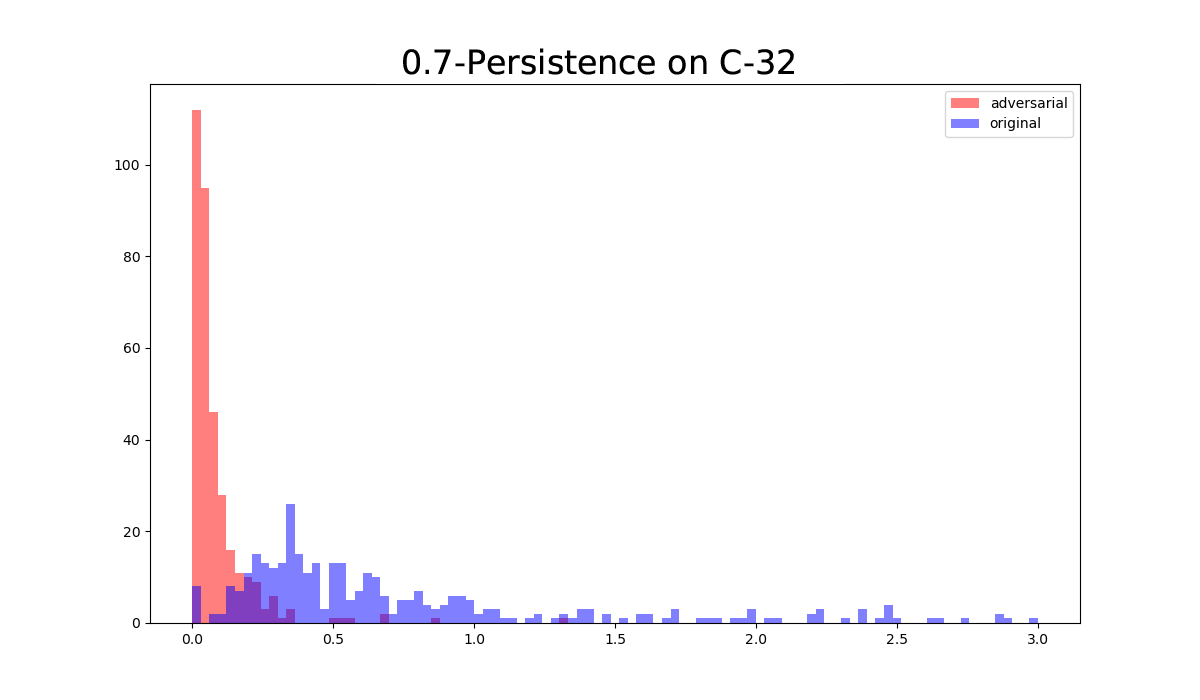}
\includegraphics[width=.49\textwidth]{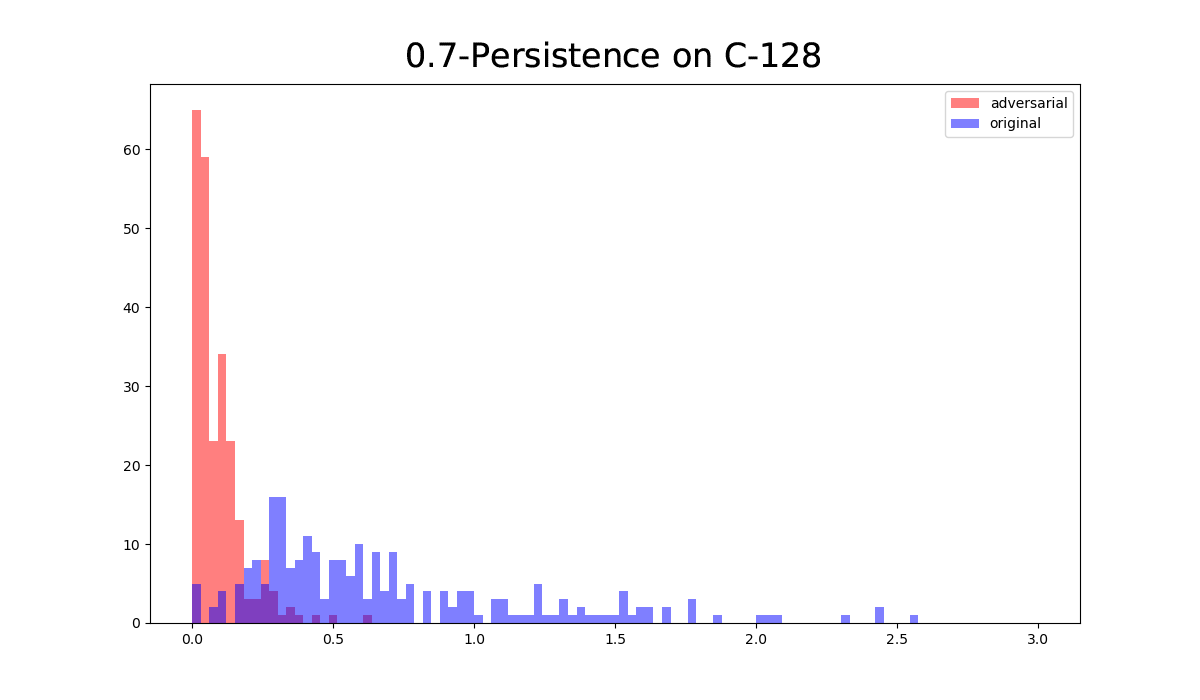}
\includegraphics[width=.49\textwidth]{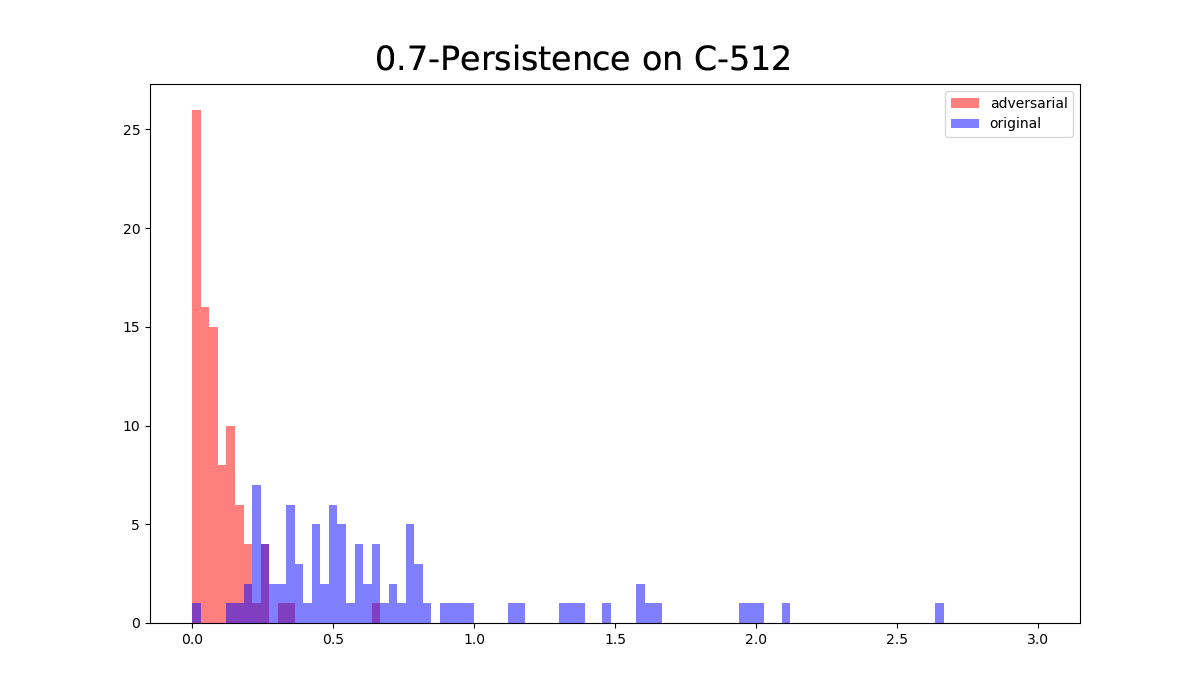}
\caption{Histograms of $0.7$-persistence for C-4 (top left), C-32 (top right), C-128 (bottom left), and C-512 (bottom right) from Table \ref{table1}. Natural images are in blue and adversarial images are in red.}
\label{fig:CNNs}
\end{figure}

\subsection{Additional figures for ImageNet}

In this section we show some additional figures of Gaussian sampling for ImageNet. In Figure \ref{fig:moreimagenet} we see Gaussian sampling of an example of the class \texttt{indigo\_bunting} and the frequency samplings for adversarial attacks of \texttt{goldfinch} toward \texttt{indigo\_bunting} (classifier: alexnet, attack: PGD) and  toward  \texttt{alligator\_lizard} (classifier: vgg16, attack: PGD). Compare the middle image to Figure \ref{fig:imagenet_adv}, which is a similar adversarial attack but used the vgg16 network classifier and the BIM attack. Results are similar. Also note that in each of the cases in Figure \ref{fig:moreimagenet} the label of the original natural image never becomes the most frequent classification when sampling neighborhoods of the adversarial example. 

\begin{figure}[!htb]
    \centering
    \includegraphics[width=.32\textwidth]{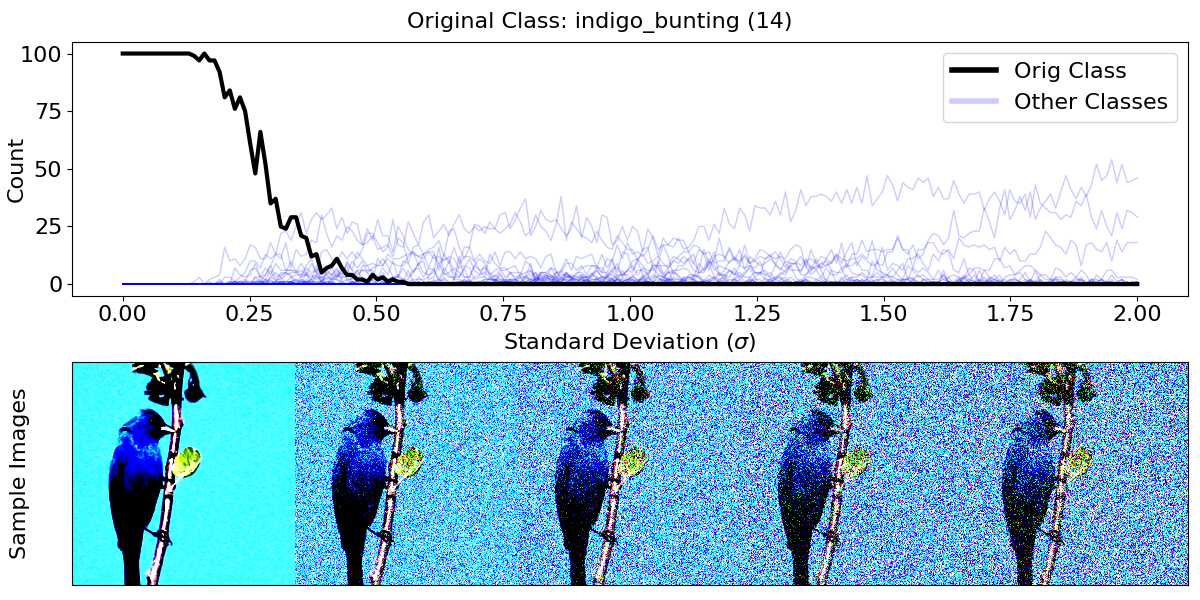}
    \includegraphics[width=.32\textwidth]{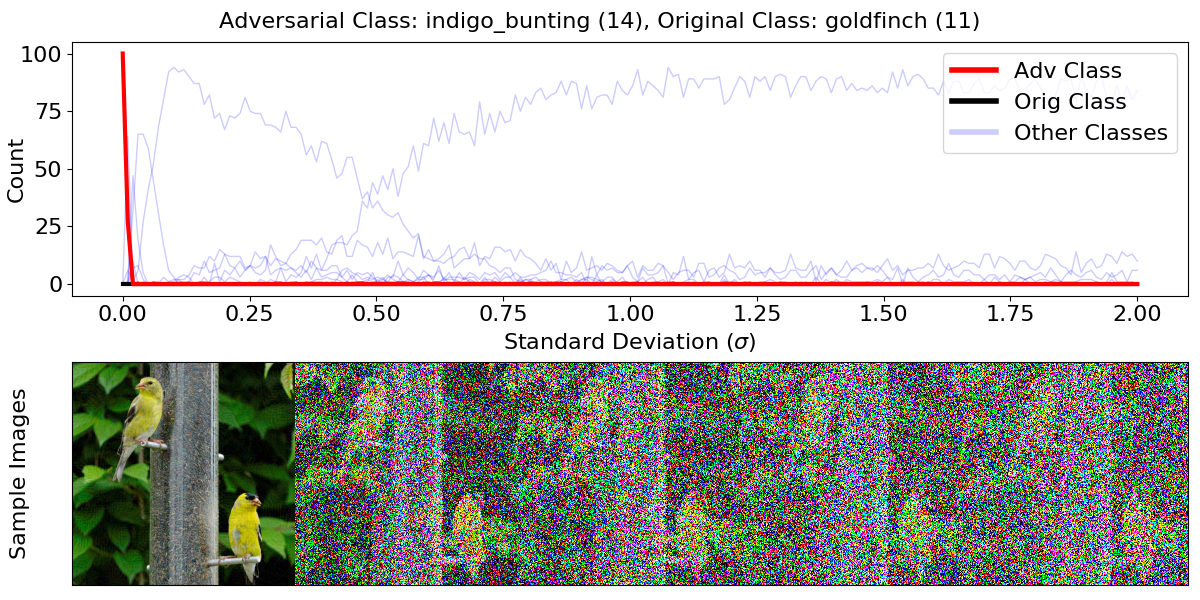}
    \includegraphics[width=.32\textwidth]{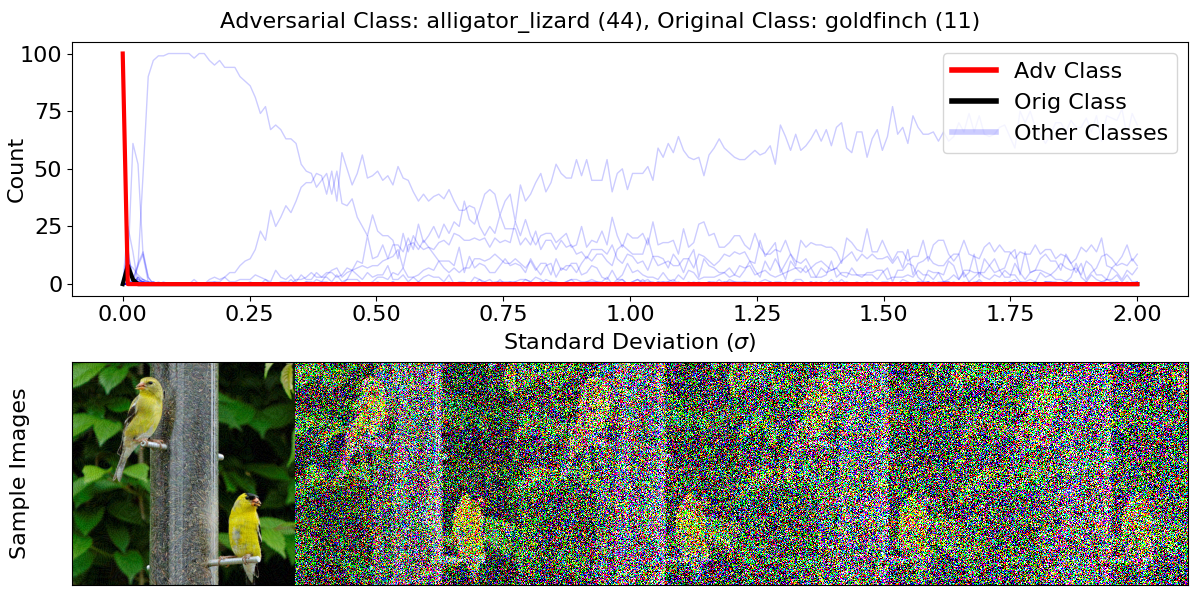}
    \caption{Frequency of each class in Gaussian samples with increasing variance around an \texttt{indigo\_bunting} image (left), an adversarial example of the image in class \texttt{goldfinch} from Figure \ref{fig:imagenet_adv} targeted at the \texttt{indigo\_bunting} class on a alexnet network attacked with PGD (middle), and an adversarial example of the \texttt{goldfinch} image targeted at the \texttt{alligator\_lizard} class on a vgg16 network attacked with PGD (right). Bottoms show example sample images at different standard deviations.}
    \label{fig:moreimagenet}
\end{figure}

In Figure \ref{fig:persistencediffgamma}, we have plotted $\gamma$-persistence along a straight line from a natural image to an adversarial image to it with differing values of the parameter $\gamma$. The $\gamma$-persistence in each case seems to change primarily when crossing the decision boundary. Interestingly, while the choice of $\gamma$ does not make too much of a difference in the left subplot, it leads to more varying persistence values in the right subplot of Figure \ref{fig:persistencediffgamma}.  This suggests that one should be careful not to choose too small of a $\gamma$ value, and that persistence does indeed depend on the landscape of the decision boundary described by the classifier.

\clearpage
\begin{figure}[!htb]
    \centering
    \includegraphics[width=.49\textwidth]{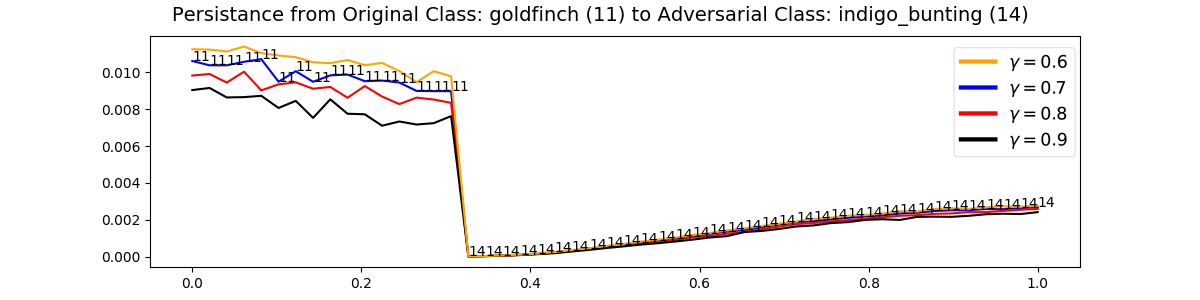}
    \includegraphics[width=.49\textwidth]{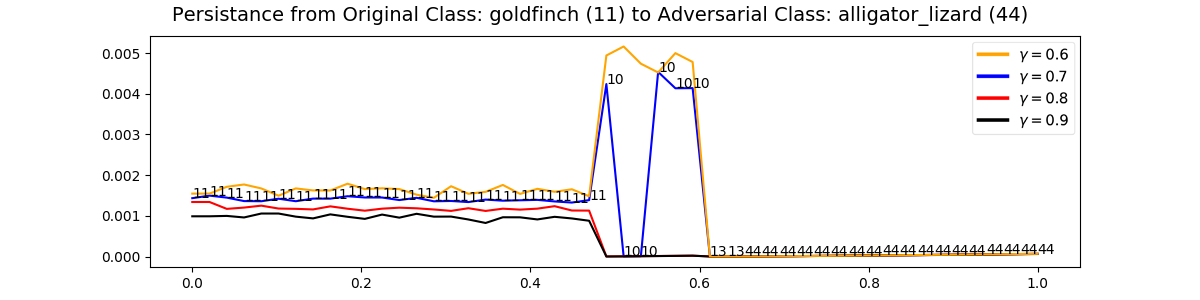}
    \caption{The $\gamma$-persistence of images along the straight line path from an image in class \texttt{goldfinch} (11) to an adversarial image generated with BIM in the class \texttt{indigo\_bunting} (14)  (left) and to an adversarial image generated with PGL in the class \texttt{alligator\_lizard} (44) (right) on a vgg16 classifier with different values of $\gamma$. The classification of each image on the straight line is listed as a number so that it is possible to see the transition from one class to another. The vertical axis is $\gamma$-persistence and the horizontal axis is progress towards the adversarial image.}
    \label{fig:persistencediffgamma}
\end{figure}

\section{Concentration of measures} \label{sec:concentration}

We use Gaussian sampling with varying standard deviation instead of sampling the uniform distributions of balls of varying radius, denoted $U(B_r(0))$ for radius $r$ and center $0$. This is for two reasons. The first is that Gaussian sampling is relatively easy to do. The second is that the concentration phenomenon is different. This can be seen in the following proposition.

\begin{proposition} \label{prop:concentration}
    Suppose $x \sim N(0,\sigma^2 I)$ and $y \sim U(B_r(0))$ where both points come from distributions on $\RR^n$. For $\varepsilon < \sqrt{n}$ and for $\delta < r$ we find the following:
    \begin{align}
        \mathbb{P}\left[\rule{0pt}{15pt} \left| \rule{0pt}{10pt} \Norm{x} - \sigma \sqrt{n} \right| \leq \varepsilon \right] &\geq 1-2e^{-\varepsilon^2/16} \\
        \mathbb{P}\left[\rule{0pt}{15pt} \left| \rule{0pt}{10pt} \Norm{y} - r \right| \leq \delta \right] &\geq 1-e^{-\delta n/r} 
    \end{align}
\end{proposition}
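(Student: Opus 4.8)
The plan is to handle the two inequalities separately, since they are of quite different difficulty. The uniform-on-the-ball bound is essentially a one-line computation, so I would dispatch it first. Comparing volumes of concentric balls gives the explicit radial distribution function $\mathbb{P}[\|y\|\le s]=(s/r)^n$ for $s\in[0,r]$. Since $\|y\|\le r$ almost surely, the event $\{\,|\,\|y\|-r\,|\le\delta\}$ coincides with $\{\|y\|\ge r-\delta\}$, whose probability is $1-(1-\delta/r)^n$; the elementary bound $1-u\le e^{-u}$, valid here because $0<\delta/r<1$, then yields $(1-\delta/r)^n\le e^{-\delta n/r}$, which is exactly the claimed estimate.

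For the Gaussian inequality the strategy is to reduce $\|x\|$ to a chi quantity and invoke a standard sub-Gaussian concentration estimate. Writing $x=\sigma Z$ with $Z\sim N(0,I_n)$, we have $\|x\|=\sigma\|Z\|$ and $\|Z\|^2\sim\chi^2_n$, so it suffices to control deviations of $\|Z\|$ about $\sqrt n$. Here I would use that the map $z\mapsto\|z\|$ is $1$-Lipschitz together with Gaussian concentration of Lipschitz functions (equivalently, the Laurent--Massart $\chi^2$ tail bounds), which gives $\mathbb{P}[\,|\,\|Z\|-\mathbb{E}\|Z\|\,|\ge t]\le 2e^{-t^2/2}$.

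The only genuinely delicate step is replacing the random centering $\mathbb{E}\|Z\|$ by the deterministic value $\sqrt n$. I would bound this gap from above by Jensen's inequality, $\mathbb{E}\|Z\|\le\sqrt{\mathbb{E}\|Z\|^2}=\sqrt n$, and from below using the Gaussian Poincar\'e inequality, which (again because $\|\cdot\|$ is $1$-Lipschitz) gives $\mathrm{Var}(\|Z\|)\le 1$ and hence $(\mathbb{E}\|Z\|)^2\ge n-1$, so that $0\le\sqrt n-\mathbb{E}\|Z\|\le\sqrt n-\sqrt{n-1}\le 1$. Feeding this $O(1)$ shift into the concentration bound via the triangle inequality and splitting into two regimes --- for small $\varepsilon$ the asserted right-hand side already exceeds $1$ and the inequality is vacuous, while for larger $\varepsilon$ the shift is absorbed by the generous constant $16$ --- produces $\mathbb{P}[\,|\,\|x\|-\sigma\sqrt n\,|\ge\varepsilon]\le 2e^{-\varepsilon^2/16}$, using $\varepsilon<\sqrt n$ only to keep the regimes comparable. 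The main obstacle is therefore bookkeeping rather than analysis: tuning the constant so the two $\varepsilon$-regimes meet, and being explicit about the role of $\sigma$, since the natural argument actually yields $2e^{-\varepsilon^2/(16\sigma^2)}$, so the displayed bound should be read under the normalization $\sigma=1$ (or with the constant carrying an extra $\sigma^2$); the structure of the proof is unaffected either way.
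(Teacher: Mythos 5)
Your proposal is correct, but it does genuinely more than the paper, whose entire proof is a citation: it invokes Theorems 4.7 and 3.7 of Wegner's lecture notes (the Gaussian Annulus Theorem and concentration of measure for the unit ball) and remarks that one rescales by $\sigma$ and $r$. Your treatment of the uniform case is exactly the standard proof of the cited ball result (radial CDF $(s/r)^n$, then $1-u\le e^{-u}$), so there you are simply unpacking the reference. For the Gaussian case you take a different derivation from the usual one behind the constant $16$: the cited annulus theorem is typically proved via sub-exponential ($\chi^2$/Chernoff) tail bounds on $\Norm{Z}^2$, which is where the restriction $\varepsilon<\sqrt n$ genuinely matters, whereas you use Lipschitz concentration of $\Norm{Z}$ about its mean plus the Poincar\'e bound $\sqrt{n}-\mathbb{E}\Norm{Z}\le 1$ and a two-regime absorption into the constant $16$. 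Your regimes do overlap (the bound is vacuous for $\varepsilon\lesssim 3.3$ and the shifted concentration bound beats $e^{-\varepsilon^2/16}$ for $\varepsilon\gtrsim 1.6$), so the argument closes, and as a side benefit it does not actually need $\varepsilon<\sqrt n$. Finally, your observation about $\sigma$ is a real catch rather than mere bookkeeping: since $\Norm{x}=\sigma\Norm{Z}$, the exponent should be $\varepsilon^2/(16\sigma^2)$ (and the hypothesis $\varepsilon<\sigma\sqrt n$), so the displayed bound as stated is only valid for $\sigma\le 1$; the paper's "taking account of varying the standard deviation" gestures at this rescaling without reflecting it in the stated inequality.
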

\begin{proof}
    This follows from \cite[Theorems 4.7 and 3.7]{wegner2021lecture}, which are the Gaussian Annulus Theorem and the concentration of measure for the unit ball, when taking account of varying the standard deviation $\sigma$ and radius $r$, respectively.
\end{proof}

The implication is that if we fix the dimension and let $\sigma$ vary, the measures will always be concentrated near spheres of radius $\sigma \sqrt{n}$ and $r$, respectively, in a consistent way. In practice, Gaussians seem to have a bit more spread, as indicated in Figure \ref{fig:sampling}, which shows the norms of $100,000$ points sampled from dimension $n=784$ (left, the dimension of MNIST) and $5,000$ points sampled from dimension $n=196,608$ (right, the dimension of ImageNet).

\begin{figure}[htb]
    \centering
    \includegraphics[width=.5\textwidth]{./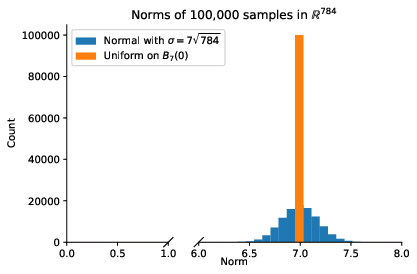}
    \includegraphics[width=.48\textwidth]{./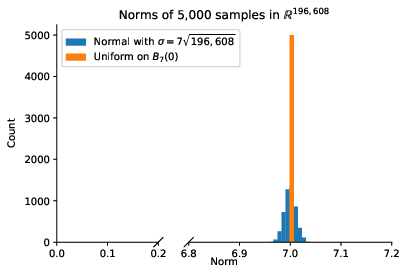}
    \caption{Comparison of the length of samples drawn from $U(B_7(0))$ and $N(0,7\sqrt{n})$ for $n=784$, the dimension of MNIST, (left) and $n=196,608$, the dimension of ImageNet, (right).}
    \label{fig:sampling}
\end{figure}

\chapter{The EPK is a Kernel}

\subsection{Multi-Class Case}

There are two ways of treating our loss function $L$ for a number of classes (or number of output activations) $K$:
\begin{align}
    \text{Case 1: } L &: \mathbb{R}^K \to \mathbb{R}\\
    \text{Case 2: } L &: \mathbb{R}^K \to \mathbb{R}^K\\
\end{align}

\subsubsection{Case 1 Scalar Loss}

Let $L : \mathbb{R}^K \to \mathbb{R}$. We use the chain rule $D (g \circ f) (x) = Dg(f(x))Df(x)$. 

Let $f$ be a vector valued function so that $f : \mathbb{R}^D \to \mathbb{R}^K$  satisfying the conditions from [representation theorem above] with $x \in \mathbb{R}^D$ and $y_i \in \mathbb{R}^K$ for every $i$. We note that $\dfrac{\partial f}{\partial t}$ is a column and has shape $Kx1$ and our first chain rule can be done the old fashioned way on each row of that column:
\begin{align}
    \dfrac{d f}{d t} &= \sum_{j=1}^M \dfrac{\partial f(x)}{\partial w_j} \dfrac{d w_j}{d t}\\
    &= -\varepsilon \sum_{j=1}^M \dfrac{\partial f(x)}{\partial w_j} \sum_{i=1}^N \dfrac{\partial L(f(x_i), y_i)}{\partial w_j}\\
    &\text{Apply chain rule}\\
    &= -\varepsilon \sum_{j=1}^M \dfrac{\partial f(x)}{\partial w_j}
      \sum_{i=1}^N  L'(f(x_i), y_i) \dfrac{d f(x_i)}{d w_j}\\
    &\text{Let}\\
    A &= \dfrac{d f(x)}{\partial w_j} \in \mathbb{R}^{K \times 1}\\
    B &= \dfrac{d L(f(x_i), y_i)}{d f} \in \mathbb{R}^{1 \times K}\\
    C &= \dfrac{d f(x_i)}{\partial w_j} \in \mathbb{R}^{K \times 1}
\end{align}
We have a matrix multiplication $ABC$ and we wish to swap the order so somehow we can pull $B$ out, leaving $A$ and $C$ to compose our product for the representation. Since $BC \in \mathbb{R}$, we have $(BC) = (BC)^T$ and we can write
\begin{align}
    (ABC)^T &= (BC)^TA^T = BCA^T\\
    ABC &= (BCA^T)^T
\end{align}
Note: This condition needs to be checked carefully for other formulations so that we can re-order the product as follows:
\begin{align}
        &= -\varepsilon \sum_{j=1}^M  \sum_{i=1}^N \left(L'(f(x_i), y_i) 
        \dfrac{d f(x_i)}{d w_j} \left(\dfrac{\partial f(x)}{\partial  w_j}\right)^T\right)^T
        \\
    &= -\varepsilon \sum_{i=1}^N \left(L'(f(x_i), y_i) 
    \sum_{j=1}^M \dfrac{d f(x_i)}{\partial w_j} \left(\dfrac{\partial f(x)}{\partial w_j}\right)^T\right)^T\\        
\end{align}
Note, now that we are summing over $j$, so we can write this as an inner product on $j$ with the $\nabla$ operator which in this case is computing the jacobian of $f$ along the dimensions of class (index k) and weight (index j). We can define 
\begin{align}
    (\nabla f(x))_{k,j} &= \dfrac{d f_{k}(x)}{\partial w_j}\\
    &= -\varepsilon \sum_{i=1}^N \left(\dfrac{d L(f(x_i), y_i)}{d f} 
     \nabla f(x_i) (\nabla f(x))^T\right)^T\\    
\end{align}
We note that the dimensions of each of these matrices in order are $[1,K]$, $[K,M]$, and $[M,K]$ which will yield a matrix of dimension $[1, K]$ i.e. a row vector which we then transpose to get back a column of shape $[K, 1]$. Also, we note that our kernel inner product now has shape $[K,K]$. 

\subsection{Schemes Other than Forward Euler (SGD)}

\textbf{Variable Step Size:}
Suppose $f$ is being trained using Variable step sizes so that across the training set $X$:
\begin{align}
    \dfrac{d w_s(t)}{dt} &= -\varepsilon_s \nabla_w L(f_{w_s(0)}(X), y_i) = -\varepsilon \sum_{j = 1}^{d} \sum_{i=1}^M  \dfrac{\partial L(f_{w_s(0)}(X),  y_i)}{\partial w_j} \label{eq10}
\end{align}
This additional dependence of $\varepsilon$ on $s$ simply forces us to keep $\varepsilon$ inside the summation in equation~\ref{eq11}. 

\textbf{Other  Numerical Schemes:} Suppose $f$ is being trained using another numerical scheme so that:
\begin{align}
    \dfrac{d w_s(t)}{dt} &= \varepsilon_{s,l} \nabla_w L(f_{w_s(0)}(x_i), y_i) + \varepsilon_{s-1, l}\nabla_w L(f_{w_{s-1}}(x_i), y_i) + \cdots \\
    &= \varepsilon_{s,l} \sum_{j = 1}^{d} \sum_{i=1}^M  \dfrac{\partial L(f_{w_s(0)}(x_i),  y_i)}{\partial w_j} + \varepsilon_{s-1, l} \sum_{j = 1}^{d} \sum_{i=1}^M  \dfrac{\partial L(f_{w_{s-1}(0)}(x_i),  y_i)}{\partial w_j} + \cdots
\end{align}
This additional dependence of $\varepsilon$ on $s$ and $l$ simply results in an additional summation in equation~\ref{eq11}. Since addition commutes through kernels, this allows separation into a separate kernel for each step contribution. Leapfrog and other first order schemes will fit this category. 

\textbf{Higher Order Schemes:} Luckily these are intractable for for most machine-learning models because they would require introducing dependence of the kernel on input data or require drastic changes. It is an open but intractable problem to derive kernels corresponding to higher order methods.

\section*{Acknowledgements}

This material is based upon work supported by the Department of Energy (National Nuclear Security Administration Minority Serving Institution Partnership Program's CONNECT - the COnsortium on Nuclear sECurity Technologies) DE-NA0004107.
This report was prepared as an account of work sponsored by an agency of the United States Government.
Neither the United States Government nor any agency thereof, nor any of their employees, makes any warranty, express or implied, or assumes any legal liability or responsibility for the accuracy, completeness, or usefulness of any information, apparatus, product, or process disclosed, or represents that its use would not infringe privately owned rights. The views and opinions of authors expressed herein do not necessarily state or reflect those of the United States Government or any agency thereof.

This material is based upon work supported by the National Science Foundation under Grant No. 2134237. We would like to additionally acknowledge funding from NSF TRIPODS Award Number 1740858 and NSF RTG Applied Mathematics and Statistics for Data-Driven Discovery Award Number 1937229. Any opinions, findings, and conclusions or recommendations expressed in this material are those of the author(s) and do not necessarily reflect the views of the National Science Foundation.

\printbibliography[heading=bibintoc]


\end{document}